%% file: main_arxiv.tex
\documentclass[11pt,letterpaper]{article}
 
\usepackage[letterpaper,margin=1in]{geometry}
\usepackage[parfill]{parskip}
\usepackage{fancyhdr}
\usepackage{amsmath,amsthm,amssymb,bbm}

\usepackage{mathtools}
\usepackage{cases}
\usepackage{booktabs}
\usepackage{nicefrac}

\usepackage{microtype}

\usepackage{algorithm,algorithmic}
\usepackage{color}
\usepackage{appendix}
\usepackage{float}

\usepackage{bm}
\usepackage{bold_letters}

\usepackage{authblk}

\usepackage{url}
\usepackage{times}
\usepackage[authoryear]{natbib}
\usepackage[colorlinks,citecolor=blue,urlcolor=blue,linkcolor=blue,linktocpage=true]{hyperref}

\usepackage{enumitem}
\setlist[enumerate]{itemsep=0.1em}
\usepackage{cleveref}
\crefformat{equation}{(#2#1#3)}
\crefrangeformat{equation}{(#3#1#4) to~(#5#2#6)}
\crefname{equation}{}{}
\Crefname{equation}{}{}

\crefname{definition}{\textbf{definition}}{definitions}
\Crefname{definition}{Definition}{Definitions}
\crefname{assumption}{\textbf{assumption}}{assumptions}
\Crefname{assumption}{Assumption}{Assumptions}

\definecolor{maroon}{RGB}{192,80,77}
\definecolor{orange}{RGB}{219, 110, 0}

\newcommand{\explain}[2]{\underset{\mathclap{\overset{\uparrow}{#2}}}{#1}}

\newcommand{\Ito}{It\^o}

\newcommand{\comment}[1]{}

 \newtheorem{theorem}{Theorem}
 \newtheorem{lemma}[theorem]{Lemma}
 \newtheorem{proposition}[theorem]{Proposition}
 
 \newtheorem{corollary}[theorem]{Corollary}
 \newtheorem{definition}[theorem]{Definition}
\newtheorem{assumption}[theorem]{Assumption}

\newtheorem{remark}[theorem]{Remark}

\newcommand{\Exp}{\mathbb{E}}
\newcommand{\myset}[1]{\left \{ #1 \right \}}                     %
\newcommand{\setst}[2]{\left\{\; #1 \,:\, #2 \;\right\}}        %
\newcommand{\Indicator}[1]{\mathbf{1}_{#1}}
\newcommand{\eps}{\epsilon}
\newcommand{\floor}[1]{\left\lfloor #1 \right\rfloor}
\newcommand{\prob}[1]{\operatorname{Pr}\left[\,#1\,\right]}   
\newcommand{\card}[1]{\abs{#1}}
\newcommand{\abs}[1]{\lvert #1 \rvert}
\newcommand{\umax}{\alpha_\mathrm{max}}
\newcommand{\expect}[1]{\operatorname{E}\left[\,#1\,\right]}  
 
\newcommand{\Union}{\bigcup}
\newcommand{\erf}{\mathrm{erf}}
\newcommand{\smallfrac}[2]{{\textstyle \frac{#1}{#2}}}

\newcommand{\diff}{\mathrm{d}}

\newcommand{\Tp}{\textsf{T}}

\def\E{\mathbb{E}}
\def\P{\mathbb{P}}
\def\Cov{\mathrm{Cov}}
\def\Var{\mathrm{Var}}

\def\diag{\mathrm{diag}}

\def\T{^{\textsf{T}}}
\def\R{\mathbb{R}}

\def\cC{\mathcal{C}}
\def\cD{\mathcal{D}}
\def\cE{\mathcal{E}}

\def\cI{\mathcal{I}}

\newcommand{\LogPot}{{\log \Phi}}
\newcommand{\Kseg}{K_{\mathrm{seg}}}
\newcommand{\tstar}{t^{\star}}

\newcommand{\Reg}{\mathrm{Regret}}
\newcommand{\CP}{{\sf CP}}

\renewcommand{\bx}{\bm x}

\DeclareMathOperator{\DErr}{DiscrError}

\begin{document}

\title{A second order regret bound for NormalHedge}

\date{}
\author[1]{Yoav Freund\thanks{Authors in alphabetical order.}  }
\author[3]{Nicholas J. A. Harvey    }
\author[2]{Victor S. Portella }
\author[3]{Yabing Qi  }
\author[1]{Yu-Xiang Wang  }

\affil[1]{University of California, San Diego}
\affil[2]{University of São Paulo}
\affil[3]{University of British Columbia}

\maketitle

\begin{abstract}
We consider the problem of prediction with expert advice for ``easy'' sequences.
We show that a variant of NormalHedge enjoys a second-order $\epsilon$-quantile regret bound of
$
O\big(\sqrt{V_T \log(V_T/\epsilon)}\big)
$ 
when $V_T > \log N$, where $V_T$ is the cumulative second moment of instantaneous per-expert regret averaged with respect to a natural distribution determined by the algorithm. The algorithm is motivated by a continuous time limit using Stochastic Differential Equations. The discrete time analysis uses self-concordance techniques.
\end{abstract}

\section{Introduction}

Prediction with expert advice is a classic topic in online learning,
for which numerous generalizations of the model have been extremely well studied
\citep{CesaBianchiLugosi06}.
For example, several bounds involving variance of the losses have been known for two decades
\citep{CesaBianchiMansourStoltz07}, 
whereas bounds that compare to a quantile of the experts have been known for over 15 years
\citep{chaudhuri2009parameter}.
Ten years ago, \cite{freund2016secondorder} posed an open question of developing
an algorithm that simultaneously achieves quantile- and variance-based bounds.
Moreover, he conjectured that NormalHedge \citep{chaudhuri2009parameter} achieves this bound.

The main contribution of this work is a \emph{positive} answer to this question.
We confirm the conjecture that NormalHedge does indeed have a regret bound that is
simultaneously quantile- and variance-based.
The intuition for our proofs comes from a continuous time viewpoint of online learning
that has been explored in various prior work, e.g.,
\citep{freund2009method,drenska2020prediction,freund2021optimal,greenstreet2022efficient,zhang2022pde,harvey2023optimal,harvey2024continuous}.
One advantage of this approach is that it ushers in the helpful viewpoint of differential
equations.
A key challenge, however, is discretizing the continuous-time solutions without sacrificing performance
\citep{freund2021optimal}.
Our main technical contribution is a technique to prove perturbation bounds
on derivatives of the NormalHedge potential function via \emph{self-concordance}. 
This is a well-studied concept in continuous optimization \citep{nesterov1994interior,sun2019generalized}
that has also been explored in theoretical statistics \citep{bach2010self,ostrovskii2021finite}
and more recently in online learning \citep{pmlr-v119-bilodeau20a}.
The notions of self-concordance of which we are aware have a global nature that makes them ineffective
in our setting.
To overcome that obstacle, we introduce a notion of \emph{local self-concordance} that is crucial to
our analysis. 

\paragraph{Related work.} Many existing algorithms adapt to easy sequences with ``small variance'' \citep{gaillard2014second,KoolenVanErven15,derooij2014follow}. Others enjoy quantile regrets \citep{chaudhuri2009parameter,chernov2010prediction,luo2014drifting}, but depend explicitly on the iteration number $T$. \citet{LuoSchapire15} proved quantile regret bound with no $T$ dependence, but only adapts to ``small absolute deviation'' -- a first order bound. \citet{KoolenVanErven15} were the first to prove a second-order quantile bound that depends on ``variance over time''.  To our knowledge, the only other existing result that enjoys second-order quantile regret with a ``variance over action'' is the work of \citep{negrea2021minimax}, though their stated result has a suboptimal dependence on $\log(1/\epsilon)$. Interestingly, five years ago, \citet{marinov2021pareto} proved that adaptive second-order quantile regret $\sqrt{V_T \log(1/\epsilon)}$ with a particular version of $V_T$ is not possible. The construction does not cover our results because: (1) our definition of $V_T$ is different; and (2) our result only achieves $\sqrt{V_T \log(1/\epsilon)}$ for $V_T > \log N$, a regime that their construction does not cover.  We defer a more detailed review of the associated literature to Appendix~\ref{sec:discussion}.

\section{Problem Setup}

\noindent\textbf{Symbols and notation.}
We use standard probability and linear-algebra notation.
Boldface letters (e.g., $\bm p,\bm x,\bm \ell$) denote vectors (often in $\R^N$), and plain letters (e.g., $x,t,N,T$) denote scalars; $(\cdot)^\Tp$ is transpose and $[N]=\{1,\dots,N\}$.
The inner product is $\langle \bm a,\bm b\rangle=\sum_{i=1}^N a_i b_i$, and $\|\cdot\|_p$ denotes the $\ell_p$ norm (by default $\|\cdot\|=\|\cdot\|_2$).
For random variables we write, e.g., $i\sim \bm p$ with $\bm p\in\Delta^{N-1}$ (the probability simplex), and use $\P[\cdot],\E[\cdot]$ (and conditional/subscripted versions) as usual. For a differentiable $f:\R^d\to\R$, $\nabla f$ and $\nabla^2 f$ denote the gradient and Hessian; we let $\nabla^2 f(\bm x)[\bm a,\bm b]=\bm a^\Tp \nabla^2 f(\bm x)\bm b$.
For vector variables, $\nabla_{\bm x} f$ and $\nabla^2_{\bm x,\bm x} f$ denote partial derivatives with respect to $\bm x$.

\paragraph{Learning from Expert Advice.} Consider a slight variation on the game of prediction with expert
advice~\cite{CesaBianchiLugosi06}.

\begin{figure}[h!]
	\centering
	\fbox{ 
          \parbox{0.9\textwidth}{
            Given number of experts $N$ and a constant $B>0$
			
			\vspace{0.4em}
	For each iteration $j = 1,2,3,...$. 
	\begin{enumerate}
		\item Player chooses the weights as a probability vector $\bm p_j \in \Delta^{N-1}$.
		\item Nature reveals a loss vector $\bm \ell_j \in  \R^N$ satisfying that $\max_{i, i'\in[N]} \left|\ell_{j,i} - \ell_{j,i'}\right|\leq B$.
		\item Player incurs a loss $\langle \bm p_j, \bm \ell_j\rangle = \E_{i\sim \bm p_j}[ \ell_{j,i}]$.  
	\end{enumerate}
}
} 
	\label{fig: hedge_game}
\end{figure}

The regret vector at step $j$ is $\bm x_j  =  \sum_{k=1}^j (\langle \bm p_k, \bm \ell_k\rangle \mathbf{1} -  \bm \ell_k) \in \mathbb{R}^N$. We focus on minimizing the $\epsilon$-quantile regret (against the $\lfloor N\epsilon\rfloor$-th best expert), defined in iteration $T$ as
\begin{equation*}
	\mathrm{Regret}_{\epsilon}(T) := x_{T,(\lfloor N\epsilon\rfloor)}\qquad \text{where}~x_{T,(i)}~\text{denotes the $\floor{i}$-th largest coordinate of $\boldsymbol{x}_T$.}
\end{equation*}

\section{Hedge algorithms using potential functions}

We focus on hedging algorithms based on {\em potential functions $\phi(y,t)$}, where $y=x_i$ is the regret coordinate and $t \geq 0$ is a continuous ``time'' variable.

\begin{definition}\label{def:good_potential}
  Let $\cD=\R$ or $\cD = [y_0,\infty)$. A function $\phi: \R \times [0,\infty) \to \R$ is a {\bf good potential function} on $\cD$ if it is (1) jointly strictly convex, (2) three-times differentiable, and satisfies:
  (3) $\partial_y \phi \ge 0$;
  (4) $\partial_t \phi \le 0$;
  (5) $\lim_{t \to \infty} \phi(y,t) = \inf_{y,t}\phi(y,t)\geq 0$, and
  (6) $\phi(\Pi_\cD(\boldsymbol{x}),t) \leq \phi(\boldsymbol{x},t)$ for all $x \in \R$. Furthermore, we assume
  (7) {\bf the backwards heat equation} $\partial_t \phi = - \frac{1}{2} \partial^2_{yy} \phi$.
\end{definition}

Properties (1-6) are natural, while the motivation for property (7) might not be apparent at this point. We discuss the motivation for property (7) in Section~\ref{sec:continuous_time_discussion}.

Each good potential function $\phi$ (with domain $\cD$) induces a hedge algorithm \CP{($\phi,\cD$)} (see the pseudocode in Figure~\ref{fig:CP}). The key idea of the algorithm is to maintain a constant 
\textbf{total potential} 
 \begin{equation}\label{eq:avg_potential}
\Phi(\bm x,t) = \sum_{i=1}^N \phi(x_i,t).
\end{equation}

\begin{figure}[tbh]
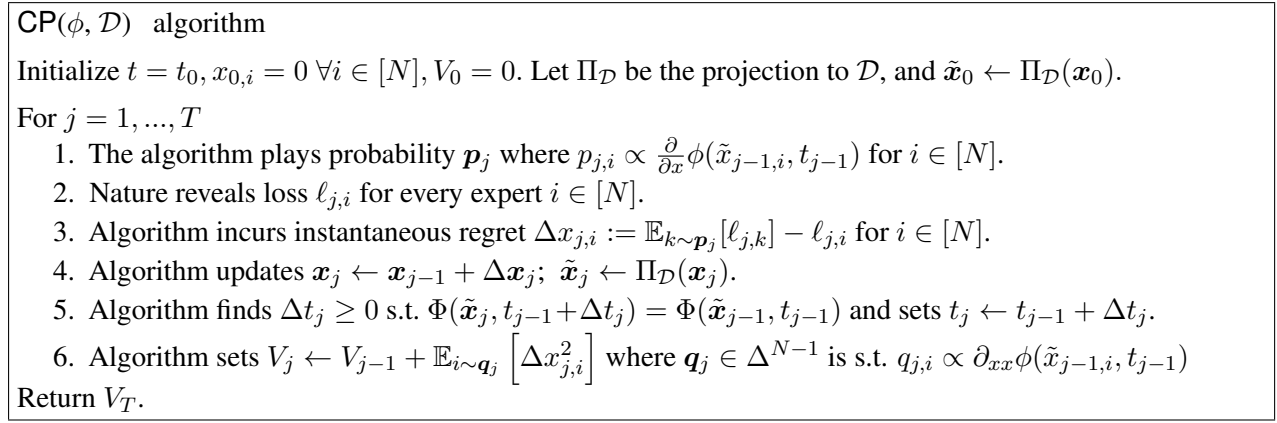

\fbox{\parbox{\textwidth}{ 
		\CP($\phi$, $\cD$)~~ algorithm 
		
		\vspace{0.4em}

		Initialize $t= t_0, x_{0,i} = 0 \; \forall i\in[N], V_0=0.$  Let $\Pi_{\cD}$ be the projection to $\cD$, and $\tilde{\bm x}_{0} \leftarrow \Pi_{\cD}(\bm x_0).$  
		\vspace{0.4em}
		
		For $j=1,...,T$
		\begin{enumerate}
			\item The algorithm plays probability $\bm p_j$ where  $p_{j,i}  \propto  \frac{\partial}{\partial x}\phi(\tilde{x}_{j-1,i},t_{j-1})$ for  $i\in [N]$.
			\item Nature reveals loss $ \ell_{j,i} $ for every expert $i\in [N]$.
			\item Algorithm incurs instantaneous regret  $\Delta x_{j,i} :=  \E_{k\sim\bm p_j}[\ell_{j,k}] - \ell_{j,i}$ for $i\in [N]$.
			\item Algorithm updates $\bm x_{j} \leftarrow \bm x_{j-1} + \Delta \bm x_j$;~
            $\tilde{\bm x}_j \leftarrow \Pi_{\cD}(\bm x_j)$.%
			\item  Algorithm finds  \emph{$\Delta t_j\geq 0$} s.t.\ $\Phi( \tilde{\bm x}_j ,t_{j-1} \!+\!\Delta t_j) =  \Phi(\tilde{\bm x}_{j-1},t_{j-1})$ and sets $t_{j} \leftarrow t_{j-1} + \Delta t_j$.
            \item Algorithm  sets $V_j \leftarrow V_{j-1}+ \E_{i\sim \bm q_j}\left[ \Delta x_{j,i}^2\right]$ where $\bm q_j \in \Delta^{N-1}$ is s.t. $q_{j,i} \propto  \partial_{xx}  \phi(\tilde{x}_{j-1,i},t_{j-1})$
		\end{enumerate}
    Return $V_T$.
	}
}
\vspace{-1em}
\caption{The Constant Potential Algorithm}\label{fig:CP}
\end{figure}

\paragraph{\CP{} is well-defined for good potentials.} 
Properties (1)-(5) in Definition~\ref{def:good_potential} ensure that the algorithm is well-defined. 
Properties (1) and (3) imply non-negativity of $\phi$'s first and second partial derivatives in $y$ such that $\bm p_j$ and $\bm q_j$ are valid probability distributions. The only exception is when $\frac{\partial}{\partial x}\phi(x_i,t) = 0\;\; \forall [N]$, in which the algorithm can choose an arbitrary $\bm p\in \Delta^{N-1}$ to play.   Properties (1), (2), (4), and (5) imply that we can always find $\Delta t \geq 0$ such that the total potential stays constant (this is proven in Lemma~\ref{lem:nonegativity}).

\paragraph{Efficient computation for $\Delta t$.} 
$\Delta t$ can be chosen as a function of $x,t,\Delta x$ ---
information available to the algorithm. Since $\Phi$ is
monotonically decreasing with $t$ (Property (4) of Defintion~\ref{def:good_potential}), $\Delta t$ can be obtained efficiently using bisection.

As a result of line 5 of the algorithm, the total potential is kept constant throughout: 
\begin{equation} \label{eqn:non-increasing-pot}
	\Phi(\Pi_\cD(\bm x_T), t_T) = \Phi(\Pi_\cD(\bm x_{T-1}), t_{T-1})  =...  = \Phi(\Pi_\cD(\bm x_1), t_1)= \Phi(\Pi(\bm 0),t_0).
\end{equation}

The constant total potential implies a generic bound on the quantile regret of \CP($\phi,\cD$) as a function of $t$ (rather than the more common iteration index $T$).
\begin{lemma}[Generic regret bound template]\label{lem:general_bound}
Assume $\phi,\cD$ satisfies Definition~\ref{def:good_potential}.
Let $t$ be the time variable that \CP{} encounters at any iteration $j$.
Then the corresponding quantile regret at that iteration can be bounded as follows.
	\begin{equation} \label{eqn:general-bound}
		\mathrm{Regret}_\epsilon ~\leq~ y
        \quad\text{where $y$ is the unique solution to}\quad
        (\epsilon N) \cdot \phi(y,t) = \Phi(\Pi_{\cD}(\bm 0),t_0).
	\end{equation}
\end{lemma}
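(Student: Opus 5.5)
The argument is a counting argument built on the potential-conservation identity \eqref{eqn:non-increasing-pot}. Fix an iteration $j$ with time variable $t=t_j$, and write $R=\mathrm{Regret}_\epsilon=x_{j,(\lfloor N\epsilon\rfloor)}$. By definition of the order statistic, at least $\lfloor N\epsilon\rfloor$ coordinates $i$ satisfy $x_{j,i}\ge R$.

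First I would pass from $\bm x_j$ to $\tilde{\bm x}_j=\Pi_\cD(\bm x_j)$. In the case $\cD=[y_0,\infty)$ the projection is coordinatewise, $\max(x,y_0)$, which is monotone. Assume $R\in\cD$; otherwise $R<y_0$, and it suffices to show $y\ge y_0$, which I address below. Then those same coordinates satisfy $\tilde x_{j,i}\ge R$, and likewise if $\cD=\R$.

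Next I would use nonnegativity of $\phi$. Property (5) gives $\inf\phi\ge0$, and property (3) says $\phi$ is nondecreasing in $y$. Together with \eqref{eqn:non-increasing-pot} these give
\begin{equation*}
\Phi(\Pi_\cD(\bm 0),t_0)=\Phi(\tilde{\bm x}_j,t)=\sum_{i}\phi(\tilde x_{j,i},t)\ \ge\ \sum_{i:\tilde x_{j,i}\ge R}\phi(\tilde x_{j,i},t)\ \ge\ \lfloor N\epsilon\rfloor\,\phi(R,t).
\end{equation*}
Modulo the distinction between $\lfloor N\epsilon\rfloor$ and $N\epsilon$, which the statement glosses and which I would either absorb by reading $\epsilon N$ as $\lfloor \epsilon N\rfloor$ or by taking $N\epsilon$ integral, this yields $(\epsilon N)\phi(R,t)\le\Phi(\Pi_\cD(\bm 0),t_0)=(\epsilon N)\phi(y,t)$.

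To conclude $R\le y$, I need $\phi(\cdot,t)$ to be strictly increasing near $y$, so that the inequality on $\phi$ inverts. Joint strict convexity makes $\phi(\cdot,t)$ strictly convex, and with $\partial_y\phi\ge0$ this means $\partial_y\phi$ is strictly increasing. Hence $\partial_y\phi>0$ everywhere except possibly at a leftmost point, and $\phi(\cdot,t)$ is strictly increasing on $\cD$ (at worst outside a half-line where it is constant at its infimum). Uniqueness of $y$, assumed in the statement, already forces $y$ to lie in the strictly increasing region. So $\phi(R,t)\le\phi(y,t)$ implies $R\le y$: if $R>y$, strict monotonicity past $y$ would give $\phi(R,t)>\phi(y,t)$. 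For the case $R<y_0$, I would note that $y\ge y_0$ whenever $y$ is taken in $\cD$, so $R<y_0\le y$ trivially.

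The main obstacle is this last monotonicity and inversion step. It must handle the possibly flat region of $\phi(\cdot,t)$ and the boundary of $\cD$ cleanly, and it relies on the existence and uniqueness of $y$ as given rather than proved. The counting inequality itself is routine.
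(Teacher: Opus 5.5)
Your proposal is correct and is essentially the paper's argument. Conservation of $\Phi$, together with nonnegativity and monotonicity of $\phi$, gives $(\epsilon N)\,\phi(\cdot,t)\le\Phi(\Pi_\cD(\bm 0),t_0)=(\epsilon N)\,\phi(y,t)$ at the quantile coordinate, and this is then inverted. The paper avoids your case split on $R\in\cD$ by working with $\Pi_\cD(R)$ and using $R\le\Pi_\cD(R)$. Your strict-monotonicity justification of the inversion is more careful than the paper's appeal to property (3) alone.

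The one piece the paper supplies that you take as given is existence of $y$. It follows by continuity and divergence of $\phi(\cdot,t)$, since the target value $\phi(\Pi_\cD(0),t_0)/\epsilon$ is at least $\phi(\Pi_\cD(0),t)$ by property (4) and $\epsilon\le 1$. This also places $y\ge\Pi_\cD(0)$ inside $\cD$, which is exactly what your $R<y_0$ case needs.
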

\begin{proof}
	The solution $y$ exists and is unique
    since $\phi(\Pi_{\cD}(\boldsymbol{0}),t) \leq \phi(\Pi_{\cD}(\boldsymbol{0}),t_0)$ by property (4) of Definition~\ref{def:good_potential}, and since
    $y \mapsto \phi(y,t)$ is continuous and diverges by (1), (2) and (3).
    Let the corresponding regret vector at time $t$ be $\bm x$. 
    By \eqref{eqn:non-increasing-pot} and non-negativity of $\phi$,
	$$
	\Phi(\Pi_{\cD}(\bm 0),t_0)  = \Phi(\Pi_{\cD}(\bm x),t) = \sum_{i=1}^N\phi(\Pi_{\cD} (x_i), t) \geq  (\epsilon N) \cdot \phi(\Pi_{\cD}  (x_{(\epsilon N)}),t). 
	$$
    Thus
	$
	(\epsilon N) \cdot \phi(y,t)  \geq (\epsilon N) \cdot \phi(\Pi_{\cD}  (x_{(\epsilon N)}),t)
	$.
	Thus, using property (3) again,
	$$
	y \geq \Pi_{\cD} (x_{(\epsilon N)})  \geq x_{(\epsilon N)}    =  \mathrm{Regret}_\epsilon,
	$$
	since $z \mapsto \Pi_{\cD}(z)$ is non-decreasing.
	\end{proof}

\paragraph{Meaning of $V_T$ as an output of the algorithm.}
Note the variable $V_T$ that is incremented at each iteration by
$\langle \bm q, \Delta \bm x^2\rangle$, similarly to the total loss of the
algorithm that is incremented by $\langle \bm p, \bm \ell\rangle$. $V$ does not need to be computed in the actual implementation of the algorithm, but will
be used in our second-order regret bounds.

\newcommand{\phiExp}{\phi_{\mbox{\tiny exp}}}
\newcommand{\PhiExp}{\Phi_{\mbox{\tiny exp}}}
\newcommand{\phiNH}{\phi_{\mbox{\tiny NH}}}
\newcommand{\PhiNH}{\Phi_{\mbox{\tiny NH}}}

\subsection{Two prominent instances of \CP}

We consider two potentials satisfying Definition~\ref{def:good_potential}: the exponential potential \citep{chernov2010prediction} with $\eta > 0$, and a variant of NormalHedge \citep{chaudhuri2009parameter} modified by a factor $t^{-1/2}$ to satisfy the backward heat equation:
\begin{align} 
  \label{eqn:exp-pot} \phiExp(y,t) &= \exp(\sqrt{2}\eta\,y - \eta^2 t), & \cD &= \R; \\
  \label{eqn:normalhedge-pot} \phiNH(y,t) &= t^{-1/2}\exp(y^2 / 2t), & \cD &= [0,\infty).
\end{align}
Applying \eqref{eqn:general-bound} yields the following regret bounds:

\begin{lemma}[Exponential potential] \label{ex:hedge}
For \eqref{eqn:exp-pot} with $t_0 = 0$, the $\epsilon$-quantile regret satisfies
$$\mathrm{Regret}_{\epsilon}\leq \frac{\eta t}{\sqrt{2}} + \frac{\log( 1/\epsilon)}{\sqrt{2}\eta}.$$
\end{lemma}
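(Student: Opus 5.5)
The plan is to apply Lemma~\ref{lem:general_bound} directly with $\phi = \phiExp$, $\cD = \R$ and $t_0 = 0$, and then solve the defining equation for $y$ in closed form. Since $\cD = \R$, the projection $\Pi_\cD$ is the identity. The initial total potential is therefore
\[
\Phi(\Pi_\cD(\bm 0), t_0) = \sum_{i=1}^N \phiExp(0,0) = N \exp(0) = N .
\]

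The equation $(\epsilon N)\,\phiExp(y,t) = N$ from \eqref{eqn:general-bound} becomes $\epsilon \exp(\sqrt{2}\eta y - \eta^2 t) = 1$. Taking logarithms gives $\sqrt{2}\eta y - \eta^2 t = \log(1/\epsilon)$. Hence the unique solution is
\[
y = \frac{\eta^2 t + \log(1/\epsilon)}{\sqrt{2}\eta} = \frac{\eta t}{\sqrt{2}} + \frac{\log(1/\epsilon)}{\sqrt{2}\eta},
\]
and Lemma~\ref{lem:general_bound} yields $\mathrm{Regret}_\epsilon \le y$, which is the claimed bound.

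The only thing that needs checking is that Lemma~\ref{lem:general_bound} actually applies, i.e.\ that $\phiExp$ is a good potential on $\R$ in the sense of Definition~\ref{def:good_potential}. Properties (2), (3), (4) and (6) are immediate from the explicit formula, since $\partial_y \phiExp = \sqrt{2}\eta\,\phiExp \ge 0$, $\partial_t\phiExp = -\eta^2\phiExp \le 0$, and $\Pi_\cD$ is the identity. Property (5) holds because $\phiExp > 0$ and $\phiExp \to 0$ as $t \to \infty$. Property (7) is the one-line computation
\[
\tfrac12\partial_{yy}\phiExp = \tfrac12 (2\eta^2)\phiExp = \eta^2 \phiExp = -\partial_t\phiExp .
\]

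The main subtlety is property (1), joint \emph{strict} convexity. The function $\phiExp$ is the exponential of an affine function of $(y,t)$, so it is convex. However, its Hessian is $\phiExp \cdot \bm v \bm v\T$ with $\bm v = (\sqrt{2}\eta, -\eta^2)$, which has rank one, so $\phiExp$ is not strictly convex along the direction orthogonal to $\bm v$. I will handle this by noting that the proof of Lemma~\ref{lem:general_bound} never uses strict convexity. It only needs two facts: that $y \mapsto \phiExp(y,t)$ is continuous, strictly increasing and unbounded, which holds here since it is $e^{-\eta^2 t}\exp(\sqrt{2}\eta y)$; and the constant-potential identity \eqref{eqn:non-increasing-pot}. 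The identity can be achieved in line 5 of \CP{} because $\Phi$ is continuous and strictly decreasing in $t$ with limit $0$. Alternatively, the whole argument can be done by hand: $\sum_i \phiExp(x_i,t) = N$ together with nonnegativity gives $\epsilon N \exp(\sqrt{2}\eta x_{(\epsilon N)} - \eta^2 t) \le N$, and rearranging yields the same bound directly.
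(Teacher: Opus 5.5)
Your proof is correct and is the same as the paper's: compute $\Phi(\mathbf{0},0)=N$, solve $\epsilon N\,\phi_{\mathrm{exp}}(y,t)=N$ to get $\sqrt{2}\eta y-\eta^2 t=\log(1/\epsilon)$, and invoke Lemma~\ref{lem:general_bound}. Your extra observation that $\phi_{\mathrm{exp}}$ is not jointly strictly convex (its Hessian has rank one) is accurate, and the paper passes over it. As you note, it does no harm here, because the argument only needs $y\mapsto\phi_{\mathrm{exp}}(y,t)$ to be continuous, strictly increasing and unbounded, together with the constant-potential identity.
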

\begin{proof}
    Solving $N\epsilon \phiExp(y,t) = \Phi(\bm 0, 0) = N$ for $y$ implies $\sqrt{2}\eta y - \eta^2 t = \log(1/\epsilon)$. The result follows by Lemma~\ref{lem:general_bound}.
\end{proof}

\begin{lemma}[Normal potential]\label{ex:normalhedge}
For \eqref{eqn:normalhedge-pot} with $t_0>0$, the $\epsilon$-quantile regret satisfies
$$\mathrm{Regret}_{\epsilon} \leq \sqrt{t (\log (t/t_0) + 2\log(1/\epsilon))}.$$
\end{lemma}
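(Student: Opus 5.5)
The plan mirrors the proof of Lemma~\ref{ex:hedge}: compute the right-hand side of \eqref{eqn:general-bound} for the NormalHedge potential, solve for $y$ explicitly, and invoke Lemma~\ref{lem:general_bound}.

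\textbf{Initial total potential.} Here $\cD=[0,\infty)$, so $\Pi_{\cD}(\bm 0)=\bm 0$. Since $\phiNH(0,t_0)=t_0^{-1/2}$, the initial total potential is $\Phi(\Pi_{\cD}(\bm 0),t_0)=N t_0^{-1/2}$.

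\textbf{Solving for $y$.} The defining equation $(\epsilon N)\,\phiNH(y,t)=N t_0^{-1/2}$ reads
\begin{equation*}
\epsilon\, t^{-1/2}\exp\!\big(y^2/2t\big)=t_0^{-1/2}.
\end{equation*}
Taking logarithms gives
\begin{equation*}
\frac{y^2}{2t}=\tfrac12\log(t/t_0)+\log(1/\epsilon),
\end{equation*}
and hence $y^2=t\big(\log(t/t_0)+2\log(1/\epsilon)\big)$.

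The right-hand side is nonnegative because $t\ge t_0$ (property (4) together with $\Delta t_j\ge 0$) and $\epsilon\le 1$. So $y=\sqrt{t(\log(t/t_0)+2\log(1/\epsilon))}$ is the nonnegative root.

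\textbf{The main obstacle: uniqueness of the root.} The function $y\mapsto\phiNH(y,t)$ is symmetric in $y$, so it is not strictly increasing on all of $\R$, and the uniqueness asserted in Lemma~\ref{lem:general_bound} needs checking. We handle this as follows.
\begin{itemize}
\item On $\cD=[0,\infty)$ the map $y\mapsto\phiNH(y,t)$ is increasing, so the root in $\cD$ is unique.
\item The proof of Lemma~\ref{lem:general_bound} only compares $\phi(y,t)$ with $\phi(\Pi_{\cD}(x_{(\epsilon N)}),t)$, where $\Pi_{\cD}(x_{(\epsilon N)})\in\cD$.
\item Taking $y$ to be the nonnegative root therefore gives $y\ge\Pi_{\cD}(x_{(\epsilon N)})\ge x_{(\epsilon N)}$, which is the claimed bound.
\end{itemize}
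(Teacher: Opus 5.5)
Your proof is correct and follows the paper's argument: evaluate $\Phi(\Pi_{\cD}(\bm 0),t_0)=N t_0^{-1/2}$, solve $(\epsilon N)\,\phiNH(y,t)=N t_0^{-1/2}$ for $y$, and apply Lemma~\ref{lem:general_bound}. Your extra point is a useful clarification that the paper leaves implicit: $y\mapsto\phiNH(y,t)$ is monotone only on $\cD=[0,\infty)$, so you take the nonnegative root and compare only within $\cD$. Your logarithm step also has the correct sign, whereas the paper's intermediate display has a sign slip on $\log(1/\epsilon)$.
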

\begin{proof}
     Solving $N\epsilon \phiNH(y,t) = \Phi(\bm 0, t_0) = N t_0^{-1/2}$ for $y$ implies $y^2/2t - \frac{1}{2}\log t = \log(t_0^{-1/2}) - \log(1/\epsilon)$. Rearranging for $y$ yields the bound via Lemma~\ref{lem:general_bound}.
\end{proof}

\noindent\textbf{Remarks.} The bounds above are \emph{iteration-free} (e.g., invariant to \emph{vacuous rounds} where $\ell_{j,i}=\ell_{j,i'}$ for all $i,i'$, that increase $T$ but not the regret) and hold uniformly for all $\epsilon$. We refer to the algorithm in Lemma~\ref{ex:normalhedge} as \textbf{NormalHedge.BH}; unlike the potential of \citet{chaudhuri2009parameter}, it satisfies the \emph{backward heat equation}, has no hyperparameter, and \emph{adapts} to the accumulated time parameter $t$.

These lemmas reduce the regret analysis to bounding $t$ \citep{chaudhuri2009parameter}. Our main contribution is connecting $t$ to the cumulative second moment $V_T$. In Section~\ref{sec:results} we present these results, followed by a continuous-time interpretation in Section~\ref{sec:continuous_time_discussion}, and finally an overview of the proofs in Section~\ref{sec:analysis}. More discussion of the related work is deferred to Appendix~\ref{sec:discussion}.

\section{Results} \label{sec:results}

Our main contribution is to establish \CP($\phi,D$) as a family of online learning algorithms and $V_T$ as a measure of the complexity of input sequences.
The following three theorems encapsulate our results. Theorem~\ref{thm:exponential-weights} provides a regret upper bound for \CP($\phiExp,\R$), Theorem~\ref{thm:main} provides a regret upper bound for \CP($\phiNH,\R_+$) and Theorem~\ref{thm:lower-bound} provides a lower bound that holds for all online algorithms that define $V_T$ with respect to a distribution $q$.

\begin{theorem} [Exponential Weights] \label{thm:exponential-weights}
    \CP($\phiExp,\R$) with parameter $\eta>0$ returns $V_T$, such that
    $$\mathrm{Regret}_{\epsilon}(T) \leq \frac{ e^{2\sqrt{2}\eta B}\eta }{\sqrt{2}} V_T + \frac{\log(1/\epsilon)}{\sqrt{2}\eta}   \quad \text{ for any } 0 < \epsilon\leq 1.$$
    Choosing $\eta \asymp \min\left\{\sqrt{\log(1/\epsilon)/V_T}, 1/B\right\}$ yields $\mathrm{Regret}_{\epsilon}(T) = O\left( B\log(1/\epsilon) + \sqrt{V_T \log(1/\epsilon)}\right).$
\end{theorem}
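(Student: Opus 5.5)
By Lemma~\ref{ex:hedge} with $t_0=0$, the regret satisfies $\mathrm{Regret}_\epsilon(T)\le \frac{\eta t_T}{\sqrt2}+\frac{\log(1/\epsilon)}{\sqrt2\eta}$. So the whole task is to show $t_T \le e^{2\sqrt2\eta B} V_T$. Since $t_T=\sum_j \Delta t_j$ and $V_T=\sum_j \E_{i\sim\bm q_j}[\Delta x_{j,i}^2]$, it suffices to prove the per-round inequality
\[
\Delta t_j \;\le\; e^{2\sqrt2\eta B}\,\E_{i\sim \bm q_j}\big[\Delta x_{j,i}^2\big].
\]
The stated rate then follows by substituting the choice of $\eta$. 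When $\eta\asymp 1/B$ is the binding term, the prefactor $e^{2\sqrt2\eta B}$ is $O(1)$ and the additive term becomes $O(B\log(1/\epsilon))$.

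For the exponential potential the structure is explicit. We have $\partial_y\phiExp=\sqrt2\eta\,\phiExp$ and $\partial_{yy}\phiExp=2\eta^2\phiExp$, so $\bm p_j=\bm q_j=\bm w$ with $w_i\propto \exp(\sqrt2\eta x_{j-1,i})$. The constant-potential condition in line 5 reads
\[
e^{-\eta^2\Delta t_j}\sum_i w_i e^{\sqrt2\eta\Delta x_{j,i}}=1,
\qquad\text{i.e.}\qquad
\eta^2\Delta t_j=\log \E_{i\sim\bm w}\big[e^{\sqrt2\eta\Delta x_{j,i}}\big].
\]
By construction of $\bm p_j$, $\E_{\bm w}[\Delta x_{j,i}]=0$. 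Moreover $|\Delta x_{j,i}|\le B$, because $\Delta x_{j,i}$ is a convex combination of differences $\ell_{j,k}-\ell_{j,i}$.

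The key step is to bound the log-moment-generating function of a mean-zero variable $Z=\sqrt2\eta\Delta x$ with $|Z|\le a:=\sqrt2\eta B$. Using $\log u\le u-1$ and Taylor's theorem with Lagrange remainder,
\[
e^{z}-1-z=\tfrac{z^2}{2}e^{\xi}\le \tfrac{z^2}{2}e^{a}.
\]
Taking expectations gives $\log\E e^{Z}\le \E[e^Z-1-Z]\le \tfrac{e^{a}}{2}\E Z^2=e^{a}\eta^2\E_{\bm w}[\Delta x^2]$. Hence $\Delta t_j\le e^{\sqrt2\eta B}\E_{\bm q_j}[\Delta x_{j,i}^2]$, which is even better than the claimed factor $e^{2\sqrt2\eta B}$, so the stated constant has slack. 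This step is the only real content; the rest is bookkeeping. I would double-check the sign convention of $\Delta x$ relative to $\ell$ so that the mean-zero property holds under $\bm p_j$, and note that $\Pi_\cD$ is the identity since $\cD=\R$.

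Summing over $j$ gives $t_T\le e^{2\sqrt2\eta B}V_T$, and plugging this into Lemma~\ref{ex:hedge} yields the first bound for every $\epsilon\in(0,1]$. For the tuned bound, take $\eta=\min\{\sqrt{\log(1/\epsilon)/V_T},1/B\}$. In the first case $e^{2\sqrt2\eta B}\le e^{2\sqrt2}$, so both terms are $O(\sqrt{V_T\log(1/\epsilon)})$. In the second case $V_T<B^2\log(1/\epsilon)$, so $\eta V_T\le B\log(1/\epsilon)$ and $\log(1/\epsilon)/\eta=B\log(1/\epsilon)$. In both cases the total is $O(B\log(1/\epsilon)+\sqrt{V_T\log(1/\epsilon)})$. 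This treats $\eta$ as chosen with knowledge of $V_T$, as the statement's "$\asymp$" implicitly allows.
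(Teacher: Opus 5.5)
Your proof is correct, and it takes a genuinely different, more elementary route than the paper.

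\textbf{What the two arguments share.} Both reduce to bounding the increment of the log-sum-exp $\Psi(\bm x)=\log\sum_i e^{\sqrt{2}\eta x_i}$, since line~5 says $\eta^2\Delta t_j=\Psi(\bm x+\Delta\bm x)-\Psi(\bm x)$.

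\textbf{How the paper bounds it.} The paper uses a multivariate second-order Taylor expansion (of $\log\Phi$ in the main text, of $\Psi$ in the appendix variant). It controls the Hessian at the unknown intermediate point through global $(2\sqrt{2}\eta,2)$ generalized self-concordance of $\Psi$ in $\|\cdot\|_\infty$ and the sandwich lemma. It also uses that the discretization error vanishes for $\phiExp$.

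\textbf{How you bound it.} You recognize the increment as a scalar log-moment-generating function, $\log\E_{i\sim\bm w}[e^{\sqrt{2}\eta\Delta x_{j,i}}]$. You then bound it with $\log u\le u-1$ and a one-dimensional Lagrange remainder, which is the classical Bernstein-type step.

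\textbf{What your route buys.} It is shorter and needs no self-concordance at all. It also gives a sharper prefactor, $e^{\sqrt{2}\eta B}$ in place of $e^{2\sqrt{2}\eta B}$. The paper's self-concordance constant is proved for an arbitrary direction, so it charges $2\|\Delta\bm x\|_\infty$ for the deviation of the coordinates of $\Delta\bm x$ from their moving weighted mean along the path. Your remainder only sees $|Z|\le\sqrt{2}\eta B$.

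\textbf{What the paper's route buys.} It is portable. Once $\phi$ no longer factors as a function of $y$ times a function of $t$, as for $\phiNH$, there is no closed form for $\Delta t_j$. The Taylor-plus-local-self-concordance template, now with a nonzero discretization error term, is exactly what is reused for Theorem~\ref{thm:main}.

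\textbf{A small addition.} Jensen's inequality gives $\log\E_{\bm w}[e^{Z}]\ge\E_{\bm w}[Z]=0$. So your closed form also confirms $\Delta t_j\ge 0$, as line~5 requires.
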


\begin{paragraph}{\em Adaptivity and the Open Problem.}
This bound involves the cumulative variance $V_T$ (which can be $\ll T$) but requires tuning $\eta$ based on the unknown $V_T$. \citet{freund2016secondorder} conjectured that NormalHedge could achieve this adaptively. The following theorem confirms this; the proof is in Appendix~\ref{sec:NHproof}.
\end{paragraph}

\begin{theorem}[NormalHedge.BH] \label{thm:main}
    Let $t_0 = \max\{512 e^2 B^2\log N, 1\}$. \CP($\phiNH,\R_+$) returns $V_T$ such that for all $\epsilon \in (0,1)$ and $T \ge 1$:
    \begin{equation*}
        \mathrm{Regret}_\epsilon(T) \leq \sqrt{ (t_0 +  2 V_T ) (\log(t_0 + 2V_T) + 2 \log(1/\epsilon))}.
    \end{equation*}
\end{theorem}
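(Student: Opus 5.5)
By Lemma~\ref{ex:normalhedge}, at the final iteration the regret satisfies $\mathrm{Regret}_\epsilon \le \sqrt{t_T(\log(t_T/t_0) + 2\log(1/\epsilon))}$. The right-hand side is increasing in $t_T$, and $\log(t_T/t_0)\le \log t_T$ because $t_0\ge 1$. So the whole theorem reduces to the time bound
\begin{equation*}
t_T \le t_0 + 2V_T .
\end{equation*}
I will prove this incrementally: for every round $j$,
\begin{equation*}
\Delta t_j \le 2\,\E_{i\sim\bm q_j}\big[\Delta x_{j,i}^2\big].
\end{equation*}
Summing over $j$ with $t$ initialized at $t_0$ then gives the claim.

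To bound $\Delta t_j$, fix $\bm x=\tilde{\bm x}_{j-1}$ and $t=t_{j-1}$, and Taylor-expand $\Phi$ along the step $(\Delta\bm x,\Delta t)$. It helps to note first that projecting onto $\R_+$ only decreases the potential, by property (6), so I can work with $\bm x+\Delta\bm x$ instead of its projection. The reason: if $\Phi(\bm x+\Delta\bm x, t+s)\le\Phi(\bm x,t)$, then $\Phi(\Pi(\bm x+\Delta\bm x),t+s)\le\Phi(\bm x,t)$, and monotonicity in $t$ gives $\Delta t_j\le s$. It therefore suffices to show $\Phi(\bm x+\Delta\bm x, t+s)\le \Phi(\bm x,t)$ for $s=2\langle\bm q,\Delta\bm x^2\rangle$.

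The expansion has three parts.
\begin{itemize}
\item The first-order term in $\bm x$ is $\sum_i \partial_y\phi(x_i,t)\Delta x_i$. It vanishes because $\bm p_j\propto\partial_y\phi$ and $\langle\bm p_j,\Delta\bm x_j\rangle=0$.
\item The second-order term is $\tfrac12\sum_i\partial_{yy}\phi\,\Delta x_i^2 = \tfrac12 Z\langle\bm q,\Delta\bm x^2\rangle$, where $Z=\sum_i\partial_{yy}\phi(x_i,t)$.
\item By the backward heat equation, the time term is $\partial_t\Phi\cdot s = -\tfrac12 Z s$.
\end{itemize}
At second order, then, the choice $s=\langle\bm q,\Delta\bm x^2\rangle$ exactly balances. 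The factor $2$ is slack meant to absorb the higher-order error. This requires the remainder, i.e.\ the Hessian of $\Phi$ along the segment from $(\bm x,t)$ to $(\bm x+\Delta\bm x,t+s)$, to be within a constant factor of its value at the starting point.

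The main obstacle is controlling that remainder, and this is where the self-concordance idea enters. For $\phi_{NH}$ I would compute $\partial_{yy}\phi = \phi\,(1/t + y^2/t^2)$ and $\partial_{yyy}\phi=\phi(3y/t^2+y^3/t^3)$, and establish a local self-concordance inequality:
\begin{equation*}
|\partial_{yyy}\phi(y,t)|\le C\,\partial_{yy}\phi(y,t)\cdot\big(1+|y|/t\big).
\end{equation*}
Mixed $t$-derivatives are handled similarly via the heat equation. Since $|\Delta x_i|\le B$, the one-dimensional integrated version shows that $\partial_{yy}\phi(x_i+\theta\Delta x_i, t+\theta s)\le e^{c}\,\partial_{yy}\phi(x_i,t)$ whenever $B(1+|y|/t)\lesssim 1$ along the segment.

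The factor $|y|/t$ is the issue. It requires a bound of the form $|y|\lesssim \sqrt{t\log N}$, which holds from the constant-potential invariant: each coordinate satisfies $\phi(x_i,t)\le \Phi=N t_0^{-1/2}$, so $x_i^2\le t\log(N^2 t/t_0)$. Combined with $t\ge t_0\ge 512e^2B^2\log N$, this makes $B|y|/t\lesssim B\sqrt{\log N/t}+\ldots\le$ a small constant. The constant $512e^2$ is presumably chosen so that the resulting distortion factor $e^{O(1)}$ fits within the slack of $2$.

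Care is needed in two places. The first is the $\log(t/t_0)$ term inside the bound on $|y|$, which I expect to be handled by a crude bound valid because $t$ only grows. The second is the time step $s$ itself, which must also be small relative to $t$; this holds since $s\le 2B^2\ll t_0$. Assembling these pieces gives $\Phi(\bm x+\Delta\bm x,t+s)\le\Phi(\bm x,t)$, hence $\Delta t_j\le 2\langle\bm q_j,\Delta\bm x_j^2\rangle$, and the theorem follows.
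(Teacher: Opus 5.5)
Your proposal is sound, and it reaches $\Delta t_j\le 2\,\E_{i\sim\bm q_j}[\Delta x_{j,i}^2]$ by a genuinely different and more elementary route than the paper's.

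\textbf{How the two routes differ.} The paper works with $\log\Phi$ rather than $\Phi$. It proves local generalized self-concordance of the log-sum-exp $\log\sum_i e^{f_i}$ in the norm $A_x\|\cdot\|_\infty+A_t|\cdot|$ via cumulant identities (Proposition~\ref{prop:localGSC_normalhedge}). It controls the $\Delta t^2$ term through the discretization error and Bhatia--Davis (Lemma~\ref{lem:bahia-davis}). Because its norm involves the unknown $\Delta t$, it must first prove the crude bound $\Delta t\le 2eB^2$ (Lemma~\ref{lem:crude_Delta_t_bound}) and then solve an implicit inequality for $\Delta t$.

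You instead expand the separable $\Phi$ directly, so only the $2\times2$ Hessian of $\phi$ in each coordinate matters. The heat equation turns every Hessian entry into $\partial_{yy}\phi$ times an explicit function of $u=y^2/t$. You then test the explicit candidate $s=2\,\E_{i\sim\bm q}[\Delta x_i^2]\le 2B^2$, which removes the circularity and the need for a crude bound. The paper's route buys a template that mirrors the exponential-weights case, where $\log\Phi$ is exactly log-sum-exp and globally self-concordant. Yours buys simplicity and leaves ample slack in the factor $2$.

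\textbf{Steps to make explicit or correct.}

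(i) The cross and $tt$ terms. The Lagrange remainder is $\frac12\sum_i\big[\partial_{yy}\phi\,\Delta x_i^2-s\,\partial_{yyy}\phi\,\Delta x_i+\frac{s^2}{4}\partial_{yyyy}\phi\big]$, evaluated at $(\bar y_i,\bar t)$. So the second-order comparison is not only $\frac12 Z\,\E_{\bm q}[\Delta x_i^2]$ against $\frac12 Zs$. The cross and $tt$ terms do not vanish for $\phiNH$. Even at the starting point, $\sum_i\partial_{yt}\phi(x_i,t)\Delta x_i=-\frac{1}{2t^3}\sum_i x_i^3\phi(x_i,t)\Delta x_i$, which is nonzero in general because $\partial_{yt}\phi\not\propto\partial_y\phi$. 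You must bound these terms, which your framework does at no cost: $|\partial_{yyy}\phi|\le 3\frac{|y|}{t}\partial_{yy}\phi$ and $0<\partial_{yyyy}\phi\le\frac{3(1+u)}{t}\partial_{yy}\phi$. Together they contribute at most $\partial_{yy}\phi(\bar y_i,\bar t)\,s\,\big(3B|\bar y_i|/\bar t+\frac34(1+\bar u_i)s/\bar t\big)$, a small multiple of $s\,\partial_{yy}\phi$. The paper's Hessian expansion in the proof of Lemma~\ref{lemma:delta_t_discretization_error} sets this cross term to zero ``by our algorithm'', which is exact only for $\phiExp$.

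(ii) The scaling of your self-concordance inequality. With the additive $1$, your condition $B(1+|y|/t)\lesssim1$ would force $B\lesssim 1$, which is not assumed. The right ratios are the two above together with $|\partial_t\partial_{yy}\phi|\le\frac{3(1+u)}{2t}\partial_{yy}\phi$. Hence $\partial_{yy}\phi$ distorts along the segment by at most $\exp\big(3B(|x_i|+B)/t+\frac32(1+\max\bar u)s/t\big)$. What you actually need is $t\gg B^2\max\{K,1\}$, not merely $s\ll t$. With $t\ge 256e^2B^2\max\{K(t),1\}$, both this factor and the extra terms in (i) are $1+O(1/(16e))$, well inside your factor $2$.

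(iii) The $\log(t/t_0)$ term. The condition in (ii) persists for all $t\ge t_0$ because $t-cB^2\log(t/t_0)$ is nondecreasing once $t\ge cB^2$. This is Lemma~\ref{lem:deal_with_logt_inK}, and it is what your ``crude bound valid because $t$ only grows'' needs to be.

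(iv) The projection. The algorithm's $\tilde{\bm x}_j=\Pi(\bm x_{j-1}+\Delta\bm x_j)$ differs from $\Pi(\tilde{\bm x}_{j-1}+\Delta\bm x_j)$ when $x_{j-1,i}<0$. Add that $\tilde{\bm x}_j\le\Pi(\tilde{\bm x}_{j-1}+\Delta\bm x_j)$ coordinatewise and that $\phi(\cdot,t)$ is nondecreasing on $\R_+$. Then $\Phi(\tilde{\bm x}_j,t+s)\le\Phi(\tilde{\bm x}_{j-1}+\Delta\bm x_j,t+s)$, and your monotonicity argument goes through.
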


\begin{paragraph}{\em Resolution \& ``Impossibility''.}%
Theorem~\ref{thm:main} resolves the COLT'16 open problem \citep{freund2016secondorder} modulo (1) we allow a mild additive $O\big(\sqrt{\log N\cdot (\log(1/\epsilon) + \log(V_T +\log N))}\big)$ term (via $t_0$), and (2) our $V_T$ is defined via the potential's second derivative $\bm q$ rather than $\bm p$.  These two differences suffice to invalidate the lower bound construction of \citet{marinov2021pareto} (see Appendix~\ref{sec:further_discussion_MZ} for details).
\end{paragraph}

\paragraph{When is $\bm q$  different from $\bm p$?} For Exponential Weights, $\bm p = \bm q$. For NormalHedge.BH, 
\begin{align*}
p_i &\propto \partial_x\phi(x_i,t) = \frac{x_i}{t}\phi(x_i,t) \propto \sqrt{\frac{x_i^2}{t}}\phi(x_i,t)\\
q_i &\propto \partial^2_{xx}\phi(x_i,t) = (\frac{1}{t} + \frac{x_i^2}{t^2})\phi(x_i,t) \propto \left(1 + \frac{x_i^2}{t}\right) \phi(x_i,t).
\end{align*} 
Here $x_i$ is the $i$th coordinate of $\bm x_j$, indicating the cumulative per-expert regret $[\Reg_i(j)]_+$ (clipped to $0$). For all $i$, $\sqrt{\frac{x_i^2}{t}} = \tilde{O}(1)$, so for those leading experts where $p_i$ are large, $q_i$ is also proportionally large.  It is for experts with negative regret when the biggest differences between $p$ and $q$ occur. In this case, $x_i =p_i= 0$. While $q_i$ remains non-zero, it is still small because $\phi(x_i,t)$ for the leading experts is exponentially larger than those with negative regrets. In Section~\ref{sec:continuous_time_discussion} we give some intuiton on why a variance measure depending on $\bm q$ may be considered the more ``natural'' one for \CP{}.

\begin{theorem}[Lower Bound] \label{thm:lower-bound}
Consider a ``random walk'' adversary where each loss is chosen independently as $\ell_{j,i} \in \{-\sigma_j, \sigma_j\}$ with equal probability (where $\sigma_j \leq B/2, \sum \sigma_j^2 \to \infty$).
There is an explicit constant $\epsilon_0$ such that, 
for any algorithm and $0 < \epsilon < \epsilon_0$, there exists $N=N(\epsilon)=\Theta(1/\epsilon^2)$ 
and $T=T(\epsilon)$ such that
$$
\frac{\mathrm{Regret}_\epsilon(T)}{\sqrt{\sum_{j=1}^T \sigma_j^2}}  \geq  
\sqrt{2\log(1/\epsilon)} - 6.
$$
\end{theorem}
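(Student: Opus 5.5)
We will prove Theorem~\ref{thm:lower-bound} by a probabilistic argument. Against the random-walk adversary, every algorithm incurs zero expected loss in each round, while the best $\epsilon N$ experts' cumulative regrets behave like the top order statistics of $N$ nearly independent Gaussians. Concretely, write $\sigma^2_{(T)}=\sum_{j\le T}\sigma_j^2$. The algorithm's cumulative loss is $L_T=\sum_j\langle \bm p_j,\bm \ell_j\rangle$. Since $\bm p_j$ is determined before $\bm\ell_j$ and $\E[\bm\ell_j]=0$, $L_T$ is a martingale with increments bounded by $\sigma_j$. Hence $\mathrm{Var}(L_T)\le\sigma^2_{(T)}$, and by Chebyshev or Azuma, $L_T\ge -c\,\sigma_{(T)}$ with probability at least $1-1/c^2$. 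The regret of expert $i$ is $x_{T,i}=L_T-S_{T,i}$, where $S_{T,i}=\sum_j\ell_{j,i}$, and the $S_{T,i}$ are i.i.d.\ symmetric sums with variance $\sigma^2_{(T)}$. It therefore suffices to show that with positive probability the $\lfloor\epsilon N\rfloor$-th largest of $-S_{T,i}$ exceeds $\sigma_{(T)}(\sqrt{2\log(1/\epsilon)}-c')$ for a constant $c'$. Then some realization (and hence, for the adversary, some fixed loss sequence) gives the bound. Since the adversary is oblivious and random, the existential statement is obtained by the probabilistic method.

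The key steps are as follows.
\begin{enumerate}
\item Pick $T$ large enough, using $\sum\sigma_j^2\to\infty$ and $\sigma_j\le B/2$, that $\max_j\sigma_j^2/\sigma^2_{(T)}$ is tiny. The Berry--Esseen theorem then gives $|\P[-S_{T,i}\ge z\sigma_{(T)}]-\bar\Phi(z)|\le \delta_T$ uniformly in $z$, with $\delta_T\to0$.
\item Set $z^\star=\sqrt{2\log(1/\epsilon)}-a$ for a constant $a$. Using the Mills-ratio lower bound $\bar\Phi(z)\ge \frac{z}{1+z^2}\,\frac{e^{-z^2/2}}{\sqrt{2\pi}}$, get $\bar\Phi(z^\star)\ge \epsilon\, e^{az^\star-a^2/2}\,\frac{z^\star}{\sqrt{2\pi}(1+z^{\star2})}\ge 4\epsilon$ once $\epsilon<\epsilon_0$. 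Choosing $T$ with $\delta_T\le\epsilon$ keeps the per-expert success probability $\pi\ge3\epsilon$.
\item The count $M=\#\{i:-S_{T,i}\ge z^\star\sigma_{(T)}\}$ is $\mathrm{Bin}(N,\pi)$. With $N=\lceil C/\epsilon^2\rceil$, its mean $N\pi\ge 3\epsilon N$ exceeds $\epsilon N$ by a margin of order $\sqrt{N\pi}\gg1$ standard deviations. Chebyshev then gives $\P[M\ge\epsilon N]\ge 1-\frac{1}{4\epsilon N}\ge 3/4$. This is where $N=\Theta(1/\epsilon^2)$ enters; in fact $N\gtrsim1/\epsilon$ already suffices, and the larger $N$ is harmless.
\item Union bound: with probability at least $3/4-1/c^2>0$ (take $c=2$), both events hold. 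Then $\mathrm{Regret}_\epsilon(T)\ge L_T+z^\star\sigma_{(T)}\ge (\sqrt{2\log(1/\epsilon)}-a-2)\sigma_{(T)}$, and the constants are tuned so that $a+2\le6$.
\end{enumerate}

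The main obstacle is the uniform Gaussian approximation in the tail. Berry--Esseen has an additive error, and this error must be small compared with $\epsilon$, not merely $o(1)$. This is why $T$ must depend on $\epsilon$: we need $\max_j\sigma_j/\sigma_{(T)}\lesssim\epsilon$, which holds for large $T$ since $\sigma_j\le B/2$ and $\sigma_{(T)}\to\infty$. The slack in step~2 (the factor $e^{az^\star}$, which beats the polynomial Mills-ratio loss once $\epsilon_0$ is small) absorbs the remaining constants.
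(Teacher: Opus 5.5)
Your argument is correct, and it takes a genuinely different route from the paper.

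\textbf{The paper's route.} The paper works entirely in expectation. Since $\E[L_T]=0$ exactly, the algorithm drops out, and $\E[\mathrm{Regret}_\epsilon(T)]$ equals the expected $\lfloor\epsilon N\rfloor$-th largest of the $-S_{T,i}$. It then lower-bounds this expectation by the tail-integral formula of Lemma~\ref{lem:partialintegral}, taken along a piecewise quantile curve $L(\alpha)$ built from Berry--Esseen and Hoeffding. Hoeffding and Chernoff bounds control the count of experts below $L(\alpha)$. Because an expectation must account for the entire lower tail of the order statistic, this requires three separate integrals. The condition $N\ge (c_1(\epsilon+\delta))^{-2}$ (the source of $\Theta(1/\epsilon^2)$) comes from bounding the first of them. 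The final deduction is $2+1+2+1/\sqrt2<6$.

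\textbf{Your route.} You need only one threshold event: the binomial count $M$ at level $z^\star\sigma_{(T)}$, handled by Chebyshev. You add concentration of the martingale $L_T$ and combine the two by a union bound. This gives the conclusion with probability at least $1/2$, hence a bad loss sequence in the adversary's support. That is exactly what the paper's opening inequality $\min_{\mathrm{Alg}}\max_{\ell}\mathrm{Regret}_\epsilon(T)\ge\cdots$ is after.

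\textbf{Trade-offs.} Your argument is shorter and needs only $N\gtrsim 1/\epsilon$. It also gives a better constant: $a=1$, $c=2$ yields a deduction of $3$ instead of $6$, for a suitably small explicit $\epsilon_0$. What it does not directly give is the in-expectation bound $\E[\mathrm{Regret}_\epsilon(T)]\ge\cdots$ that the paper actually proves. For that you would also have to control the lower tail of the order statistic, for example through the sample median of the $-S_{T,i}$. This is precisely the work the paper's integrals do.

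\textbf{Two harmless slips.}
\begin{itemize}
\item The exact identity is $e^{-z^{\star 2}/2}=\epsilon\, e^{az^\star+a^2/2}$. Your $e^{az^\star-a^2/2}$ is just a weaker valid lower bound.
\item With $\pi\ge 3\epsilon$, Chebyshev gives $\P[M<\epsilon N]\le 3/(4\epsilon N)$, not $1/(4\epsilon N)$. This is immaterial, since $\epsilon N\to\infty$.
\end{itemize}
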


This result is proven in Appendix~\ref{sec:lbappendix}.
This immediately implies that \emph{for any algorithm} that outputs $\bm p_{1:T},\bm q_{1:T}$, we have 
$\mathrm{Regret}_\epsilon(T)/\sqrt{V_T / 4} \geq  
\sqrt{2\log(1/\epsilon)} - 6$, 
due to the bound $V_T \leq \sum_{j=1}^T (2\sigma_j)^2$. %
Thus Theorem~\ref{thm:lower-bound} implies an algorithmic-dependent lower bound $\sqrt{V_T \log(1/\epsilon) / 2}$, which certifies that NormalHedge.BH is adaptively optimal up to a multiplicative $\log V_T$ factor. We leave as an open problem whether such a factor is necessary. The lower bound improves to $\sqrt{2V_T\log(1/\epsilon)}$ (using Popoviciu's inequality) for all algorithms that define $V_T$ with $\bm q=\bm p$, which matches our upper bound in Theorem~\ref{thm:exponential-weights} exactly.

\section{A stochastic calculus perspective}  \label{sec:continuous_time_discussion}
Our algorithm and its analysis are motivated by previous work on the
problem of combing expert advice in continuous time using stochastic
differential equations (SDEs)~\citep{freund2009method,
  harvey2023optimal, harvey2024continuous}. A rigorous analysis of the
SDE-based analysis is beyond the scope of this paper. However, by
providing an informal analysis we justify two choices that might
otherwise be bewildering, namely, the requirement that the potential
function obeys the backwards heat equation, and the weighting of the
variance using the second derivative of the potential function.

SDE's are typically defined over time (a scalar $t$) and
space (a vector $\bx$). SDEs are studied in the {\em diffusion scaling}
  where $B \to 0$, $\|\Delta \bx\|_\infty \leq B$ and $\Delta t \leq B^2$.
Diffusion scaling is formally captured by \Ito's Lemma~\citep{ito1951formula}, the stochastic calculus analogue of the fundamental theorem of calculus. 

Similarly to the $\CP$ algorithm, consider a learner aiming to maintain a constant total potential $\Phi(\bx, t)$ as in \eqref{eq:avg_potential}. In a given round with a regret vector $\bx$ changing by $\Delta \bx$, the learner must advance the time $t$ by some $\Delta t > 0$ such that $\Phi(\bx + \Delta \bx, t + \Delta t) \leq \Phi(\bx, t).$ A Taylor expansion yields
\begin{equation}\label{eq:taylor_example}
\Phi(\bx + \Delta \bx, t + \Delta t) - \Phi(\bx, t) \approx \langle \nabla_x \Phi(\bx,t), \Delta \bx \rangle + \partial_t \Phi(\bx,t) \Delta t + \frac{1}{2} (\Delta \bx)\T \nabla^2_x \Phi(\bx,t) \Delta \bx.
\end{equation}
As we are operating in the diffusion scaling limit the terms
$\partial_t \Phi \cdot \Delta t$ and
$\partial_{x_i x_i} \Phi \cdot (\Delta x_i)^2$ are of the same order
of magnitude, while higher-order terms are negligible in the
limit. Our discrete time analysis essentially controls these
higher-order terms via self-concordance.

\paragraph{The role of the Backwards Heat Equation.} Since the
learners pick probabilities $\bp \propto \nabla_x \Phi(\bx, t)$, one
can show that the first term
$\langle \nabla_x \Phi(\bx,t), \Delta \bx \rangle$ in
\eqref{eq:taylor_example} vanishes. Moreover, since $\Phi= \sum_{i=1}^N\phi(x_i,t)$, the last
term in \eqref{eq:taylor_example} simplifies to
$\tfrac{1}{2}\sum_{i=1}^N \partial_{x_ix_i} \Phi(\bm x,t) (\Delta
x_i)^2$. One can think of this term as the increase in potential due
to the variance in the regret. Thus, to keep the potential
non-increasing it suffices to choose $\Delta t$ large enough such that
	\begin{equation*}
		\partial_t \Phi(\bx,t) \Delta t + \frac{1}{2} \sum_{i=1}^n \partial_{x_i x_i} \Phi(\bx,t) (\Delta x_i)^2 \leq 0.
	\end{equation*}
	The Backwards Heat Equation (Property~(7) in
        Definition~\ref{def:good_potential}) guarantees that this
         holds with equality as $\Delta t \rightarrow 0$.

\paragraph{Definition of the weights used to define $V_j$.}
We can plug $\partial_t \Phi(\bx,t) = -\frac{1}{2} \sum_{i=1}^n \partial_{x_i x_i} \Phi(\bx,t) $ into above, yielding 
	\begin{equation*}
          \Delta t \leq \frac{\sum_{i=1}^n \partial_{x_i x_i} \Phi(\bx,t) (\Delta x_i)^2 }{\sum_{j=1}^n \partial_{x_j x_j} \Phi(\bx,t) } = \E_{i\sim \bq}[(\Delta x_i)^2], 
          \text{~ where ~} q_i =
          \frac{\partial_{x_i x_i} \Phi(\bx,t)}{\sum_{j=1}^n \partial_{x_j x_j} \Phi(\bx,t) }.
	\end{equation*} 
These $q_i$ values are the probabilities used to calculate $V_j$ in line 6 of the \CP~algorithm.

\section{Analysis} \label{sec:analysis}

To prove a regret bound for \CP, we need to bound the growth of
$t$. We will do so by showing that 
$\Delta t = O(\E_{i \sim \bq}[(\Delta x_i)^2])$ 
in each iteration. This
requires analyzing the total potential function with perturbed inputs and
calculating how much $\Delta t$ needs to be for
$\Phi(\widetilde{\bx} + \widetilde{\Delta \bx}, t + \Delta t) = \Phi(\widetilde{\bx}, t)$. 

Recall that in the asymptotic analysis, a critical insight due to the choice of $p_i\propto \partial_x\phi(x_i,t)$ in \CP is that $\langle \nabla_x \Phi(\bx,t), \Delta \bx \rangle = 0$. This is a property that we need for the discretized analysis too. 

However, we first need to deal with a subtle problem. Due to the projection to $\cD$, $\widetilde{\Delta \bx}$ is not necessarily equal to $\Delta \bx$, which results in instances when $\langle \nabla_x \Phi(\tilde{\bx},t), \widetilde{\Delta \bx} \rangle \neq 0$. 
The next lemma solves this problem. Specifically, it allows us to bound the algorithmic choice of $\Delta t_j$ in \CP{} by calculating $\breve{\Delta t_j}$ such that 
$\Phi(\widetilde{\bx}_{j-1} +  \breve{\Delta \bx_j}, t_{j-1} + \breve{\Delta t_j}) = \Phi(\widetilde{\bx}_{j-1}, t_{j-1})$ for any $\breve{\Delta \bx_j}$ such that  $ \widetilde{\bx}_{j}= \Pi_{\cD}(\widetilde{\bx}_{j-1} +  \breve{\Delta \bx_j})$. Notice that this equation is slightly different from Line 5 of \CP{}.  This is a convenient reduction because if we choose $\breve{\Delta \bx_j}$ carefully such that $\langle\nabla_{\bx} \Phi(\tilde{\bx}_{j-1},t_{j-1}), \breve{\Delta \bx_j}\rangle = 0$. Observe that $\breve{\Delta \bx_j} = \Delta \bx_j$ is a valid choice\footnote{when we specialize to NormalHedge.BH, there is a smarter choice that allows us to get more adaptive bounds.}. We defer the proof of the next Lemma to Appendix~\ref{app:fixing_projections}.

\begin{lemma}\label{lem:reduction_delta_t}
    Let $\phi$ be a good potential. Let $\widetilde{\bx} \in \cD$,  $\breve{\Delta \bx}\in \R^N$. 
    Define $\widetilde{\Delta \bx} := \Pi_\cD(\widetilde{\bx} + \breve{\Delta \bx}) - \widetilde{\bx}$. 
    We define $\Delta t$ and $\breve{\Delta t}$ by
    \begin{align*}
    \text{($\Delta t$ definition)}\qquad
    &\Phi( \widetilde{\bx} +\widetilde{\Delta \bx}, t + \Delta t) = \Phi(\widetilde{\bx}  , t) \\
    \text{($\breve{\Delta t}$ definition)}\qquad
    &\Phi( \widetilde{\bx} + \breve{\Delta \bx}, t + \breve{\Delta t}) = \Phi(\widetilde{\bx} , t).
    \end{align*}
    Then 
    $
    \Delta t \leq \breve{\Delta t}.
    $
\end{lemma}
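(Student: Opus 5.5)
The plan is to compare the two defining equations through property (6) of Definition~\ref{def:good_potential} together with the monotonicity in $t$ from property (4). Write $\bm y := \widetilde{\bx} + \breve{\Delta \bx}$. By definition, $\widetilde{\bx} + \widetilde{\Delta \bx} = \Pi_\cD(\bm y)$, so the $\Delta t$ equation reads $\Phi(\Pi_\cD(\bm y), t+\Delta t) = \Phi(\widetilde{\bx},t)$ and the $\breve{\Delta t}$ equation reads $\Phi(\bm y, t+\breve{\Delta t}) = \Phi(\widetilde{\bx},t)$. Applying property (6) coordinatewise and summing over $i\in[N]$ gives, for every $s\ge 0$,
\begin{equation*}
\Phi(\Pi_\cD(\bm y), s) \le \Phi(\bm y, s).
\end{equation*}
Taking $s = t+\breve{\Delta t}$, we obtain $\Phi(\Pi_\cD(\bm y), t+\breve{\Delta t}) \le \Phi(\widetilde{\bx},t) = \Phi(\Pi_\cD(\bm y), t+\Delta t)$.

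The second step is to convert this inequality between potential values into the inequality $\Delta t \le \breve{\Delta t}$. The function $s\mapsto \Phi(\Pi_\cD(\bm y), s)$ is non-increasing by property (4). If it were strictly decreasing, the conclusion would be immediate. The obstacle is that property (4) only gives $\partial_t\phi \le 0$, so flat stretches are possible and the value alone does not determine $s$.

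To handle flat stretches, I would exploit joint strict convexity (1). Strict convexity of $\phi$ in $t$ along each fixed $y$, combined with $\partial_t\phi\le0$ and the limit property (5), forces $\partial_t\phi<0$ everywhere. Indeed, if $\partial_t \phi(y,s_0)=0$ at some point, strict convexity in $t$ would make $\partial_t\phi>0$ for $s>s_0$, contradicting (4). Hence $s\mapsto\Phi(\bm z,s)$ is strictly decreasing for every $\bm z$. Strict decrease gives $t+\Delta t \le t+\breve{\Delta t}$ from the inequality in the first step.

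Finally, I would note that both $\Delta t$ and $\breve{\Delta t}$ exist, as guaranteed by Lemma~\ref{lem:nonegativity}, or one can interpret $\breve{\Delta t}$ as any solution. This leaves only the strictness argument as the delicate point.
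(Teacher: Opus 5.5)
Your proof is correct and is essentially the paper's argument. Both combine property (6) with monotonicity of $\Phi$ in $t$: the paper applies (6) at time $t+\Delta t$ and compares values of $\Phi(\bm y,\cdot)$, while you apply it at $t+\breve{\Delta t}$ and compare values of $\Phi(\Pi_\cD(\bm y),\cdot)$. Your explicit derivation of strict decrease in $t$ from strict convexity fills in a step the paper leaves implicit. The only slip is the aside invoking Lemma~\ref{lem:nonegativity} for existence, which does not apply because neither $\widetilde{\Delta \bx}$ nor a general $\breve{\Delta \bx}$ need be orthogonal to $\nabla_{\bx}\Phi$; this is harmless, since the statement presupposes that both quantities exist.
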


\paragraph{Simplifying notation.} To avoid notational clutter, for the remainder of this section (and all of the deferred proofs in the appendix), we will take $(\bx, t)$ and $(\bx',t')$ to be, respectively, consecutive iterates $(\widetilde{\bx}_{j-1}, t_{j-1})$ and %
$(\widetilde{\bx}_{j-1}+\Delta \bx_j, t_{j} + \Delta t_j)$
of \CP$(\phi, \cD)$ with good potential function $\phi$ clear from context. Moreover, throughout we 
shall denote by $\Phi$ the total potential function~\eqref{eq:avg_potential}.

First,  we prove in Appendix~\ref{app:log_total_potential_properties} that $\Delta t_j$ as in line 5 of \CP~exists and is non-negative.

\begin{lemma}[Existence and nonnegativity of $\Delta t$]\label{lem:nonegativity}
There exists a unique \(\Delta t > 0\) such that $\Phi(\bm x + \Delta \bm x,  t + \Delta t) = \Phi(\bm x, t)$.
\end{lemma}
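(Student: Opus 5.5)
The plan is to study the one-variable function $g(s) := \Phi(\bm x + \Delta\bm x, t + s)$ for $s \ge 0$ and compare it with the target value $\Phi(\bm x, t)$. The argument rests on three facts. First, $g$ is continuous. Second, $g$ is strictly decreasing. Third, $g(0) \ge \Phi(\bm x,t)$ while $\lim_{s\to\infty} g(s) < \Phi(\bm x,t)$. Given these, the intermediate value theorem produces a root, and strict monotonicity makes it unique. Continuity of $g$ is immediate from property (2) of Definition~\ref{def:good_potential}.

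\textbf{Monotonicity.} Property (4) gives $\partial_t\phi\le 0$, so $g$ is non-increasing. For strictness I would use the backwards heat equation (7): $\partial_t\phi = -\tfrac12\partial^2_{yy}\phi$, and $\partial^2_{yy}\phi>0$ by strict convexity. To be safe, I would argue as follows. If $\partial^2_{yy}\phi$ vanished on an interval of $y$ at some fixed $t$, then $\phi(\cdot,t)$ would be affine on that interval, contradicting strict convexity. Hence $\partial_t\phi<0$ except on a null set, and this already forces $s\mapsto\phi(y,t+s)$ to be strictly decreasing. Summing over coordinates, $g$ is strictly decreasing.

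\textbf{Value at $s=0$.} This is the step carrying the content of the lemma. Since $\bm p \propto \nabla_{\bm x}\Phi(\bm x,t)$ and $\langle \bm p, \Delta\bm x\rangle = \langle\bm p,\bm\ell\rangle - \langle \bm p,\bm \ell\rangle = 0$, we have $\langle\nabla_{\bm x}\Phi(\bm x,t),\Delta\bm x\rangle=0$. By convexity of $\Phi(\cdot,t)$ (property (1)),
\[
\Phi(\bm x+\Delta\bm x,t)\ \ge\ \Phi(\bm x,t)+\langle\nabla_{\bm x}\Phi(\bm x,t),\Delta\bm x\rangle\ =\ \Phi(\bm x,t).
\]
So $g(0)\ge\Phi(\bm x,t)$. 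The degenerate case where $\nabla\Phi=0$ and $\bm p$ is arbitrary is handled by the same inequality, since then the gradient term vanishes regardless of $\bm p$. If projection issues arise (the iterate $\bm x'$ is $\widetilde{\bm x}+\Delta\bm x$ rather than its projection), Lemma~\ref{lem:reduction_delta_t} together with property (6) lets me work with either form, because the $\breve{\Delta\bm x}=\Delta\bm x$ choice is exactly what is being analyzed.

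\textbf{Limit as $s\to\infty$.} By property (5), $g(s)\to N\inf\phi$. I then need $N\inf\phi<\Phi(\bm x,t)$. Each $\phi(x_i,t)$ is strictly greater than $\inf\phi$, because $\phi(x_i,t)>\phi(x_i,t+1)\ge\inf\phi$ by the strict monotonicity in $t$ just established. Summing gives the required strict inequality.

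\textbf{Conclusion and main obstacle.} The intermediate value theorem gives a root $\Delta t\ge 0$, and uniqueness follows from strict monotonicity. Strict positivity requires $g(0)>\Phi(\bm x,t)$, which holds by strict convexity whenever $\Delta\bm x\neq 0$. When $\Delta\bm x=0$ (a vacuous round), the root is $\Delta t=0$, so the statement should be read as $\Delta t \ge 0$ in that case. The main obstacle is the rigorous strictness and limit argument: extracting strict decrease in $t$ from strict joint convexity and the heat equation.
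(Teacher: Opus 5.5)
Your proof follows the same route as the paper's: the first-order term $\langle\nabla_{\bm x}\Phi(\bm x,t),\Delta\bm x\rangle$ vanishes by the choice of $\bm p$, so convexity gives $\Phi(\bm x+\Delta\bm x,t)\ge\Phi(\bm x,t)$. The paper gets this inequality from a Lagrange-remainder Taylor expansion; you use the gradient inequality, which is equivalent. Decrease in $t$ comes from property (4) and the heat equation, and existence from property (5) plus the intermediate value theorem. You are more careful than the paper in three places:
\begin{itemize}
\item you prove uniqueness explicitly;
\item you compare the limit with $N\inf\phi$ rather than $\inf\phi$;
\item you note that $\Delta\bm x=\bm 0$ forces $\Delta t=0$, so the strict ``$\Delta t>0$'' needs $\Delta\bm x\neq\bm 0$. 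The paper's ``strict since $\Phi$ is strictly convex'' overlooks this.
\end{itemize}

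\textbf{The flawed step.} Your justification that $s\mapsto\phi(y,t+s)$ is strictly decreasing does not work as written. You rule out intervals in $y$, at fixed $t$, on which $\partial^2_{yy}\phi$ vanishes. That says nothing about a fixed $y$ with $t$ varying, which is what you need. Consider a vertical segment $\{y_0\}\times[a,b]$ lying in the zero set of $\partial^2_{yy}\phi$. Its slice at each fixed $t$ is a single point, so it has empty interior, consistent with your argument. Yet by the heat equation it would make $\phi(y_0,\cdot)$ constant on $[a,b]$. The ``null set'' step is therefore a non sequitur. (Your first claim, that strict convexity gives $\partial^2_{yy}\phi>0$, is also false in general, as $y^4$ at $0$ shows.)

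\textbf{The fix.} It takes one line and does not use the heat equation. Joint strict convexity, restricted to the vertical line through $(y,t)$, makes $s\mapsto\phi(y,t+s)$ strictly convex. Property (4) makes it non-increasing. A strictly convex non-increasing function is strictly decreasing: if $\phi(y,t_1)=\phi(y,t_2)$ with $t_1<t_2$, monotonicity forces it to be constant on $[t_1,t_2]$, which contradicts strict convexity. With this repaired, your limit step $\phi(x_i,t)>\phi(x_i,t+1)\ge\inf\phi$ and your uniqueness claim go through.

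\textbf{Drop the projection remark.} The statement concerns the unprojected point $\bm x+\Delta\bm x$, so the remark is unnecessary. It would also not help: property (6) only lowers $\Phi$, so it cannot give $g(0)\ge\Phi(\bm x,t)$ at the projected point. And Lemma~\ref{lem:reduction_delta_t} presupposes, rather than provides, the existence of the equalizing time there.
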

The proof  (given in Appendix~\ref{app:log_total_potential_properties}) uses the fact that the first-order term w.r.t. $\Delta \bx$ vanishes and several properties of a good potential function, including that $\phi$ is strictly convex.

\subsection{Log-potential and explicit formula of $\Delta t$}\label{sec:log_potential}

In Section~\ref{sec:continuous_time_discussion}, we sketched how to analyze the change in potential using Taylor expansions. We formalize this idea by analyzing the logarithm of the total potential function $\log \Phi(\bm x,t)$, since it will allow us to control the discretization error using self-concordance. Similarly to Section~\ref{sec:continuous_time_discussion}, the next lemma shows a formula for \(\Delta t\) using the second-order Taylor expansion of \(\log \Phi\) together with the backwards heat equation. We defer the proof to Appendix~\ref{app:log_total_potential_properties}.

\begin{lemma}[Second-order Expansion of Log-Total-Potential]\label{lem:taylor_theorem_log_total_potential}
	\label{lem:taylor_log_potential_for_chosen_Delta_t}
Define $\bm \Delta \coloneqq (\Delta \bm x, \Delta t)$. Then there is a mid-point \((\bar{\bx}, \bar{t})\) on the line segment between \((\bx, t)\) and \((\bx', t')\) such that
\begin{equation*}
    \log \Phi(\bm x',t') - \log \Phi(\bm x,t) = - \frac{\sum_{i=1}^N\partial_{xx} \phi(x_i,t) \Delta t}{\Phi(\bm x,t)}    +\frac{1}{2} \nabla^2 \log \Phi(\bar{\bm x},\bar{t})[\bm\Delta,\bm\Delta].
\end{equation*}
  In particular, if $\Delta t$ is such that $\log \Phi(\bm x',t') - \log \Phi(\bm x,t) = 0$, then 
\begin{equation}\label{eq:generic_Delta_t_bound}
    \Delta t = \frac{\Phi(\bm x,t) \nabla^2 \log\Phi(\bar{\bx},\bar{t})[\bm \Delta, \bm \Delta]}{\sum_{i=1}^N\partial_{xx} \phi(x_i,t) }.
\end{equation}
\end{lemma}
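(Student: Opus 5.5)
The plan is to apply Taylor's theorem with Lagrange remainder to the scalar function $g(s) := \log\Phi(\bx + s\Delta\bx,\, t + s\Delta t)$ on $s\in[0,1]$. This gives
\[
g(1) - g(0) = g'(0) + \tfrac12 g''(\bar s)
\]
for some $\bar s\in(0,1)$. We set $(\bar\bx,\bar t) := (\bx,t) + \bar s\,\bm\Delta$, so that $g''(\bar s) = \nabla^2\log\Phi(\bar\bx,\bar t)[\bm\Delta,\bm\Delta]$. Twice differentiability of $g$ follows from property (2) of Definition~\ref{def:good_potential}. We also need $\Phi>0$ along the segment so that the logarithm is defined. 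I expect this to hold because strict convexity together with $\inf\phi\ge 0$ forces $\phi>0$: if $\phi$ vanished at some point, it would be a global minimizer, which contradicts $\partial_t\phi\le 0$ combined with strict convexity. If this is delicate, I would simply note positivity for the concrete potentials.

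Next I compute the first-order term. By the chain rule,
\[
g'(0) = \frac{\langle \nabla_\bx\Phi(\bx,t),\Delta\bx\rangle + \partial_t\Phi(\bx,t)\,\Delta t}{\Phi(\bx,t)}.
\]
The spatial part vanishes, and this is the key step. The vector $\nabla_\bx\Phi(\bx,t)$ has entries $\partial_x\phi(x_i,t)$, which are proportional to $p_i$ with some constant $Z\ge 0$. Since $\Delta x_i = \langle\bp,\bm\ell\rangle - \ell_i$, we get $\sum_i p_i\Delta x_i = 0$, and hence $\langle\nabla_\bx\Phi,\Delta\bx\rangle = Z\sum_i p_i\Delta x_i = 0$. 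The degenerate case where all $\partial_x\phi(x_i,t)=0$ is trivial, because then the inner product is zero directly. Here $\bx=\widetilde\bx_{j-1}$ is exactly the point at which $\bp_j$ was computed, so the identity applies to the simplified notation.

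For the temporal part, the backward heat equation (property (7)) gives
\[
\partial_t\Phi(\bx,t) = \sum_i \partial_t\phi(x_i,t) = -\tfrac12\sum_i\partial_{xx}\phi(x_i,t).
\]
Substituting both facts into the Taylor expansion yields the first displayed identity. One remark on the constant: the heat equation produces a factor $\tfrac12$ in front of $\sum_i\partial_{xx}\phi(x_i,t)\,\Delta t/\Phi(\bx,t)$. I would double-check the normalization stated in the lemma, and if needed I would carry the factor through to \eqref{eq:generic_Delta_t_bound}, where it only affects constants.

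For the "in particular" part, I set the left-hand side to zero and solve the resulting linear equation for $\Delta t$. Since $\sum_i\partial_{xx}\phi(x_i,t) > 0$ by strict convexity, the division is legitimate, and this gives \eqref{eq:generic_Delta_t_bound}. One subtlety is that $(\bar\bx,\bar t)$ itself depends on $\Delta t$. However, the statement only asserts existence of a midpoint for the actual $\Delta t$ supplied by Lemma~\ref{lem:nonegativity}, so this causes no circularity.

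I expect the main obstacle to be the justification that the first-order spatial term vanishes. This requires that $\bx$ be precisely the (projected) point used to form $\bp$, and that $\Delta\bx$ be the unprojected increment; otherwise projection effects would break the identity. The remaining steps are routine calculus.
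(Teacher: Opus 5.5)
Your proposal is correct and follows the same route as the paper's proof. It uses Taylor's theorem with Lagrange remainder for $\log\Phi$ along the segment. The spatial gradient term vanishes because $\bp\propto\nabla_{\bx}\Phi$ and $\langle\bp,\Delta\bx\rangle=0$, and the backward heat equation handles the $t$-derivative.

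Your suspicion about the constant is also right. The paper's own proof obtains $-\sum_i\partial_{xx}\phi(x_i,t)\,\Delta t/(2\Phi(\bx,t))$, so the first display of the statement is missing a factor $\tfrac12$. With that factor restored, the two $\tfrac12$'s cancel and \eqref{eq:generic_Delta_t_bound} holds exactly as stated, with no change of constants.
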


\paragraph{A Formula for \(\Delta t\) with \(\E_{i \sim \bq}[\Delta \bx_i^2]\).} 

Our goal is to upper bound the numerator in~\eqref{eq:generic_Delta_t_bound} so as to obtain a bound on \(\Delta t\) depending on $\E_{i\sim \bq} [\Delta x_i^2]$. Indeed, we could obtain such a result as an equation if we had \((\bar{\bx}, \bar{t}) = (\bx, t)\) by simply expanding the Hessian of \(\LogPot\) in terms of the derivatives of \(\Phi\) and the backwards-heat equation. As stated in the next lemma, we can still follow these steps assuming the Hessian of \(\LogPot\) does not change too much from \((\bx, t)\) to \((\bar{\bx}, \bar{t})\). We defer the proof to Appendix~\ref{app:log_total_potential_properties}.

\begin{lemma}
\label{lemma:delta_t_discretization_error}
	Define \(\bDelta \coloneqq (\Delta \bm x, \Delta t)\) and assume there is \(C \geq 1\) such that for any $(\bar{\bx}, \bar{t})$ in the line segment between \((\bx,t)\) and \((\bx', t')\), we have
	\begin{equation}
		\label{eq:hessian_comparison_assumption}
		\nabla^2 \log\Phi(\bar{\bx},\bar t)[\bDelta, \bDelta] \preceq C \cdot \nabla^2 \log\Phi(\bx,t)[\bDelta, \bDelta].
	\end{equation}
	Then,
	\begin{equation*}
		\Delta t \leq C \cdot   \E_{i\sim \bq} [\Delta x_i^2] +  C \Delta t^2 \cdot \DErr_{\Phi}(\bx, t), 
	\end{equation*}
	where
	\begin{equation*}
		\DErr_{\Phi}(\bx, t) \coloneqq  \frac{\sum_{i = 1}^N \frac{\partial^4}{\partial x^4} \phi(x_i,t)}{4\sum_{i = 1}^N\partial_{xx} \phi(x_i,t)} -  \frac{1}{4\Phi(\bm x,t)}  \sum_{i = 1}^N\partial_{xx} \phi(x_i,t).
	\end{equation*} 
\end{lemma}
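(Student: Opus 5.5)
We start from the identity \eqref{eq:generic_Delta_t_bound} of Lemma~\ref{lem:taylor_log_potential_for_chosen_Delta_t}. The mean-value point $(\bar{\bx},\bar t)$ lies on the segment between $(\bx,t)$ and $(\bx',t')$, so assumption \eqref{eq:hessian_comparison_assumption} gives
\[
\Delta t \le \frac{C\,\Phi(\bx,t)\,\nabla^2\log\Phi(\bx,t)[\bDelta,\bDelta]}{\sum_i \partial_{xx}\phi(x_i,t)}.
\]
Here we use that $\Phi>0$ and that the denominator is positive by strict convexity. It then remains to compute $\Phi\,\nabla^2\log\Phi[\bDelta,\bDelta]$ exactly at the base point $(\bx,t)$ and to simplify it.

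We write $\nabla^2\log\Phi = \nabla^2\Phi/\Phi - \nabla\Phi\nabla\Phi\T/\Phi^2$. Then
\[
\Phi\,\nabla^2\log\Phi[\bDelta,\bDelta] = \nabla^2\Phi[\bDelta,\bDelta] - \frac{\langle\nabla\Phi,\bDelta\rangle^2}{\Phi}.
\]
Because $\bp\propto\nabla_x\Phi$ and $\sum_i\Delta x_i=0$ (the regret increments sum to zero against $\bp$, i.e.\ $\langle\nabla_x\Phi,\Delta\bx\rangle=0$), we get $\langle\nabla\Phi,\bDelta\rangle=\partial_t\Phi\,\Delta t$. The backwards heat equation gives $\partial_t\Phi=-\tfrac12\sum_i\partial_{xx}\phi(x_i,t)$.

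Since $\Phi$ is separable in the $x_i$, the Hessian term expands as
\[
\nabla^2\Phi[\bDelta,\bDelta]=\sum_i\partial_{xx}\phi\,\Delta x_i^2+2\sum_i\partial_{xt}\phi\,\Delta x_i\Delta t+\sum_i\partial_{tt}\phi\,\Delta t^2 .
\]
Differentiating property (7) gives $\partial_{tt}\phi=\tfrac14\partial_x^4\phi$. We combine the pieces and divide by $\sum_i\partial_{xx}\phi$:
\begin{itemize}
\item the first term becomes exactly $\E_{i\sim\bq}[\Delta x_i^2]$;
\item the $\Delta t^2$ terms become $\frac{\sum\partial^4_x\phi}{4\sum\partial_{xx}\phi}-\frac{\sum\partial_{xx}\phi}{4\Phi}$, which is exactly $\DErr_\Phi(\bx,t)$;
\item what is left is the cross term $2\Delta t\sum_i\partial_{xt}\phi(x_i,t)\Delta x_i$.
\end{itemize}

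The main obstacle is disposing of this cross term, which is not present in the stated bound. The plan is to show that $\sum_i\partial_{xt}\phi(x_i,t)\Delta x_i\le 0$, or that it is absorbed. By (7), $\partial_{xt}\phi=-\tfrac12\partial^3_x\phi$, so the term equals $-\Delta t\sum_i\partial_x^3\phi\,\Delta x_i$. This need not vanish in general, so we do not expect a sign argument to work directly.

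The first thing to try is to avoid expanding in coordinates at all. Since $\bDelta$ enters quadratically, we can split $\bDelta=(\Delta\bx,0)+(0,\Delta t)$ and apply the comparison assumption only to the full quadratic form. An alternative is to use that the paper's $\DErr$ might implicitly contain this term. Failing both, we use the AM-GM bound
\[
2|ab|\le \frac{a^2}{\lambda}+\lambda b^2 ,
\]
together with a Cauchy--Schwarz estimate of $\sum_i\partial^3_x\phi\,\Delta x_i$ against $\sum_i\partial_{xx}\phi\,\Delta x_i^2$ and $\sum_i(\partial_x^3\phi)^2/\partial_{xx}\phi$. This would fold the cross term into the $\E_{\bq}$ term and the $\Delta t^2$ term. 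We would check whether the resulting fourth-derivative quantity matches the $\DErr$ expression; if the constants are off, the statement may need the cross term to be treated as vanishing by the choice of $\bp$. Finally, we multiply through by $C$ and keep $C\ge1$ on both terms.
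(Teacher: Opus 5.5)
Your computation matches the paper's up to the point where the mixed term $2\Delta t\sum_i\partial_{xt}\phi(x_i,t)\Delta x_i$ appears. That term is where your argument stops being a proof, because none of your fallbacks removes it:

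\begin{itemize}
\item Splitting $\bDelta=(\Delta\bx,0)+(0,\Delta t)$ changes nothing. The mixed term is part of $\nabla^2\LogPot(\bx,t)[\bDelta,\bDelta]$ at the base point, and that is exactly what the comparison assumption puts on the right-hand side.
\item $\DErr_\Phi(\bx,t)$ is defined explicitly and does not depend on $\Delta\bx$, so it cannot contain a term linear in $\Delta\bx$.
\item The AM--GM/Cauchy--Schwarz route gives a different inequality. It has a coefficient strictly larger than $C$ on $\E_{i\sim\bq}[\Delta x_i^2]$ and an extra $\Delta t^2$ term involving $\sum_i(\partial_x^3\phi)^2/\partial_{xx}\phi$, so it does not give the stated bound.
\end{itemize}
As written, the proposal does not establish the lemma.

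Your suspicion is nonetheless correct, and the gap is in the paper's proof at this same step. The paper writes the mixed term as $2\,\partial_t\big(\nabla_x\Phi(\bx,t)^{\Tp}\Delta\bx\big)\Delta t$ and declares it zero ``by our algorithm''. But the choice of $\bp$ only gives $\nabla_x\Phi(\bx,t)^{\Tp}\Delta\bx=0$ at the current $t$. It says nothing about the $t$-derivative, $\sum_i\partial_{xt}\phi(x_i,t)\Delta x_i=-\tfrac12\sum_i\partial_x^3\phi(x_i,t)\Delta x_i$.

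\begin{itemize}
\item For $\phiExp$ this sum does vanish, since $\partial_{xt}\phi=-\eta^2\partial_x\phi$, so the lemma holds there.
\item For $\phiNH$ we have $\partial_x\phi=(x/t)\phi$ but $\partial_x^3\phi=(x^3/t^3+3x/t^2)\phi$, so the sum is generally nonzero. Two experts at distinct positive $x_i$ with $\langle\bp,\Delta\bx\rangle=0$ already give a nonzero value.
\item A third-order expansion in the step size indicates that for $\phiNH$ the stated inequality, with the smallest admissible $C$, can actually fail when this term is positive.
\end{itemize}

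What your computation does prove, for every good potential, is
\[
\Delta t\le C\Big(\E_{i\sim\bq}[\Delta x_i^2]-\Delta t\,\frac{\sum_i\partial_x^3\phi(x_i,t)\,\Delta x_i}{\sum_i\partial_{xx}\phi(x_i,t)}+\Delta t^2\,\DErr_\Phi(\bx,t)\Big),
\]
and this is the version you should state.

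For $\phiNH$ the extra term does no harm downstream:
\begin{itemize}
\item We have $|\partial_x^3\phi/\partial_{xx}\phi|=\frac{|x|}{t}\cdot\frac{x^2/t+3}{x^2/t+1}\le 3|x|/t$.
\item With $\max_i x_i^2/t\le K$, the extra term is therefore at most $3\Delta t\,\|\Delta\bx\|_\infty\max_i|x_i|/t\le 3\Delta t\,B\sqrt{K/t}$ in absolute value.
\item Under the burn-in $t\ge 256e^2B^2K$, this is at most $\frac{3}{16e}\Delta t$.
\item Since $e^{-0.414}-\frac{1}{64e}-\frac{3}{16e}>0.5$, the refined bound $\Delta t\le 2\E_{i\sim\bq}[\Delta x_i^2]$ still holds.
\end{itemize}
The same absorption also works in Lemma~\ref{lem:crude_Delta_t_bound}.
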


We call the last definition above the \textbf{discretization error} of \(\Phi\) at \((\bx,t)\). This term would not appear in a derivation in the continuous-time setting discussed in Section~\ref{sec:continuous_time_discussion} since \(\Delta t^2\) vanishes in continuous time. Note that the behavior of the discretization error depends on the choice of \(\phi\).

\begin{lemma}\label{lem:star_bound_for_exp_potential}
	For $\phi = \phiExp$, we have $\DErr_{\Phi}(\bx,t) = 0$. 
\end{lemma}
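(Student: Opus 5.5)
This is a direct computation from the definition of $\DErr_{\Phi}$ in Lemma~\ref{lemma:delta_t_discretization_error}. The plan is to compute the $y$-derivatives of $\phiExp(y,t)=\exp(\sqrt{2}\eta y-\eta^2 t)$ explicitly, substitute them into the two terms of $\DErr_\Phi(\bx,t)$, and observe that the two terms cancel.

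\textbf{Step 1: derivatives.} Each derivative in $y$ pulls out a factor $\sqrt{2}\eta$, so
\[
\partial_{xx}\phiExp(y,t) = 2\eta^2\,\phiExp(y,t), \qquad \frac{\partial^4}{\partial x^4}\phiExp(y,t) = 4\eta^4\,\phiExp(y,t).
\]
As a sanity check, $\partial_t\phiExp=-\eta^2\phiExp=-\tfrac12\partial_{xx}\phiExp$, which is property (7) of Definition~\ref{def:good_potential}.

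\textbf{Step 2: first term.} Summing over $i$, $\sum_i \partial_{xx}\phiExp(x_i,t)=2\eta^2\Phi(\bx,t)$ and $\sum_i \frac{\partial^4}{\partial x^4}\phiExp(x_i,t)=4\eta^4\Phi(\bx,t)$. Hence the first term of $\DErr_\Phi$ equals
\[
\frac{4\eta^4\Phi(\bx,t)}{4\cdot 2\eta^2\Phi(\bx,t)}=\frac{\eta^2}{2}.
\]

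\textbf{Step 3: second term.} The second term equals
\[
\frac{2\eta^2\Phi(\bx,t)}{4\Phi(\bx,t)}=\frac{\eta^2}{2}.
\]

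\textbf{Step 4: cancellation.} The difference of the two terms is $\tfrac{\eta^2}{2}-\tfrac{\eta^2}{2}=0$, so $\DErr_\Phi(\bx,t)=0$. The only thing to check is that no denominator vanishes. Since $\phiExp>0$ everywhere, $\Phi(\bx,t)>0$ and $\sum_i\partial_{xx}\phiExp(x_i,t)>0$.

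There is no real obstacle. The cancellation reflects that, for the exponential potential, the ratios $\partial_{xx}\phi/\phi$ and $\partial^{4}_{x}\phi/\partial_{xx}\phi$ are constant in $y$. As a result, both terms depend only on $\eta$ and not on the configuration $\bx$.
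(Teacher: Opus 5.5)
Your proof is correct and is the same direct computation the paper uses. You compute $\partial_{xx}\phiExp = 2\eta^2\phiExp$ and $\frac{\partial^4}{\partial x^4}\phiExp = 4\eta^4\phiExp$ and see that the two terms of $\DErr_{\Phi}$ coincide. The paper packages this as the identity $\bigl(\sum_i \partial_{xx}\phi(x_i,t)\bigr)^2/\Phi(\bx,t) = \sum_i \frac{\partial^4}{\partial x^4}\phi(x_i,t) = 4\eta^4\Phi(\bx,t)$, whereas you evaluate each term separately as $\eta^2/2$; your check that the denominators are positive is a detail the paper leaves implicit.
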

\begin{proof}
Computing the derivatives,   $\frac{1}{\Phi(\bm x,t)}(\sum_i\partial_{xx} \phi(x_i,t))^2 = \sum_i\frac{\partial^4}{\partial x^4} \phi(x_i,t) = 4\eta^4 \Phi(\bx, t)$.
\end{proof}
Thus, if we can control the Hessian of the exponential weights potential, then $\Delta t = O(\E_{i\sim q} [\Delta x_i^2])$. 
For the NormalHedge potential, we can show that the discretization error is small if we assume that the regret is small, that is, if $x_i = \tilde{O}(\sqrt{t})$. The proof, given in Appendix~\ref{sec:proof_using_bahia_davis} leverages a variance bound due to \cite{bhatia2000better}.

\begin{lemma}\label{lem:bahia-davis}    
If $\phi = \phi_{\mathrm{NH}}$, 
    then
	$\DErr_{\Phi}(\bx, t) \leq (\max_i x_i^2/t +4)/4t$.
\end{lemma}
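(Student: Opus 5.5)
The plan is to compute the derivatives of $\phiNH$ explicitly, rewrite $\DErr_{\Phi}$ in terms of expectations under a single probability distribution, and then control the resulting variance term with the Bhatia--Davis inequality $\Var(U) \le (M - \E U)(\E U - m)$ for a random variable $U \in [m, M]$.

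\textbf{Step 1 (derivatives).} Set $\phi_i := \phiNH(x_i,t)$ and $u_i := x_i^2/t \ge 0$. The $y$-derivatives of $e^{y^2/2t}$ are, up to powers of $t^{-1/2}$, ``Hermite polynomials with plus signs'' in $z = y/\sqrt t$. Concretely,
\begin{equation*}
\partial_{xx}\phi(x_i,t) = \tfrac{1}{t}(1+u_i)\,\phi_i, \qquad \tfrac{\partial^4}{\partial x^4}\phi(x_i,t) = \tfrac{1}{t^2}(3+6u_i+u_i^2)\,\phi_i.
\end{equation*}
This is a routine check: differentiating $z^2+1$ twice more in the same way gives $z^4+6z^2+3$.

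\textbf{Step 2 (rewrite as expectations).} Define weights $w_i := \phi_i/\Phi(\bx,t)$, which form a probability vector since $\phi > 0$. Let $U$ be the random variable taking value $u_i$ with probability $w_i$. Substituting Step 1 into the definition of $\DErr_{\Phi}$ gives
\begin{equation*}
\DErr_{\Phi}(\bx,t) = \frac{1}{4t}\left[\frac{\E[3+6U+U^2]}{\E[1+U]} - \E[1+U]\right] = \frac{1}{4t}\cdot\frac{\E[3+6U+U^2] - (1+\E U)^2}{1+\E U}.
\end{equation*}
Expanding the numerator yields $2 + 4\E U + \Var(U)$.

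\textbf{Step 3 (bound the variance).} The main point is bounding $\Var(U)$. Since $0 \le U \le M := \max_i x_i^2/t$, Bhatia--Davis gives
\begin{equation*}
\Var(U) \le (M-\E U)\,\E U \le M\,\E U \le M(1+\E U).
\end{equation*}
In addition, $2 + 4\E U \le 4(1+\E U)$. Hence the numerator is at most $(M+4)(1+\E U)$. Dividing by $4t(1+\E U)$ gives $\DErr_{\Phi}(\bx,t) \le (M+4)/4t$, which is the claimed bound.

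The only mildly delicate points are the exact coefficients of the fourth derivative and recognizing that the numerator collapses to a variance plus linear terms. Once that form is in hand, Bhatia--Davis finishes the argument. Note that the nonnegativity of the $x_i$, coming from the projection onto $\cD = [0,\infty)$, is not even needed, because $u_i = x_i^2/t \ge 0$ regardless of the sign of $x_i$.
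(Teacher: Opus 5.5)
Your proof is correct and follows the paper's route: express $\DErr_{\Phi}$ as expectations under the weights $r_i \propto \phi(x_i,t)$, reduce the numerator to a variance plus linear terms, and bound the variance with Bhatia--Davis. The one difference is cosmetic. The paper works with $u_i = 1 + x_i^2/t$ and bounds $\Var(u)/\E u \le (\sqrt{b}-\sqrt{a})^2 \le b-a$. Your choice $U = x_i^2/t \in [0,M]$ lets you use the simpler bound $\Var(U) \le M\,\E U$, which already follows from $\Var(U) \le \E[U^2]$ without invoking Bhatia--Davis.
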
 

Note that a non-discretization error raises a circular problem: the bound on $\Delta t$ depends on $\Delta t$ itself. Later we shall see how we can first provide a crude bound on $\Delta t$ to finally apply Lemma~\ref{lemma:delta_t_discretization_error}. Finally, to reason about the behavior of the Hessian, we will use tool from self-concordance.

 \subsection{Local Generalized Self-Concordant Analysis}
Self-concordance is a classical tool for analyzing Newton methods \citep{nesterov1994interior,sun2019generalized} and statistical estimators \citep{bach2010self}. We introduce a local variant for multivariate functions depending on an arbitrary norm $\|\cdot\|_*$.

\begin{definition}[Local Generalized Self-Concordance]\label{def:gsc-multi-general}
    Let $\mathrm{dom}(f)\subset\mathbb{R}^p$ be open. A $C^3$ function $f$ satisfies $(M,2)$-generalized self-concordance in norm $\|\cdot\|_*$ on $\cC\subset \mathrm{dom}(f)$ if $\forall \bx\in \cC, \bu,\bv\in\R^p$:
    \begin{equation}\label{eq:gsc-multi-general}
        \bigl|\nabla^3 f(\bx)[\bu,\bu,\bv] \bigr|
        \;\le\;
        M \,\|\bv\|_{*}   \bu^\Tp \nabla^2 f(\bx) \bu.
    \end{equation}
\end{definition}

\noindent\textbf{Remark.} This definition generalizes \cite{bach2010self,sun2019generalized} in two ways: (1) it allows a general norm $\|\cdot\|_*$ rather than Euclidean; and (2) it restricts the property to a local set $\cC$. The standard global self-concordance is a special case where $\cC = \mathrm{dom}(f)$ and $\|\cdot\|_* = \|\cdot\|_2$. These extensions are essential for handling functions that are not globally self-concordant or where perturbations scale with non-Euclidean norms (e.g., $\ell_\infty$).

Self-concordance controls the Hessian's change via a ``sandwich'' bound. The following lemma extends the standard bound to our local, general-norm setting (Proof in Appendix~\ref{sec:proof_of_sandwich}).

\begin{lemma}[Semidefinite ordering under $(M,2)$-GSC]\label{lem:sd-order-nu2-star}
    If $f$ satisfies $(M,2)$-generalized self-\allowbreak con\-cor\-dance in norm $\|\cdot\|_*$ on a convex set $\cC$, then for any $x,y\in\cC$, we have the semidefinite ordering:
    \begin{equation}\label{eq:hess-sd-order-star}
        e^{-M\|y-x\|_*}\,\nabla^2 f(x)
        \;\preceq\;
        \nabla^2 f(y)
        \;\preceq\;
        e^{M\|y-x\|_*}\,\nabla^2 f(x).
    \end{equation}
\end{lemma}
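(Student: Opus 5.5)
Fix $x,y\in\cC$ and a direction $\bu\in\R^p$. We reduce the matrix inequality to a scalar differential inequality along the segment from $x$ to $y$. Since $\cC$ is convex, the whole segment $z(s) = x + s(y-x)$, $s\in[0,1]$, lies in $\cC$, so the local inequality \eqref{eq:gsc-multi-general} holds at every point $z(s)$. Define
\[
g(s) \coloneqq \bu^\Tp \nabla^2 f(z(s))\,\bu, \qquad s\in[0,1].
\]
Because $f$ is $C^3$, $g$ is continuously differentiable, and $g'(s) = \nabla^3 f(z(s))[\bu,\bu,y-x]$. Applying Definition~\ref{def:gsc-multi-general} with $\bv = y-x$ gives
\[
|g'(s)| \le M\,\|y-x\|_*\, g(s) \qquad \text{for all } s\in[0,1].
\]

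Let $L \coloneqq M\|y-x\|_*$. We want to integrate $(\log g)' = g'/g$ and obtain $|\log g(1) - \log g(0)| \le L$, which is $e^{-L} g(0)\le g(1)\le e^{L} g(0)$. This is exactly the claimed quadratic-form inequality for the arbitrary $\bu$, and hence the semidefinite ordering \eqref{eq:hess-sd-order-star}.

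The main obstacle is that $g$ may vanish, since $\nabla^2 f$ need only be positive semidefinite on $\cC$ (which the inequality itself forces where $g'$ is controlled by $g$). We avoid dividing by $g$ with a Grönwall argument. Set $h(s) = e^{-Ls} g(s)$. Then $h'(s) = e^{-Ls}(g'(s) - L g(s)) \le 0$, so $h$ is nonincreasing and $g(1)\le e^{L} g(0)$. Likewise $k(s) = e^{Ls} g(s)$ satisfies $k' = e^{Ls}(g' + Lg)\ge 0$, so $g(1)\ge e^{-L} g(0)$.

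This handles the zero case automatically, and in particular shows $g\equiv 0$ on the segment once $g(0)=0$. We need $g\ge 0$ at one point for the conclusion to be meaningful, but the sandwich inequalities as stated hold regardless, since the Grönwall steps used only the differential inequality. No further ingredients beyond convexity of $\cC$ and the chain rule are needed.
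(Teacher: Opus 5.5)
Your proof is correct and is essentially the paper's argument: restrict to the segment, set $g(s)=u^\top\nabla^2 f(x+s(y-x))\,u$, apply the GSC inequality with $v=y-x$ to get $|g'(s)|\le M\|y-x\|_*\,g(s)$, and conclude from the monotonicity of $e^{-Ls}g(s)$ and $e^{Ls}g(s)$. You take the direction $v=y-x$ in the definition, which is the right choice (the paper's text writes ``with $u=\Delta$'' at that step, a slip), and the integrating-factor argument avoids dividing by a possibly vanishing $g$, just as in the paper.
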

\subsection{Proof of Theorem~\ref{thm:exponential-weights}: Self-Concordant Analysis of Hedge}\label{sec:proof_hedge}
Recall from Section~\ref{sec:analysis} that our goal is to bound $\Delta t$ in each
iteration of \(\CP\). The standard idea is to apply Taylor's theorem on the total potential $\Phi$ at $(\bm x, t)$, as in \eqref{eq:taylor_theorem_on_Phi}, then bound the perturbation.  However, $\Phi$ does not satisfy self-concordance with meaningful parameters.  It turns out if we take the {\em logarithm} of $\Phi$, then the log-total-potential {\em does} satisfy generalized-self concordance with $\nu=2$.  This is the technical reason why we developed Section~\ref{sec:log_potential}. 
We will now show that \(\LogPot\) for \(\Phi\) being the total potential of exponential weights satisfies generalized self-concordance. 

\paragraph{Global Self-concordance of Log-total-Potential for Hedge.}
By the definition of $\phiExp$, 
\begin{equation*}
	\label{eq:expweights_logpot_formula}
\LogPot(\bm x,t) = \log\sum_i\phi(x_i,t) = - \eta^2 t + \log\sum_i \exp(\sqrt{2}\eta x_i) =  - \eta^2 t + \Psi(\bm x), %
\end{equation*} 
where $\Psi(\bx) \coloneqq \log\sum_i \exp(\sqrt{2}\eta x_i)$ is the standard log-sum-exp function. 

\begin{proposition}\label{prop:gsc_hedge}
  The functions $\Psi$ and $\Phi$ satisfy $(2\sqrt{2}\eta,2)$-global GSC (as
  in Definition~\ref{def:gsc-multi-general}) with respect to
  $\|\cdot\|_\infty$ and $\|\cdot\|_*$ respectively, where
  $\|(\bm x,t)\|_* = \|\bm x\|_\infty$ for any $(\bm x,t) \in \R^{N+1}$.
\end{proposition}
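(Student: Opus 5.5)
The plan is to compute the third directional derivative of $\Psi$ in closed form as a covariance-type quantity under the softmax distribution, bound it by the Hessian quadratic form, and then transfer the bound to the $(\bx,t)$-function of this subsection. That function is $\LogPot(\bx,t) = -\eta^2 t + \Psi(\bx)$, the log-total-potential for $\phiExp$; I read the statement's second function (written $\Phi$) as this object, since Lemma~\ref{lem:sd-order-nu2-star} will be applied to it in the analysis. Throughout write $a \coloneqq \sqrt{2}\eta$. Let $\bm w(\bx) \in \Delta^{N-1}$ be the softmax weights $w_i \propto \exp(a x_i)$. For exponential weights these coincide with the weights $\bm p=\bm q$ of \CP{}.

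\textbf{Step 1 (derivatives of $\Psi$).} Fix directions $\bu,\bv\in\R^N$ and write $\bar u = \E_{i\sim \bm w}[u_i]$, and similarly $\bar v$. The standard log-sum-exp computation gives $\nabla\Psi(\bx)[\bu] = a\,\bar u$ and $\nabla^2\Psi(\bx)[\bu,\bu] = a^2 \E_{\bm w}[(u_i-\bar u)^2] = a^2 \Var_{\bm w}(\bu)$. To get the third derivative, I differentiate this variance along $\bv$ using $\frac{d}{ds} w_i(\bx+s\bv)\big|_{s=0} = a\, w_i (v_i - \bar v)$. This yields
\begin{equation*}
\tfrac{d}{ds}\bigl(\E_{\bm w}[u^2] - \bar u^2\bigr) = a\,\E_{\bm w}[(v_i-\bar v)u_i^2] - 2a\,\bar u\,\E_{\bm w}[(v_i-\bar v)u_i].
\end{equation*}
Since $\E_{\bm w}[v_i - \bar v]=0$, this equals $a\,\E_{\bm w}[(v_i-\bar v)(u_i-\bar u)^2]$. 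Hence
\begin{equation*}
\nabla^3\Psi(\bx)[\bu,\bu,\bv] = a^3\,\E_{i\sim\bm w}\bigl[(v_i-\bar v)(u_i-\bar u)^2\bigr].
\end{equation*}
The algebraic identity in this step (the third cumulant form) is the main computation. Getting the centering right is the only delicate point, because the cross terms must collapse to a centered product.

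\textbf{Step 2 (GSC bound for $\Psi$).} Since $\bar v$ is an average of the $v_k$, we have $|v_i - \bar v| \le \max_k v_k - \min_k v_k \le 2\|\bv\|_\infty$. The factor $(u_i-\bar u)^2$ is nonnegative, so
\begin{equation*}
|\nabla^3\Psi(\bx)[\bu,\bu,\bv]| \le 2a\|\bv\|_\infty \cdot a^2\Var_{\bm w}(\bu) = 2\sqrt{2}\eta\,\|\bv\|_\infty\, \bu^\Tp\nabla^2\Psi(\bx)\bu.
\end{equation*}
This holds at every $\bx\in\R^N$, so it is $(2\sqrt2\eta,2)$-GSC globally in $\|\cdot\|_\infty$.

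\textbf{Step 3 (lifting to $(\bx,t)$).} For the log-total-potential $\LogPot(\bx,t) = -\eta^2 t + \Psi(\bx)$, the $t$-dependence is affine. Therefore every second- and third-order derivative involving $t$ vanishes. For $\bU=(\bu,u_t)$ and $\bV=(\bv,v_t)$ this gives $\nabla^2\LogPot[\bU,\bU] = \nabla^2\Psi(\bx)[\bu,\bu]$ and $\nabla^3\LogPot[\bU,\bU,\bV] = \nabla^3\Psi(\bx)[\bu,\bu,\bv]$. Step 2 then yields
\begin{equation*}
|\nabla^3\LogPot[\bU,\bU,\bV]| \le 2\sqrt2\eta\,\|\bv\|_\infty\, \nabla^2\LogPot[\bU,\bU] = 2\sqrt2\eta\,\|\bV\|_*\,\nabla^2\LogPot[\bU,\bU],
\end{equation*}
with $\|(\bv,v_t)\|_* = \|\bv\|_\infty$. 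This is exactly the claimed global GSC with respect to $\|\cdot\|_*$. The fact that $\|\cdot\|_*$ is only a seminorm in $(\bx,t)$ is harmless, because $t$ never enters the derivatives beyond first order. This is what allows Lemma~\ref{lem:sd-order-nu2-star} to be applied later along segments where $\Delta t$ may be large.
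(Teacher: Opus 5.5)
Your proposal is correct and follows the paper's proof. The paper also writes $\nabla^3\Psi(\bx)[\bu,\bu,\bv] = a^3\,\Cov\bigl((U-\E U)^2, V\bigr)$ under the softmax weights (Lemma~\ref{lem:cov-id}), bounds the centered direction by its range $\max_i v_i - \min_i v_i \le 2\|\bv\|_\infty$ to get $M = 2a = 2\sqrt{2}\eta$, and lifts the bound to $\LogPot(\bx,t) = \Psi(\bx) - \eta^2 t$ because the $t$-part is affine; your reading of the statement's second function as $\LogPot$ rather than $\Phi$ is the intended one, as the surrounding text confirms.
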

The proof, which uses standard calculus on log-sum-exp function
$\Psi$, is presented in Appendix~\ref{sec:detailed_proof}. The 2-GSC of $\LogPot$ follows because
it is decomposable into $\Psi(\bm x)$ and a linear function in
$t$, and the latter satisfies self-concordance trivially with
$M=0$. We are now in place to prove Theorem~\ref{thm:exponential-weights}. In Appendix~\ref{sec:alt_proof_hedge}, we give an alternative proof using only Taylor's Theorem and GSC of $\Psi$.

\begin{proof}[Theorem~\ref{thm:exponential-weights}]
 By \((2 \sqrt{2} \eta, 2)\)-GSC,
(Proposition~\ref{prop:gsc_hedge} and
Lemma~\ref{lem:sd-order-nu2-star}),  the bound on the Hessian \eqref{eq:hessian_comparison_assumption} from Lemma~\ref{lemma:delta_t_discretization_error} is satisfied with \(C = 2\sqrt{2} \eta \|\bx - \bar{\bx}\|_{\infty} \leq 2 \sqrt{2} \eta B\). Moreover, \(\DErr_{\Phi}(\bx, t) = 0\) by Lemma~\ref{lem:star_bound_for_exp_potential}. Therefore, Lemma~\ref{lemma:delta_t_discretization_error} yields
 \begin{align*}
     \Delta t  &\leq  e^{2\sqrt{2}\eta B} \E_{i\sim \bq} [\Delta x_i^2], 
 \end{align*}
 where \(\bq \in \Delta^{N-1}\) is such that \(q_i \propto \partial_{xx} \phi(x_i, t)\). 
 Since \(\bq = \bp\) for exponential weights (where \(\bp\) is as in \(\CP\)), we have \(\E_{i \sim \bq }[\Delta x_i^2] = \Var_{i \sim \bp}[\ell_{j,i}]\). Summing over $T$ iterations (and recalling from Lemma~\ref{lem:reduction_delta_t} that the bound on $\Delta t$ holds for $\Delta t_j$ from \CP{}) we have $\sum_{j=1}^T \Delta t_j \leq e^{2\sqrt{2}\eta B} V_T $, and since
$
t 
= t_0 + \sum_{j=1}^T \Delta t_j
$, plugging this into Lemma~\ref{ex:hedge} completes the proof. 
\end{proof}

\subsection{Proof Sketch of Theorem~\ref{thm:main}: A Local Self-Concordant Analysis of NormalHedge}\label{sec:proof_normalhedge}

\paragraph{General idea and challenges.}
While the analysis of NormalHedge is more involved, the method to bound $\Delta t$ follows a similar line of arguments as in Section~\ref{sec:proof_hedge}. First, we establish self-concordance for the \emph{log total potential} $\LogPot$, which us to locally control the Hessian (Lemma~\ref{lem:sd-order-nu2-star}). Finally, this allows us to use Lemma~\ref{lemma:delta_t_discretization_error} to bound $\Delta t$ by the desired term plus some discretization error.

One issue is that $\LogPot$ does not satisfy \emph{global} self-concordance
in this case, so we need to carefully design the local region $\cC$ for
each step of the algorithm to be the line-segment between consecutive iterates $(\bm x, t)$
and $(\bm x +\Delta \bm x, t+ \Delta t)$.  As a result, the
self-concordance parameters (choice of the norm $\|\cdot\|_*$) will
depend on the current $t$ and the choice of $\Delta t$. This is problematic since our goal with self-concordance is to control $\Delta t$ itself! 

Thus, we proceed in ``two phases'' after showing self-concordance. First, we will show a crude $\Delta t = O(B^2)$ bound. This will allow us to both bound the self-concordance of the potential and to derive the desired second order bound $\Delta t = O(\E_{i\sim q}[(\Delta x_i)^2])$.

\paragraph{Local Self-concordance of Log-Total-Potential for NormalHedge}

Let us define the local neighborhood $\cC$ of interest as per Definition~\ref{def:gsc-multi-general}. Define the line segment and parameters by
	\begin{align}\label{eq:segment}
		(\bm x(s),t(s))=(\bm x,t)+s(\Delta \bm x,\Delta t),\qquad s\in[0,1],
        \\
	\label{eq:params}
		t_\star:=\min_{s\in[0,1]} t(s)=t,\qquad
		K_{\rm seg}:=\sup_{s\in[0,1]}\ \max_{1\le i\le N}\ \frac{x_i(s)^2}{t(s)}.
	\end{align}
	The above definitions imply the following bounds:
	\begin{equation}\label{eq:seg-controls}
		\frac{|x_i(s)|}{t(s)}\ \le\ \sqrt{\frac{K_{\rm seg}}{t_\star}}\qquad\forall\,i \in [N],\ \forall\,s\in[0,1],
		\qquad\text{and}\qquad
		\frac{|\Delta t|}{t(s)}\ \le\ \frac{|\Delta t|}{t_\star}\qquad\forall\,s\in[0,1].
	\end{equation}

	\begin{proposition}[Local $\nu=2$ GSC for NormalHedge]\label{prop:localGSC_normalhedge} 
    The function $\LogPot$ satisfies local generalized self-concordance with $M=1$, $\nu=2$ and $\cC$ as the line-segment in \eqref{eq:segment} with respect to the norm $
    \|(\bar{\bm x},\bar{t})\|_* =  A_x(t_\star,K_{\rm seg}) \|\bar{\bm x}\|_\infty + A_t(t_\star,K_{\rm seg})|\bar{t}|
    $
    where 		\begin{equation}\label{eq:AxAt}
        \begin{aligned}
A_x(t_\star,K_{\rm seg})
= 
\frac{8\sqrt{K_{\rm seg}\vee 1}}{\sqrt{t_\star}}, \quad \text{ and }\quad
A_t(t_\star,K_{\rm seg})= 
\frac{16 (K_{\rm seg}\vee 1) }{t_*}.
\end{aligned}
		\end{equation}

	\end{proposition}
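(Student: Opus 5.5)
We plan to prove the statement by viewing $\LogPot$ as a log-sum-exp of per-coordinate exponents and differentiating along the segment. Write $\phi(y,t)=e^{g(y,t)}$ with $g(y,t)=\frac{y^2}{2t}-\frac12\log t$, and set $g_i(\bx,t)=g(x_i,t)$, so that $\LogPot=\log\sum_i e^{g_i}$. Fix a point $(\bar\bx,\bar t)$ on the segment \eqref{eq:segment}, a direction $\bu=(\bm a,\tau)$ and a direction $\bv=(\bm b,\beta)$. We introduce the softmax weights $w_i\propto e^{g_i(\bar\bx,\bar t)}$ and the random index $i\sim \bm w$. Put $B_i=\nabla g_i[\bu]$, $A_i=\nabla^2 g_i[\bu,\bu]$ and $D_i=\nabla g_i[\bv]$. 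The standard cumulant identities give
\begin{equation*}
\nabla^2\LogPot[\bu,\bu]=\E_w[A]+\Var_w(B),
\end{equation*}
and, differentiating once more along $\bv$,
\begin{equation*}
\nabla^3\LogPot[\bu,\bu,\bv]=\E_w\bigl[\nabla^3 g_i[\bu,\bu,\bv]\bigr]+\Cov_w(A,D)+2\Cov_w\bigl(B,\nabla^2 g_i[\bu,\bv]\bigr)+\E_w\bigl[(B-\E_w B)^2(D-\E_w D)\bigr].
\end{equation*}
The strategy is to bound each of the four terms by a constant times $\|\bv\|_*\bigl(\E_w[A]+\Var_w(B)\bigr)$.

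The first ingredient is the per-coordinate structure of $g$. Up to the term $-\frac12\log t$, $g$ is the perspective $y^2/(2t)$, so it is jointly convex, and a direct computation gives
\begin{equation*}
\nabla^2 g[(a,\tau),(a,\tau)]=\frac{(a-\tau y/t)^2}{t}+\frac{\tau^2}{2t^2}\;\ge 0.
\end{equation*}
Consequently $A_i\ge 0$, and Cauchy--Schwarz applies to the bilinear forms $\nabla^2 g_i$. Using the segment controls \eqref{eq:seg-controls}, namely $|x_i|/t\le\sqrt{K_{\rm seg}/t_\star}$ and $x_i^2/t\le K_{\rm seg}$, we obtain:
\begin{itemize}
\item $|D_i|\le \sqrt{K_{\rm seg}/t_\star}\,\|\bm b\|_\infty+\frac{K_{\rm seg}+1}{2t_\star}|\beta|\lesssim\|\bv\|_*$;
\item $\nabla^2 g_i[\bv,\bv]\le \|\bv\|_*^2/c$ for a suitable constant $c$, since $(b-\beta x/t)^2/t\le 2\|\bm b\|_\infty^2/t_\star+2\beta^2K_{\rm seg}/t_\star^2$, and so on.
\end{itemize}

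With these bounds the three "covariance" terms are routine.
\begin{itemize}
\item Because $A\ge0$, we have $|\Cov_w(A,D)|\le 2\max_i|D_i|\,\E_w[A]$.
\item The third central comoment satisfies $\bigl|\E_w[(B-\E B)^2(D-\E D)]\bigr|\le 2\max_i|D_i|\,\Var_w(B)$.
\item For the mixed term, Cauchy--Schwarz for the PSD form $\nabla^2 g_i$ gives $|\nabla^2 g_i[\bu,\bv]|\le \sqrt{A_i\,\nabla^2 g_i[\bv,\bv]}$. Hence
\begin{equation*}
|\Cov_w(B,\nabla^2g[\bu,\bv])|\le\sqrt{\Var_w(B)}\sqrt{\E_w[A]}\,\max_i\sqrt{\nabla^2g_i[\bv,\bv]}\le\tfrac12\bigl(\Var_w(B)+\E_w[A]\bigr)\max_i\sqrt{\nabla^2g_i[\bv,\bv]}.
\end{equation*}
\end{itemize}
Each of these is therefore at most a constant times $\|\bv\|_*\,\nabla^2\LogPot[\bu,\bu]$.

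The remaining term, $\E_w[\nabla^3 g_i[\bu,\bu,\bv]]$, is the main obstacle: we need a per-coordinate self-concordance bound $|\nabla^3 g_i[\bu,\bu,\bv]|\lesssim\|\bv\|_*\,A_i$. We will differentiate the explicit formula for $\nabla^2 g$ along $(b,\beta)$. Writing $r=a-\tau x/t$, this produces terms of the following types:
\begin{itemize}
\item $\beta r^2/t^2$ and $\beta\tau^2/t^3$. These are directly bounded by $(|\beta|/t_\star)\,A_i$.
\item A cross term $\frac{2r}{t}\cdot\frac{\tau(\beta x/t-b)}{t}$. This one is not obviously dominated by $A_i$. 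We split it with AM--GM into $\frac{r^2}{t}$ and $\frac{\tau^2}{t^2}$, paying the factor $|b|/\sqrt{t}+|\beta||x|/t^{3/2}\le \|\bm b\|_\infty/\sqrt{t_\star}+|\beta|\sqrt{K_{\rm seg}}/t_\star$. The $\tau^2/(2t^2)$ piece is exactly what the $-\frac12\log t$ factor contributes to $A_i$, so this is where the $t^{-1/2}$ modification of NormalHedge is essential.
\end{itemize}
Collecting all four terms and replacing $K_{\rm seg}$ by $K_{\rm seg}\vee1$, we expect every coefficient to fit under $A_x=8\sqrt{(K_{\rm seg}\vee1)/t_\star}$ and $A_t=16(K_{\rm seg}\vee1)/t_\star$. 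This yields $M=1$ in \eqref{eq:gsc-multi-general}; the numerical constants still need to be tracked carefully in this final step.
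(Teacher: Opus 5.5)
Your proposal is correct and follows essentially the same route as the paper's proof. Both arguments use the log-sum-exp cumulant decomposition of $\nabla^3\LogPot$ into four terms. Both bound the covariance and third-central-comoment terms through the nonnegativity of the per-coordinate Hessian and $\max_i|\nabla g_i[\cdot]|$, and both handle the mixed term with Cauchy--Schwarz plus AM--GM. For the per-coordinate third derivative, both use the $\tau^2/(2t^2)$ part of the Hessian contributed by the $t^{-1/2}$ factor.

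The constant bookkeeping you left open does close with the bounds you wrote. The coefficient of $\|\bm b\|_\infty$ is at most $(2+\sqrt2+4\sqrt{K_{\rm seg}})/\sqrt{t_\star}\le 8\sqrt{(K_{\rm seg}\vee1)/t_\star}$. The coefficient of $|\beta|$ is at most $12\,(K_{\rm seg}\vee1)/t_\star$, using that the $\tau^2/t^3$ term costs $2|\beta|/t_\star$ rather than $|\beta|/t_\star$.
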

    The proof is deferred to Appendix~\ref{sec:gsc_normal_hedge_proof}.
	To ease notation in what follows, define
\begin{equation}\label{eq:Lambda}
	\Lambda:=A_x(t_\star,K_{\rm seg})\,\|\Delta \bm x\|_\infty
	+ A_t(t_\star,K_{\rm seg})\,|\Delta t|.
\end{equation}
    Proposition~\ref{prop:localGSC_normalhedge} together with Lemma~\ref{lem:sd-order-nu2-star} implies the following semidefinite ordering.
    \begin{corollary}\label{cor:sandwich_normal}
		For any \((\bar{\bx}, \bar{t})\) in the line-segment \eqref{eq:segment} we have
		\begin{equation}\label{eq:sandwich}
			e^{-\Lambda}\,\nabla^2 \LogPot(\bm x,t)
			\preceq
			\nabla^2 \LogPot(\bar{\bx},\bar{t})
			\preceq\ e^{+\Lambda}\,\nabla^2 \LogPot(\bm x,t).
	\end{equation}
    \end{corollary}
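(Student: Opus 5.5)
This corollary follows by applying Lemma~\ref{lem:sd-order-nu2-star} to $f=\LogPot$. The convex set is $\cC$, the line segment \eqref{eq:segment}, and the norm is the one from Proposition~\ref{prop:localGSC_normalhedge}.

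First, I check the hypotheses of Lemma~\ref{lem:sd-order-nu2-star}. The segment $\cC=\{(\bx(s),t(s)):s\in[0,1]\}$ is convex. It lies in the open domain of $\LogPot$ because $t(s)\ge t_\star=t>0$: we have $\Delta t\ge 0$ by Lemma~\ref{lem:nonegativity}, and $t\ge t_0>0$. Proposition~\ref{prop:localGSC_normalhedge} gives $(1,2)$-GSC on $\cC$ with respect to $\|(\bar\bx,\bar t)\|_*=A_x\|\bar\bx\|_\infty+A_t|\bar t|$. The weights $A_x$ and $A_t$ depend only on $t_\star$ and $\Kseg$. These are fixed once the segment is fixed, so $\|\cdot\|_*$ is a genuine norm on the $\bx$-part plus $|t|$. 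It is a seminorm in general, which is all the integration argument behind Lemma~\ref{lem:sd-order-nu2-star} uses.

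Second, I take an arbitrary point $(\bar\bx,\bar t)=(\bx(s),t(s))$ of the segment. Applying Lemma~\ref{lem:sd-order-nu2-star} with $x=(\bx,t)$ and $y=(\bar\bx,\bar t)$ gives
\[
e^{-\|(\bar\bx-\bx,\bar t-t)\|_*}\nabla^2\LogPot(\bx,t)\preceq\nabla^2\LogPot(\bar\bx,\bar t)\preceq e^{\|(\bar\bx-\bx,\bar t-t)\|_*}\nabla^2\LogPot(\bx,t).
\]
Here $M=1$, so the exponent is just the norm of the displacement.

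Third, I bound the exponent. The displacement equals $s(\Delta\bx,\Delta t)$ with $s\in[0,1]$, so its norm is $s\bigl(A_x\|\Delta\bx\|_\infty+A_t|\Delta t|\bigr)=s\Lambda\le\Lambda$. Monotonicity of $e^{\pm u}$ in $u$, together with positive semidefiniteness of $\nabla^2\LogPot(\bx,t)$, then gives \eqref{eq:sandwich}. Convexity of $\LogPot$ holds because $\Phi$ is a sum of log-convex terms; $\phiNH$ is log-convex in $(y,t)$ up to the $-\tfrac12\log t$ term, and this should be checked. If convexity fails, the ordering is still valid in the quadratic-form sense used by Lemma~\ref{lem:sd-order-nu2-star}, provided its proof does not require PSD-ness, and that is what I would verify.

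The only subtle point is to confirm that Lemma~\ref{lem:sd-order-nu2-star} applies when the norm itself depends on the segment through $t_\star$ and $\Kseg$. This causes no circularity here: the segment, and hence the norm, is fixed before the lemma is invoked.
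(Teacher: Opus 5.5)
Your proposal is correct and follows the paper's own route: the corollary is Lemma~\ref{lem:sd-order-nu2-star} applied to $\LogPot$ with the local GSC of Proposition~\ref{prop:localGSC_normalhedge}, and the displacement has norm $s\Lambda\le\Lambda$. The positive semidefiniteness you leave to be checked (needed to pass from $e^{\pm s\Lambda}$ to $e^{\pm\Lambda}$) does hold, so no fallback is needed: $-\tfrac12\log t$ is convex and $x^2/(2t)$ is jointly convex for $t>0$, so $\LogPot$ is a log-sum-exp of convex functions (equivalently, \eqref{eq:G2} gives $\nabla^2\LogPot[u,u]=\mathbb{E}[B_I]+\mathrm{Var}(A_I)\ge 0$).
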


\paragraph{Crude bound on $\Delta t$. }
The key idea is that if we can find $s \geq 0$ such that \(\Phi(\bx + \Delta \bx,  t + s) \leq \Phi(\bx,  t) \) (assuming $t$ is large enough, a ``burn-in'' condition), then we know $\Delta t \leq s$ since \(\Phi\) is decreasing in the second argument. The next lemma  shows $s = O(B^2)$ works. 
\begin{lemma}[Crude bound of $\Delta t$]\label{lem:crude_Delta_t_bound}
    If $K > 0$ is such that $\max_{i\in[N]}\frac{x_i^2}{t} \leq K$, then
    \begin{equation*}
        t \geq  256 e^2 B^2\max\{K,1\} \implies \Delta t  \leq 2e B^2.
    \end{equation*}

\end{lemma}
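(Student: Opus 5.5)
The plan is to exhibit an explicit time increment $s := 2eB^2$ with $\Phi(\bx+\Delta\bx, t+s) \le \Phi(\bx,t)$. Since $\Phi$ is strictly decreasing in its second argument (property (4), together with the uniqueness in Lemma~\ref{lem:nonegativity}), this forces $\Delta t \le s$. Equivalently, I will show
\[
\log\Phi(\bx+\Delta\bx,t+s)-\log\Phi(\bx,t)\le 0 .
\]
This sidesteps the circularity: all self-concordance estimates are made on the segment from $(\bx,t)$ to $(\bx+\Delta\bx,t+s)$, not on the unknown true segment. Proposition~\ref{prop:localGSC_normalhedge} and Lemma~\ref{lem:sd-order-nu2-star} apply verbatim to any segment of the form \eqref{eq:segment}, with $\Delta t$ replaced by $s$.

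\textbf{Step 1: Taylor expansion along the candidate segment.} Set $\bDelta_s=(\Delta\bx,s)$ and expand $\log\Phi$ to second order with a mean-value point $(\bar\bx,\bar t)$, exactly as in Lemma~\ref{lem:taylor_theorem_log_total_potential}.
\begin{itemize}
\item The first-order term in $\bx$ vanishes, because $\bp\propto\nabla_{\bx}\Phi$ and $\langle \bp,\Delta\bx\rangle=0$.
\item The first-order term in $t$ is $-s\sum_i\partial_{xx}\phi(x_i,t)/(2\Phi(\bx,t))$, by the backwards heat equation.
\end{itemize}
This gives
\[
\log\Phi(\bx+\Delta\bx,t+s)-\log\Phi(\bx,t) = -\frac{s\sum_i\partial_{xx}\phi(x_i,t)}{2\Phi(\bx,t)}+\frac12\nabla^2\log\Phi(\bar\bx,\bar t)[\bDelta_s,\bDelta_s].
\]
Also note $|\Delta x_i|\le B$, since $\Delta x_i=\E_{k\sim\bp}[\ell_k]-\ell_i$.

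\textbf{Step 2: control the Hessian via local GSC.} First bound $K_{\rm seg}$ on the candidate segment. Using $|x_i(\sigma)|\le|x_i|+B$ and $t(\sigma)\ge t$, we get $x_i(\sigma)^2/t(\sigma)\le 2K+2B^2/t\le 2K+1$, since the burn-in gives $t\ge B^2$. Hence $K_{\rm seg}\vee1\le 3\max\{K,1\}$. Plugging into \eqref{eq:AxAt} gives
\[
\Lambda \le \frac{8\sqrt{3\max\{K,1\}}\,B}{\sqrt t}+\frac{48\max\{K,1\}\cdot 2eB^2}{t}.
\]
With $t\ge256e^2B^2\max\{K,1\}$ this is an absolute constant below $1$ (roughly $0.27+0.14$). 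So Corollary~\ref{cor:sandwich_normal}, applied to this segment, gives
\[
\nabla^2\log\Phi(\bar\bx,\bar t)[\bDelta_s,\bDelta_s]\le e\,\nabla^2\log\Phi(\bx,t)[\bDelta_s,\bDelta_s].
\]
Next, drop the negative semidefinite term $-\nabla\Phi\nabla\Phi^\Tp/\Phi^2$, which leaves at most $\nabla^2\Phi(\bx,t)[\bDelta_s,\bDelta_s]/\Phi$. Expand this coordinatewise as
\[
\sum_i\Big(\partial_{xx}\phi\,\Delta x_i^2+2s\,\partial_{xt}\phi\,\Delta x_i+s^2\partial_{tt}\phi\Big).
\]

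\textbf{Step 3: reduce everything to $\sum_i\partial_{xx}\phi$; this is the main obstacle.} Use the heat equation to rewrite $\partial_{xt}\phi=-\tfrac12\partial_{xxx}\phi$ and $\partial_{tt}\phi=\tfrac14\partial^4_x\phi$. Then use the explicit NormalHedge derivatives, which are Hermite-type polynomials in $x/\sqrt t$ times $\phi$:
\[
\partial_{xx}\phi=\frac{1+x^2/t}{t}\,\phi,\qquad |\partial_{xxx}\phi|\le\frac{3\sqrt K}{\sqrt t}\,\partial_{xx}\phi,\qquad \partial^4_x\phi\le\frac{3+K}{t}\,\partial_{xx}\phi .
\]
The constants here are to be verified, and this termwise comparison is where I expect the real work. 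It gives
\[
\nabla^2\Phi[\bDelta_s,\bDelta_s]\le\sum_i\partial_{xx}\phi\left(B^2+\frac{3sB\sqrt K}{\sqrt t}+\frac{s^2(3+K)}{4t}\right)\le(1+\delta)B^2\sum_i\partial_{xx}\phi.
\]
With $s=2eB^2$ and the burn-in, $\delta$ is a small absolute constant. Combining the three steps, the change in $\log\Phi$ is at most
\[
\frac{\sum_i\partial_{xx}\phi}{\Phi}\left(-\frac s2+\frac{e(1+\delta)B^2}{2}\right)=\frac{\sum_i\partial_{xx}\phi}{\Phi}\cdot\frac{eB^2}{2}\,(-2+1+\delta)\le0 .
\]
This proves $\Delta t\le 2eB^2$.

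One detail to check: $\bx$ is the projected iterate while $\bx+\Delta\bx$ may have negative coordinates. Since $\phiNH$ is even in $y$, this is harmless, with $|x_i|$ used throughout.
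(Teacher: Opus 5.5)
Your proof is correct. It has the same skeleton as the paper's: test the candidate $s=2eB^2$, expand $\log\Phi$ to second order along the candidate segment, and control the mean-value Hessian through Proposition~\ref{prop:localGSC_normalhedge} once $\Lambda\le 1$. Step~3 is where you differ, since you bound the Hessian at $(\bx,t)$ in another way, and your version is the more careful one.

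The paper reuses the exact identity from the proof of Lemma~\ref{lemma:delta_t_discretization_error}. It keeps the rank-one term, so the $s^2$ coefficient becomes $\DErr_\Phi(\bx,t)$, which it then bounds by Bhatia--Davis (Lemma~\ref{lem:bahia-davis}). It also sets the mixed term $2s\sum_i\partial_{xt}\phi(x_i,t)\,\Delta x_i$ to zero because $\langle\nabla_{\bx}\Phi(\bx,t),\Delta\bx\rangle=0$. That orthogonality, however, holds only at the single time $t$. For $\phi_{\mathrm{NH}}$ one has $\partial_{xt}\phi=-\tfrac12\partial_{xxx}\phi=-\tfrac12\tfrac{x}{t}\bigl(\tfrac{x^2}{t^2}+\tfrac3t\bigr)\phi$, which is not a constant multiple of $\partial_x\phi=\tfrac{x}{t}\phi$. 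So the mixed sum is generally nonzero. It does vanish for the exponential potential, where $\partial_{xt}\phi=-\eta^2\partial_x\phi$.

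You keep the mixed term and show it costs at most $3sB\sqrt{K/t}\le\tfrac38B^2$ per unit of $\sum_i\partial_{xx}\phi(x_i,t)$. The factor-two slack built into $s=2eB^2$ absorbs this. In exchange you drop $-(\nabla\Phi^\top\bm\Delta_s)^2/\Phi^2$ and bound $\partial_{tt}\phi=\tfrac14\partial_x^4\phi$ pointwise, which gives up the cancellation Bhatia--Davis exploits. That loss is harmless here, because under the burn-in the $s^2$ contribution is only $O(B^2/256)$. The exact decomposition is what one wants for the refined bound, where the leading coefficient should be close to $1$, but there too the mixed term must be accounted for as you do. Bounding $\Kseg$ pointwise along the segment, rather than through joint convexity of $x^2/t$, is an equivalent minor variation.

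A few constants need repair, though none affects the conclusion:
\begin{itemize}
\item With $u=x^2/t\le K$ one has $\partial_x^4\phi/\partial_{xx}\phi=\frac{u^2+6u+3}{t(u+1)}\le\frac{K+5}{t}$. Your $\frac{3+K}{t}$ is false: at $u=K=1$ the ratio is $5/t$.
\item The first term of $\Lambda$ is $\sqrt3/(2e)\approx 0.32$, not $0.27$, so $\Lambda\lesssim 0.46$.
\item Getting $2B^2/t\le 1$ needs $t\ge 2B^2$, not $t\ge B^2$. The burn-in supplies this.
\item Replacing $e^{\Lambda}$ by $e$ uses $\nabla^2\log\Phi(\bx,t)\succeq 0$. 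This holds because $\log\Phi$ is a log-sum-exp of convex functions.
\end{itemize}
With these corrections $\delta\le\tfrac38+\tfrac{6}{256}<1$, and your final inequality goes through as written.
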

\begin{proof}[Proof sketch]
    Since $s \mapsto \Phi(\bx + \Delta \bx, t + s)$ is decreasing, and by some properties of $\Phi$ $\Delta t$ should be the smallest $s$ such that $\Phi(\bx + \Delta \bx, t + s) \leq \Phi(\bx,t)$, it suffices to prove that for $s = 2 e B^2$ this inequality hold. To prove this lemma we use a second order Taylor expansion of $\LogPot(\bx + \Delta \bx, t + s) $ and apply the sandwich formula from self-concordance  (Corollary~\ref{cor:sandwich_normal}) to control the Hessian, leading to a formula very similar to the one of Lemma~\ref{lemma:delta_t_discretization_error}, yielding
\begin{equation*}
    \frac{\Phi(\bm x,t) (\log \Phi(\bm x',t + s) - \log \Phi(\bm x,t))}{\sum_{i=1}^N \partial_{xx} \phi(x_i,t) }  + \frac{s}{2}  \leq  e^{\Lambda}\Big( \E_{i\sim \bq} [\Delta x_i^2] + s^2 \DErr_{\Phi}(\bx, t) \Big).
\end{equation*} 
By Lemma~\ref{lem:bahia-davis} and by assumption of Lemma~\ref{lem:crude_Delta_t_bound}, we know the discretization error is at most $(K + 3)/4T$. Thus, since $\Phi(\bm x,t)/\sum_{i=1}^N \partial_{xx} \phi(x_i,t) \geq 0$, to have the pontential $\Phi(\bx', t + s)$ is smaller than $\Phi(\bx, t)$, it suffices to have
$$
s \geq   e^{\Lambda}  \left(B^2 +  \frac{K+4}{4t}  s^2 \right).
$$
One issue is that $e^\Lambda$ defined on \eqref{eq:Lambda} also depends on $s$ (taking $\Delta t  = s$ in the GSC result). Luckily, we show that for $t \geq \Omega(B^2 K)$ we have $e^\Lambda \leq 1$. With that, one can verify that $s = 2 e B^2$ satisfies the above inequality. The detailed proof for this crude bound is presented in Appendix~\ref{sec:proof_of_crude_bound}.
\end{proof}

\paragraph{Defining $K$ and Bounding $\Lambda$.}
Lemma~\ref{lem:crude_Delta_t_bound} assumes $K$ is such that $K/t$ is small  and $\max_{i\in[N]}\frac{x_i^2}{t} \leq K$. This is exactly what having a constant potential guarantees. Namely, we show we can take
$$
K(t) \coloneqq  \log(t / t_0) + 2\log N.
$$
\begin{lemma}
\label{lemma:pot_regret_bound}
	For $t> t_0$  we have \(\max_i x_i^2/t \leq K(t)\)  and \(\Kseg \leq \log(1 + t/t_0) + 2 \log N \leq 2 K(t)\)
\end{lemma}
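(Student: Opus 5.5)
The plan is to read off the per-coordinate bound directly from the constant total potential, in the same way as the proof of Lemma~\ref{ex:normalhedge}. We use $\cD=[0,\infty)$, so $\Pi_\cD(\bm 0)=\bm 0$ and $\Phi(\bm 0,t_0)=N t_0^{-1/2}$. By \eqref{eqn:non-increasing-pot}, at any iterate $(\bx,t)$ of \CP{} (with $\bx=\widetilde{\bx}_{j-1}\in\cD^N$) we have $\Phi(\bx,t)=Nt_0^{-1/2}$. Since every summand $\phi_{\mathrm{NH}}(x_k,t)$ is nonnegative, each coordinate satisfies
\[
t^{-1/2}\exp\bigl(x_i^2/2t\bigr)\le N t_0^{-1/2},
\qquad\text{so}\qquad
\frac{x_i^2}{t}\le \log(t/t_0)+2\log N=K(t).
\]
This gives the first claim. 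It is simply Lemma~\ref{lem:general_bound} with $\epsilon N=1$.

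For $\Kseg$, I would apply the same bound along the segment \eqref{eq:segment}. The difficulty is that the points $(\bx(s),t(s))$ for $s\in(0,1)$ are not iterates, so the potential there is not known to equal $Nt_0^{-1/2}$. The endpoint is handled by the definition of $\Delta t$: $\Phi(\bx+\Delta\bx,t+\Delta t)=\Phi(\bx,t)$. Here $\bx+\Delta\bx$ may have negative coordinates, but $\phi_{\mathrm{NH}}$ is even in $y$, so the argument is unaffected. For interior points I would use joint convexity of $\phi_{\mathrm{NH}}$ (property (1) of Definition~\ref{def:good_potential}), which makes $\Phi$ convex along the segment:
\[
\Phi(\bx(s),t(s))\le (1-s)\Phi(\bx,t)+s\Phi(\bx',t')=Nt_0^{-1/2}.
\]
Then, since $\phi\ge0$, for each $i$ and $s$ we get
\[
\frac{x_i(s)^2}{t(s)}\le \log\bigl(t(s)/t_0\bigr)+2\log N .
\]

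It remains to bound $t(s)\le t+\Delta t$ by something of the form $t+t_0$, which would give $\log(1+t/t_0)$. I expect this to be the main obstacle, since it needs an a priori bound $\Delta t\le t_0$. I would try two routes in order.

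The first is a bootstrap. Lemma~\ref{lem:crude_Delta_t_bound} gives $\Delta t\le 2eB^2\le t_0$ once $t\ge 256e^2B^2\max\{K,1\}$. That hypothesis holds if $t$ is large relative to $K(t)$, but this is circular for small $t$.

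The second route, which I prefer because it avoids the circularity, drops the crude bound: we only need $t(s)\le t+t_0$ inside the logarithm. If instead $\Delta t>t_0$, I would use monotonicity of $\Phi$ in $t$ to replace $t+\Delta t$ by a smaller time at which the potential still does not exceed $Nt_0^{-1/2}$. The catch is that $\phi_{\mathrm{NH}}$ is decreasing in $t$ only for $y$ in the relevant range. Failing that, I would accept the bound with $t+\Delta t$ and absorb it using Lemma~\ref{lem:crude_Delta_t_bound} in the regime $t\ge t_0$, where $t_0$ was chosen exactly so that the burn-in condition holds.

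The final inequality $\log(1+t/t_0)+2\log N\le 2K(t)$ should be elementary when $t>t_0$: it says $\log(1+t/t_0)\le 2\log(t/t_0)+2\log N$, which holds when $t/t_0$ is not too close to $1$ or when $N\ge2$, since $\log(1+u)\le\log 2+\log u$ for $u\ge1$. The edge case $N=1$ with $t$ near $t_0$ needs separate care; there $\Kseg$ is trivially small because all coordinates coincide with the single expert's regret, which is $0$.
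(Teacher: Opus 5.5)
Your plan is essentially the paper's proof: the coordinate bound from $\Phi(\bx,t)=Nt_0^{-1/2}$, convexity of $\Phi$ along the segment so the potential stays at most $Nt_0^{-1/2}$ there, the crude bound $\Delta t\le 2eB^2\le t_0$ of Lemma~\ref{lem:crude_Delta_t_bound} to turn $\log((t+\Delta t)/t_0)$ into $\log(1+t/t_0)$, and finally $\log(1+u)\le \log 2+\log u$. The bootstrap you called circular is not: the first claim supplies $K=K(t)$ from the current iterate alone, and the burn-in condition $t\ge 256e^2B^2\max\{K(t),1\}$, once true at $t_0$, persists for all $t>t_0$ because $K$ grows only logarithmically (Lemma~\ref{lem:deal_with_logt_inK}), so your first route (equivalently your fallback) is the argument, while the second route should be dropped because its premise is false, since $\partial_t\phiNH=-\frac{1}{2t}\bigl(1+y^2/t\bigr)\phiNH<0$ for every $y$.
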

To use this bound on Lemma~\ref{lem:crude_Delta_t_bound} we need to guarantee  need to guarantee that \(K(t)/t \leq O(B^2)\). Since $K(t)$ grows slowly in $t$, it suffices to pick $t_0$ large enough such that $K(t_0)/t_0  \leq O(B^2)$. Finally, we the above lemma and taking $t_0$ large implies $\exp(\Lambda) < 0.414$.

\paragraph{Refined bound of $\Delta t$.}
Lemmas~\ref{lem:star_bound_for_exp_potential} and~\ref{lemma:pot_regret_bound} imply $\DErr_{\Phi}(\bx, t) \leq (K(t)+4)/t \leq 5 K(t)$. Therefore, by Lemma~\ref{lemma:delta_t_discretization_error} we have
\begin{align}
    \Delta t 
    &\leq e^{\Lambda(t)}  (\E_{i\sim \bq} [\Delta x_i^2] + \Delta t^2 \DErr_{\Phi}(\bx, t))
    \leq e^{\Lambda(t)}\Big(\E_{i\sim \bq} [\Delta x_i^2] +  \Delta t^2\frac{5 K(t)}{4 t}\Big).\label{eq:Delta_t_handy_expression}
\end{align}
Since $t_0 \geq \Omega(B^2 K(t_0)) \leq \Omega(B^2\log N)$, we can plug the crude bound $\Delta t = O(B^2)$ from Lemma~\ref{lem:crude_Delta_t_bound} to get $\Delta t^2 \leq 2eB^2 \Delta t$. Moreover, $t_0$ large implies $K(t)/t \leq (256e^2 B^2)^{-1}$. Thus
$$
\Delta t \leq e^{\Lambda}\Big(
\E_{i\sim \bq} [\Delta x_i^2] +  \Delta t \cdot \frac{(\tfrac{5}{4})\cdot 2 e B^2}{256e^2 B^2}
\Big)
\implies 
\Big(e^{- \Lambda} - \frac{1}{64 e}\Big)\Delta t \leq   \E_{i\sim q} [\Delta x_i^2]. 
$$
Since $\Lambda \leq 0.414$, we have  $e^{-0.414} - \frac{1}{64} \geq 0.64 > 0.5$ and $\Delta t  \leq 2 \E_{i\sim q} [\Delta x_i^2].$

\paragraph{Putting things together.}
Finally, we can apply our bound for every iteration $j$. First,  recall from the discussion on notation early in Section~\ref{sec:analysis} that $\Delta t$ we analyzed is slightly different from $\Delta t_{j}$ used in the algorithm, but from Lemma~\ref{lem:reduction_delta_t} we know $\Delta t$ serves as an upper bound $\Delta t_j$ from \CP{}.
Thus,  $t = t_0 + \sum_{j = 1}^T \Delta t_j \leq  t_0 + 2V_T$, where the last bound used the refined bound on $\Delta t_j$.
This completes the proof of Theorem~\ref{thm:main} when we plug the bound into Lemma~\ref{ex:normalhedge}.

\paragraph{Remark on Sparsity} While for a generic potential function $\phi$ we invoke Lemma~\ref{lem:reduction_delta_t} with $\breve{\Delta \bx} = \Delta \bx_j$, for $\phiNH$ and $\cD = \R_+$, we can actually choose $\breve{\Delta \bx}$ such that $\breve{\Delta x_{j,i}}= \widetilde{\Delta x_{j,i}}$ for all $i$ such that $\tilde{x}_{j,i} = 0$, and $\breve{\Delta x_{j,i}} = \Delta x_{j,i}$ otherwise. This remains a valid choice because $\langle \nabla_x\Phi(\tilde{\bm x_{j-1}}, t_{j-1}), \breve{\Delta x_{j,i}} \rangle = 0$ since $\partial_x\phi(\tilde{x}_i,t) = 0$ if $\tilde{x}_i = 0$.  Notice that $\breve{\Delta x_{j,i}} = \widetilde{\Delta x_{j,i}}$ almost for all $j,i$. The only exceptions are when $x_{j-1,i}>0$ and $x_{j,i}<0$ --- zero-crossings from positive to negative regret. 
Notice that $\widetilde{\Delta x_{j,i}} = 0 $ if both $x_{j,i}$ and $x_{j-1,i}$ are negative.  The consequence of this more advanced analysis is that NormalHedge.BH also enjoys a regret with 
$$V_T = \sum_{j=1}^T\E_{j\sim \bm q_j}[\breve{\Delta x}_{j,i}^2  ] = \sum_{j=1}^T\E_{j\sim \bm q_j}[\Delta x_{j,i}^2 \mathbf{1}(p_{j,i}\neq 0)] \approx \sum_{j=1}^T\E_{j\sim \bm q_j}[ \widetilde{\Delta x}_{j,i}^2 ].$$

\section{Open Problems and future directions} \label{sec:openproblems}

\begin{enumerate}
    \item Our regret upper-bound for NormalHedge.BH, has the form $\sqrt{2V_T (\log (2V_T) + 2\log(1/\epsilon))}$ (ignoring $t_0$), whereas the lower-bound has the form $\sqrt{2 V_T \log(1/\epsilon)}$.
    It would be interesting to reduce the gap between the upper and lower bound.  A constant factor improvement is worked out in Appendix~\ref{sec:improved_bound}, giving a bound $\sqrt{V_T (\log (2V_T) + 2\log(1/\epsilon))}$ with a correct asymptotic leading constant when $\log(1/\epsilon)\gg \log(V_T)$. But the $\log(V_T)$ factor appears to be needed for NormalHedge.BH. It comes from the $1/\sqrt{t}$ factor in the potential (see Lemma~\ref{ex:normalhedge}). 
    
    \item One interpretation of the SDE based analysis is that we are designing an algorithm that has minimal regret against the Ito process, which is a continuous-time stochastic process.
    On the other hand, our upper bounds hold for arbitrary sequences in discrete time.
    This same phenomenon occurs in other work, e.g., \citep{freund2009method,greenstreet2022efficient,harvey2023optimal}.
    We would like to better understand the relationship between the Ito Process and the game of prediction with expert advice.
    
    \item SDEs play a prominent role in a variety of fields, such as finance and stochastic control. We would like to find applications of the theory we developed to those fields.
\end{enumerate}

\section*{Acknowledgements}
YW was partially supported by NSF Award DMS \#2134214.
VSP acknowledges that this study was financed, in part, by the São Paulo Research Foundation (FAPESP), Brazil, process number 2024/09381-1.
We thank Francesco Orabona and Jeffrey Negrea for helpful discussion related to their results \citep{negrea2021minimax}. We also thank Julian Zimmert for clarifying details of their lower bound \citep{marinov2021pareto}.

\bibliographystyle{plainnat}

\bibliography{refs_ito_normal_hedge}

\appendix

\section{More discussion on the related Work}\label{sec:discussion}

The problem of online learning and predictions have its origin in classical work in statistics and information-theory \citep{robbins1951asymptotically,blackwell1956analog,cover1966behavior}. 
The ``expert advice'' problem was first proposed by
\citep{littlestone1988learning} and has since been studied extensively
\citep{littlestone1994weighted,vovk1995game,CesaBianchiLugosi06}. It
is known that the minimax regret with $N$ experts and $T$ rounds is of $\Theta(\sqrt{T \ln N})$. %
As a result, subsequent work focuses
on \emph{adaptive} regret bounds that replaces $T$ or $\log N$ with
sequence-dependent or comparator-dependent quantities that could be
much smaller than the worst-case when the sequences are ``easy''.

\paragraph{First and second order regret bounds.} In these
adaptive regret bounds, $T$ is typically replaced with a quantity that
depends on the loss vector $\ell_j$ or the instantaneous regret vector
$\Delta \bm x_j= \langle p_j, \bm \ell_j\rangle \bm{1} - \bm
\ell_j$. $\Delta \bm x_j$ is also known as ``excess loss (vector)''
\citep{gaillard2014second,KoolenVanErven15}.  Broadly speaking, first
order (``small loss'') regret bounds uses
$L_T(i) =\sum_{j=1}^T|\ell_{j,i}|$ and can be improved to
$S_T(i) := \sum_{j=1}^T|\Delta x_{j,i}|$ when comparing with expert
$i$.  Second order (``small variance'') regret bound uses
$V_T(i) = \sum_{j=1}^T (\Delta x_{j,i})^2$ or
$V_T^{\mathrm{avg},q} = \sum_{j=1}^T \E_{i\sim \bm q_j}[(\Delta
x_{j,i})^2]$.  When $q=p$ where $p$ is the weights being selected by
the algorithm, we recover
$V_T^{\mathrm{avg}}= \sum_{j=1}^T \Var_{i\sim p_j}[\ell_{j,i}]$ due to
\cite{CesaBianchiMansourStoltz07}.
It turns out that other choices
of $\bm q_j$ might be natural in an algorithm-dependent manner
\citep{negrea2021minimax}.

Second order bounds are stronger than first order bounds.  Among second order bounds, it has been a focal point of debate whether the comparator-dependent   $V_T(i)$ or the algorithm-dependent $V_T$ (such as $V_T^{\mathrm{avg}}$ or $V_T^{\mathrm{avg},q}$) are more natural. Our results (Theorem~\ref{thm:exponential-weights} and~\ref{thm:main}) are of the latter type.

\paragraph{Quantile regret bounds.} Along a different dimension, $\log N$ can often be replaced
with $\log(1/\epsilon)$ where regret is measured relative to the 
$(\epsilon N)^{th}$ best expert
\citep{chaudhuri2009parameter,LuoSchapire15}. This is referred to as
``quantile regrets'' and is a strict-generalization if the algorithm
enjoys such quantile bounds for all $0<\epsilon<1$.  Quantile regrets
are closely related to PAC-Bayesian-style regret bounds for algorithms
that take a prior $\pi$ as input. When we are comparing to the
average performance of a subset $\cI\subset [N]$ of the experts over
the prior, the corresponding regret bound often replaces $T$ with
$L^{\pi}_T(\cI)$, $S^\pi_T(\cI)$ or $V^\pi_T(\cI)$ and also replaces
$\log N$ with the Kullback-Liebler divergence
$\mathrm{KL}(\pi_{\cI}, \pi)$ \citep{KoolenVanErven15,LuoSchapire15}.
Here $\pi_{\cI}$ denotes the restriction of $\pi$ to set $\cI$, and
$V^\pi_T(\cI) = \sum_{i\in\cI} \pi_{\cI}(i) V_T(i)$.  The special case
when $\pi= \mathrm{Uniform}([N])$ and $\cI$ chosen to be the
best-performing $\epsilon$-fraction of the experts is particularly
relevant because in it provides a $\epsilon$-quantile regret bound
parameterized by $\log(1/\epsilon)$, $S_T(\epsilon)$ or
$V_T(\epsilon)$.

	\begin{table}[tbh]
	\centering
    \resizebox{\textwidth}{!}{
	\begin{tabular}{@{}p{1.55in}lcc@{}}
		\toprule
		Work & Cumulative variance & Regret bound & Adaptive Minimax? \\
		\midrule
		Hedge \citep{CesaBianchiMansourStoltz07} &
		$V_T^{\mathrm{avg}}:= \sum_{j=1}^T\E_{i\sim p_j}[(\Delta x_{j,i})^2]$ &
		$\sqrt{V_T^{\mathrm{avg}}\log N}$ &
		No \\
		Hedge \citep{CesaBianchiMansourStoltz07}&
		$V_T(i):= \sum_{j=1}^T(\Delta x_{j,i})^2$ &
		$\sqrt{V_T(i)\log N}$ &
		No \\
		NormalHedge \citep{chaudhuri2009parameter} & T& $\sqrt{T \log(1/\epsilon)}$ & Yes\\
		Squint \citep{KoolenVanErven15} &
		$V_T (\epsilon):= \E_{i\sim \text{Unif}(\mathrm{Top}_\epsilon)}\left[V_T(i)\right]$ & $\sqrt{V_T(\epsilon)\log(1/\epsilon)}$ &
		Yes \\
		Ada-Normal-Hedge \citep{LuoSchapire15} &
		$S_T(\epsilon):= \E_{i\sim \text{Unif}(\mathrm{Top}_\epsilon)}[\sum_{j=1}^T |\Delta x_i|]$ &
		$\sqrt{S_T(\epsilon)\log(1/\epsilon)}$ &
		Yes \\
		AdaHedge \citep{derooij2014follow}& $V_T^{\mathrm{avg}}:= \sum_{j=1}^T\E_{i\sim p_j}[(\Delta x_{j,i})^2]$ &$\sqrt{V_T^{\mathrm{avg}}\log N}$ & Yes\\
          Adapt-ML-Prod \citep{gaillard2014second} & $V_T(i)$ &$\sqrt{V_T(i)\log N}$& Yes\\
          FTRL \citep{negrea2021minimax} &$V_T^{\mathrm{avg},q}:= \sum_{j=1}^T\E_{i\sim \bm q_j}[(\Delta x_{j,i})^2]$&$\log(1/\epsilon) \sqrt{ V_T^{\mathrm{avg},q}}$& Yes\\
		\midrule
		Hedge  (Theorem~\ref{thm:exponential-weights}\textbf{  this paper}) & $V_T^{\mathrm{avg}}$ & $\sqrt{V_T^{\mathrm{avg}}\log (1/\epsilon)}$&  No  \\
		NormalHedge  (Theorem~\ref{thm:main} \textbf{this paper}) & $V_T^{\mathrm{avg},q}$&   $\sqrt{V_T^{\mathrm{avg},q}\log (1/\epsilon)}$ & Yes\\
		Lower bound (Theorem~\ref{thm:lower-bound} \textbf{this paper})& $V_T^{\mathrm{avg},q}$ \text{for any $q^{1:T}$} & $\sqrt{V_T^{\mathrm{avg},q}\log (1/\epsilon)}$ & not applicable \\
		\bottomrule
	\end{tabular}
    }
	\caption{Summary of instance-dependent regret bounds.    For those that depends on priors, e.g., \cite{KoolenVanErven15}, we choose the prior to be uniform, so as to be comparable to quantile regret bounds.  For clarity, we omit multiplicative $\log(V_T)$ factors and additive $\mathrm{poly}\log(N)$ factors. Also, note that $V_T^{\mathrm{avg},q}$ in \cite{negrea2021minimax} is different from the $V_T^{\mathrm{avg},q}$ in our Theorem~\ref{thm:main} due to different choices of $q^{1:T}$, both are natural for the algorithm under consideration.
        }\label{tab:comparison}
\end{table}

\paragraph{Adaptively minimax algorithms.}  Finally, if the algorithm achieves optimal regret simultaneously for many input sequence classes or comparator classes, then we call them adaptively minimax. These algorithms are also referred to as ``parameter-free''\citep{chaudhuri2009parameter,orabona2015optimal}. In the expert-advice setting, this means that the algorithm achieves smaller regret parameterized by $L_T, V_T$ or $\log(1/\epsilon)$ simultaneously for all possible values of $L_T, V_T$, or $\epsilon$. 

Among adaptive algorithms, \citet{gaillard2014second,derooij2014follow} adapts to variants of $V_T$. \citet{chaudhuri2009parameter,luo2014drifting} adapts to $\epsilon$. \cite{LuoSchapire15} adapts to both $L_T(i)$ and $\epsilon$.

The work of \citet{KoolenVanErven15} enjoys an adaptive second-order quantile regret of $\sqrt{ V_T(\epsilon) \log(1/\epsilon)}$  where $V_T(\epsilon) := \E_{i\sim \text{Unif}(\mathrm{Top}_\epsilon)}\left[\sum_{j=1}^N (\Delta x_{j,i})^2\right]$. The main difference from ours is that their $\bm q$ is uniform over the top $\epsilon$ quantile of experts throughout while ours has a different $\bm q_j$ per iteration and our $V_T$ is decoupled from the choice of $\epsilon$.

To the best of our knowledge, we are the first to obtain \emph{adaptive} second-order
quantile regret bounds of the form
$\tilde{O}(\sqrt{ V_T^{\mathrm{avg,q}}\log(1/\epsilon)})$  ($\tilde{O}(\cdot)$, hiding a condition that requires $V_T^{\mathrm{avg,q}}> \log N$ and a logarithmic factor of $\log(V_T^{\mathrm{avg,q}})$).  The
closest to us is \cite{negrea2021minimax}, who obtained a bound that
reads $\tilde{O}(\log(1/\epsilon)\sqrt{ V_T^{\mathrm{avg,q}}})$ for a
Follow-the-Regularized Leader (with adaptively chosen regularizer).  They wrote that the suboptimal dependence on $\log(1/\epsilon)$ might be required for their algorithm. However, a private communication with a subset of the authors of \cite{negrea2021minimax} reveals that a different variant could enjoy a $\sqrt{V_T^{\mathrm{avg,q}}\log (1/\epsilon)}$-type regret similar to what we obtained for NormalHedge.BH, although such a result was not published in the paper or any of their subsequent work.

\paragraph{Open problem and impossibility results.}  \cite{marinov2021pareto} resolved the open problem by \cite{freund2016secondorder} on the negative by proving that there is no adaptive algorithm that achieves $O(\sqrt{ V_T^{\mathrm{avg}}\log(1/\epsilon)})$ for all $\epsilon$ for $V_T^{\mathrm{avg}}$ defined as the ``variance over actions'' in \cite{CesaBianchiMansourStoltz07} verbatim.  \citet{marinov2021pareto} poses no contradiction with our results due to the nuanced differences that we discussed in the remark after Theorem~\ref{thm:main}. This has been confirmed via a private communication with the authors of \citet{marinov2021pareto}. For completeness, we include a self-contained discussion on \citet{marinov2021pareto}'s construction in Appendix~\ref{sec:further_discussion_MZ}.

\section{Handling projection onto $\cD$}
\label{app:fixing_projections}

\begin{proof}[Proof of Lemma~\ref{lem:reduction_delta_t}]
By Property (6) in Definition~\ref{def:good_potential}, $\Phi(\Pi_\cD(\bm x),t)\leq \Phi(\bm x)$ for any $\bx$, thus
\begin{align*}
\Phi(\widetilde{\bx} +  \breve{\Delta \bx} , t + \Delta t) &\geq \Phi(\Pi_\cD(\widetilde{\bx} +  \breve{\Delta \bx}), t + \Delta t) \\
&\explain{=}{\text{Def. of }\widetilde{\Delta \bx}} \Phi(\widetilde{\bx} + \widetilde{\Delta \bx}, t + \Delta t) \explain{=}{\text{Choice of }\Delta t} \Phi(\widetilde{\bx}, t) \explain{=}{\text{Choice of }\breve{\Delta t}} \Phi(\widetilde{\bx} +  \breve{\Delta \bx} , t + \breve{\Delta t}) 
\end{align*}
This shows that $t + \Delta t \leq t + \breve{\Delta t}$, since $\Phi$ monotonically decreases as $t$ increases, due to property (4) of Definition~\ref{def:good_potential}.
Thus $\Delta t \leq \breve{\Delta t}$.
\end{proof}

\section{Properties of Log-Total-Potential of $\CP$}
\label{app:log_total_potential_properties}

\begin{proof}[Proof of Lemma~\ref{lem:nonegativity}]
By Taylor's theorem (with Lagrange-form of the remainder) and the twice differentiability of $\Phi$, there exists $(\bar{\bm x},\bar{t})$ on the line segment between $(\bm x,  t)$ and $(\bm x + \Delta \bm x,  t + \Delta t)$ such that
\begin{equation}\label{eq:taylor_theorem_on_Phi}
    \Phi(\bm x + \Delta \bm x,  t + \Delta t) = \Phi(\bm x, t) + \left\langle \nabla \Phi(\bm x, t), \begin{bmatrix}\Delta \bm x\\ \Delta t \end{bmatrix}\right\rangle + \frac{1}{2}\begin{bmatrix}\Delta \bm x\\ \Delta t\end{bmatrix}^\Tp \nabla^2 \Phi(\bar{\bm x}, \bar{t}) \begin{bmatrix}\Delta \bm x\\ \Delta t\end{bmatrix}.
\end{equation}
By the algorithmic choice of $p_i\propto \nabla_{x_i} \Phi(\bm x, t) = \frac{\partial}{\partial x}\phi(x_i,t),$ we have 
$$\left\langle \nabla_{\bm x} \Phi(\bm x, t), \Delta \bm x\right\rangle = \left\langle \bp, \Delta \bm x\right\rangle  = \sum_{i} p_i (\ell_i - \bm \ell^\Tp\bm p) = 0.$$
Plugging this into \eqref{eq:taylor_theorem_on_Phi} we get 
$$
    \Phi(\bm x + \Delta \bm x,  t + \Delta t) = \Phi(\bm x, t) + \left\langle \nabla_t \Phi(\bm x, t), \Delta t \right\rangle + \frac{1}{2}\begin{bmatrix}\Delta \bm x\\ \Delta t\end{bmatrix}^\Tp \nabla^2 \Phi(\bar{\bm x}, \bar{t}) \begin{bmatrix}\Delta \bm x\\ \Delta t\end{bmatrix}.
$$
First, observe that at $\Delta t = 0$, 
$$\Phi(\bm x + \Delta \bm x,  t + \Delta t) = \Phi(\bm x, t)  + \Delta \bm x^\Tp \nabla_{\bar{\bm x},\bar{\bm x}}^2 \Phi(\bar{\bm x}, \bar{t}) \Delta \bm x \geq \Phi(\bm x, t). $$
The inequality is strict since $\Phi$ is strictly convex, which is implied by the strict convexity of $\phi$.

Second, by the backwards heat equation assumption and the convexity in $x$
$$
\partial_t \Phi = - \frac{1}{2}\sum_i \partial_{xx} \phi(x_i,t) \leq 0,
$$
thus the total potential is monotonically decreasing (and strictly decreasing if strictly convex) in $t$ anywhere.   These two facts ensure that if the equalizing $\Delta t$ exists, it must be non-negative. 

Lastly, the existence follows from the intermediate value theorem using the continuity of $\phi$, as well as the assumptions that $\lim_{t\rightarrow \infty}\phi(x, t) = \inf_{x,t}\phi(x,t)$ and $\Phi(\bm x,t) > \inf_{x,t}\phi(x,t)$.
\end{proof}

\begin{proof}[Proof of Lemma~\ref{lem:taylor_log_potential_for_chosen_Delta_t}]
By the second order Taylor's theorem of $\log \Phi(\bm x',t')$ at $(\bx,t)$, there exists $(\bar{\bm x}, \bar{t})$ on the line segment between $(\bm x,t)$ to $(\bm x',t')$ such that
\begin{align*}	
\log \Phi(\bm x',t') - \log \Phi(\bm x,t) =&  \frac{\langle\nabla \Phi(\bm x,t), \Delta\rangle}{\Phi(\bm x,t)} +  \frac{1}{2} \nabla^2 \log \Phi(\bar{\bm x},\bar{t})[\bm\Delta,\bm\Delta]\\
	=&   \frac{\frac{\partial}{\partial t} \Phi(\bm x,t) \Delta t}{\Phi(\bm x,t)}     +\frac{1}{2} \nabla^2 \log \Phi(\bar{\bm x},\bar{t})[\bm\Delta,\bm\Delta]\\
	=&- \frac{\sum_{i=1}^N\partial_{xx} \phi(x_i,t) \Delta t}{2\Phi(\bm x,t)}    +\frac{1}{2} \nabla^2 \log \Phi(\bar{\bm x},\bar{t})[\bm\Delta,\bm\Delta] %
\end{align*}
Where the choice of $p_i$ in the algorithm ensures that $\langle\nabla_x \Phi(\bm x,t), \Delta \bm x\rangle = 0$.  The reverse heat equation allows us to write  $ \partial_t \Phi(\bm x,t) = -\frac{1}{2}\sum_{i=1}^N \partial_{xx} \phi(x_i,t) $. In particular, if $\Delta t$ is such that $\log \Phi(\bm x',t') - \log \Phi(\bm x,t) = 0$, then rearranging the above gives the 	Z expression for $\Delta t$.
\end{proof}

\begin{lemma}[Restatement of Lemma~\ref{lemma:delta_t_discretization_error}]
	Define \(\bDelta \coloneqq (\Delta \bm x, \Delta t)\) and assume there is \(C \geq 1\) such that for any $(\bar{\bx}, \bar{t})$ in the line segment between \((\bx,t)\) and \((\bx', t')\), we have
	\begin{equation}
		\label{eq:hessian_comparison_assumption}
		\nabla^2 \log\Phi(\bar{\bx},\bar t)[\bDelta, \bDelta] \preceq C \cdot \nabla^2 \log\Phi(\bx,t)[\bDelta, \bDelta].
	\end{equation}
	Then,
	\begin{equation*}
		\Delta t \leq C \cdot   \E_{i\sim \bq} [\Delta x_i^2] +  C \Delta t^2 \cdot \DErr_{\Phi}(\bx, t), 
	\end{equation*}
	where
	\begin{equation*}
		\DErr_{\Phi}(\bx, t) \coloneqq  \frac{\sum_{i = 1}^N \frac{\partial^4}{\partial x^4} \phi(x_i,t)}{4\sum_{i = 1}^N\partial_{xx} \phi(x_i,t)} -  \frac{1}{4\Phi(\bm x,t)}  \sum_{i = 1}^N\partial_{xx} \phi(x_i,t).
	\end{equation*} 
\end{lemma}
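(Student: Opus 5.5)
Start from the exact formula \eqref{eq:generic_Delta_t_bound} of Lemma~\ref{lem:taylor_log_potential_for_chosen_Delta_t}. Take the mid-point $(\bar\bx,\bar t)$ and apply the hypothesis \eqref{eq:hessian_comparison_assumption}, which lies on the segment. This gives
\[
\Delta t \le C\,\frac{\Phi(\bx,t)\,\nabla^2\log\Phi(\bx,t)[\bDelta,\bDelta]}{\sum_i \partial_{xx}\phi(x_i,t)} .
\]
A factor of $\tfrac12$ is hidden in the normalization of Lemma~\ref{lem:taylor_log_potential_for_chosen_Delta_t}; its proof actually yields $-\frac{\sum\partial_{xx}\phi\,\Delta t}{2\Phi}$. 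I will track the constants consistently with that proof, so the leading term comes out exactly $\E_{i\sim\bq}[\Delta x_i^2]$. The rest of the argument is an explicit evaluation of the quadratic form of $\nabla^2\log\Phi$ at the unperturbed point $(\bx,t)$.

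\textbf{Expanding the Hessian.} Write $\nabla^2\log\Phi = \frac{\nabla^2\Phi}{\Phi} - \frac{\nabla\Phi\,\nabla\Phi^\Tp}{\Phi^2}$. Evaluate at $(\bx,t)$ along $\bDelta=(\Delta\bx,\Delta t)$ and use three facts.

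(i) $\langle\nabla_{\bx}\Phi,\Delta\bx\rangle=0$ by the choice of $\bp$. The rank-one term therefore reduces to $-(\partial_t\Phi)^2\Delta t^2/\Phi^2 = -\frac{(\sum_i\partial_{xx}\phi)^2}{4\Phi^2}\Delta t^2$, by the backwards heat equation.

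(ii) $\nabla^2\Phi$ has diagonal $\bx\bx$ block, since $\Phi$ is separable. The $\bx\bx$ part gives $\sum_i\partial_{xx}\phi(x_i,t)\Delta x_i^2$, and the cross part gives $2\sum_i\partial_{xt}\phi(x_i,t)\Delta x_i\Delta t$.

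(iii) Differentiating the heat equation gives $\partial_{tt}\phi=\tfrac14\partial^4_x\phi$. The $tt$ part is thus $\tfrac14\sum_i\partial_x^4\phi\,\Delta t^2$.

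Dividing by $\sum_i\partial_{xx}\phi/\Phi$ (the normalization in \eqref{eq:generic_Delta_t_bound}), the $\bx\bx$ term becomes $\E_{i\sim\bq}[\Delta x_i^2]$. The $tt$ term and the rank-one term together become exactly $\Delta t^2\,\DErr_\Phi(\bx,t)$.

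\textbf{The main obstacle: the cross term.} The cross term is $2\Delta t\sum_i\partial_{xt}\phi\,\Delta x_i = -\Delta t\sum_i\partial_x^3\phi(x_i,t)\Delta x_i$, again by the heat equation. It is not of the claimed form, so it must be eliminated or shown harmless. My first attempt will be to argue it is benign. It is linear in $\Delta t$ with a sign-indefinite coefficient, so I would try to absorb it via Cauchy–Schwarz into the two squared terms. That costs constants, which are not permitted given the exact statement. The alternative is to note that the statement only needs a bound with multiplier $C$. Here I would check whether one should apply \eqref{eq:hessian_comparison_assumption} after already discarding cross terms, or whether the intended reading is that the cross term vanishes. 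It vanishes for $\phiExp$, where $\partial_x^3\phi\propto\partial_x\phi$ and so $\langle\bp,\Delta\bx\rangle=0$ kills it.

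In general I expect to need either a further identity or to accept a bound of the form $(\text{sqrt terms})^2$. I would first try to exploit $\sum_i\partial_x\phi\,\Delta x_i=0$ to rewrite the cross term as a covariance under $\bq$-like weights. This step is where I expect the difficulty. Once it is handled, multiplying back by $C$ gives the stated inequality.
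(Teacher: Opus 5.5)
Your route is the paper's own: the exact formula for $\Delta t$ from Lemma~\ref{lem:taylor_log_potential_for_chosen_Delta_t}, the hypothesis applied at the mean-value point, and an expansion of $\nabla^2\log\Phi(\bx,t)[\bDelta,\bDelta]$. The expansion uses separability, $\langle\nabla_{\bx}\Phi(\bx,t),\Delta\bx\rangle=0$ and the backwards heat equation, and your $\bx\bx$, $tt$ and rank-one computations are correct.

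The gap is the one you flagged, and it cannot be closed as stated. The cross term $2\Delta t\sum_i\partial_{xt}\phi(x_i,t)\Delta x_i=-\Delta t\sum_i\partial_x^3\phi(x_i,t)\Delta x_i$ is not zero in general. The paper removes it by writing it as $2\,\partial_t\bigl(\nabla_{\bx}\Phi(\bx,t)^\Tp\Delta\bx\bigr)\Delta t$ and calling it zero ``by our algorithm''. But the choice of $\bp$ only makes $s\mapsto\nabla_{\bx}\Phi(\bx,s)^\Tp\Delta\bx$ vanish at $s=t$, not its derivative there. The term does vanish for $\phiExp$, where $\partial_x^3\phi=2\eta^2\partial_x\phi$. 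For $\phiNH$, however, your covariance rewrite shows it survives. Using $\sum_i\partial_x\phi(x_i,t)\Delta x_i=0$, it equals $-\Delta t\sum_i (x_i^3/t^3)\,\phi(x_i,t)\,\Delta x_i$, which is proportional to a $\bp$-covariance of $x_i^2/t^2$ with $\Delta x_i$. For $N=2$, $x_1\neq x_2$ positive and $\Delta\bx\propto(p_2,-p_1)$, it is a nonzero multiple of $x_1^2-x_2^2$, of either sign.

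Nor does the hypothesis leave room for this term. Take the smallest admissible $C$. The slack between $\Delta t$ and $C\,\Phi\,\nabla^2\log\Phi(\bx,t)[\bDelta,\bDelta]/\sum_i\partial_{xx}\phi(x_i,t)$ is then governed at leading order by $\nabla^3\log\Phi(\bx,t)[\bDelta,\bDelta,\bDelta]$. This is third order in the step, just like the cross term. For $\phiNH$ with $N=3$ and generic $\bx$, you can choose $\Delta\bx$ with $\langle\bp,\Delta\bx\rangle=0$ and $\nabla^3_{\bx\bx\bx}\log\Phi(\bx,t)[\Delta\bx,\Delta\bx,\Delta\bx]=0$, with the sign making the cross term positive. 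Then the stated inequality fails for small steps.

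What your argument does prove is
\[
\Delta t\;\le\; C\,\E_{i\sim\bq}[\Delta x_i^2]+C\,\Delta t^2\,\DErr_{\Phi}(\bx,t)-C\,\Delta t\,\frac{\sum_{i}\partial_x^3\phi(x_i,t)\,\Delta x_i}{\sum_{i}\partial_{xx}\phi(x_i,t)}.
\]
For $\phiExp$ this is exactly the stated bound, so Theorem~\ref{thm:exponential-weights} is unaffected.

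For $\phiNH$, $|\partial_x^3\phi(x,t)|\le 3\frac{|x|}{t}\,\partial_{xx}\phi(x,t)$. The extra term is therefore at most $3C\,\Delta t\,\|\Delta\bx\|_\infty\max_i|x_i|/t\le 3C\,\Delta t\,B\sqrt{K/t}$. Under the burn-in $t\ge 256e^2B^2K$ this is at most $\frac{3}{16e}C\,\Delta t$, and it can be absorbed like the discretization term. Indeed $e^{-0.414}-\frac{1}{64e}-\frac{3}{16e}>\frac12$, so $\Delta t\le 2\,\E_{i\sim\bq}[\Delta x_i^2]$ still holds, and the crude bound $\Delta t\le 2eB^2$ also has enough slack.

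So the fix is to restate the lemma with this extra term, not to look for an identity that kills it.
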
	
\begin{proof}
		By Lemma~\ref{lem:taylor_log_potential_for_chosen_Delta_t}, there is \((\bar{\bx}, \bar{t})\) in the line segment between \((\bx,t)\) and \((\bx', t')\) such that	
		\begin{equation}
			\label{eq:delta_t_LogPot_formula}
			\Delta_t = \frac{\Phi(\bm x,t) \nabla^2 \log\Phi(\bar{x},\bar{t})[\bm \Delta, \bm \Delta]}{\sum_{i=1}^N\partial_{xx} \phi(x_i,t) }
			\stackrel{\eqref{eq:hessian_comparison_assumption}}{\leq}
			C \cdot \frac{\Phi(\bm x,t) \nabla^2 \log\Phi(\bm x,t)[\bm \Delta, \bm \Delta]}{\sum_{i=1}^N\partial_{xx} \phi(x_i,t) }.
		\end{equation}
		Moreover, note that
	\begin{align}
&\nabla^2 \log\Phi(\bm x,t)[\bm \Delta, \bm \Delta] = \frac{\nabla^2\Phi(\bm x,t)[\bm \Delta, \bm \Delta] }{\Phi(\bm x,t)} - \frac{\bm \Delta^\Tp\nabla \Phi(\bm x,t) \nabla \Phi(\bm x,t)^\Tp\bm \Delta}{\Phi(\bm x,t)^2}\nonumber\\
=& \frac{1}{\Phi(\bm x,t)}   \left( \nabla^2_{xx}\Phi(\bm x,t)[\Delta \bm x, \Delta \bm x]   + \frac{\partial^2}{\partial t^2}\Phi(\bm x,t) \Delta t^2   + 2 \frac{\partial}{\partial t} \underbrace{\nabla_x\Phi(\bm x,t)^\Tp\bm \Delta x}_{=0 \text{ by our algorithm.}} \Delta t \right) \nonumber\\
		&\quad -\frac{1}{\Phi(\bm x,t)^2}   \left(  {\underbrace{\langle\nabla_x \Phi(\bm x,t), \Delta \bm x\rangle}_{=0 \text{ by our algorithm.}}}^2 +   (\frac{\partial}{\partial t} \Phi(\bm x,t) \Delta t)^2    \right) \tag{Separate $t$ and $x$ in the partial derivatives.}\nonumber\\ 
=&\frac{\nabla^2_{xx}\Phi(\bm x,t)[\Delta \bm x, \Delta \bm x] }{\Phi(\bm x,t)} + \Delta t^2  \left(\frac{\frac{\partial^2}{\partial t^2}\Phi(\bm x,t)}{\Phi(\bm x,t)} - \frac{(\frac{\partial}{\partial t}\Phi(\bm x,t))^2}{\Phi(\bm x,t)^2}\right)\nonumber\\
=& \frac{1}{\Phi(\bm x,t)}\sum_i\partial_{xx} \phi(x_i,t)\Delta x_i^2  + \Delta t^2 \left(\frac{1}{4\Phi(\bm x,t)}\sum_i \frac{\partial^4}{\partial x^4} \phi(x_i,t) -  \frac{1}{4\Phi(\bm x,t)^2} (\sum_i\partial_{xx} \phi(x_i,t))^2 \right)
\nonumber
\\
=& \frac{\sum_i\partial_{xx} \phi(x_i, t) }{\Phi(\bm x,t)}\left( \E_{i \sim \bq}[\Delta x_i^2]  + \Delta t^2 \left(\frac{\sum_i \frac{\partial^4}{\partial x^4} \phi(x_i,t)}{4\sum_i\frac{\partial^2}{\partial x^2} \phi(x_i,t)} -  \frac{1}{4\Phi(\bm x,t)} \sum_i\frac{\partial^2}{\partial x^2} \phi(x_i,t) \right)\right)
\nonumber
\\
=& \frac{\sum_i\partial_{xx} \phi(x_i, t) }{\Phi(\bm x,t)}\left(\Delta x_i^2  + \Delta t^2 \DErr_{\Phi}(\bx, t)\right)
\end{align} 
Plugging the above into \eqref{eq:delta_t_LogPot_formula} concludes the proof. 
\end{proof}

\section{Simple alternative proof of Theorem~\ref{thm:exponential-weights}: Exponential weights.} \label{sec:alt_proof_hedge}
This section presents a simple and self-contained argument for bounding $\Delta t$ for the exponential potential in Example~\ref{ex:hedge}. This does not require using more abstract results about a generic potential $\phi$ such as Lemma~\ref{lem:taylor_theorem_log_total_potential},\ref{lemma:delta_t_discretization_error}, and Lemma~\ref{lem:star_bound_for_exp_potential}.

We continue from the part of Section~\ref{sec:proof_hedge} after Proposition~\ref{prop:gsc_hedge} (self-concordance of $\Psi$).

\paragraph{Taylor's Theorem on $\Psi$.}
Start by taking the difference of the log-total-potential $\LogPot$ of each step of the algorithm (recall $\LogPot(\bm x,t)=\Psi(\bm x)-\eta^2 t$):
\begin{align}
\LogPot(\bm x+\Delta \bx,t+\Delta t)- \LogPot(\bm x,t)
&=- \eta^2\Delta t + \Psi(\bm x+\Delta \bx) - \Psi(\bm x)\nonumber\\
&= - \eta^2\Delta t + \langle \nabla\Psi(\bm x),\Delta \bx\rangle 
+ \frac{1}{2} \Delta \bx^\Tp \nabla^2\Psi(\bar{x})\,\Delta \bx \label{eq:taylor_log_potential_hedge}
\end{align}
where the last step applies the second-order Taylor's theorem with $\bar{\bx} = \bx + \tau \Delta \bx$ for a $ \tau\in[0,1]$. 

By Proposition~\ref{prop:gsc_hedge} which states that $\Psi$-is $(M,2)$-global self-concordant and the sandwich formula in Lemma~\ref{lem:sd-order-nu2-star} that gives a semidefinite ordering
$$
\nabla^2\Psi(\bar{x})\preceq \exp(M\tau \|\Delta \bx\|_\infty)\,\nabla^2\Psi(\bm x)\preceq \exp(M\|\Delta \bx\|_\infty)\,\nabla^2\Psi(\bm x).
$$  
Substituting the above inequality with $M=2\sqrt{2}\eta$ into \eqref{eq:taylor_log_potential_hedge} yields 
\begin{align}
\underbrace{\LogPot(\bm x+\Delta x,\,t+\Delta t)-\LogPot(\bm x,t)}_{=0}
&\le \langle \underbrace{\nabla\Psi(\bm x),\Delta \bx\rangle}_{=0} - \eta^2\,\Delta t
+ \frac{1}{2}\,e^{2\sqrt{2}\eta B}\,\underbrace{\Delta \bx^\Tp \nabla^2\Psi(\bm x)\,\Delta \bx}_{=2\eta^2 \E_{i\sim q}[\Delta x_i^2]}. \label{eq:upper-main}
\end{align}

Observe that by Lemma~\ref{lem:gh}, $\nabla\Psi(\bm x) = \sqrt{2}\eta \bm p$ where $p$ is the algorithm's choice of the probability (thus the linear term vanishes) and $\Delta x^\Tp \nabla^2\Psi(\bm x)\,\Delta x = 2\eta^2 \E_{i\sim q}[\Delta x_i^2] =2\eta^2 \Var_{i\sim  \bp} [\ell_i].$ Moreover, by our algorithmic choice of $\Delta t$, the LHS is $0$.

Move $\Delta t$ to the left hand side, we obtain
\begin{equation}\label{eq:Delta_t_bound_hedge}
    \Delta t \leq e^{2\sqrt{2}\eta B}\Var_{i\sim  \bp} [\ell_i].
\end{equation}

\section{Detailed Proofs for Generalized Self-Concordance}
\subsection{Proof of Lemma~\ref{lem:sd-order-nu2-star}: the ``Sandwich'' Formula}\label{sec:proof_of_sandwich}

\begin{proof}[Proof of Lemma~\ref{lem:sd-order-nu2-star}]
	Let $\Delta \coloneqq y-x$ and define $x_\lambda\coloneqq x+\lambda \Delta$ for $\lambda \in[0,1]$. By convexity of $\cC$, we have $x_t\in\cC$.
	Fix any $u\in\mathbb{R}^p$ and define the scalar function
	\[
	q(t):=u^\Tp \nabla^2 f(x_t)u \ \ge 0.
	\]
	Since $f\in C^3$, the funciton $t\rightarrow \nabla^2 f(x_t)$ is differentiable and thus $q$ is differentiable with
	\[
	q'(\lambda)
	=
	\left\langle \frac{d}{d\lambda}\nabla^2 f(x_\lambda)\,u,\;u\right\rangle
	=
	\langle \nabla^3 f(x_\lambda)[\Delta,\cdot,\cdot]\,u,\;u\rangle.
	\]
	Applying local self-concordance as in \eqref{eq:gsc-multi-general} at $x_\lambda$ with $u=\Delta$ yields
	\[
	|q'(\lambda)|
	\;\le\;
	M\,\|\Delta\|_*\, q(\lambda).
	\]
	Therefore, $q'(\lambda) - M \|\Delta\|_* q(\lambda) \leq 0$ and, thus
	\begin{align*}
	 \frac{\diff}{\diff \lambda}\big(e^{-M \|\Delta\|_*\lambda} q(\lambda)\big)
		= e^{-M \|\Delta\|_*\lambda}  \big(q'(\lambda) - M \|\Delta\|_* q(\lambda)\big) \leq 0,
	\end{align*}
	We conclude that $e^{-M \|\Delta\|_*\lambda} q(\lambda)$ is non-increasing in $\lambda$, and therefore
	\begin{equation*}
		e^{-M \|\Delta\|_*} q(1) \leq q(0).
	\end{equation*}
	Similarly, one can verify that $e^{M \|\Delta\|_*\lambda} q(\lambda)$ is non-decreasing in $\lambda$, which implies
	\begin{equation*}
		e^{M \|\Delta\|_*} q(1) \geq q(0).
	\end{equation*}
	Therefore,
	\[
	 e^{-M\|\Delta\|_*}\,q(0)\ \le\ q(1)\ \le\ e^{M\|\Delta\|_*}\,q(0).
	\]
	Substituting back $q(0)=u^T\nabla^2 f(x)u$ and $q(1)=u^T\nabla^2 f(y)u$, we obtain for all $u$,
	\[
	e^{-M\|y-x\|_*}\,u^T\nabla^2 f(x)u
	\le
	u^T\nabla^2 f(y)u
	\le
	e^{M\|y-x\|_*}\,u^T\nabla^2 f(x)u.
	\]
	Since the inequality holds for every $u\in\mathbb{R}^p$, it is equivalent to \eqref{eq:hess-sd-order-star}.
\end{proof}

\subsection{Proof of Proposition~\ref{prop:gsc_hedge}: Self-Concordance of log-total-potential for Hedge}\label{sec:detailed_proof}

We will start with two lemmas concerning function $\Psi$ in Section~\ref{sec:proof_hedge}.

\begin{lemma}[Gradient and Hessian of $\Psi$]\label{lem:gh}
For all $x\in\R^n$, $a>0$. Let $\Psi(\bm x)= \log\sum_{i=1}^n\exp(ax_i)$
and that 
\begin{equation*}
p_i(x) \;=\; \frac{e^{a x_i}}{\sum_{j=1}^n e^{a x_j}},\qquad p(x)=(p_1,\ldots,p_n).
\end{equation*}
Then
\begin{align*}
\nabla\Psi(\bm x) &= a\,p(x),\\
\nabla^2\Psi(\bm x) &= a^2\big(\diag(p(x)) - p(x)\,p(x)^T\big).
\end{align*}
Consequently, for any $h\in\R^n$,
\begin{equation*}
h^\Tp \nabla^2\Psi(\bm x)\,h \;=\; a^2\!\left(\sum_{i=1}^n p_i(x)\,h_i^2 - \Big(\sum_{i=1}^n p_i(x)\,h_i\Big)^2\right)
\;=\; a^2\,\Var_{p(x)}(H),
\end{equation*}
where $H$ is the discrete r.v.\ taking values $h_i$ with probabilities $p_i(x)$.
\end{lemma}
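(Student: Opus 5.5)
The plan is a direct calculus computation based on the chain rule for $\Psi = \log Z$, where $Z(\bm x) := \sum_{j=1}^n e^{a x_j}$. First, since $\partial_{x_i} Z = a e^{a x_i}$, we get
\[
\partial_{x_i}\Psi = \frac{a e^{a x_i}}{Z} = a\, p_i(x),
\]
and hence $\nabla\Psi(\bm x) = a\,p(x)$.

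For the Hessian, I would differentiate $p_i = e^{a x_i}/Z$ with respect to $x_k$ using the quotient rule:
\[
\partial_{x_k} p_i = \frac{a e^{a x_i}\mathbf 1\{i=k\}}{Z} - \frac{e^{a x_i}\, a e^{a x_k}}{Z^2} = a\big(p_i \mathbf 1\{i=k\} - p_i p_k\big).
\]
Multiplying by the prefactor $a$ from the gradient gives
\[
\partial^2_{x_i x_k}\Psi = a^2\big(p_i\mathbf 1\{i=k\} - p_i p_k\big),
\]
which is exactly the $(i,k)$ entry of $a^2\big(\diag(p) - p p^\Tp\big)$.

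For the consequence, I would expand the quadratic form: $h^\Tp \diag(p) h = \sum_i p_i h_i^2$ and $h^\Tp p p^\Tp h = (\sum_i p_i h_i)^2$. Since $p$ is a probability vector (positive entries summing to one), these are $\E[H^2]$ and $(\E[H])^2$ for $H$ taking value $h_i$ with probability $p_i$. Their difference is $\Var_{p(x)}(H)$, which gives the identity after multiplying by $a^2$.

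There is no real obstacle here; the only care needed is the index bookkeeping in the quotient rule, in particular the Kronecker delta term on the diagonal.
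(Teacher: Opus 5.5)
Your proposal is correct and follows essentially the same route as the paper's proof: write $\Psi=\log Z$, differentiate once to get $a\,p_i$, use the quotient rule $\partial_{x_k}p_i = a(\mathbf 1\{i=k\}p_i - p_ip_k)$ for the Hessian, and expand the quadratic form into $\E[H^2]-(\E[H])^2$. Your version just spells out the quotient-rule and quadratic-form steps a bit more explicitly than the paper does.
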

\begin{proof}
Let $Z(x):=\sum_j e^{a x_j}$. Then $\Psi(\bm x)=\log Z(x)$ and $\partial_{x_i}\Psi=a e^{a x_i}/Z=a\,p_i(x)$, proving the gradient formula.
For the Hessian, $\partial_{x_j}p_i(x)=a(\mathbf{1}_{i=j}p_i-p_ip_j)$, hence
\(
\partial_{x_j}\partial_{x_i}\Psi=a\,\partial_{x_j}p_i=a^2(\mathbf{1}_{i=j}p_i-p_ip_j)
\),
which yields the stated matrix form. Plugging the Hessian into $h^T(\cdot)h$ gives the variance identity.
\end{proof}
This are the standard identities of log-sum-exp function.

\begin{lemma}[Moment/covariance identities along a path]\label{lem:cov-id}
Fix $x\in\R^n$ and $\Delta\in\R^n$. For $t\in\R$ define $p^{(t)}:=p(x+t\Delta)$ and let expectations be w.r.t.\ $p^{(t)}$. For any fixed $v\in\R^n$, set the r.v.\ $V$ to take values $v_i$, and $W$ to take values $\Delta_i$. Then
\begin{align*}
\phi_v(t) &:= v^\Tp \nabla^2\Psi(\bm x+t\Delta)\,v \;=\; a^2\,\Var_{p^{(t)}}(V),\\
\phi_v'(t) &= D^3\Psi(\bm x+t\Delta)[v,v,\Delta] \;=\; a^3\,\Cov_{p^{(t)}}\!\big((V-\mathbb{E}V)^2,\,W\big).
\end{align*}
\end{lemma}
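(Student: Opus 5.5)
The plan is to reduce both identities to the formula of Lemma~\ref{lem:gh} and then differentiate along the path. For the first identity, apply Lemma~\ref{lem:gh} at the point $x+t\Delta$ with $h=v$. This gives directly
\[
\phi_v(t)=a^2\Big(\textstyle\sum_i p^{(t)}_i v_i^2-\big(\sum_i p^{(t)}_i v_i\big)^2\Big)=a^2\Var_{p^{(t)}}(V).
\]

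For the derivative, the identity $\phi_v'(t)=D^3\Psi(x+t\Delta)[v,v,\Delta]$ holds by the chain rule, since $\Psi$ is smooth. It then remains to compute $\frac{d}{dt}\Var_{p^{(t)}}(V)$. The key ingredient is the derivative of the softmax weights along the path. By Lemma~\ref{lem:gh}'s computation $\partial_{x_j}p_i=a(\mathbf 1_{i=j}p_i-p_ip_j)$, we get
\[
\tfrac{d}{dt}p^{(t)}_i=\textstyle\sum_j\partial_{x_j}p_i\,\Delta_j=a\,p^{(t)}_i(\Delta_i-\E W).
\]
Hence for any fixed function $g$ on $[n]$,
\[
\tfrac{d}{dt}\E_{p^{(t)}}[g]=a\,\Cov_{p^{(t)}}(g,W).
\]

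Now I will write $\Var(V)=\E[V^2]-(\E V)^2$ and differentiate term by term. This gives
\[
\tfrac{d}{dt}\Var(V)=a\Cov(V^2,W)-2\E V\cdot a\Cov(V,W)=a\Cov\big(V^2-2(\E V)V,\,W\big).
\]
Adding the constant $(\E V)^2$ inside the first argument does not change the covariance, so this equals $a\Cov((V-\E V)^2,W)$. Multiplying by $a^2$ yields the claimed $a^3$ formula.

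The main point requiring care is the step where $\E V$ itself depends on $t$. This is why I differentiate $\E[V^2]$ and $(\E V)^2$ separately rather than differentiating $\E[(V-\E V)^2]$ naively. The final recentering identity, that covariance is invariant to constant shifts, then closes the argument. Everything else is routine calculus.
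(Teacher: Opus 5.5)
Your proposal is correct and follows essentially the same route as the paper: it applies Lemma~\ref{lem:gh} at $x+t\Delta$ for the first identity, then uses $\frac{d}{dt}\E_{p^{(t)}}[F]=a\,\Cov_{p^{(t)}}(F,W)$ with $F=V^2$ and $F=V$, followed by recentering. Your derivation of $\frac{d}{dt}p^{(t)}_i=a\,p^{(t)}_i(\Delta_i-\E W)$ from the softmax Jacobian supplies a step the paper states without proof.
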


\begin{proof}
The first identity is Lemma~\ref{lem:gh} applied at $x+t\Delta$. For the second, note $p^{(t)}_i\propto e^{a(x_i+t\Delta_i)}$, so
\(
\frac{d}{dt}\mathbb{E}_{p^{(t)}}[F]=a\,\Cov_{p^{(t)}}(F,W)
\)
for any function $F$ of the index $i$. Taking $F=V^2$ and $F=V$ and using $\Var(V)=\mathbb{E}[V^2]-(\mathbb{E}V)^2$ gives
\(
\frac{d}{dt}\Var(V)=a\,\Cov(V^2,W)-2\mathbb{E}V\cdot a\,\Cov(V,W)=a\,\Cov\big((V-\mathbb{E}V)^2,W\big).
\)
Multiplying by $a^2$ yields the stated $\phi_v'(t)$.
\end{proof}

Now we are ready to prove Proposition~\ref{prop:gsc_hedge}.

\begin{proof}[Proof of Proposition~\ref{prop:gsc_hedge}]
Fix $x,v,\Delta$ and consider $\phi_v(t)$ as in Lemma~\ref{lem:cov-id}. From the covariance form,
\begin{align*}
|\phi_v'(t)|
&= a^3\left|\mathbb{E}\!\left[(V-\mathbb{E}V)^2\,(W-\mathbb{E}W)\right]\right|
\;\le\; a^3\,\Big(\sup_i |W_i-\mathbb{E}W|\Big)\,\mathbb{E}\!\left[(V-\mathbb{E}V)^2\right]\\
&\le\; a^3\big(\max_i \Delta_i-\min_i \Delta_i\big)\,\Var(V)
\;\le\; 2a^3\|\Delta\|_\infty\,\Var(V).
\end{align*}
By Lemma~\ref{lem:gh}, $\phi_v(t)=a^2\Var(V)$, hence
\(
|\phi_v'(t)|\le (2a)\,\|\Delta\|_\infty\,\phi_v(t).
\)
Evaluating at $t=0$ gives \eqref{eq:gsc-multi-general} in Definition~\ref{def:gsc-multi-general} with $M=2a=2\sqrt{2}\eta$ and $\|\cdot\|_{\infty}$
\end{proof}

\subsection{Proof of Proposition~\ref{prop:localGSC_normalhedge}: Local self-concordance of log-total potential }\label{sec:gsc_normal_hedge_proof}

	\begin{proposition}[Restatement of Proposition~\ref{prop:localGSC_normalhedge}] 
    The function $ \LogPot(\bm x, t)$ satisfies local generalized self-concordance with $M=1$, $\nu=2$ and $\cC$ being the line-segment in \eqref{eq:segment} with respect to a special norm $\|\cdot\|_*$ defined as 
    $$
    \|(\bar{\bx},\bar{t})\|_* =  A_x(t_\star,K_{\rm seg}) \|\bar{x}\|_\infty + A_t(t_\star,K_{\rm seg})|\bar{t}|.
    $$
    where 
	\begin{equation*}
        \begin{aligned}
A_x(t_\star,K_{\rm seg})
= 
\frac{8\sqrt{K_{\rm seg}\vee 1}}{\sqrt{t_\star}}, \quad \text{ and }\quad
A_t(t_\star,K_{\rm seg})= 
\frac{16 (K_{\rm seg}\vee 1) }{t_*}.
\end{aligned}
		\end{equation*}
	\end{proposition}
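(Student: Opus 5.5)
The plan is to write $\LogPot$ as a log-sum-exp of scalar functions and use the cumulant (moment) identities for log-sum-exp, as in Lemma~\ref{lem:cov-id}. The key extra ingredient is a per-coordinate self-concordance bound for the exponent. Define
\[
g(y,t) \coloneqq \log\phiNH(y,t) = \frac{y^2}{2t} - \frac{1}{2}\log t ,
\qquad
\LogPot(\bx,t)=\log\sum_i e^{g(x_i,t)} .
\]
Fix a point $(\bx(s),t(s))$ on the segment \eqref{eq:segment} and directions $\bu=(\bu_x,u_t)$, $\bv=(\bv_x,v_t)$. Let $w_i\propto \phi(x_i,t)$ and write all expectations, variances and covariances below over $i\sim w$. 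Set $G_i\coloneqq D g_i[\bu]$, $H_i\coloneqq D^2 g_i[\bu,\bu]$, $J_i\coloneqq D g_i[\bv]$, $L_i\coloneqq D^2g_i[\bu,\bv]$, where $g_i(\bx,t)=g(x_i,t)$. Differentiating the log-partition function, exactly as in Lemma~\ref{lem:cov-id}, gives
\[
\nabla^2\LogPot[\bu,\bu] = \E[H] + \Var(G),
\]
\[
\nabla^3\LogPot[\bu,\bu,\bv] = \E\bigl[D^3g_i[\bu,\bu,\bv]\bigr] + \Cov(H,J) + 2\Cov(G,L) + \E\bigl[(G-\E G)^2(J-\E J)\bigr].
\]
The goal is to bound each of the four terms by a constant times $\|\bv\|_*\,(\E H+\Var G)$, using only \eqref{eq:seg-controls}.

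\textbf{Step 1 (exponent is PSD and locally self-concordant).} Compute $g_{yy}=1/t$, $g_{yt}=-y/t^2$ and $g_{tt}=y^2/t^3+1/(2t^2)$. Completing the square gives
\[
H_i=\frac{a_i^2}{t}+\frac{u_t^2}{2t^2}\ \ge 0,
\qquad a_i\coloneqq u_{x,i}-\frac{x_iu_t}{t}.
\]
Differentiating this expression along $\bv$ yields
\[
D^3g_i[\bu,\bu,\bv] = -\frac{2a_iu_t}{t}\Big(\frac{v_{x,i}}{t}-\frac{x_iv_t}{t^2}\Big) - \frac{a_i^2v_t}{t^2} - \frac{u_t^2v_t}{t^3}.
\]
Using $|a_i|\le\sqrt{tH_i}$, $|u_t|\le\sqrt2\,t\sqrt{H_i}$ and $|x_i|/t\le\sqrt{K_{\rm seg}/t_\star}$, each term is at most a constant times $\bigl(|v_{x,i}|/\sqrt{t_\star}+|v_t|(\sqrt{K_{\rm seg}}+1)/t_\star\bigr)H_i$. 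Hence the first term is bounded by $c\,\|\bv\|_*\,\E H$.

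\textbf{Step 2 (the covariance terms).} First bound the range of $J$:
\[
|J_i| = \Big|\frac{x_iv_{x,i}}{t}-v_t\Big(\frac{x_i^2}{2t^2}+\frac1{2t}\Big)\Big| \le \sqrt{K_{\rm seg}/t_\star}\,\|\bv_x\|_\infty + \frac{(K_{\rm seg}+1)}{2t_\star}|v_t| .
\]
Since $H\ge0$, this gives $|\Cov(H,J)|\le 2\max_i|J_i|\,\E H$. Likewise $\bigl|\E[(G-\E G)^2(J-\E J)]\bigr|\le 2\max_i|J_i|\,\Var G$, which is the same argument as the Hedge proof of Proposition~\ref{prop:gsc_hedge}. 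For $2\Cov(G,L)$, use that $D^2g_i$ is PSD to get $|L_i|\le\sqrt{H_i\,D^2g_i[\bv,\bv]}$. Cauchy--Schwarz and AM--GM then give
\[
2|\Cov(G,L)| \le 2\sqrt{\Var G}\,\sqrt{\max_i D^2g_i[\bv,\bv]\;\E H} \le \sqrt{\max_i D^2g_i[\bv,\bv]}\,(\Var G+\E H),
\]
and from the formula in Step 1, $\sqrt{D^2g_i[\bv,\bv]}\lesssim \|\bv_x\|_\infty/\sqrt{t_\star}+|v_t|\sqrt{K_{\rm seg}\vee1}/t_\star$.

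\textbf{Step 3 (collecting constants).} Every bound is of the form $c_x\sqrt{K_{\rm seg}\vee1}\,\|\bv_x\|_\infty/\sqrt{t_\star}+c_t(K_{\rm seg}\vee1)|v_t|/t_\star$ times $(\E H+\Var G)=\nabla^2\LogPot[\bu,\bu]$. Summing the constants should give at most $8$ and $16$, i.e.\ $M=1$ with the stated $A_x,A_t$. Throughout, replacing $t(s)$ by $t_\star$ and bounding the ratios by \eqref{eq:seg-controls} is legitimate uniformly on the segment.

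The main obstacle is Step 1 together with the cross term $\Cov(G,L)$. Here one must avoid bounding $|u_t|$ or $|a_i|$ separately, since neither is individually controlled by the Hessian; they must be charged to $H_i$ through the completed-square form. The factor $1/(2t^2)$ in $H_i$ is exactly what lets $|u_t|$ be absorbed. The remaining delicate point is constant bookkeeping: keeping $|v_t|$ terms at scale $K_{\rm seg}/t_\star$, not $K_{\rm seg}/\sqrt{t_\star}$. If the constants do not come out to exactly $8$ and $16$ on the first pass, I would tighten the range bound on $J$ by centering it (bounding $\max_i J_i-\min_i J_i$ instead of $2\max_i|J_i|$).
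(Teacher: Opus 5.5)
Your proposal is correct and follows essentially the same route as the paper. You use the log-sum-exp cumulant identities for $\nabla^2\LogPot$ and $\nabla^3\LogPot$, the completed-square form of the per-coordinate Hessian to charge $|a_i|$ and $|u_t|$ to $H_i$, the rough covariance bound with $2\max_i|J_i|$ for the two $J$-terms, and Cauchy--Schwarz plus AM--GM for $2\Cov(G,L)$. Your constants also close without any tightening: summing gives $(2\sqrt2+1+4\sqrt{K_{\rm seg}})/\sqrt{t_\star}\le 7.9\sqrt{K_{\rm seg}\vee 1}/\sqrt{t_\star}$ and $(5+1/\sqrt2+(2\sqrt2+1)\sqrt{K_{\rm seg}}+2K_{\rm seg})/t_\star\le 11.6\,(K_{\rm seg}\vee 1)/t_\star$.
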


	\begin{proof}[Proof of Proposition~\ref{prop:localGSC_normalhedge}]

    		Fix $(\bx,t) \in \R^n \times \R_{+}$ and $\bh:=(\Delta \bx,\Delta t)$, and define $(\bx', t') \coloneqq (\bx, t) + \bh$. For any point $(\bar{\bx},\bar{t})$ on the segment \eqref{eq:segment} and any direction $\bu=(u_1,\ldots,u_n,u_t)\in\mathbb R^{n+1}$, it suffices to prove
		\begin{equation}\label{eq:localGSC}
			\big|\nabla^3 \LogPot(\bar{x},\bar{t})[\bu,\bu,\bh]\big|
			\ \le\
			\Big(A_x(t_\star,K_{\rm seg})\,\|\Delta \bx\|_\infty
			+ A_t(t_\star,K_{\rm seg})\,|\Delta t|\Big)
			\;\nabla^2 \LogPot(\bar{\bx},\bar{t})[\bu,\bu].
		\end{equation}
 for $A_x$ and $A_t$ in \eqref{eq:AxAt}.
		Overloading a bit of notation, define the path
		\begin{equation*}
			\big(\bx(s),t(s)\big):=(\bx,t)+s\,\bh,\qquad \forall s \in [0,1].
		\end{equation*}
		For any $\bu\in\mathbb R^{n+1}$ define
		\begin{equation}\label{eq:Phi}
			\Psi_u(s):=\nabla^2 \LogPot \big(\bx(s),t(s)\big)[(\bu,\bu)], \qquad \forall s \in [0,1].
		\end{equation}
		By the chain rule along the path $s \in [0,1] \mapsto (\bx(s),t(s))$, whose derivative in $s$ is $\bh$ we have
		\begin{equation}\label{eq:PhiPrime}
			\Psi_u'(s)=\nabla^3 \LogPot \big(\bx(s),t(s)\big)[(\bu,\bu,\bh)] \qquad \forall s \in [0,1].
		\end{equation}	
		Our goal now is to show
		\begin{equation}\label{eq:goal}
			|\Psi_u'(s)|\ \le\ \big(A_x(t_\star,K_{\rm seg})\|\Delta \bx\|_\infty+A_t(t_\star,K_{\rm seg})|\Delta t|\big)\,\Psi_u(s),
		\end{equation}
		with $A_x,A_t$ given by \eqref{eq:AxAt}. %
		
		\paragraph{Log-sum-exp cumulant identities along the path.} Let us start by stating some identities connecting $\LogPot$ with $\log \phi$ and its derivatives. 	These identities are standard
		consequences of cumulant expansions for the log-partition function
		of a finite exponential family; see, e.g.,
		\cite{nesterov1994interior,wainwright2008graphical,amari2000methods}.  We include the proof  in the appendix for completeness.
		
		\begin{lemma}[Cumulant identities for $\LogPot$]\label{lem:cumulants_specialized}
			Define
			\[
			f_i(\bx,t):= \log \phi(x_i,t) = -\frac12\log t+\frac{x_i^2}{2t},\qquad \text{so that}~
			\LogPot(\bm x,t):=\log\sum_{i=1}^n e^{f_i(\bx,t)},
			\]
			and let $\bp(\bx,t) \in \Delta^{N-1}$ be the softmax weights of $f_i(\bx,t)$, that it, $p_i(\bx,t) \propto \exp(f_i(\bx,t))$.
			Let  $u=(u_1,\dots,u_n,u_t)$ and
			$h=(h_1,\dots,h_n,h_t)$ be in $\mathbb R^{n+1}$.
			Define, for each $i \in [n]$,
			\[
			A_i:=\nabla f_i(\bx,t)[\bu],\qquad
			B_i:=\nabla^2 f_i(\bx,t)[\bu,\bu],\qquad
			C_i:=\nabla^2 f_i(\bx,t)[\bu,\bh],
			\]
			\[
			H_i:=\nabla f_i(\bx,t)[\bh],\qquad
			T_i:=\nabla^3 f_i(\bx,t)[\bu,\bu,\bh].
			\]
			Let \(I\) be a random variable taking values in \([n]\) such that
			$\P(I = i) = p_i(\bx,t)$ for all $i 
			\in [n]$.
			Then
			\begin{align}
				\nabla^2 \LogPot(\bm x,t)[\bu,\bu]
				&= \mathbb E[B_I]+\mathrm{Var}(A_I), \label{eq:G2}\\
				\nabla^3 \LogPot(\bm x,t)[\bu,\bu,\bh]
				&= \mathbb E[T_I]+\mathrm{Cov}(B_I,H_I)
				+2\,\mathrm{Cov}(A_I,C_I)
				+\mathrm{Cov}\big((A_I-\mathbb E A_I)^2,H_I\big).
				\label{eq:G3mixed}
			\end{align}
		\end{lemma}

	\paragraph{Remark regarding notation.} Throughout the rest of the proof we fix $s\in[0,1]$, we suppress $s$ in the notation, and we omit the dependence on $s$, writing $x_i$ and $t$ instead of $x_i(s)$ or $t(s)$. We also often omit the dependency on $(\bx(s), t(s))$ whenever it is clear from context, writing $\nabla^2 \LogPot[(\bu,\bu)]$ and $\nabla^3 \LogPot[(\bu,\bu,\bh)]$ instead of $\nabla^2 \LogPot(\bx(s),t(s))[(\bu,\bu)]$ and $\nabla^3 \LogPot(\bx(s),t(s))[(\bu,\bu,\bh)]$.  They should be interpreted as functions of $s \in [0,1]$, defined on a given line-segment.
		
	Equipped with the above lemma, to establish an upper bound of $\nabla^3 \LogPot[(\bu,\bu,\bh)]$ in terms of	$\nabla^2 \LogPot[(\bu,\bu)]$ as stated in \eqref{eq:goal}, it suffices to show that each element in \eqref{eq:G3mixed} is proportional to either $\mathbb E[B_I] $ or $\mathrm{Var}(A_I)$.  Let us start by computing the values of $A_i, B_i,C_i, H_i, T_i$ for $i \in [n]$. We will do so by introducing some new notation to reparameterize the derivatives of $f_i$ and simplify some of the calculations

		\paragraph{Per-coordinate reparameterization and explicit derivatives.}
		
		Fix $i \in [n]$ and let $\bw \coloneqq (w_i,w_t) \in \R^2$. Define
		\[
		a_i^\star(\bw):=w_i-\frac{x_i}{t}\,w_t,\qquad
		b(\bw):=\frac{w_t}{t},\qquad
		r_i(\bw):=\frac{(a_i^\star(\bw))^2}{t},\qquad
		y(\bw):=b(\bw)^2.
		\]
		Moreover, define notation for the partial derivatives of $f_i$ as follows:
		\begin{equation*}
		f_x \coloneqq \frac{x_i}{t},\quad f_t\coloneqq -\frac{1}{2t}-\frac{x_i^2}{2t^2},\quad
		f_{xx} \coloneqq \frac{1}{t},\quad f_{xt} \coloneqq-\frac{x_i}{t^2},\quad f_{tt} \coloneqq \frac{1}{2t^2}+\frac{x_i^2}{t^3},
		\end{equation*}
		With the above notation, we obtain the following expressions
		\paragraph{First derivative.} We have
		\begin{equation*}
		\nabla f_i[\bw]=f_x w_i+f_t w_t= \frac{x_i}{t}w_i - \frac{1}{2t} w_t - \frac{x_i^2}{2t^2} w_t =\frac{x_i}{t}\,a_i^\star(\bw)+\frac12\left(\frac{x_i^2}{t}-1\right)\,b(\bw).
		\end{equation*}
		Thus, 
		$$
		A_i =\frac{x_i}{t}\,a_i^\star(\bu)+\frac12\left(\frac{x_i^2}{t}-1\right)\,b(\bu).
		$$
		and 
		\begin{equation}\label{eq:Df}
		H_i =\frac{x_i}{t}\,a_i^\star(\bh)+\frac12\left(\frac{x_i^2}{t}-1\right)\,b(\bh).
	\end{equation}

		\paragraph{Second derivative.} We have
		\begin{align*}
	\nabla^2 f_i[(\bw,\bw)] &= f_{xx}w_i^2+2f_{xt}w_iw_t+f_{tt}w_t^2\\
		&=\frac{w_i^2}{t} - \frac{2w_iw_t x_i}{t^2} + \frac{w_t^2x_i^2}{t^3} + \frac{w_t^2}{2t^2} \\
		&=\frac{(a_i^\star(\bw))^2}{t}+\frac{1}{2}b(\bw)^2 = r_i(\bw)+\tfrac12 y(\bw).
		\end{align*}
		Thus 
		\begin{equation}
			\label{eq:second_derivatives_appx_proof}
			B_i  =  r_i(\bu)+\tfrac12 y(\bu).
		\end{equation}
		
		\paragraph{Mixed second derivative.} We have
		\begin{align}
		C_i = \nabla^2 f_i[(\bu,\bh)] &=  f_{xx} u_i h_i + f_{tt}u_t h_t + f_{xt}u_i h_t + f_{tx} h_i u_t  \nonumber\\
		&= \frac{u_ih_i}{t} + \frac{u_t h_t}{2t^2} + \frac{u_t h_t x_i^2}{t^3}  - \frac{x_i}{t^2}(u_i h_t + h_i u_t)\nonumber\\
		&= \frac{1}{t}a_i^\star(\bu)a_i^\star(\bh)+\frac{1}{2}b(\bu)b(\bh). \label{eq:D2mix}
	\end{align}
		
		\paragraph{Third mixed derivative.}
		Differentiating $\nabla^2 f_i[(\bu,\bu)]=r_i(\bu)+\frac12y(\bu)$ in the direction $\bh$ yields
		\[
		\begin{aligned}
			\nabla^3 f_i[(\bu,\bu,\bh)] &= \nabla r_i(\bu)[\bh]+\tfrac12\nabla y(\bu)[\bh],\\
			\nabla r_i(\bu)[\bh] &= \nabla_x \left(\frac{(a_i^\star(\bu))^2}{t}\right) h_i +  \nabla_t  \left(\frac{(a_i^\star(\bu))^2}{t}\right)  h_t\\
			&=   \frac{2a_i^\star(\bu)}{t}\Big(-\frac{u_t}{t}h_i+\frac{x_i u_t}{t^2}h_t\Big) -\frac{(a_i^\star(\bu))^2}{t^2}h_t,\\
			\tfrac12\nabla y(\bu)[\bh] &=   \frac{1}{2} \nabla_x b(\bw)^2 h_i  +  \frac{1}{2} \nabla_t b(\bu)^2 h_t = \frac{1}{2} \nabla_t b(\bu)^2 h_t \\
			&=b(\bu)\nabla_t b(\bu)h_t  = - \frac{u_t}{t} \frac{u_t}{t^2} h_t=     -\frac{1}{t} (\frac{w_t}{t})^2h_t= -\frac{y(\bu)}{t}h_t.
		\end{aligned}
		\]
		Hence
		\begin{equation}\label{eq:T-final}
		T_i:= \nabla^3 f_i[(\bu,\bu,\bh)] =
		\frac{2 a_i^\star(\bu)}{t}\Big(-\frac{u_t}{t}h_i+\frac{x_i u_t}{t^2}h_t\Big)
		-\frac{(a_i^\star(\bu))^2}{t^2}h_t-\frac{y(\bu)}{t}h_t.
	\end{equation}

		\paragraph{Bounding intermediate states using the second derivatives}
		From \eqref{eq:second_derivatives_appx_proof} we have $B_i = \nabla^2 f_i[(\bu,\bu)]=\frac{a_i^*(\bu)^2}{t} +  
	 \frac{1}{2}b(\bu)^2 =  r_i(\bu) + \frac{1}{2} y(\bu)$, which yields
		\begin{equation}\label{eq:triangle}
		|a_i^\star(\bu)|\le \sqrt{t B_i},\qquad |b(\bu)|\le \sqrt{2B_i}, \qquad 
		r_i(\bu) \leq B_i,\qquad y(\bu) \leq 2B_i.
		\end{equation}
		In addition, recall that $\bh=(\Delta \bx,\Delta t)$. From the bounds in terms of $\Kseg$ and $\tstar$ from \eqref{eq:seg-controls}, 
		\begin{equation}\label{eq:diamond}
			|a_i^\star(\bh)|\le  | \Delta x_i - \frac{x_i \Delta _t}{t} |  \leq \|\Delta \bx\|_\infty+\sqrt{\frac{K_{\rm seg}}{t_\star}}\,|\Delta t|,
			\qquad
			|b(\bh)|=\frac{|\Delta t|}{t}\le \frac{|\Delta t|}{t_\star}.
		\end{equation}
		
		These will simplify all subsequent bounds. 
		Recall that from Lemma~\ref{lem:cumulants_specialized},
		$$
					\nabla^3 \LogPot[(\bu,\bu,\bh)] =\mathbb E[T_I]+\mathrm{Cov}(B_I,H_I)+2\,\mathrm{Cov}(A_I,C_I)+\mathrm{Cov}\!\big((A_I-\mathbb E A_I)^2,H_I \big).
		$$
		Again, our goal it to bound the above as a multiple of $\nabla^2 \LogPot [\bu,\bu] $. The latter, again by Lemma~\ref{lem:cumulants_specialized}, is equal to $\mathbb E[B_I] + \mathrm{Var}[A_I]$. Therefore, we will bound each of the four terms above by a multiple of either $\mathbb E[B_I]$ or $\mathrm{Var}[A_I]$.

		\paragraph{Term 1: bound $\big|\mathbb E[T_I]\big|$.}
		Recall \eqref{eq:T-final}:
		\[
		T_i=
		\frac{2 a_i^\star(\bu)}{t}\Big(-\frac{u_t}{t}h_i+\frac{x_i u_t}{t^2}h_t\Big)
		-\frac{(a_i^\star(\bu))^2}{t^2}h_t-\frac{y(\bu)}{t}h_t .
		\]
		Recall that $h_i=\Delta x_i$ and $h_t=\Delta t$.
		Using that $b(\bu)=u_t/t$, the bounds from \eqref{eq:triangle} 
		and $t\ge t_\star$, we bound each term.
		First,
		\begin{align*}
		\Big|\frac{2 a_i^\star(\bu)}{t}\frac{u_t}{t}h_i\Big|
		&=
		\Big|\frac{2 a_i^\star(\bu)}{t}\,b(\bu)\,h_i\Big|
		\ \le\ 
		\frac{2|a_i^\star(\bu)|}{t}\,|b(\bu)|\,|\Delta x_i|\\
		&\ \le\ 
		\frac{2\sqrt{tB_i}}{t}\,\sqrt{2B_i}\,\|\Delta \bx\|_\infty
		\ \le\ 
		\frac{2\sqrt{2}}{\sqrt{t_\star}}\,B_i\,\|\Delta \bx\|_\infty.
		\end{align*}
		Next,
		\begin{align*}
		\Big|\frac{2 a_i^\star(\bu)}{t}\frac{x_i u_t}{t^2}h_t\Big|
		&=
		\Big|\frac{2 a_i^\star(\bu)}{t}\,\frac{x_i}{t}\,b(\bu)\,h_t\Big|
		\ \le\
		\frac{2|a_i^\star(\bu)|}{t}\cdot \frac{|x_i|}{t}\cdot |b(\bu)|\,|\Delta t|\\
		&\ \le\
		\frac{2\sqrt{tB_i}}{t}\cdot \sqrt{\frac{K_{\rm seg}}{t_\star}}\cdot \sqrt{2B_i}\,|\Delta t|
		\ \le\
		\frac{2\sqrt{2}\,\sqrt{K_{\rm seg}}}{t_\star}\,B_i\,|\Delta t|.
		\end{align*}
		Also,
		\[
		\Big|\frac{(a_i^\star(\bu))^2}{t^2}\,h_t\Big|
		\le
		\frac{B_i}{t}\,|\Delta t|
		\le
		\frac{B_i}{t_\star}\,|\Delta t|,
		\qquad
		\Big|\frac{y(\bu)}{t}\,h_t\Big|
		\le
		\frac{2B_i}{t}\,|\Delta t|
		\le
		\frac{2B_i}{t_\star}\,|\Delta t|.
		\]
		
		Summing and taking expectation over $I \sim \bp$,
		\begin{equation}\label{eq:MeanThirdBound}
			\big|\mathbb E[T_I]\big|
			\ \le\
			\left(\frac{2\sqrt{2}}{\sqrt{t_\star}}\ \|\Delta \bx\|_\infty
			\ +\ \frac{3+2\sqrt{2}\sqrt{K_{\rm seg}}}{t_\star}\ |\Delta t|\right)\,\mathbb E[B_I].
		\end{equation}

		\paragraph{Term 2: bound $|\mathrm{Cov}(B_I,H_I)|$.}
		We will use the following inequality and a bound on $\sup|H|$.
\begin{lemma}\label{lem:rough_cov_inequality}
	Let $X\ge 0$ be an integrable random variable and let $Y$ be bounded. Then
	\begin{equation}\label{eq:Cov-rough}
		|\mathrm{Cov}(X,Y)| 
		\le 2\,\sup|Y|\,\mathbb E[X].
	\end{equation}
\end{lemma}
The proof of the above lemma is deferred to the end.
		For the bound on $\sup_i |H_i|$, from \eqref{eq:Df} and \eqref{eq:diamond} we have for any $i \in [n]$,%
		\begin{align}
			|H_i|=|\nabla f_i[\bh]|
			&\ \le\   \frac{x_i}{t}\,a_i^\star(\bh)+\frac12\left(\frac{x_i^2}{t}-1\right)\,b(\bh)\nonumber
			\\
			&\leq			
			\sqrt{\frac{K_{\rm seg}}{t_\star}}\;\|\Delta \bx\|_\infty
			\ +\ \Big(\frac{K_{\rm seg}}{t_\star}+\frac{1}{2t_\star}\Big)\,|\Delta t|. \label{eq:H-infty}
		\end{align}
		
		Applying Lemma~\ref{lem:rough_cov_inequality} with $X=B_I$, $Y=H_I$, yields %
		\begin{equation}\label{eq:CovBH}
			|\mathrm{Cov}(B_I,H_I)|
			\ \le\ \left(\frac{2\sqrt{K_{\rm seg}}}{\sqrt{t_\star}}\|\Delta \bx\|_\infty
			+\frac{2K_{\rm seg}+1}{t_\star}|\Delta t|\right)\,\mathbb E [B_I].
		\end{equation}
	
	Observe that $\mathbb E [B_I]\leq \mathbb E[B_I]+\mathrm{Var}(A_I) = \Psi_u(s)$ by \eqref{eq:G2}.
		
		\paragraph{Term 3: bound $2\,|\mathrm{Cov}(A_I,C_I)|$.}
		From \eqref{eq:D2mix}, \eqref{eq:triangle} and \eqref{eq:diamond},
		\[
		|C_i|
		\ \le\ \frac{|a_i^\star(\bu)|}{t}\,|a_i^\star(\bh)|+\frac{|b(\bu)|}{2}\,|b(\bh)|
		\ \le\ \sqrt{B_i}\left(\frac{\|\Delta \bx\|_\infty}{\sqrt{t_\star}}
		+\Big(\frac{\sqrt{K_{\rm seg}}}{t_\star}+\frac{1}{\sqrt{2}\,t_\star}\Big)|\Delta t|\right).
		\]
		By Cauchy--Schwarz (first line) and AM--GM (third line),
		\begin{align}
			|\mathrm{Cov}(A_I,C_I)|
			&\le \sqrt{\mathrm{Var}(A_I)}\sqrt{\mathrm{Var}(C_I)}
			\ \le\ \sqrt{\mathrm{Var}(A_I)}\sqrt{\mathbb E[C_I^2]}\nonumber\\ 
			&\le \left(\frac{\|\Delta \bx\|_\infty}{\sqrt{t_\star}}
			+\Big(\frac{\sqrt{K_{\rm seg}}}{t_\star}+\frac{1}{\sqrt{2}\,t_\star}\Big)|\Delta t|\right)  \sqrt{\mathrm{Var}(A_I) \mathbb E[B_I] } \nonumber\\
			&\leq \frac12 \left(\frac{\|\Delta \bx\|_\infty}{\sqrt{t_\star}}
			+\Big(\frac{\sqrt{K_{\rm seg}}}{t_\star}+\frac{1}{\sqrt{2}\,t_\star}\Big)|\Delta t|\right)  
			\big(\mathrm{Var}(A_I)+\mathbb E[B_I]\big).
		\end{align}
		Observe that $\mathrm{Var}(A)+\mathbb E[B_I] = \nabla^2 \LogPot[(\bu,\bu)] = \Phi_u(s)$ (from Lemma~\ref{lem:cumulants_specialized}), Term 3 is bounded as follows:
		\begin{equation}\label{eq:CovAC}
			2\,|\mathrm{Cov}(A_I,C_I)|
			\ \le\ \left(\frac{\|\Delta \bx\|_\infty}{\sqrt{t_\star}}
			+\Big(\frac{\sqrt{K_{\rm seg}}}{t_\star}+\frac{1}{\sqrt{2}\,t_\star}\Big)|\Delta t|\right)\, \Phi_u(s).
		\end{equation}
		
		\paragraph{Term 4: bound $\big|\mathrm{Cov}((A_I-\mathbb E A_I)^2,H_I)\big|$.}

		Applying Lemma~\ref{lem:rough_cov_inequality} with  $X=(A_I -\mathbb E A_I)^2$ and $Y=H_I$ together on the bound on \(\sup_i |H_i|\) from \eqref{eq:H-infty} yields
		\begin{equation}\label{eq:CovA2H}
			\big|\mathrm{Cov}\big((A_I-\mathbb E A_I)^2,H_I\big)\big|
			\ \le\ \mathrm{Var}(A_I)  \left(\frac{2\sqrt{K_{\rm seg}}}{\sqrt{t_\star}}\|\Delta \bx\|_\infty
			+\frac{2K_{\rm seg}+1}{t_\star}|\Delta t|\right).
		\end{equation}
	
	Note that $\mathrm{Var}_p(A)  \leq  \Phi_u(s)$ by \eqref{eq:G2}.
		
		\paragraph{Collecting bounds and integrating.}
		Add \eqref{eq:MeanThirdBound}, \eqref{eq:CovBH}, \eqref{eq:CovAC}, \eqref{eq:CovA2H}. Group the coefficients of $\|\Delta \bx\|_\infty$ and $|\Delta t|$ yields
		\[
		|\Phi_u'(s)|\ \le\ \Big(\tilde{A}_x\,\|\Delta \bx\|_\infty+\tilde{A}_t\,|\Delta t|\Big)\,\Phi_u(s),
		\]
		with
		\[
		\tilde{A}_x=\frac{2\sqrt{2}}{\sqrt{t_\star}}
		+\frac{2\sqrt{K_{\rm seg}}}{\sqrt{t_\star}}
		+\frac{1}{\sqrt{t_\star}}
		+\frac{2\sqrt{K_{\rm seg}}}{\sqrt{t_\star}}
		=\frac{2\sqrt{2}+1+4\sqrt{K_{\rm seg}}}{\sqrt{t_\star}},
		\]
		\[
		\tilde{A}_t=\frac{3+2\sqrt{2}\sqrt{K_{\rm seg}}}{t_\star}
		+\frac{2K_{\rm seg}+1}{t_\star}
		+\left(\frac{\sqrt{K_{\rm seg}}}{t_\star}+\frac{1}{\sqrt{2}\,t_\star}\right)
		+\frac{2K_{\rm seg}+1}{t_\star}
		=\frac{5+\tfrac{1}{\sqrt{2}}}{t_\star}
		+\frac{(2\sqrt{2}+1)\sqrt{K_{\rm seg}}}{t_\star}
		+\frac{4K_{\rm seg}}{t_\star}.
		\]      
The proof is complete by relaxing the above bounds so that
\begin{equation*}
	\tilde{A}_x \leq \frac{8\sqrt{K_{\rm seg}\vee 1}}{\sqrt{t_\star}}= A_x(t_\star,K_{\rm seg}) 
	\qquad \text{and}\qquad \tilde{A}_t \leq \frac{16 (K_{\rm seg}\vee 1) }{t_*} = A_t(t_\star,K_{\rm seg}).
\end{equation*} 
\end{proof}

	\subsection{Proof of Auxiliary Lemmas}

\begin{proof}[Proof of Lemma~\ref{lem:cumulants_specialized}] Throughout this proof we omit the dependency on \((\bx, t)\). First, define \(Z \coloneqq \sum_{i = 1}^n \exp(f_i)\).	
	\paragraph{Step 1: a basic derivative rule for $p_i$.}
	Since $p_i=e^{f_i}/Z$ and $\LogPot=\log Z$, for any direction $\bw\in\mathbb R^{n+1}$,
	\begin{equation}\label{eq:dp_rule}
		\nabla p_i[\bw]
		= p_i \cdot \big(\nabla f_i[\bw]-\nabla G[\bw]\big).
	\end{equation}
	Indeed, $\nabla p_i[\bw]=p_i\,\nabla f_i[\bw]-p_i\,\nabla G[\bw]$ by the quotient rule.
	
	\paragraph{Step 2: gradient identity.}
	Differentiating $\LogPot =\log Z$ yields
	\[
	\nabla \LogPot [\bw]=\frac{1}{Z}\sum_{i=1}^n e^{f_i}\,\nabla f_i[\bw]
	=\sum_{i=1}^n p_i\,\nabla f_i[\bw].
	\]
	In particular, for $\bw=\bu$ we have $\nabla f_i[\bu]=A_i$, so
	\begin{equation}\label{eq:grad_spec}
		\nabla \LogPot [\bu]=\mathbb E[A_I].
	\end{equation}
	
	\paragraph{Step 3: Hessian identity.}
	Differentiate \eqref{eq:grad_spec} in direction $\bu$:
	\[
	\nabla^2 \LogPot[\bu,\bu]=\nabla\Big(\sum_{i=1}^n p_i A_i\Big)[\bu]
	=\sum_{i=1}^n \nabla p_i[\bu]\;A_i+\sum_{i=1}^n p_i\,\nabla A_i[\bu].
	\]
	Now $\nabla A_i[\bu]=\nabla^2 f_i[\bu,\bu]=B_i$, and by \eqref{eq:dp_rule} with $\bw=\bu$,
	\[
	\nabla p_i[\bu]=p_i(A_i-\nabla \LogPot[\bu])=p_i(A_i-\mathbb E[A_I]).
	\]
	Therefore
	\[
	\nabla^2 \LogPot[\bu,\bu]=\sum_{i=1}^n p_i(A_i-\mathbb E[A_I])A_i+\sum_{i=1}^n p_i B_i
	=\mathbb E[B]+\big(\mathbb E[A_I^2]-(\mathbb E[A_I])^2\big),
	\]
	which is \eqref{eq:G2}.
	
	\paragraph{Step 4: differentiate the Hessian in direction $h$.}
	From \eqref{eq:G2},
	\[
	\nabla^3 \LogPot[\bu,\bu,\bh]
	=\nabla\big(\mathbb E[B_I]\big)[\bh]+\nabla\big(\mathrm{Var}(A_I)\big)[\bh].
	\]
	
	\paragraph{Step 5: derivative of the expectation term.}
	Using the product rule,
	\[
	\nabla\big(\mathbb E[B_I]\big)[\bh]
	=\nabla\Big(\sum_{i=1}^n p_i B_i\Big)[\bh]
	=\sum_{i=1}^n \nabla p_i[\bh]\;B_i+\sum_{i=1}^n p_i\,\nabla B_i[\bh].
	\]
	Here $\nabla B_i[\bh]=\nabla^3 f_i[\bu,\bu,\bh]=T_i$. Also, by \eqref{eq:dp_rule} with $\bw=\bh$ and since \(H_i = \nabla f_i [\bh] \),
	\[
	\nabla p_i[\bh]=p_i(H_i-\nabla \LogPot[\bh])=p_i(H_i-\mathbb E[H_I]),
	\]
	where in the last equation we used that $\nabla \LogPot [\bh]=\sum_i p_i \nabla f_i[\bh]=\mathbb E[H_I]$.
	Hence,
	\[
	\nabla\big(\mathbb E[B_I]\big)[\bh]
	=\mathbb E[T_I]+\mathbb E\big[(H_I-\mathbb E[H_I])B_I\big]
	=\mathbb E[T_I]+\mathrm{Cov}(B_I,H_I).
	\]
	
	\paragraph{Step 6: derivative of the variance term.}
	Since $\mathrm{Var}(A_I) =\mathbb E[A_I^2]-(\mathbb E[A_I`'])^2$, we have
	\begin{equation}\label{eq:dvar_split}
		\nabla\big(\mathrm{Var}(A_I)\big)[\bh]
		=\nabla\mathbb E[A_I^2][\bh]-2\,\mathbb E[A_I]\;\nabla\mathbb E[A_I][\bh].
	\end{equation}
	We compute each piece. First,
	\[
	\nabla\mathbb E[A_I][\bh]
	=\nabla\Big(\sum_{i=1}^n p_i A_i\Big)[\bh]
	=\sum_{i=1}^n \nabla p_i[\bh]\;A_i+\sum_{i=1}^n p_i\,\nabla A_i[\bh].
	\]
	Now $\nabla A_i[\bh]=\nabla^2 f_i[\bu,\bh]=C_i$, and $\nabla p_i[\bh]=p_i(H_i-\mathbb E[H_I])$,
	so
	\begin{equation}\label{eq:dEA}
		\nabla\mathbb E[A][h]=\mathbb E[C]+\mathrm{Cov}(A,H).
	\end{equation}
	Second,
	\[
	\nabla\mathbb E[A^2][h]
	=\nabla\Big(\sum_{i=1}^n p_i A_i^2\Big)[h]
	=\sum_{i=1}^n \nabla p_i[h]\;A_i^2+\sum_{i=1}^n p_i\,\nabla(A_i^2)[h].
	\]
	Since $\nabla(A_i^2)[h]=2A_i\,\nabla A_i[h]=2A_iC_i$, we get
	\begin{equation}\label{eq:dEA2}
		\nabla\mathbb E[A_I^2][\bh]
		=\mathbb E[2A_IC_I]+\mathrm{Cov}(A_I^2,H_I).
	\end{equation}
	Substitute \eqref{eq:dEA}--\eqref{eq:dEA2} into \eqref{eq:dvar_split}:
	\[
	\nabla\big(\mathrm{Var}(A_I)\big)[\bh]
	=2\mathbb E[A_I C_I]+\mathrm{Cov}(A_I^2,H_I)-2\mathbb E[A_I]\mathbb E[C_I]-2\mathbb E[A_I]\mathrm{Cov}(A_I,H_I).
	\]
	Note that
	\[
	2\mathbb E[A_IC_I]-2\mathbb E[A_I]\mathbb E[C_I]=2\,\mathrm{Cov}(A_I,C_I),
	\]
	and also the covariance identity
	\[
	\mathrm{Cov}\big((A_I-\mathbb E[A_I])^2,H_I\big)
	=\mathrm{Cov}(A_I^2,H_I)-2\mathbb E[A_I]\mathrm{Cov}(A_I,H_I)
	\]
	follows from expanding $(A_I-\mathbb E[A_I])^2=A_I^2-2\mathbb E[A_I]A_I+(\mathbb E[A_I])^2$
	and using $\mathrm{Cov}(\E [A_I],H_I)=0$ since $\E [A_I]$ is a constant.
	Therefore,
	\[
	\nabla\big(\mathrm{Var}(A_I)\big)[\bh]
	=2\,\mathrm{Cov}(A_I,C_I)+\mathrm{Cov}\big((A_I-\mathbb E[A_I])^2,H_I\big).
	\]
	
	\paragraph{Step 7: conclude.}
	Combining Step 5 and Step 6 with Step 4 gives \eqref{eq:G3mixed}.
\end{proof}

	\begin{proof}[Proof of Lemma~\ref{lem:rough_cov_inequality}]
		Recall that
		\[
		\mathrm{Cov}(X,Y)
		=
		\mathbb E\big[(X-\mathbb EX)(Y-\mathbb EY)\big].
		\]

		Since $\mathbb E(Y-\mathbb EY)=0$, we may also write
		\[
		\mathrm{Cov}(X,Y)
		=
		\mathbb E\big[X(Y-\mathbb EY)\big].
		\]
		Using that $X\ge 0$, we obtain
		\[
		|\mathrm{Cov}(X,Y)|
		\le
		\mathbb E\big[X\,|Y-\mathbb EY|\big].
		\]
		If $\|Y\|_\infty := \sup |Y|<\infty$, then
		\[
		|Y-\mathbb EY|
		\le
		|Y| + |\mathbb EY|
		\le
		2\|Y\|_\infty.
		\]
		Therefore,
		\begin{equation*}
		\mathbb E\big[X\,|Y-\mathbb EY|\big]
		\le
		2\|Y\|_\infty\,\mathbb E[X].
		\end{equation*} 
	\end{proof}

\section{Detailed proof of Theorem~\ref{thm:main}: NormalHedge}
\label{sec:NHproof}

\subsection{Higher order derivatives of normal potential}
Next, we will bound the higher-order derivatives of $\phi$. 
\begin{lemma}[Bounding the higher order derivatives of $\phi$]\label{lem:derivatives}
		Let $\phi(x,t) = \phiNH(x,t)$. Assume $x^2/t \leq K$. For all $x$ and $t> 0$,
		$$
		\frac{\partial^2}{\partial x^2}\phi(x,t) =  \Big(\frac{x^2}{t^2} + \frac{1}{t} \Big) \phi(x,t) \leq  \frac{(1+K)}{t}\phi(x,t)
		$$
		$$
		\frac{\partial^3}{\partial x^3}\phi(x,t)  =  \Big(\frac{x^3}{t^3} + \frac{3x}{t^2} \Big) \phi(x,t) \leq \frac{K^{1.5} + 3 K^{0.5}}{t^{1.5}}\phi(x,t)
		$$
		$$
		\frac{\partial^4}{\partial x^4}\phi(x,t)  = 4 \frac{\partial^2}{\partial t^2} \phi(x,t) =  \Big(\frac{x^4 + 6 tx^2 + 3t^2}{t^4} \Big) \phi(x,t) \leq \frac{K^2 + 6 K + 3}{t^2}  \phi(x,t)
		$$
	\end{lemma}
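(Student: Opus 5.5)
The plan is direct computation. Write $\phi(x,t)=t^{-1/2}e^{g}$ with $g=x^2/(2t)$, so that $\partial_x g = x/t$ and $\partial_{xx} g = 1/t$. Every $x$-derivative of $\phi$ then has the form $P_k(x,t)\,\phi(x,t)$ for a polynomial $P_k$. These polynomials satisfy the recursion
\[
P_{k+1} = \partial_x P_k + \tfrac{x}{t} P_k, \qquad P_0 = 1.
\]
Starting from $P_1 = x/t$, I would iterate three more times:
\[
P_2 = \tfrac1t + \tfrac{x^2}{t^2}, \qquad
P_3 = \tfrac{2x}{t^2} + \tfrac{x}{t^2} + \tfrac{x^3}{t^3} = \tfrac{x^3}{t^3} + \tfrac{3x}{t^2}, \qquad
P_4 = \tfrac{3x^2}{t^3} + \tfrac{3}{t^2} + \tfrac{x^4}{t^4} + \tfrac{3x^2}{t^3}.
\]
The last simplifies to $P_4 = (x^4 + 6tx^2 + 3t^2)/t^4$. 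These are just the Hermite-type polynomials associated with the Gaussian-like kernel, so the three stated identities hold.

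For the identity $\partial_x^4\phi = 4\,\partial_t^2\phi$, I would invoke property (7) of Definition~\ref{def:good_potential}, the backward heat equation $\partial_t\phi = -\tfrac12\partial_{xx}\phi$. This was checked for $\phiNH$ when it was introduced, and can be verified directly:
\[
\partial_t\phi = \Big(-\tfrac{1}{2t} - \tfrac{x^2}{2t^2}\Big)\phi = -\tfrac12 P_2\,\phi .
\]
Since $\phi$ is smooth for $t>0$, the operators $\partial_t$ and $\partial_{xx}$ commute. Applying $\partial_t$ twice therefore gives $\partial_t^2\phi = \tfrac14\,\partial_x^4\phi$.

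For the inequalities, substitute $x^2 \le Kt$ term by term. Each monomial in $P_k$ has total scaling $t^{-k/2}$ once $x$ is counted as $\sqrt t$. This gives
\[
\frac{x^2}{t^2} + \frac1t \le \frac{K+1}{t}, \qquad
\frac{x^4 + 6tx^2 + 3t^2}{t^4} \le \frac{K^2 + 6K + 3}{t^2}.
\]
For the odd derivative, the expression $x^3/t^3 + 3x/t^2$ may be negative, so the upper bound must go through $|x| \le \sqrt{Kt}$. This yields
\[
\frac{x^3}{t^3} + \frac{3x}{t^2} \le \frac{|x|^3}{t^3} + \frac{3|x|}{t^2} \le \frac{K^{1.5} + 3K^{0.5}}{t^{1.5}}.
\]
Throughout, $\phi>0$ allows multiplying the inequalities by $\phi$.

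There is no real obstacle here. The only points needing care are bookkeeping the recursion correctly, especially collecting the two $3x^2/t^3$ terms in $P_4$, and remembering to handle the sign of odd powers via absolute values.
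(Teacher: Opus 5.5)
Your proof is correct and takes the same route as the paper, which only says that the identities follow from standard calculus and were verified symbolically in SymPy and Maple. Your recursion $P_{k+1}=\partial_x P_k + \tfrac{x}{t}P_k$, the commuting-partials argument for $\partial_x^4\phi = 4\,\partial_t^2\phi$ via the backward heat equation, and the termwise substitution $|x|\le\sqrt{Kt}$ (with absolute values for the odd derivative) fill in exactly the details the paper omits.
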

    \begin{proof}
    The proof follows from standard calculus. We have verified the identities symbolically using both SymPy and Maple. 
    \end{proof}

\subsection{Proof of Lemma~\ref{lem:bahia-davis}}\label{sec:proof_using_bahia_davis}
\begin{lemma}[Restating Lemma~\ref{lem:bahia-davis}]
	If $\phi = \phi_{\mathrm{NH}}$, 
    then
	$\DErr_{\Phi}(\bx, t) \leq (\max_i \nicefrac{x_i^2}{t} +4)/4t$.
\end{lemma}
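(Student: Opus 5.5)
The plan is to rewrite $\DErr_{\Phi}(\bx,t)$ as a ratio of moments of a single scalar random variable, and then control its variance with the Bhatia--Davis inequality \citep{bhatia2000better}. Set $z_i := x_i^2/t \ge 0$ and $Y_i := 1+z_i \ge 1$. By Lemma~\ref{lem:derivatives},
\[
\partial_{xx}\phi(x_i,t) = \frac{Y_i}{t}\,\phi(x_i,t),
\qquad
\frac{\partial^4}{\partial x^4}\phi(x_i,t) = \frac{z_i^2+6z_i+3}{t^2}\,\phi(x_i,t).
\]
We also have the identity $z^2+6z+3 = Y^2+4Y-2$.

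Define the probability distribution $w_i := \phi(x_i,t)/\Phi(\bx,t)$, which is valid since $\phi>0$. Let $Y$ be the random variable taking the value $Y_i$ with probability $w_i$, and write $\mu := \E[Y]$. Dividing the numerators and denominators in $\DErr_\Phi$ by $\Phi(\bx,t)$ gives
\[
4t\,\DErr_{\Phi}(\bx,t) = \frac{\E[Y^2]+4\mu-2}{\mu} - \mu = \frac{\Var(Y)}{\mu} + 4 - \frac{2}{\mu} \le \frac{\Var(Y)}{\mu} + 4 .
\]

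It remains to show $\Var(Y)/\mu \le \max_i z_i$. The variable $Y$ takes values in $[m, M]$ with $m=1$ and $M = 1+\max_i z_i$. The Bhatia--Davis inequality gives $\Var(Y) \le (M-\mu)(\mu-m)$, so
\[
\frac{\Var(Y)}{\mu} \le (M-\mu)\,\frac{\mu-1}{\mu}.
\]
Both factors are controlled using $\mu \ge 1$. First, $(\mu-1)/\mu \le 1$. Second, $M-\mu \le M-1 = \max_i z_i$. Hence $\Var(Y)/\mu \le \max_i x_i^2/t$. Dividing the resulting bound $4t\,\DErr_\Phi \le \max_i x_i^2/t + 4$ by $4t$ yields the claim.

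The only step requiring care is the algebra that puts $\DErr_\Phi$ into the variance-over-mean form. That is, one must match the fourth-derivative polynomial with $Y^2+4Y-2$ so that the $\E[Y^2]/\mu - \mu$ terms combine into $\Var(Y)/\mu$. After that, Bhatia--Davis together with $Y\ge 1$ finishes the argument directly.
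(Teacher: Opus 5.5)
Your proposal is correct and is essentially the paper's proof: the same weights $\phi(x_i,t)/\Phi(\bm x,t)$, the same variable $u_i = x_i^2/t+1$ with $z^2+6z+3=u^2+4u-2$ giving the variance-over-mean form, and the same Bhatia--Davis step. The only difference is in the last step, which is cosmetic. You bound $(M-\mu)(\mu-1)/\mu \le M-1$ directly using $1\le \mu\le M$, whereas the paper maximizes over $\mu$ (at $\mu=\sqrt{ab}$) to get $(\sqrt{b}-\sqrt{a})^2\le b-a$; your version is slightly more direct.
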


\begin{proof}[Proof of Lemma~\ref{lem:bahia-davis}]
Define \(\br \in \Delta^{N-1}\) by \(r_i \coloneqq \phi(x_i, t)/\Phi(\bx)\) for each \(i \in [n]\). By the formulas for the derivatives of $\phi$ in Lemma~\ref{lem:derivatives}, we get
\begin{align*}
\frac{1}{\Phi(\bm x, t)} \sum_i \frac{\partial^4}{\partial x^4} \phi(x_i,t) &= \E_{i\sim \br}\left[ \frac{x_i^4 + 6tx_i^2+3t^2}{t^4}\right],
\\
\text{and}\quad \frac{1}{\Phi(\bm x, t)} \sum_i \frac{\partial^2}{\partial x^2} \phi(x_i,t) &= \E_{i\sim \br}\left[ \frac{x_i^2 + t}{t^2}\right].
\end{align*} 
Set $u_i \coloneqq \frac{x_i^2}{t} + 1$. Plugging them into the definiton of $\DErr_{\Phi}(\bx, t)$ for $\phi = \phiNH$ yields
\begin{align*}
\DErr_{\Phi}(\bx, t) 
= \frac{1}{4t}\left( \frac{\E_{i\sim \br}[u_i^2 +4 u_i -2]}{\E_{i\sim \br}[u_i]} - \E_{i\sim \br}[u_i] \right) 
&\leq \frac{1}{4t}\left( \frac{\Var_{i\sim \br}[u_i]}{\E_{i\sim \br}[u_i]} + 4\right)\\
&\leq \frac{\max_i \frac{x_i^2}{t} +4 }{4t}
\end{align*}
where the last step uses the Bhatia-Davis bound, which says that for a random variable $X$ bounded between $[a,b]$ with mean $\mu$ we have $\Var(X)\leq (b-\mu)(\mu-a) $. Dividing both sides by $\mu$, one may verify that the right-hand side is maximized at $\mu = \sqrt{ab}$, which yields  
$$\Var[X] / \mu \leq (b-\mu)(\mu-a) / \mu \leq (\sqrt{b} -\sqrt{a})^2\leq b-a.$$
\end{proof}

\subsection{Proof of the crude bound on $\Delta t$}\label{sec:proof_of_crude_bound}

\begin{lemma}[Restating Lemma~\ref{lem:crude_Delta_t_bound}]
	If $K > 0$ is such that $\max_{i\in[N]}\frac{x_i^2}{t} \leq K$, then
    \begin{equation*}
        t \geq  256 e^2 B^2\max\{K,1\} \implies \Delta t  \leq 2e B^2.
    \end{equation*}
\end{lemma}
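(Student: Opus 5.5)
Set $s \coloneqq 2eB^2$ and show directly that $\Phi(\bx+\Delta\bx, t+s) \le \Phi(\bx,t)$. By property (4) of Definition~\ref{def:good_potential}, $\Phi(\bx+\Delta\bx,\cdot)$ is non-increasing in its time argument. By Lemma~\ref{lem:nonegativity}, $\Delta t$ is the unique time at which equality holds, so the inequality at $s$ gives $\Delta t \le s$. Equivalently, it suffices to show $\LogPot(\bx+\Delta\bx,t+s)-\LogPot(\bx,t)\le 0$.

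\emph{Expansion.} Apply the second-order Taylor expansion of Lemma~\ref{lem:taylor_theorem_log_total_potential}, with $\Delta t$ replaced by $s$, along the segment $(\bx,t)+\lambda(\Delta\bx,s)$, $\lambda \in [0,1]$; that computation does not use the choice of $\Delta t$. This yields a midpoint $(\bar\bx,\bar t)$ with
\[
\LogPot(\bx',t+s)-\LogPot(\bx,t) = -\frac{s\sum_i\partial_{xx}\phi(x_i,t)}{2\Phi(\bx,t)}+\frac12\nabla^2\LogPot(\bar\bx,\bar t)[\bDelta_s,\bDelta_s],
\qquad \bDelta_s=(\Delta\bx,s).
\]
Corollary~\ref{cor:sandwich_normal} applies on this segment with $\Lambda = A_x\|\Delta\bx\|_\infty + A_t s$. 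It bounds the Hessian term by $e^{\Lambda}\nabla^2\LogPot(\bx,t)[\bDelta_s,\bDelta_s]$. The algebra in the proof of Lemma~\ref{lemma:delta_t_discretization_error} then gives
\[
\nabla^2\LogPot(\bx,t)[\bDelta_s,\bDelta_s] = \frac{\sum_i\partial_{xx}\phi(x_i,t)}{\Phi(\bx,t)}\bigl(\E_{i\sim\bq}[\Delta x_i^2]+s^2\DErr_\Phi(\bx,t)\bigr).
\]
So it suffices to prove $s\ge e^{\Lambda}\bigl(\E_{\bq}[\Delta x_i^2]+s^2\DErr_\Phi\bigr)$.

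\emph{Bounding the three terms.}
\begin{itemize}
\item $\|\Delta\bx\|_\infty\le B$ because $\Delta x_i$ is the average loss minus $\ell_i$, and losses have range at most $B$. Hence $\E_{\bq}[\Delta x_i^2]\le B^2$.
\item Lemma~\ref{lem:bahia-davis} with the hypothesis gives $\DErr_\Phi\le (K+4)/(4t)$.
\item Using $t\ge 256e^2B^2\max\{K,1\}$ gives $s^2(K+4)/(4t) \le 4e^2B^4\cdot 5\max\{K,1\}/(4\cdot256e^2B^2\max\{K,1\}) \le B^2/50$.
\end{itemize}
The target then reduces to $2eB^2\ge e^{\Lambda}\cdot 1.02B^2$, which holds as soon as $\Lambda\le 1$.

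\emph{Main obstacle: bounding $\Lambda$.} Here $K_{\rm seg}$ is defined via the segment ending at $(\bx+\Delta\bx,t+s)$, and $t_\star=t$ since $s\ge0$. Control $K_{\rm seg}$ as follows. Along the segment $t(\lambda)\ge t$ and $|x_i(\lambda)|\le |x_i|+B\le\sqrt{Kt}+B$. Hence
\[
\frac{x_i(\lambda)^2}{t(\lambda)}\le\Bigl(\sqrt K+\frac{B}{\sqrt t}\Bigr)^2\le(\sqrt K+1/16)^2\le 2\max\{K,1\}.
\]
Then, using the values of $A_x$ and $A_t$ from Proposition~\ref{prop:localGSC_normalhedge}:
\begin{itemize}
\item $A_x B = 8\sqrt{2\max\{K,1\}}\,B/\sqrt t \le 8\sqrt2/(16e)<0.27$;
\item $A_t s = 16\cdot 2\max\{K,1\}\cdot 2eB^2/t \le 64e/(256e^2)<0.1$.
\end{itemize}
So $\Lambda<0.4\le1$, which closes the argument. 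The delicate points are, first, that the sandwich bound must be applied on the segment with time increment $s$ rather than $\Delta t$. Second, the bound on $K_{\rm seg}$ must hold uniformly along that segment, including at the endpoint; this is why it was checked for every $\lambda\in[0,1]$ rather than only at the start.
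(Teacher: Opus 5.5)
Your route is the paper's own proof. You test $s=2eB^2$, expand $\LogPot$ to second order along the segment with time increment $s$, apply Corollary~\ref{cor:sandwich_normal}, reduce to $s\ge e^{\Lambda}\bigl(\E_{i\sim\bm q}[\Delta x_i^2]+s^2\DErr_\Phi(\bm x,t)\bigr)$, and check $\Lambda\le 1$. Your constants ($B^2/50$, $\Lambda<0.4$) are correct. Bounding $K_{\rm seg}$ pointwise via $t(\lambda)\ge t$ and $|x_i(\lambda)|\le\sqrt{Kt}+B$ is a fine substitute for the paper's convexity-at-the-endpoints argument.

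However, one step is false as written. You import it, as the paper does, from the proof of Lemma~\ref{lemma:delta_t_discretization_error}. Your identity for $\nabla^2\LogPot(\bm x,t)[\bm{\Delta}_s,\bm{\Delta}_s]$ omits the mixed term $\frac{2s}{\Phi(\bm x,t)}\sum_i\partial_t\partial_x\phi(x_i,t)\,\Delta x_i$. The choice of $\bm p$ only makes $\sum_i\partial_x\phi(x_i,t)\Delta x_i$ vanish at the current $t$; it says nothing about its $t$-derivative. For $\phi_{\mathrm{NH}}$ one has $\partial_t\partial_x\phi=-\tfrac12\partial_x^3\phi=-\tfrac12\bigl(\tfrac{x^2}{t^2}+\tfrac3t\bigr)\partial_x\phi$. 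Using the constraint, the omitted term equals $-\frac{s}{\Phi(\bm x,t)\,t^2}\sum_i x_i^2\,\partial_x\phi(x_i,t)\,\Delta x_i$, which is nonzero in general. Two experts with distinct positive regrets already give $w_1(x_1^2-x_2^2)$, where $w_1=\partial_x\phi(x_1,t)\Delta x_1=-w_2\neq 0$. The term vanishes only for $\phi_{\mathrm{exp}}$, where $\partial_t\partial_x\phi=-\eta^2\partial_x\phi$.

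The repair is short, and your constants absorb it. Observe that
\[
\frac{|x_i^2\,\partial_x\phi(x_i,t)|}{t^2}=\frac{|x_i|}{t}\cdot\frac{x_i^2}{t^2}\,\phi(x_i,t)\le\sqrt{K/t}\;\partial_{xx}\phi(x_i,t).
\]
After multiplying by $\Phi(\bm x,t)/\sum_i\partial_{xx}\phi(x_i,t)$, the extra contribution to your bracket is at most $s\sqrt{K/t}\,\E_{i\sim\bm q}|\Delta x_i|\le 2eB^2\cdot B\cdot\frac{1}{16eB}=\frac{B^2}{8}$, using $t\ge 256e^2B^2\max\{K,1\}$. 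The bracket is then at most $\bigl(1+\tfrac1{50}+\tfrac18\bigr)B^2<1.15B^2$, and $e^{\Lambda}\cdot 1.15B^2\le 2eB^2$ still holds for $\Lambda\le 1$. So the lemma is true with the stated constants. But this step must be written as an inequality that bounds the cross term, not as the exact identity you quote.
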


\begin{proof}[Proof of Lemma~\ref{lem:crude_Delta_t_bound}]
  For this proof, we want to find the smallest $\sigma \ge 0$ such that
\[
\Phi(\bx + \Delta \bx,\, t + \sigma) \le \Phi(\bx, t).
\]
Since $\Phi$ is continuous,  such a $\sigma$ that makes the inequality an equation must necessarily equal $\Delta t$ by uniqueness (Lemma~\ref{lem:nonegativity}). Therefore, if we find \emph{any} $\sigma > 0$ that make the above inequality hold, we have that $\Delta t \leq \sigma$. Our claim is that $\sigma = 2 e B^2$ works for $t$ as in the lemma. Therefore, for the rest of this proof we will slightly overload the notation of $\Delta t$, considering an arbitrary $\Delta t > 0$ and showing that if $\Delta t = 2 e B^2$, then $\Phi(\bx + \Delta \bx,\, t + \Delta t) \le \Phi(\bx, t)$.

Let us first obtain a bound on the local-self-concordance parameter.
	At $(\bx,t)$, by hypothesis we have $\max_{i\in[N]}\frac{x_i^2}{t} \leq K$.
	Thus, since $\|\Delta \bx\|_\infty \leq B$, for every $i\in [N]$ we have
	$$
\frac{(x_i+\Delta x_i)^2}{t + \Delta t} \leq \frac{2x_i^2}{t + \Delta t} + \frac{2 \Delta x_i^2}{t+\Delta t} \leq 2K + \frac{2B^2}{t}.
$$
Define $(\bx(s),t(s)) = (\bx + s\Delta \bx, t + s\Delta t)$ for $s\in[0,1]$ to be the line segment between $(\bx,t)$ and $(\bx',t')$.	
It is clear that 
$t_\star:=\min_{s\in[0,1]} t(s)=t$, and  by the joint-convexity of $x^2 /t$ on $\R\times \R_+$, we have 
$$
K_{\rm seg}:=\sup_{s\in[0,1]}\ \max_{i\in[N]} \frac{x_i(s)^2}{t(s)} = \max_{s\in\{0,1\}} \max_{i\in[N]} \frac{x_i(s)^2}{t(s)}  \leq 2K + \frac{2B^2}{t }.
$$
Furthermore, if $t \geq  2B^2$ we can simplify the bound to $K_{\rm seg}\vee 1 \leq 2K+1$ .

Therefore, we can bound the self-concordance parameters of the NormalHedge potential from Proposition~\ref{prop:localGSC_normalhedge} by
\begin{equation}
	\label{eq:AxAt2}
A_x(t_\star,K_{\rm seg}) \leq 8\sqrt{(2K+1)/t} =: \overline{A}_x, \quad \quad A_t(t_\star,K_{\rm seg}) \leq 16(2K+1)/t) =: \overline{A}_t.
\end{equation}  
These bounds will be used later to show that $e^{\Lambda}$ is small.

By Lemma~\ref{lem:taylor_theorem_log_total_potential}  (Taylor's theorem of $\log \Phi(\bm x',t')$ around $(\bx,t)$), we have 
\begin{align*}	
\log \Phi(\bm x',t') - \log \Phi(\bm x,t) =- \frac{\sum_{i=1}^N\frac{\partial^2}{\partial x^2} \phi(x_i,t) \Delta t}{2\Phi(\bm x,t)}    +\frac{1}{2} \nabla^2 \log \Phi(\bar{x},\bar{t})[\Delta,\Delta]
\end{align*}
for some $(\bar{\bx},\bar{t})$ on the line segment between $(\bx,t)$ to $(\bx',t')$.

	Define the shorthand  $\overline{\Lambda} :=  \overline{A}_x \|\Delta \bx\|_\infty +  \overline{A}_t |\Delta t|$, generalized local self-concordance of the potential (Proposition~\ref{prop:localGSC_normalhedge}), we can upper-bound the Hessian of $\LogPot$ at $\bar{\bx}$ by Corollary~\ref{cor:sandwich_normal}, which yields
	\begin{align}
		&\log \Phi(\bm x',t') - \log \Phi(\bm x,t)  + \frac{1}{2}\frac{\sum_{i=1}^N\frac{\partial^2}{\partial x^2} \phi(x_i,t) }{\Phi(\bm x,t)}  \Delta t 
		=   \frac{1}{2} \nabla^2 \log \Phi(\bar{x},\bar{t})[\Delta,\Delta]\nonumber\\
		\leq& \frac{1}{2}e^{\bar\Lambda}  \nabla^2 \log \Phi(\bm x,t)[\Delta,\Delta], \tag{By Proposition~\ref{prop:localGSC_normalhedge}}. \nonumber
    \end{align}

    By following similar calculations to the ones in the proof of Lemma~\ref{lemma:delta_t_discretization_error}, one can show that
    \begin{equation*}
        \frac{\Phi(\bm x,t) (\log \Phi(\bm x',t') - \log \Phi(\bm x,t))}{\sum_{i=1}^N\frac{\partial^2}{\partial x^2}\phi(x_i,t) }  + \frac{\Delta t}{2} 
        \leq \frac{1}{2}e^{\overline\Lambda} \left( 
    \E_{i\sim \bq} [\Delta x_i^2] + \Delta t^2 \DErr_{\Phi}(\bx, t) \right) 
    \end{equation*}
    Lemma~\ref{lem:bahia-davis} bounds the discretization error by $\max_i x_i^2/t + 3$, and by assumption $\max_i x_i^2/4t \leq K$. Therefore, putting everything together yields
    \begin{equation*}
        \frac{\Phi(\bm x,t) (\log \Phi(\bm x',t') - \log \Phi(\bm x,t))}{\sum_{i=1}^N\frac{\partial^2}{\partial x^2}\phi(x_i,t) }  + \frac{\Delta t}{2} 
        \leq \frac{1}{2}e^{\overline\Lambda} \left( 
    \E_{i\sim \bq} [\Delta x_i^2] + \frac{\Delta t^2}{4t} (K + 4)\right) 
    \end{equation*}

	Thus, notice that in order for $\log \Phi(\bm x',t') - \log \Phi(\bm x,t)\leq 0$, it suffices for us to choose
\begin{equation}\label{eq:circular_inequality}
\Delta t \geq   e^{\overline\Lambda}   \E_{i\sim \bq} [\Delta x_i^2] +   e^{\overline\Lambda} \left(\frac{K+4}{4t}\Delta t^2\right).
\end{equation}
	 Since $\max_i|\Delta x_i|\leq B$ (and assume $K\geq 1$), it suffices that 
    \begin{equation}\label{eq:crude_circular_inequality}
			 \Delta t \geq   e^{\bar\Lambda}  (B^2 +  \frac{K+4}{4t} \Delta t^2).
	\end{equation}

    We just need to show now that plugging $\Delta t = 2eB^2$ into \eqref{eq:crude_circular_inequality} satisfies the inequality.

    We claim that if we choose $ \Delta t = 2 e  B^2 $ and $t \geq 256 e^2 B^2\max\{K,1\}$ (both hypotheses of the lemma we are proving), then the inequality \eqref{eq:crude_circular_inequality} is true. 
    
    Thus, let us show that \eqref{eq:crude_circular_inequality} holds. First, to bound $\overline{\Lambda}$ we use that $\Delta t =  2 e  B^2 $ and that $t \geq 256 e^2 B^2\max\{K,1\}$ in the formulas of \(\overline{A}_x, \overline{A}_t\) in \eqref{eq:AxAt2} to get 
	\begin{align*}
	    \overline \Lambda & = \overline{A}_x \|\Delta \bx\|_\infty +  \overline{A}_t |\Delta t|
        \leq \frac{8\sqrt{2K+1}B}{\sqrt{t}} + \frac{16(2K+1) 2eB^2}{t}\\
        &\leq \sqrt{\frac{2K+1}{8\max\{K,1\}e^2}} + \frac{2(2K+1)}{16\max\{K,1\} e} \leq 1. %
	\end{align*}
	Moreover, we have
    	$$B^2 +  \frac{K+4}{4t} \Delta t^2 \leq B^2 + \frac{K+4}{4\times 256 e^2 B^2\max\{K,1\}} \times 4e^2B^4 \leq  2B^2.$$

It follows that  
	$$
	\textsc{RHS of \eqref{eq:crude_circular_inequality}}  \leq e^{\overline \Lambda} (2B^2)\leq  2eB^2 \leq \Delta t.
	$$
	This checks that our choice of $\Delta t$ is valid for \eqref{eq:crude_circular_inequality}, hence implies that for this choice $\log \Phi(\bm x',t') - \log \Phi(\bm x,t)\leq 0$, which completes the proof for Lemma~\ref{lem:crude_Delta_t_bound}.
	\end{proof}

\subsection{Bounding $\Kseg$ and self-concordance parameters.}\label{sec:proof_circular}
Now, we will use the fact that $\Delta t$ is algorithmically chosen (see Line 4 of the \CP~algorithm) to ensure a non-increasing potential $\Phi$.  First of all, observe that for every iterate  $(\bx_j,t_j)$ that the algorithm generates, $\Phi(\bm x_j,t_j)  \leq  \Phi(0,t_0) = \frac{N}{\sqrt{t_0}}$.   Moreover, we have the following lemmas.

\begin{lemma}
	If $\Phi(\bm x,t) \leq C$ and $\Phi(\bm x',t')\leq C$, then for any $(\bar{\bx},\bar{t})$ on the line-segment between $(\bx,t)$ and $(\bx',t')$ satisfies $\Phi(\bar{\bx},\bar{t})\leq C.$
\end{lemma}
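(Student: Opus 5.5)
The plan is to use convexity of $\Phi$ on the domain where it is defined, namely $\R^N \times (0,\infty)$. By property (1) of Definition~\ref{def:good_potential}, each summand $\phi(x_i,t)$ is jointly convex in $(y,t)$. Hence $(\bx,t)\mapsto \phi(x_i,t)$ is jointly convex on $\R^N\times(0,\infty)$, since it is the composition of a convex function with a linear coordinate projection. Summing over $i\in[N]$ shows that $\Phi(\bx,t)=\sum_i\phi(x_i,t)$ is jointly convex.

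For $\phiNH$ I would also confirm the convexity directly. We have $\log\phiNH(y,t)=-\tfrac12\log t + y^2/(2t)$. Here $-\tfrac12\log t$ is convex, and $y^2/(2t)$ is convex on $t>0$ because it is a perspective-type function. So $\log\phiNH$ is convex, and $\phiNH = \exp(\log\phiNH)$ is convex as well. This makes the argument self-contained and independent of whether one trusts the abstract property (1).

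Next, parametrize the segment as $(\bar\bx,\bar t)=(1-\lambda)(\bx,t)+\lambda(\bx',t')$ for $\lambda\in[0,1]$. Note that $\bar t>0$ whenever $t,t'>0$, so every point of the segment lies in the domain. By convexity,
\[
\Phi(\bar\bx,\bar t)\le (1-\lambda)\Phi(\bx,t)+\lambda\Phi(\bx',t')\le (1-\lambda)C+\lambda C = C,
\]
which is exactly the claim.

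There is essentially no obstacle. The only points needing care are that the segment stays in the region $t>0$ where $\Phi$ is defined and convex, which is immediate, and that joint convexity is used rather than separate convexity in $y$ and in $t$.
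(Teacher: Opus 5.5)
Your proof is correct and follows essentially the same route as the paper: you get joint convexity of $\Phi$ as a sum of jointly convex summands, then use convexity along the segment to bound $\Phi$ by its values at the endpoints. Your log-convexity check for $\phiNH$, writing $\log\phiNH = -\tfrac12\log t + y^2/(2t)$, is slightly more careful than the paper's one-line appeal to the quadratic-over-linear function, because it also accounts for the $t^{-1/2}$ factor.
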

\begin{proof}
First, observe that $\phi$ is convex (the quadratic over linear function on $\R\times \R_+$). The function $\Phi$ is thus jointly convex in $x,t$ (the sum of convex functions is convex). It follows that the univariate function when we restrict $\Phi$ to the line segment between $(\bx,t),(\bx',t')$ is convex. Lastly, the maximum of a convex function in a closed interval occurs on the boundary. 
\end{proof}
\begin{lemma}
	If $\Phi(\bm x,t)\leq C$  then $\max_{i} \frac{|x_i|^2}{t} \leq \frac{1}{2}\log t + \log C$.
\end{lemma}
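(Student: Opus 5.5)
The plan is to compare one summand of the total potential with the whole sum and then take logarithms. Fix the iterate $(\bx,t)$ with $t>0$ and the domain $\cD=[0,\infty)$. Every summand $\phiNH(x_j,t)=t^{-1/2}\exp(x_j^2/2t)$ is strictly positive. So for each fixed $i\in[N]$ we can drop all other summands and obtain
\[
t^{-1/2}\exp\!\Big(\frac{x_i^2}{2t}\Big)=\phiNH(x_i,t)\;\le\;\sum_{j=1}^N\phiNH(x_j,t)=\Phi(\bx,t)\;\le\;C .
\]
This is the only place where the hypothesis is used. No convexity or self-concordance is needed.

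The second step is to take logarithms, which is legitimate because both sides are positive. This gives
\[
\frac{x_i^2}{2t}-\frac12\log t\;\le\;\log C,
\qquad\text{that is,}\qquad
\frac{x_i^2}{2t}\;\le\;\frac12\log t+\log C .
\]
Since $i$ was arbitrary, we then maximize over $i\in[N]$. The argument is the same computation as in Lemma~\ref{ex:normalhedge}, with $\epsilon N$ replaced by a single index.

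The only delicate point is the normalization of the left-hand side, and I expect it to be the main obstacle. The argument above controls $\max_i x_i^2/(2t)$ by $\tfrac12\log t+\log C$. Multiplying by $2$, this is equivalent to $\max_i x_i^2/t\le\log t+2\log C$. The inequality cannot be sharpened by another factor of $2$ when $C\sqrt t>1$. Indeed, take $N=1$ and choose $x_1$ with $\Phi(\bx,t)=C$ exactly. Then the displayed chain holds with equality, so $x_1^2/t=\log t+2\log C$, which exceeds $\tfrac12\log t+\log C$. I will therefore prove the statement in the form where $|x_i|^2/t$ carries the factor $\tfrac12$ coming from the exponent of $\phiNH$. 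Equivalently, I will prove $\max_i x_i^2/t\le\log t+2\log C$.

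This is also the form consumed downstream. With $C=\Phi(\bm 0,t_0)=N/\sqrt{t_0}$ from the constant-potential identity \eqref{eqn:non-increasing-pot}, it gives
\[
\max_i \frac{x_i^2}{t}\;\le\;\log t-\log t_0+2\log N=\log(t/t_0)+2\log N=K(t),
\]
which is exactly the bound asserted in Lemma~\ref{lemma:pot_regret_bound}. Combining this with the preceding lemma (the potential stays below $C$ along the segment between consecutive iterates), the same bound holds at every point of the segment \eqref{eq:segment}. This yields the control of $\Kseg$ needed for Proposition~\ref{prop:localGSC_normalhedge}.
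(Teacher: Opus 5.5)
Your argument is correct and is exactly the paper's proof: bound the single positive summand $\phiNH(x_i,t)=t^{-1/2}\exp(x_i^2/2t)$ by $\Phi(\bx,t)\le C$, then take logarithms. Your normalization remark is also right. This argument gives $\max_i x_i^2/(2t)\le \tfrac12\log t+\log C$, equivalently $\max_i x_i^2/t\le \log t+2\log C$, which is also the form the paper itself uses in \eqref{eq:cp_Kseg_invariant}. So the displayed statement is missing a factor of $2$, and your $N=1$ example correctly shows it fails as literally written whenever $C\sqrt{t}>1$.
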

\begin{proof}
By the definition and the non-negativity of $\phi(x_i,t)$ for $t>0$, for any $i\in[N]$ we have
$$
\frac{1}{\sqrt{t}} \exp(x_i^2 / 2t)=\phi(x_i,t) \leq \sum_{i=1}^N \phi(x_i,t) = \Phi(\bm x,t) \leq C.
$$
The proof is complete by taking log on both sides.
\end{proof}

These two lemmas together allow us to obtain a great bound of the local generalized self-concordance parameter of $\log \Phi$.  
Specifically, the algorithm maintains the invariant that for any regret vector $\bx$ with time variable $t$ in the algorithm, 
\begin{equation}
	\label{eq:cp_Kseg_invariant}
	\max_{i} \frac{x_i^2}{2t} \leq \frac{1}{2}\log t -  \frac{1}{2}\log t_0 + \log N = \frac{1}{2}\log (t/t_0) + \log N \eqqcolon K(t).
\end{equation}
Notice that this quantity is (mildly) growing in $O(\log t)$. 
Then, for $\Kseg$ define as in \eqref{eq:seg-controls} we have
\begin{equation}
	\label{eq:cp_Kseg_invariant_2}
\Kseg = \sup_{s\in[0,1]}\ \max_{i\in[N]} \frac{x_i(s)^2}{t(s)} \stackrel{\eqref{eq:cp_Kseg_invariant}}{\leq} \log (\max\{t,t'\} / t_0) + \log N \leq \log((t+\Delta t) / t_0) + 2\log N. 
\end{equation}

The next lemma simply shows that, since $K(t)$ grows slowly in $t$, if we have $t_0 \geq C K(t_0)$, then we have $t \geq C K(t)$ for all $t \geq t_0$.

\begin{lemma}%
\label{lem:deal_with_logt_inK}
	Define $K(t) :=  \max\left\{ 1,  \log(t / t_0) + 2\log N\right\}$. 
	Assume  $t_0 \geq 256 e^2 B^2K(t_0)$, then for any $t > t_0$, 
	\begin{equation}
		\label{eq:bound_on_Kt}
	t \geq 256 e^2 B^2K(t).
	\end{equation}
\end{lemma}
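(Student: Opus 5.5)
We compare the growth rates of the two sides of \eqref{eq:bound_on_Kt}. Set $c \coloneqq 256 e^2 B^2$ and $g(t) \coloneqq t - c\,K(t)$ for $t \geq t_0$. The hypothesis says exactly that $g(t_0) \geq 0$, so it suffices to show that $g$ is non-decreasing on $[t_0,\infty)$.

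The function $K(t) = \max\{1, \log(t/t_0) + 2\log N\}$ is continuous and non-decreasing. It is piecewise differentiable, with derivative either $0$ (on the constant branch) or $1/t$ (on the logarithmic branch). Hence, away from the at most one kink, $g'(t) \geq 1 - c/t$. Since $g$ is continuous, it is enough to check that this lower bound is nonnegative for $t \geq t_0$.

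That reduces to showing $t_0 \geq c$. This follows from the hypothesis together with the floor in the definition of $K$: since $K(t_0) \geq 1$, we get $t_0 \geq c\,K(t_0) \geq c$. Consequently $g'(t) \geq 1 - c/t \geq 1 - c/t_0 \geq 0$ for all $t \geq t_0$ where the derivative exists. So $g(t) \geq g(t_0) \geq 0$, which is \eqref{eq:bound_on_Kt}.

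To avoid derivatives altogether, we can argue directly. On the logarithmic branch, $K(t) - K(t_0) \leq \log(t/t_0) \leq (t-t_0)/t_0$, using $\log(1+u)\leq u$. On the constant branch the increment of $K$ is $0$. In either case $c(K(t)-K(t_0)) \leq (c/t_0)(t-t_0) \leq t - t_0$. Adding $c K(t_0) \leq t_0$ then gives $cK(t) \leq t$.

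The only delicate point is that the bound $t_0 \geq c$ must come from the $\max\{1,\cdot\}$ in $K$. Without that floor the argument would fail, for instance when $N=1$. This is the step I would double-check.
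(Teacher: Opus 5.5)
Your proof is correct and follows essentially the same route as the paper, which also differentiates $t - 256e^2B^2(\log(t/t_0)+2\log N)$ after splitting off the range where $K\equiv 1$, and uses $t_0 \ge 256e^2B^2 K(t_0) \ge 256e^2B^2$ (from the floor in $K$) to make the derivative nonnegative. Working directly with $g(t) = t - cK(t)$ handles the kink uniformly and makes the base case $g(t_0)\ge 0$ explicit, which the paper's case split around $\tau$ leaves implicit; your derivative-free variant via $\log(1+u)\le u$ is also valid.
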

\begin{proof}
	Observe that $K(t)$ monotonically increases with $t$. Let $\tau$ be such that $K(\tau)= 1$. For $t_0 \leq t \leq \tau$, the claim is trivial, since $K(t)= K(t_0) = 1$,
$$
t \geq t_0 \geq 256 e^2 B^2K(t_0) = 256 e^2 B^2K(t).
$$

For $t > \tau$, define
	 $$
	 h(t) := t - 256 e^2B^2 (\log(t / t_0)  +2\log N).
	 $$
Take the derivative of $h$, we have
	 $$h^\prime(t) = 1 - \frac{256 e^2 B^2  }{t} \geq 1 - \frac{256 e^2 B^2  }{t_0},$$
	 where the inequality holds since $t > t_0 = 256e^2B^2 K(t_0) \geq   256e^2B^2 $.
Thus, $h'(t)>0$ for all $t>\tau$.  
This ensures that $h(t)\geq 0$, and, thus $t \geq 256 e^2 B^2K(t)$, for $t > \tau$.
\end{proof}

The next lemma summarizes the above discussion and provides a tighter bound on the self-concordance parameters of the NormalHedge potential for all $t\geq t_0$, which will be useful in the next section to get a refined bound on $\Delta t$.
\begin{lemma}\label{lem:bound_on_Lambda}
	Let $t_0 = 256 e^2 B^2 \max\left\{1, 2\log N\right\}$. Moreover, let \((\bx, t)\) and \((\bx', t') = (\bx + \Delta x, t + \Delta t)\) be such that $t \geq t_0$ and \(\Phi(\bx', t') \leq \Phi(\bx,t) \leq \Phi(0, t_0)\). Finally, let $A_x$, $A_t$ be as in \eqref{eq:AxAt2} Proposition~\ref{prop:localGSC_normalhedge}. Then
	\begin{equation*}
		\Lambda:=A_x(t_\star,K_{\rm seg})\,\|\Delta \bm x\|_\infty
	+ A_t(t_\star,K_{\rm seg})\,|\Delta t| \leq 0.414.
	\end{equation*}
\end{lemma}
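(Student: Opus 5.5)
Since $\Phi(\bx',t')\le\Phi(\bx,t)\le \Phi(0,t_0)$ and $t'\ge t$ (by Lemma~\ref{lem:nonegativity}, $\Delta t\ge 0$), the plan is to bound each ingredient of $\Lambda$ separately and then check the arithmetic. Along the segment \eqref{eq:segment} we have $t_\star = t \ge t_0$. By the convexity lemma above, $\Phi(\bar\bx,\bar t)\le \Phi(0,t_0)=N/\sqrt{t_0}$ at every point of the segment. The second lemma then gives, at each segment point, $x_i(s)^2/t(s) \le \log(t(s)/t_0) + 2\log N$. This is exactly the invariant \eqref{eq:cp_Kseg_invariant_2}, so $\Kseg \le \log(t'/t_0)+2\log N$.

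Before this is useful I need control of $\Delta t$ itself, and I will get it from the crude bound, Lemma~\ref{lem:crude_Delta_t_bound}. Take $K := K(t)=\max\{1,\log(t/t_0)+2\log N\}$. Lemma~\ref{lem:deal_with_logt_inK} applies once I verify $t_0 \ge 256e^2B^2 K(t_0)$; this holds by the choice of $t_0$, since $K(t_0)=\max\{1,2\log N\}$. It yields $t\ge 256e^2B^2 K(t)$, which is the hypothesis of Lemma~\ref{lem:crude_Delta_t_bound}, so $\Delta t\le 2eB^2$. Then $\log(t'/t_0)\le \log(t/t_0)+\log(1+2eB^2/t)\le \log(t/t_0)+ 2eB^2/t$. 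Since $t \ge 256e^2B^2$, this gives $\Kseg\vee 1 \le K(t)+ 1/(128e) \le 1.01\,K(t)$.

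Next I plug into \eqref{eq:AxAt}, using $\|\Delta\bx\|_\infty\le B$ and $|\Delta t|\le 2eB^2$:
\[
\Lambda \le \frac{8\sqrt{1.01K(t)}\,B}{\sqrt t}+\frac{16\cdot 1.01 K(t)\cdot 2eB^2}{t}.
\]
With $t\ge 256e^2B^2K(t)$, the first term is at most $8\sqrt{1.01}/(16e)\approx 0.185$. The second term is at most $32\cdot1.01\, e/(256e^2)=1.01/(8e)\approx 0.046$. The total is about $0.23 \le 0.414$.

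The main obstacle is the circularity: $\Kseg$ depends on $t'$, which requires bounding $\Delta t$ before the self-concordance parameters are known. The crude lemma breaks this, provided its own hypothesis $t\ge 256e^2B^2\max\{K,1\}$ holds with the correct $K$. That is why I am careful to use $K(t)$ evaluated at $t$, not $t'$, together with Lemma~\ref{lem:deal_with_logt_inK}. A second point to handle carefully is the factor-of-two discrepancy between \eqref{eq:cp_Kseg_invariant}, which is stated with $x_i^2/2t$, and the direct consequence of the lemma, $x_i^2/t\le \log(t/t_0)+2\log N$. I will rederive the latter directly from $\phi(x_i,t)\le N/\sqrt{t_0}$ so the constants are consistent. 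Even if a factor of $2$ is lost, the slack is enough: $0.185\sqrt2+0.092<0.414$.
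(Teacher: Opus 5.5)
Your proposal is correct and follows the paper's proof essentially step for step. You bound $\Kseg\le\log(t'/t_0)+2\log N$ via the potential bound along the segment, control $t'$ with the crude bound $\Delta t\le 2eB^2$ (whose hypothesis you check explicitly through Lemma~\ref{lem:deal_with_logt_inK}, which the paper leaves implicit), and finish with $t\ge 256e^2B^2K(t)$ and $\|\Delta\bx\|_\infty\le B$; like the paper, this tacitly takes $\Delta t$ to be the equalizing \CP{} step rather than any step with $\Phi(\bx',t')\le\Phi(\bx,t)$. The only difference is cosmetic: you use $\log(1+2eB^2/t)\le 2eB^2/t$ to get $\Kseg\vee 1\le 1.01\,K(t)$, whereas the paper passes through $\log(1+t/t_0)\le\log(t/t_0)+\log 2$ to get $\Kseg\le 2K(t)$, so you land at $\Lambda\approx 0.23$ rather than $17/(16e)\approx 0.39$.
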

\begin{proof}
Since $\Phi(\bm x',t') \leq \Phi(0,t_0) = \frac{N}{\sqrt{t_0}}$, we have seen (see \eqref{eq:cp_Kseg_invariant_2}) that
$$
\Kseg(t) \stackrel{\eqref{eq:cp_Kseg_invariant_2}}{\leq}  \log((t+\Delta t) / t_0) + 2\log N  \leq  \log((t+ 2eB^2) / t_0) + 2\log N  \leq \log(1 + t / t_0) + 2\log N,
$$
where the last inequality follows from our choice $t_0 = 256 e^2 B^2 \max\left\{1, 2\log N\right\} \geq  2 e B^2$. We want to upper bound the above by $K(t)$ defined as in Lemma~\ref{lem:deal_with_logt_inK}. The above expression depends on $\log(1 + t/t_0)$, while $K(t)$ depends on \(\log(t/t0)\), but fortunately they are not too far from each other. Indeed, notice that
\begin{equation*}
	\log\left(1 + \frac{t}{t_0}\right) =\log\left(\frac{t}{t_0}\left(\frac{t_0}{t} + 1\right)\right) = \log(\frac{t}{t_0}) + \log\left(\frac{t_0}{t} + 1\right) \leq \log\left(\frac{t}{t_0}\right) + \log 2 
\end{equation*} 
Therefore,
\begin{equation*}
	\Kseg \leq \log(1 + t / t_0) + 2\log N \leq \log(t / t_0) + 2\log N + \log 2 \leq K(t) + \log 2 \leq 2 K(t).
\end{equation*}
Therefore, we can upper bound $A_x(t_\star,K_{\rm seg})$ and $A_t(t_\star,K_{\rm seg})$ in terms of $K(t)$ as follows:
$$
A_x(t_\star,K_{\rm seg}) \leq \frac{8\sqrt{K_{\rm seg}\vee 1}}{\sqrt{t_\star}}\leq 8\sqrt{\frac{ 2 K(t)}{ t}},
$$
and
\begin{equation}
	\label{eq:interm_at_bound}
A_t(t_\star,K_{\rm seg})  \leq \frac{16 (K_{\rm seg}\vee 1)}{t_*}\leq  \frac{32 K(t) B^2}{tB^2}.
\end{equation}
Since $t\geq t_0$, Lemma~\ref{lem:deal_with_logt_inK} relates \(K(t)\) and \(t\),  we have
$$
\eqref{eq:interm_at_bound} = \sqrt{\frac{32 K(t) B^2}{t}} \sqrt{\frac{32 K(t) B^2}{t}} \frac{1}{B^2} \stackrel{\eqref{eq:bound_on_Kt}}{\leq} \frac{1}{e} \sqrt{\frac{8 K(t) B^2}{t}} \frac{1}{B^2}. 
$$
Therefore,
\begin{align*}
	\Lambda &= A_x(t_\star,K_{\rm seg})\,\|\Delta \bm x\|_\infty
	+ A_t(t_\star,K_{\rm seg})\,|\Delta t|
	\\
	& \leq 8\sqrt{\frac{2 K(t)}{ t}} \cdot  B
	+ \frac{1}{e} \sqrt{\frac{8 K(t) B^2}{t}} \frac{1}{B^2} \cdot 2 e B^2
	\\
	&\leq 8 \sqrt{\frac{2 K(t)}{t}} \cdot B + 2  \sqrt{\frac{8 K(t) }{t}} B
	\\ 
	&\leq 17 B \sqrt{\frac{K(t)}{ t}}
	 \stackrel{\eqref{eq:bound_on_Kt}}{\leq} 17B \sqrt{\frac{1}{256e^2 B^2}} \leq \frac{17	}{16 \cdot e} \leq 0.414.
\end{align*}

\end{proof}

 \subsection{A Refined Bound of $\Delta t$ and Proof of Theorem~\ref{thm:main}}

  Now we are ready to obtain a stronger second-order bound of $\Delta t$, which ultimately implies the regret bound in Theorem~\ref{thm:main}.  

  \begin{proof}[Proof of Theorem~\ref{thm:main}]
	In Lemma~\ref{ex:normalhedge} we established that, since the potential does not increase in $\CP$, if \((\bx^{(T)}, t^{(T)})\) is the final iterate of the \CP~algorithm after $T$ rounds, then for any $\epsilon\in(0,1)$, the regret to the top $\epsilon$-quantile of experts satisfies
	$$
				x_{(N\epsilon)} \leq \sqrt{t_T (\log (t_T/t_0) + 2\log(1/\epsilon))}.
	$$
	It suffices that we bound $t_T$. For that, we shall bound all the increments $\Delta t_j\coloneqq t_j - t_{j-1}$ for $j=0,1,\ldots,T-1$.

	Let $(\bx,t)$ and $(\bx',t')$ be consecutive iterates generated by the \CP~algorithm. Recall that $\Delta \bx = \bx' - \bx$ and $\Delta t = t' - t$.  By the choice of \(\Delta t\) is the \(\CP\) algorithm we have \(\Phi(\bx, t) = \Phi(\bx', t')\). Thus, Lemma~\ref{lemma:delta_t_discretization_error} and by the local self-concordance of the NormalHedge potential (Proposition~\ref{prop:localGSC_normalhedge}) we have,
	\begin{equation}
		\label{eq:inter_step_main_thm}
		\Delta t \leq e^{\Lambda} \left( 
 \E_{i\sim \bq} [\Delta x_i^2] + \frac{\Delta t^2}{4t} \left(\max_i \frac{x_i^2}{t} + 4\right)\right)
	\end{equation}
	where \(\bq \in \Delta^{N-1}\) is such that \(q_i \propto \partial_{xx} \phi(x_i, t)\).
	Moreover, since \(\Phi(\bx, t) \leq \Phi(0, t_0) = 1/\sqrt{t_0}\), if \(K(t)\) is defined as in Lemma~\ref{lem:deal_with_logt_inK}, then
	\begin{equation*}
		\max_i \frac{x_i^2}{t} \leq \log(t / t_0) + \log N \leq K(t).
	\end{equation*}
	In the next calculation we want to use Lemmas~\ref{lem:crude_Delta_t_bound} and \ref{lem:deal_with_logt_inK}, both which require $t$ and $t_0$ to be large enough. For Lemma~\ref{lem:crude_Delta_t_bound}, we ned \(t \geq 256 e^2 B^2\), which is true since , for our choice of \(t_0\), we have \(t_0 \geq 256 e^2 B^2\). Similarly, Lemma~\ref{lem:deal_with_logt_inK} requires  $t_0 \geq 256 e^2 B^2K(t_0)$ where $K(t_0) = \max\{1, 2\log N\}$, and our choice of \(t_0\) satisfies this as an equation by the definition of \(t_0\).
	Therefore,
 \begin{align*}
 \eqref{eq:inter_step_main_thm} &\leq e^{\Lambda} \left( 
 \E_{i\sim q} [\Delta x_i^2] + \frac{\Delta t^2}{4t} \underbrace{\left(K(t) + 4\right)}_{\leq 5 K(t)}\right)\\
 &\leq e^{\Lambda} \left(  \E_{i\sim q} [\Delta x_i^2]  +  \frac{2eB^2\Delta t  }{4t}\cdot 5 K(t)\right) \tag{by Lemma~\ref{lem:crude_Delta_t_bound}}\\
 &\leq e^{\Lambda} \left(  \E_{i\sim q} [\Delta x_i^2]  +  \frac{4 eB^2\Delta t  }{256e^2 B^2}\right) \tag{by Lemma~\ref{lem:deal_with_logt_inK}}\\
 &\leq e^{0.414}\left(  \E_{i\sim q} [\Delta x_i^2]  +  \frac{4eB^2\Delta t  }{256e^2 B^2} \right)
  \tag{by Lemma~\ref{lem:bound_on_Lambda}}
  \\
   &\leq e^{0.414}\left(  \E_{i\sim q} [\Delta x_i^2]  +  \frac{\Delta t  }{64e } \right)
 \end{align*}

Grouping $\Delta t$ terms on the left hand side and using that $e^{-0.414} - \frac{1}{64 e} \geq 0.64 > 0.5$, we get 
\begin{equation}\label{eq:refined_Delta_t_bound}
	\Delta t \leq 2\E_{i\sim \bq}\left[\Delta x_i^2\right].
\end{equation}
	Let us apply the above bound to all the increments \(\Delta t_1, \Delta t_2, \dotsc, \Delta_T\) from \CP{}. Namely, by the description of \CP{} we have
	$$
	t = t_0 + \sum_{j=1}^T \Delta t_j \leq t_0 + 2 \sum_{j=1}^T \E_{i\sim \bq_{j}} [(\Delta x_{j, i}^2] = V_T.
	$$
	Therefore, the final regret bounds is
    $$
    \sqrt{(t_0 + 2V_T) \left( \log(\frac{t_0 + 2V_T}{t_0})+2\log(1/\epsilon)\right)}.
    $$
  \end{proof}

	\begin{remark}[Special case when $B\rightarrow 0$]
		 For any finite $N$, choosing $t_0=1$, and $B\rightarrow 0$ allows us to simplify the regret bound into 
		 $$
		 \sqrt{(1 + V_T) (\log (1 + V_T) + 2\log(1/\epsilon))}.
		 $$
		 where $T$ is the first index such that $t > \tau$.  The factor of $2$ improvement on the constant in front of $V_T$ is obtained by using the stronger bound in \eqref{eq:refined_Delta_t_bound} by taking $B\rightarrow 0$.  This is slightly stronger than taking $B\rightarrow 0$ in \eqref{eq:regret_additive} as stated in Theorem~\ref{thm:main} (improved from $\log(1+2V_T)$ to $\log(1+V_T)$).
	\end{remark}

\section{Improved bound with asymptotically tight constant.}\label{sec:improved_bound}

\begin{theorem}[Improved bound]\label{thm:improved_bound}
    Under the same assumptions, \CP{} also satisfies a bound that improves the constant factor by $\sqrt{2}$ as $V_T \to \infty$:
    \begin{equation}\label{eq:regret_additive}
        \mathrm{Regret}_\epsilon(T) \leq 
        \sqrt{ (t_0 + V_T + \iota B\sqrt{V_T})( \log (t_0 + 2V_T) + 2\log(1/\epsilon))  }
    \end{equation}
    where $\iota := 144 \max\left\{1,\log(t_0 + 2V_T)+2\log N\right\} = \tilde{O}(1)$.
\end{theorem}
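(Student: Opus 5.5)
The plan is to redo the per-step analysis of the proof of Theorem~\ref{thm:main}, this time keeping the multiplicative factor $e^{\Lambda}$ rather than bounding it by the constant $e^{0.414}$. Since $e^{\Lambda}\le 1+2\Lambda$ for $\Lambda\le 0.414$, each step should cost only $\E_{i\sim\bq}[\Delta x_i^2]$ plus a lower-order correction. The final step is to sum these corrections with Cauchy--Schwarz, which turns them into the $\iota B\sqrt{V_T}$ term.

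\textbf{Step 1 (per-step bound).} Start from \eqref{eq:inter_step_main_thm}. Write $v_j:=\E_{i\sim\bq_j}[\Delta x_{j,i}^2]$, so that $v_j\le B^2$. By the refined bound \eqref{eq:refined_Delta_t_bound}, $\Delta t\le 2v_j$. Substituting this in place of the crude bound, the discretization term satisfies
\[
\frac{\Delta t^2}{4t}\,(5K(t))\le \frac{5K(t)}{t}\,v_j^2\le \frac{5K(t)B}{t}\,B\,v_j .
\]
Next redo Lemma~\ref{lem:bound_on_Lambda}, again using $\Delta t\le 2v_j\le 2B\sqrt{v_j}$ together with $\|\Delta\bx\|_\infty\le B$. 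Keeping the explicit dependence on $t$ gives
\[
\Lambda\le c\,B\sqrt{K(t)/t}.
\]
Combining the two, I expect
\[
\Delta t_j\le v_j\Big(1+c' B\sqrt{K(t_{j-1})/t_{j-1}}\Big),
\]
with the absolute constants collected into $\iota$.

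\textbf{Step 2 (summation).} By monotonicity, $K(t_{j-1})\le K(t_T)\le \max\{1,\log(t_0+2V_T)+2\log N\}$, where the last inequality uses $t_T\le t_0+2V_T$ from Theorem~\ref{thm:main}. Then
\[
\sum_j v_j\sqrt{K/t_{j-1}}\le \sqrt{K}\sum_j \frac{v_j}{\sqrt{t_{j-1}}}.
\]
To bound the sum, I would \emph{not} use $t_{j-1}\ge t_0$, since that loses the $\sqrt{V_T}$ scaling. Instead I would use a comparison of the form $\sum_j v_j/\sqrt{t_{j-1}}\lesssim\sqrt{V_T}$. The most direct route is to lower bound $t_{j-1}$ by a constant times $t_0+\sum_{k<j}\Delta t_k$, relate $\Delta t_k$ to $v_k$, and apply the standard estimate $\sum_j a_j/\sqrt{A_{j-1}}\le 2\sqrt{A_T}+\max_j a_j/\sqrt{A_0}$ with $a_j=v_j\le B^2\le t_0$. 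If $\Delta t_k$ cannot be lower bounded by $v_k$, the fallback is to simply apply Cauchy--Schwarz:
\[
\sum_j \frac{v_j}{\sqrt{t_{j-1}}}\le \sqrt{V_T}\Big(\sum_j \frac{v_j}{t_{j-1}}\Big)^{1/2},
\]
and control the last sum by a log factor, which can be absorbed into $\iota$.

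\textbf{Step 3 (conclusion).} The above yields
\[
t_T\le t_0+V_T+\iota B\sqrt{V_T}.
\]
Plug this into the first factor of Lemma~\ref{ex:normalhedge}. For the logarithmic factor, use the cruder bound $t_T\le t_0+2V_T$ from Theorem~\ref{thm:main}, after bounding $\log(t_T/t_0)\le\log t_T$ (valid since $t_0\ge 1$).

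\textbf{Main obstacle.} The hard part is Step~2, because $\Delta t_k$ is not bounded below by $v_k$ in general. It may be easier to index by $t$ itself: bound $\sum_j v_j/\sqrt{t_{j-1}}$ using an upper bound on $v_j$ rather than a lower bound on $\Delta t$. Failing that, I would accept the Cauchy--Schwarz route with $\sum_j v_j/t_{j-1}$ and track the generous constant $144$.
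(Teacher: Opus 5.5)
Your Steps 1 and 3 are fine. The per-step estimate $\Delta t_j\le(1+c'\beta_j)\,v_j$, with $\beta_j:=B\sqrt{K(t_{j-1})/t_{j-1}}$, is essentially the paper's $\Delta t\le\frac{1+\epsilon_1}{1-\epsilon_2}\,v_j$. You feed in the refined bound $\Delta t\le 2v_j$ where the paper uses the crude $\Delta t\le 2eB^2$, which is harmless. The final plug-in into Lemma~\ref{ex:normalhedge} is what the paper does.

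The gap is Step~2, which is the whole content of the improvement, and you leave it open. Writing the per-step excess as $c'\beta_j v_j$ decouples the numerator from the increments $t_j-t_{j-1}=\Delta t_j$ of the denominator. Both of your routes therefore need $t_{j-1}\gtrsim t_0+\sum_{k<j}v_k$, i.e.\ a lower bound $\Delta t_k\gtrsim v_k$. Your argument does not supply this, and the projection onto $\R_+$ can only make the algorithm's $\Delta t_k$ smaller than the analysed one. The Cauchy--Schwarz fallback has the same dependency: $\sum_j v_j/t_{j-1}$ is logarithmic only if $t_{j-1}$ grows like the cumulative $v$, and a priori it can be of order $V_T/t_0$.

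The missing step is to put $\Delta t_j$, not $v_j$, in front of the correction, and this costs nothing. From $\Delta t_j\le(1+c'\beta_j)v_j$ you get $v_j\ge \Delta t_j/(1+c'\beta_j)$, hence
\[
\Delta t_j-v_j\;\le\;\frac{c'\beta_j}{1+c'\beta_j}\,\Delta t_j\;\le\;c'\beta_j\,\Delta t_j .
\]
This is exactly the paper's rearrangement $\Delta t\le v+\frac{\epsilon_1+\epsilon_2}{1+\epsilon_1}\Delta t$, with coefficient at most $36B\sqrt{K(t)/t}$. It uses only an upper bound on $\Delta t_j$, so it holds for the algorithm's actual increments. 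Since $t_j=t_{j-1}+\Delta t_j$, the sum is now self-normalizing. By Lemma~\ref{lem:self_normalizing_series} with $M=2eB^2$,
\[
\sum_{j}\frac{\Delta t_j}{\sqrt{t_{j-1}}}\;\le\; 2\sqrt{1+\tfrac{2eB^2}{t_0}}\,\bigl(\sqrt{t_T}-\sqrt{t_0}\bigr)\;\le\; 2\sqrt{2}\,\sqrt{2V_T},
\]
using $t_0\ge 2eB^2$ and $t_T-t_0\le 2V_T$ from Theorem~\ref{thm:main}. Then use $K(t_{j-1})\le K(t_T)\le\max\{1,\log(t_0+2V_T)+2\log N\}$ and $\sqrt{K}\le K$. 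The total correction is at most $4c'B\,K(t_T)\sqrt{V_T}$, which fits in $\iota B\sqrt{V_T}$ for $c'\le 36$; the paper's coefficient is $36$, and your constants give a smaller one. No lower bound on $\Delta t$ is ever needed. Your remark about ``indexing by $t$'' points the right way, but it only works once the numerator is $\Delta t_j$.
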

\begin{proof}
We can get a bound of the type 
$$
    \sqrt{\left(t_0 + V_T + \tilde{O}(\sqrt{V_T}) \right) \left( \log(t_0 + V_T)+2\log(1/\epsilon)\right)}.
$$

In the limit when $V_T\rightarrow \infty$ or $B\rightarrow 0$ and $ \log(1/\epsilon)\gg \log(V_T)$, then bound converges to the exact asymptotic limit (even for the leading constant) of $\sqrt{2V_T \log(1/\epsilon)}$. 

From \eqref{eq:Delta_t_handy_expression} and our crude bound $\Delta t \leq 2eB^2$, we have 
$$
\Delta t = \frac{e^\Lambda(t) }{ 1-e^\Lambda(t)\frac{2e B^2 K(t)}{t} } \E_{i\sim \bq} [\Delta x_i^2].
$$ 

This is of the form: 
$$\Delta t \leq \frac{1+\epsilon_1}{1-\epsilon_2} \E_{i\sim \bq} [\Delta x_i^2].$$
since $\epsilon_2 < 1$ we can move things around and obtain
$$
\frac{1-\epsilon_2}{1+\epsilon_1} \Delta t\leq \E_{i\sim \bq} [\Delta x_i^2],
$$
which gives the following additive bound
$$
\Delta t\leq \E_{i\sim \bq} [\Delta x_i^2]  + \frac{\epsilon_1 + \epsilon_2}{1+\epsilon_1}\Delta t,
$$
where the coefficient
$$
\frac{\epsilon_1 + \epsilon_2}{1+\epsilon_1} \leq \frac{36B \sqrt{K(t)}}{\sqrt{t}}
$$
when we substitute the expression of $\epsilon_1$ and $\epsilon_2$ from above.
It follows that
\begin{align}
    t &= t_0 + \sum_{j=1}^T\Delta t_j = t_0 + V_T + \sum_{j=1}^T  \frac{36B \sqrt{K(t_j)}\Delta t_j}{\sqrt{t_j}}\nonumber\\
    &\leq t_0 + V_T + 36B\sqrt{K(t)}\sum_{j=1}^T \frac{\Delta t_j}{\sqrt{\sum_{\ell = 1}^{j-1}\Delta t_\ell}} \nonumber\\
    &\leq t_0 + V_T + 72B\sqrt{K(t)}\sqrt{1 + \frac{2eB^2}{t_0}}\sqrt{2V_T}\nonumber\\
    &\leq t_0 + V_T + 144B\sqrt{K(t)}\sqrt{V_T}.
\end{align}
The second last line follows from the following lemma that bounds a shifted self-normalizing series in Lemma~\ref{lem:self_normalizing_series}, with the weak bound in \eqref{eq:refined_Delta_t_bound} with a factor of $2$ allows us to bound $\sum_{j}\Delta t_j \leq 2V_T$. 

The last line uses our assumption on $t_0\geq 256e^2B^2K(t_0) \geq 2eB^2$.  Recall that $$K(t) \leq \max\left\{1, \log(t_0 + 2V_T)+2\log N\right\} = \tilde{O}(1).$$
This proves the additive regret bound of the form
$$
\sqrt{ (t_0 + V_T + \iota \sqrt{V_T})( \log (t_0 + 2V_T) + 2\log(1/\epsilon))  }
$$
where $\iota := 144B \max\left\{1,\log(t_0 + 2V_T)+2\log N\right\}= O(B(\log(t_0+V_T) + \log N))$.
This completes the proof for \eqref{eq:regret_additive}.
\end{proof}

\begin{lemma}[Shifted self-normalizing bounds]\label{lem:self_normalizing_series}
Let $\Delta t_1,\dots,\Delta t_T \ge 0$, let $t_0>0$, and define
\[
t_j := t_0 + \sum_{i=1}^j \Delta t_i \qquad (j=1,\dots,T).
\]
Assume there exists $M>0$ such that $\Delta_j \le M$ for all $j$.
Then
\[
\sum_{j=1}^T \frac{\Delta t_j}{\sqrt{t_{j-1}}}
\;\le\;
2\sqrt{1+\frac{M}{t_0}}\;\bigl(\sqrt{t_n}-\sqrt{t_0}\bigr).
\]
\end{lemma}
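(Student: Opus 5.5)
The plan is to compare each term with an increment of a square root, using the classical bound $\frac{b-a}{\sqrt{b}} \le 2(\sqrt{b}-\sqrt{a})$ for $0<a\le b$. Throughout I read $\Delta_j$ as $\Delta t_j$ and $t_n$ as $t_T$, consistent with how the statement is used in the proof of Theorem~\ref{thm:improved_bound}. The only twist is that the denominator here is $\sqrt{t_{j-1}}$ rather than $\sqrt{t_j}$. This "shift" is what costs the factor $\sqrt{1+M/t_0}$.

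First, relate the shifted denominator to the unshifted one. Since $\Delta t_j \le M$ and $t_{j-1}\ge t_0$, we have
\begin{equation*}
t_j = t_{j-1}+\Delta t_j \le t_{j-1}\Bigl(1+\frac{M}{t_{j-1}}\Bigr) \le t_{j-1}\Bigl(1+\frac{M}{t_0}\Bigr).
\end{equation*}
Therefore $1/\sqrt{t_{j-1}} \le \sqrt{1+M/t_0}\,/\sqrt{t_j}$ for every $j\ge 1$, where $t_{j-1}\ge t_0>0$ makes everything well defined.

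Second, bound each unshifted term by a telescoping increment. With $a=t_{j-1}$ and $b=t_j$,
\begin{equation*}
\frac{\Delta t_j}{\sqrt{t_j}} = \frac{(\sqrt{t_j}-\sqrt{t_{j-1}})(\sqrt{t_j}+\sqrt{t_{j-1}})}{\sqrt{t_j}} \le 2\bigl(\sqrt{t_j}-\sqrt{t_{j-1}}\bigr),
\end{equation*}
because $\sqrt{t_{j-1}}\le\sqrt{t_j}$. Terms with $\Delta t_j=0$ contribute zero on both sides, so no case split is needed.

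Finally, combine the two steps and sum over $j=1,\dots,T$. The right-hand side telescopes to $2\sqrt{1+M/t_0}\,(\sqrt{t_T}-\sqrt{t_0})$, which is the claim. No step is genuinely hard. The only care point is the first step: the uniform constant must come from $t_{j-1}\ge t_0$ rather than $t_{j-1}\ge t_j - M$, so that the factor is $\sqrt{1+M/t_0}$ and does not depend on $j$.
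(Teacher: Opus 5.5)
Your proof is correct and is essentially the paper's argument. The paper writes $\Delta t_j/\sqrt{t_{j-1}} = (\sqrt{t_j}-\sqrt{t_{j-1}})(1+\sqrt{t_j/t_{j-1}})$, bounds the last factor by $1+\sqrt{1+M/t_0}\le 2\sqrt{1+M/t_0}$ using $t_{j-1}\ge t_0$ and $\Delta t_j\le M$, and then telescopes; this is your two steps combined into one line, and your reading of $\Delta_j$ as $\Delta t_j$ and $t_n$ as $t_T$ matches the intended statement.
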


\begin{proof}[Proof of Lemma~\ref{lem:self_normalizing_series}]
Fix $j\in\{1,\dots,T\}$. Using $t_j=t_{j-1}+\Delta t_j$,
\[
\sqrt{t_j}-\sqrt{t_{j-1}}
=
\frac{t_j-t_{j-1}}{\sqrt{t_j}+\sqrt{t_{j-1}}}
=
\frac{\Delta t_j}{\sqrt{t_j}+\sqrt{t_{j-1}}}.
\]
Rearranging gives
\[
\Delta t_j
=
\bigl(\sqrt{t_j}-\sqrt{t_{j-1}}\bigr)\bigl(\sqrt{t_j}+\sqrt{t_{j-1}}\bigr),
\]
and therefore
\[
\frac{\Delta t_j}{\sqrt{t_{j-1}}}
=
\bigl(\sqrt{t_j}-\sqrt{t_{j-1}}\bigr)\,
\frac{\sqrt{t_j}+\sqrt{t_{j-1}}}{\sqrt{t_{j-1}}}
=
\bigl(\sqrt{t_j}-\sqrt{t_{j-1}}\bigr)\,
\left(1+\sqrt{\frac{t_j}{t_{j-1}}}\right).
\]
Next,
\[
\frac{t_j}{t_{j-1}}
=
1+\frac{\Delta t_j}{t_{j-1}}
\le
1+\frac{M}{t_0},
\]
since $t_{j-1}\ge t_0$ and $\Delta t_j\le M$. Hence
\[
1+\sqrt{\frac{t_j}{t_{j-1}}}
\le
1+\sqrt{1+\frac{M}{t_0}}
\le
2\sqrt{1+\frac{M}{t_0}},
\]
where the last inequality uses $1\le \sqrt{1+\frac{M}{t_0}}$.
Combining the previous displays,
\[
\frac{\Delta t_j}{\sqrt{t_{j-1}}}
\le
2\sqrt{1+\frac{M}{t_0}}\;\bigl(\sqrt{t_j}-\sqrt{t_{j-1}}\bigr).
\]
Summing over $j=1,\dots,T$ and telescoping yields
\[
\sum_{j=1}^T \frac{\Delta t_j}{\sqrt{t_{j-1}}}
\le
2\sqrt{1+\frac{M}{t_0}}\;\sum_{j=1}^n \bigl(\sqrt{t_j}-\sqrt{t_{j-1}}\bigr)
=
2\sqrt{1+\frac{M}{t_0}}\;\bigl(\sqrt{t_n}-\sqrt{t_0}\bigr),
\]
as claimed.
\end{proof}

\section{Proof of Theorem~\ref{thm:lower-bound} (Lower Bound)}
\label{sec:lbappendix}

Our lower bound shows that Theorem~\ref{thm:exponential-weights} is near optimal for every such sequences of loss scales.

After $T$ iterations, expert $i$ has loss $\sum_{j=1}^T {\bm \ell}_{j,i}$.
Since the $(\cdot)_{(\alpha N)}$ notation refers to the $(\alpha N)^{\text{th}}$ \emph{largest} value, the loss of the $(\eps N)^{\text{th}}$ best expert is $\big(\sum_{j=1}^T {\bm \ell}_j\big)_{((1-\epsilon)N)}$.
This leads to the bound
\begin{align*}
	&\min_{\mathrm{Alg}}\max_{\bm\ell_{1:T}\in \prod_{j=1}^T\{\pm \sigma_j\}}\mathrm{Regret}_\epsilon(T) \\
	\geq& \min_{\mathrm{Alg}} \E_{\bm\ell_{j}\sim \mathrm{Uniform}(\{-\sigma_j,\sigma_j\})^N }\left[\mathrm{Regret}_\epsilon(T) \right] \\
	=& \min_{\mathrm{Alg}} \E_{\bm\ell_{j}\sim \mathrm{Uniform}(\{-\sigma_j,\sigma_j\})^N }\left[  \sum_{j=1}^T\langle \bm \ell_j, \bm p_j\rangle  -  \left[\sum_{j=1}^T\bm \ell_j\right]_{((1-\epsilon)N)}\right]\\
	=&\min_{\mathrm{Alg}}   \E_{\bm\ell_{j}\sim \mathrm{Uniform}(\{-\sigma_j,\sigma_j\})^N }\left[  \sum_{j=1}^T\langle \bm \ell_j, \bm p_j\rangle \right] -   \E_{\bm\ell_{j}\sim \mathrm{Uniform}(\{-\sigma_j,\sigma_j\})^N }\left[\left[\sum_{j=1}^T\bm \ell_j\right]_{((1-\epsilon)N)}\right]
\intertext{Since the sequence $\big( \sum_{j=1}^T\langle \bm \ell_j, \bm p_j\rangle\big)_{t \geq 0}$ is a martingale, the first term disappears and we are left with}
	=&  \E_{\bm\ell_{j}\sim \mathrm{Uniform}(\{-\sigma_j,\sigma_j\})^N }\left[\left[\sum_{j=1}^T\bm \ell_j\right]_{(N\epsilon)}\right] 
\end{align*}
since the distribution on $\sum_{j=1}^T {\bm \ell}_j$ is symmetric.
Define $\epsilon_0 = e^{-23}$.
We will show that for any $\epsilon \in (0, \epsilon_0)$, with $T=T(\epsilon)$ as in \eqref{eq:chooseTeps} and $N=N(\epsilon)$ as in \eqref{eq:chooseNeps},
\[ \E_{\bm\ell_{j}\sim \mathrm{Uniform}(\{-\sigma_j,\sigma_j\})^N }\left[\left[\sum_{j=1}^T\bm \ell_j\right]_{(N\epsilon)}\right] ~\geq~ 2Q \Big( \sqrt{2\log\smallfrac{1}{\epsilon}} -  6
\Big),
\]
which proves the theorem.

For each coordinate $i \in [N]$, define \[
X_i := \sum_{j=1}^T \ell_{j, i},
\]
where $\ell_{j, i} \in \{-\sigma_j, \sigma_j\}$.
By construction, $\{X_i\}_{i=1}^N$ are independent and satisfy $E[X_i] = 0$.
Our main objective is to prove that 
$\E\!\left[X_{(N\epsilon)}\right] = \Omega(Q\sqrt{\log(1/\epsilon)})$, which we will do in \eqref{eq:lbEX} and \eqref{eq:EXlastbound}.

Throughout this section, let $\Phi$ and $\phi$ denote the cumulative distribution function and probability density function of a standard normal random variable, respectively.
For convenience, define \(Q := \frac{1}{2}\sqrt{\sum_{j=1}^T \sigma_j^2}\). 
We then have the following lemma.

\begin{lemma}
\label{lem:Berry-Esseen}
    Fix any $i\in[N]$. Let $X_i$ be defined as above, where
$\{\ell_{j,i}\}_{j=1}^T$ are independent, mean-zero random variables satisfying
$\Pr(\ell_{j,i}=\sigma_j)=\Pr(\ell_{j,i}=-\sigma_j)=1/2$.
Recall that $\Phi$ denotes the cumulative distribution function of a standard normal random variable.
Then there exists an absolute constant $0 < C_0 < 1$ such that, letting $\delta := BC_0/(2Q)$, for all $x \in \mathbb{R}$,
\begin{equation}
\label{eq:Berry-Esseen}
\Pr\left[X_i \leq 2Qx\right] \leq \Phi(x) + \delta.
\end{equation}

\end{lemma}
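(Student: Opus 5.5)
This is a direct application of the Berry--Esseen theorem for sums of independent, non-identically distributed summands. The plan is to normalize $X_i$ and control the third absolute moments by the second moments times $B$.

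\textbf{Step 1 (normalization).} Write $X_i=\sum_{j=1}^T \ell_{j,i}$. The summands are independent and mean zero, with $\E[\ell_{j,i}^2]=\sigma_j^2$ and $\E|\ell_{j,i}|^3=\sigma_j^3$. The standard deviation of $X_i$ is $s_T:=\sqrt{\sum_j\sigma_j^2}=2Q$, by the definition of $Q$. Hence $\Pr[X_i\le 2Qx]=\Pr[X_i/s_T\le x]$.

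\textbf{Step 2 (Berry--Esseen).} Invoke the non-i.i.d.\ Berry--Esseen theorem (Esseen's version). It states that for an absolute constant $C_1$, which is known to satisfy $C_1\le 0.56$,
\[
\sup_x\Bigl|\Pr[X_i/s_T\le x]-\Phi(x)\Bigr|\le C_1\,\frac{\sum_{j}\E|\ell_{j,i}|^3}{s_T^3}.
\]

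\textbf{Step 3 (bounding the Lyapunov ratio).} Use $\sigma_j\le B/2$ to get $\sum_j\sigma_j^3\le \frac{B}{2}\sum_j\sigma_j^2=\frac B2 s_T^2$. The ratio is therefore at most $\frac{B}{2s_T}=\frac{B}{4Q}$. This gives
\[
\Pr[X_i\le 2Qx]\le \Phi(x)+\frac{C_1 B}{4Q}.
\]
Setting $C_0:=C_1/2<1$ yields exactly $\delta=BC_0/(2Q)$, which is the claimed bound~\eqref{eq:Berry-Esseen}. Note that only the one-sided inequality is required.

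There is no real obstacle here. The only point needing care is citing a version of Berry--Esseen that holds for non-identically distributed summands, with an explicit constant below $2$ so that $C_0<1$. Any constant $C_1<2$ works: both the classical bound $C_1\le 0.5600$ (Shevtsova) and older bounds such as $0.7975$ qualify. If one prefers a self-contained route, the same inequality with some absolute constant follows from Esseen's smoothing inequality. That route would, however, lose the explicit requirement $C_0<1$, so citing the sharp constant is the plan.
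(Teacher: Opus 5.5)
Your proposal is correct and matches the paper's proof: normalize by $\sqrt{\sum_j\sigma_j^2}=2Q$, apply Shevtsova's non-i.i.d.\ Berry--Esseen bound, and use $\sum_j\sigma_j^3\le (B/2)\sum_j\sigma_j^2$. The only cosmetic difference is in the constant. The paper keeps $C_0$ as the Berry--Esseen constant and relaxes $C_0B/(4Q)$ to $C_0B/(2Q)$, while you absorb the factor $2$ by setting $C_0:=C_1/2$; both give the stated inequality.
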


\begin{proof}
    Let $F$ denote the cumulative distribution function of $X_i$, where \[
F(x) := \Pr[X_i \leq x].
\]

For each $j \in [T]$, $\ell_{j, i}$ takes values $\pm \sigma_j$ with equal probability.
Then a direct calculation yields \[
\hat{\sigma}_j^2 := \Exp[\ell_{j, i}^2] = \frac{1}{2}(-\sigma_j)^2 + \frac{1}{2}(\sigma_j)^2 = \sigma_j^2,
\qquad
\rho_j := \Exp|\ell_{j, i}|^3 = \frac{1}{2}|\sigma_j|^3 + \frac{1}{2}|\sigma_j|^3 = |\sigma_j|^3.
\]

Define \[
S_T := \frac{\sum_{j=1}^T \ell_{j, i}}{\sqrt{\sum_{j=1}^T \hat{\sigma}_j^2}}.
\]

Let $F_T$ denote the cumulative distribution function of $S_T$, and recall that $\Phi$ denotes the standard normal cumulative distribution function.
By the Berry--Esseen theorem
\citep{shevtsova2010improvement}, there exists an absolute constant $0 < C_0 < 1$ such that
\[
\sup_{x \in \R} |F_T(x) - \Phi(x)| \leq C_0 \cdot \left(\sum_{j=1}^T \hat{\sigma}_j^2\right)^{-3/2} \cdot \left(\sum_{j=1}^T \rho_j\right).
\]
Substituting the above expressions yields \begin{equation}
\label{eq:BEineq}
\sup_{x \in \R} |\Pr[S_T \leq x] - \Phi(x)| \leq C_0 \cdot \left(\sum_{j=1}^T \sigma_j^2\right)^{-3/2} \cdot \left(\sum_{j=1}^T |\sigma_j|^3\right).
\end{equation}

Since $\sigma_j \leq B/2$ for all $j \in [T]$, we can bound
\[
\sum_{j=1}^T |\sigma_j|^3 \leq (\max_j |\sigma_j|) \sum_{j=1}^T \sigma_j^2 \leq (B/2) \sum_{j=1}^T \sigma_j^2.
\]
Substituting this bound into \eqref{eq:BEineq} and recalling \(Q=\frac{1}{2}\sqrt{\sum_{j=1}^T\sigma_j^2}\), we obtain \[
\sup_{x \in \R} |\Pr[S_T \leq x] - \Phi(x)| \leq \frac{BC_0}{2Q}.
\]

Let \(\delta := \frac{BC_0}{2Q}\). Then for all \(x\in\mathbb{R}\),
\[
\Pr[S_T \le x] \le \Phi(x) + \delta.
\]
Moreover, since \(S_T = X_i/(2Q)\), we have \(\Pr[X_i \le 2Qx] = \Pr[S_T \le x]\), and hence
\begin{equation*}
\Pr\left[X_i \leq 2Qx\right] \leq \Phi(x) + \delta.
\end{equation*}

\end{proof}

Recall that we assume \(\epsilon < \epsilon_0\). Since \(\sum_{j=1}^T \sigma_j^2 \to \infty\) as \(T\to\infty\) and \(\delta = \frac{BC_0}{2Q}\) (hence $\delta=\delta(T)$), we have $\delta\to 0$ as $T\to\infty$.
Therefore, there exists $T(\epsilon) \in \mathbb{N}$ such that
\begin{equation}
\label{eq:chooseTeps}    
\delta(T(\epsilon)) < \epsilon
\qquad \text{and} \qquad
\epsilon + 2\delta(T(\epsilon)) \le \epsilon_0.
\end{equation}

\begin{lemma}
    \label{lem:constructL}
    Recall that $\Phi$ denotes the cumulative distribution function of a standard normal random variable.
    Define 
    \begin{equation}
    \label{eq:defL}
L(\alpha) = \begin{cases}
        2Q \Phi^{-1}(1-\epsilon-\delta-\alpha)~&\text{if $\alpha \in [0, 1/2 - (\epsilon+\delta))$,}\\
        4Q \Phi^{-1}(1-\epsilon-\delta-\alpha)~&\text{if $\alpha \in [1/2 - (\epsilon+\delta), 1 - 2(\epsilon + \delta))$,}\\
        -2\sqrt{2}Q\sqrt{\log\left(\frac{1}{1-\epsilon-\alpha}\right)}~&\text{if $\alpha \in [1 - 2(\epsilon+\delta), 1 - \epsilon)$.}
    \end{cases}
\end{equation}
Then for $\alpha \in [0, 1-\epsilon)$, we have \begin{equation}
\label{eq:MedianTails}
\Pr[X_i \leq L(\alpha)] \leq 1 - \epsilon - \alpha.
\end{equation}
\end{lemma}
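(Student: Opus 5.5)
The plan is to verify \eqref{eq:MedianTails} separately on each of the three pieces of \eqref{eq:defL}. The first two pieces will follow from the Berry--Esseen estimate of Lemma~\ref{lem:Berry-Esseen}. The third piece, deep in the lower tail where the additive error $\delta$ would swamp the target probability, will follow from a sub-Gaussian (Hoeffding) tail bound instead.

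\emph{First piece}, $\alpha \in [0, 1/2-(\epsilon+\delta))$. Here $u := 1-\epsilon-\delta-\alpha \in (1/2, 1-\epsilon-\delta]$, which lies in $(0,1)$, so $x := \Phi^{-1}(u)$ is well defined. Applying \eqref{eq:Berry-Esseen} at this $x$ gives
\[
\Pr[X_i \le L(\alpha)] = \Pr[X_i \le 2Qx] \le \Phi(x) + \delta = 1-\epsilon-\alpha .
\]

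\emph{Second piece}, $\alpha \in [1/2-(\epsilon+\delta), 1-2(\epsilon+\delta))$. Now $u = 1-\epsilon-\delta-\alpha \in (\epsilon+\delta, 1/2]$. The lower end is positive, so $x = \Phi^{-1}(u)$ is again well defined, and $u \le 1/2$ gives $x \le 0$. Hence $4Qx \le 2Qx$. By monotonicity of the distribution function and \eqref{eq:Berry-Esseen},
\[
\Pr[X_i \le 4Qx] \le \Pr[X_i \le 2Qx] \le \Phi(x) + \delta = 1-\epsilon-\alpha .
\]
The factor $4Q$ in this piece is only a convenience for later use; the nonpositivity of $x$ is what makes it harmless here.

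\emph{Third piece}, $\alpha \in [1-2(\epsilon+\delta), 1-\epsilon)$. Set $\beta := 1-\epsilon-\alpha$. Then $\beta \in (0, \epsilon+2\delta]$, and by \eqref{eq:chooseTeps} this gives $\beta \le \epsilon_0 < 1$. So $\log(1/\beta) > 0$ and $s := 2\sqrt{2}\,Q\sqrt{\log(1/\beta)} > 0$ is well defined, with $L(\alpha) = -s$. Since $X_i = \sum_j \ell_{j,i}$ is a sum of independent, mean-zero summands with $\ell_{j,i} \in [-\sigma_j, \sigma_j]$, Hoeffding's inequality yields
\[
\Pr[X_i \le -s] \le \exp\!\Big(-\frac{2s^2}{\sum_j (2\sigma_j)^2}\Big) = \exp\!\Big(-\frac{s^2}{2\sum_j \sigma_j^2}\Big) = \exp\!\Big(-\frac{s^2}{8Q^2}\Big),
\]
using $Q = \frac12\sqrt{\sum_j\sigma_j^2}$. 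Substituting $s$ gives $s^2/(8Q^2) = \log(1/\beta)$, so the bound equals $\beta = 1-\epsilon-\alpha$, exactly as required.

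The main points to check are that each argument of $\Phi^{-1}$ lies in $(0,1)$ and that the logarithm in the third piece is positive. Both follow from the interval endpoints together with \eqref{eq:chooseTeps}. The only substantive choice is to switch from Berry--Esseen to Hoeffding in the far tail, and the constant $2\sqrt{2}Q$ in \eqref{eq:defL} is precisely tuned to make the Hoeffding bound come out with equality.
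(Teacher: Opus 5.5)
Your proposal is correct and follows essentially the same route as the paper: Berry--Esseen on the first two pieces (using $x\le 0$ on the second), and Hoeffding on the third, where the constant $2\sqrt{2}Q$ makes the tail bound exactly $1-\epsilon-\alpha$. The only cosmetic difference is on the second piece: you use monotonicity of the distribution function of $X_i$ (since $4Qx\le 2Qx$), whereas the paper applies the Berry--Esseen bound at $2x$ and then uses $\Phi(2x)\le\Phi(x)$; your check that the arguments of $\Phi^{-1}$ and the logarithm are well defined makes explicit what the paper leaves implicit.
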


\begin{proof}
By \eqref{eq:Berry-Esseen}, we have \[
\Pr[X_i \leq 2Qx] \leq \Phi(x) + \delta.
\]

Moreover, for \(x\le 0\), applying \eqref{eq:Berry-Esseen} with \(2x\) in place of \(x\) yields
\[
\Pr[X_i \le 4Qx] \le \Phi(2x) + \delta \le \Phi(x) + \delta,
\]
where the last inequality uses the monotonicity of \(\Phi\) and the fact that \(2x\le x\).

We also invoke Hoeffding's inequality \cite[Theorem 2.2.2]{vershynin2018high}. Since
\(\ell_{j,i}\in[-\sigma_j,\sigma_j]\) and \(\mathbb{E}[\ell_{j,i}]=0\), for any \(s>0\),
\[
\Pr[X_i \leq -s] \leq \exp \left(-\frac{s^2}{2\sum_{j=1}^T\sigma_j^2}\right) = \exp\left(-s^2/(8Q^2)\right).
\]

Combining the above bounds with the definition of \(L(\alpha)\), we obtain
\begin{equation*}
\Pr[X_i \leq L(\alpha)] \leq 1 - \epsilon - \alpha.
\end{equation*}
\end{proof}

\begin{theorem}
Let $X_1,\ldots,X_N$ be independent random variables that satisfy \eqref{eq:MedianTails}.
Then, there exists 
\begin{equation}
\label{eq:defg}
    g(\alpha) = \begin{cases}
        \exp(-2\alpha^2 N)~&\text{if $\alpha \in [0, 1-2(\epsilon+\delta))$},\\
        (2(1-\epsilon-\alpha))^{N/8}~&\text{if $\alpha \in [1-2(\epsilon+\delta), 1-\epsilon)$}.
    \end{cases} 
\end{equation}
such that
for all $0 \leq \alpha < 1-\epsilon$, letting $L(\alpha)$ be as defined in Lemma~\ref{lem:constructL}, we have
\[
\Pr[\, X_{(N\epsilon)} \!\leq\! L(\alpha) \,]
    ~\leq~
        g(\alpha). \]
\end{theorem}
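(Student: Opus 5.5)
The proof reduces the order-statistic event to a binomial tail. Fix $\alpha\in[0,1-\epsilon)$ and set $\pi:=\Pr[X_i\le L(\alpha)]$, which by \eqref{eq:MedianTails} satisfies $\pi\le 1-\epsilon-\alpha$. Let $Y:=\#\{i: X_i\le L(\alpha)\}$, a sum of $N$ independent Bernoulli($\pi$) indicators, so $Y$ is stochastically dominated by $\mathrm{Bin}(N,1-\epsilon-\alpha)$.

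If $X_{(N\epsilon)}\le L(\alpha)$, fewer than $N\epsilon$ coordinates lie strictly above $L(\alpha)$. Hence at least $N-N\epsilon+1 > N(1-\epsilon)$ coordinates are $\le L(\alpha)$, and (up to the floor convention) $\{X_{(N\epsilon)}\le L(\alpha)\}\subseteq\{Y\ge N(1-\epsilon)\}$. It therefore suffices to bound $\Pr[\mathrm{Bin}(N,p)\ge N(1-\epsilon)]$ with $p:=1-\epsilon-\alpha$. The required deviation above the mean is exactly $N\alpha$.

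\emph{Case 1:} $\alpha\in[0,1-2(\epsilon+\delta))$. Hoeffding's inequality for bounded independent summands (as already invoked in Lemma~\ref{lem:constructL}, \cite{vershynin2018high}) gives $\Pr[Y-N\pi\ge N\alpha]\le \exp(-2N\alpha^2)$. Since $\pi\le p$, the event $Y\ge N(1-\epsilon)$ implies $Y-N\pi\ge N\alpha$. This is the first branch of $g$.

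\emph{Case 2:} $\alpha\in[1-2(\epsilon+\delta),1-\epsilon)$. Here $p\le 2(\epsilon+\delta)\le 2\epsilon_0$ is tiny, and Hoeffding is too weak, so I would use the Chernoff bound in relative-entropy form: $\Pr[\mathrm{Bin}(N,p)\ge Nq]\le \exp(-N\,\mathrm{KL}(q\|p))$ with $q=1-\epsilon$. Writing $\mathrm{KL}(q\|p)=q\log(q/p)+(1-q)\log\frac{1-q}{1-p}\ge q\log(1/p)+q\log q+(1-q)\log(1-q)$, we get $\mathrm{KL}\ge q\log(1/p)-\log 2$. Thus
\[
\Pr\le \exp\bigl(-Nq\log(1/p)+N\log 2\bigr)=(2^{1/q}p)^{Nq}.
\]
We need this to be at most $(2p)^{N/8}$. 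Since $q\ge 1-\epsilon_0$, the base $2^{1/q}p\le 2^{1.01}p<1$, and the exponent $Nq\ge N/8$ can be traded for a smaller one.

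The main obstacle is matching the constants exactly to $(2(1-\epsilon-\alpha))^{N/8}$. We must check that $(2^{1/q}p)^{q}\le (2p)^{1/8}$, i.e.\ $p^{q-1/8}\le 2^{1/8-1}$. This holds since $p\le 2\epsilon_0=2e^{-23}$ and $q-1/8\ge 0.8$, so $p^{0.8}\le 2^{-7/8}$ with a large margin. If the domination step above needs care because $\pi$ differs per coordinate, I would note that the Chernoff and Hoeffding bounds hold verbatim for independent Bernoullis with means at most $p$, via monotone coupling. Defining $g$ by \eqref{eq:defg} then completes the statement.
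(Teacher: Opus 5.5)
Your proof is correct. The reduction to counting $Z=\sum_i \mathbf{1}\{X_i\le L(\alpha)\}$ matches the paper's, and so does the Hoeffding branch; the second branch takes a genuinely different route.

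\textbf{The paper's second branch.} It applies the multiplicative Chernoff bound $\Pr[Z\ge(1+\beta)\mu]\le\exp(-(1+\beta)\ln(1+\beta)\mu/4)$ with $\mu=N(1-\epsilon-\alpha)$ and $\beta=\alpha/(1-\epsilon-\alpha)\ge 1$. It weakens this to $(1/\beta)^{\alpha N/4}$, then uses $\alpha\ge 1/2$ to reach $\bigl((1-\epsilon-\alpha)/\alpha\bigr)^{N/8}\le(2(1-\epsilon-\alpha))^{N/8}$. This needs only $\epsilon+2\delta\le 1/4$.

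\textbf{Your second branch.} You use the relative-entropy Chernoff bound together with $q\log q+(1-q)\log(1-q)\ge-\log 2$. This gives the much stronger intermediate bound $2^N p^{Nq}$, so the target exponent $N/8$ is mostly slack. The one place you spend the smallness of $p$ is the final comparison $p^{q-1/8}\le 2^{-7/8}$. Your use of $p\le\epsilon+2\delta\le\epsilon_0$, from \eqref{eq:chooseTeps}, is far more than needed: $p\le 1/4$ and $q\ge 3/4$ already suffice.

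\textbf{What each buys.} Your version is more self-contained: it avoids a Chernoff variant with a specific constant and the auxiliary condition $\beta\ge 1$, and it shows how loose the $N/8$ exponent is. The paper's version is a quicker citation of a standard inequality.

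\textbf{One small point.} Your aside about simply lowering the exponent from $Nq$ to $N/8$ would leave the base $2^{1/q}p$ rather than $2p$. The explicit check $(2^{1/q}p)^q\le(2p)^{1/8}$ that you then carry out is the step that actually closes the argument, and you did it correctly.
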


\begin{proof}
    Let $Z = \sum_{i=1}^N \Indicator{X_i \leq L}$.
    The event of interest may be restated as follows.    
    \begin{align*}
    \myset{\: X_{(N\epsilon)} \!\leq\! L(\alpha) \:}
    ~=~ \myset{\: \card{\setst{ i \in [N]}{ X_i \leq L(\alpha) }} \geq N-\floor{N\epsilon} \:}
    ~=~ \myset{\: Z \geq N-\floor{N\epsilon}\:}
    \end{align*}
    Define $\mu = N(1-\epsilon - \alpha)$.
    By \eqref{eq:MedianTails}, $\Pr[X_i \leq L(\alpha)] \leq 1-\epsilon - \alpha$, so $ \E[Z] ~\leq~ \mu$.
    By a Hoeffding bound, we have
    \begin{align*}
    \prob{ Z \geq N-\floor{N\epsilon} }
     ~\leq~ \prob{ Z \geq (1-\eps)N }
     ~\leq~ \prob{ Z \geq \mu + \alpha N }
     ~\leq~ \exp(-2\alpha^2 N).
    \end{align*}
    This bound is valid for all $\alpha$, but loose as $\alpha \rightarrow 1-\epsilon$.

    Let us now focus on the case $\alpha \geq 1-2(\epsilon+\delta)$.
    We will use a Chernoff bound, stated in the form
    \[
    \Pr[ Z \geq (1+\beta)\mu ]
        ~\leq~ \exp\big( - (1+\beta) \ln(1+\beta) \mu/4 \big)
        \qquad\forall\beta \geq 1.
    \]
    We will apply this with 
    \begin{equation}
    \label{eq:deltaLB}
    \beta = \frac{\alpha N}{\mu} 
     = \frac{\alpha}{1-\epsilon-\alpha}
     > \frac{1-(2\epsilon+\delta)}{\epsilon+2\delta}
     \geq \frac{1}{\epsilon+2\delta} - 2
     \geq 1,
    \end{equation}
    since $\epsilon + 2\delta \leq 1/4$.
    Observe that $\alpha > 1 - 2( \epsilon + \delta) \geq 1/2$, since $\epsilon+\delta \leq 1/4$.
    This yields
    \begin{alignat*}{2}
    \prob{ Z \geq N-\floor{N\epsilon} }
        &~\leq~ \Pr[ Z \geq \mu + \alpha N ] \\
        &~=~ \prob{ Z \geq (1+\beta)\mu } \\
        &~\leq~ \exp\big( - \beta \ln(\beta) \mu/4 \big)
        &&\qquad\text{(Chernoff bound)}
\\
       &~=~ (1/\beta)^{\beta \mu/4} 
       ~=~ (1/\beta)^{\alpha N/4}
               &&\qquad\text{(definition of $\beta$)} \\
       &~\leq~ (1/\beta)^{N/8}
           &&\qquad\text{(since $\beta \geq 1$ and $\alpha \geq 1/2$)}\\
       &~=~ \Big(\frac{1-\eps-\alpha}{\alpha}\Big)^{N/8} 
           &&\qquad\text{(by \eqref{eq:deltaLB})} \\
       &~\leq~ \Big(2(1-\eps-\alpha)\Big)^{N/8}
           &&\qquad\text{(since $\alpha \geq 1/2$).}~
    \end{alignat*}
\end{proof}

\begin{lemma}
\label{lem:partialintegral}
Suppose that $M$ is a random variable satisfying 
\[
\prob{ M < L(\alpha) } ~\leq~ g(\alpha) \quad\forall \alpha \in [0,\umax),
\]
where $\umax<\infty$ and $L,g : [0,\umax) \rightarrow \bR$ have the following properties.
We subdivide the interval $[0,\umax)$ using
values $0 = u_0 < u_1 < \cdots < u_k = \umax$, for some integer $k \geq 1$.
We require:
\begin{itemize}
\item $L$ is differentiable on $(u_{i-1},u_i)$
for all $i \in [k]$,
\item $L'(\alpha)<0$ for all $\alpha \in (u_{i-1},u_i)$,
for all $i \in [k]$
\item the left and right limits satisfy $L(u_i^-) \leq L(u_i^+)$ for all $i \in [k-1]$,
\item $\lim_{\alpha \to \umax^-}g(\alpha) = 0$.
\end{itemize}
Under the above conditions, we have
\[\expect{M} \geq 
L(0)-
\sum_{i=1}^k
\int_{u_{i-1}}^{u_i} g(\alpha)
\big(-L'(\alpha)\big) \,d\alpha.
\]
\end{lemma}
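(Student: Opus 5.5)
The plan is to use the layer-cake representation of $\expect{M}$ and to reparametrize the level $y$ by $\alpha$ through $y=L(\alpha)$. Since $L$ is strictly decreasing on each piece and only jumps upward at the breakpoints, the map $\alpha\mapsto L(\alpha)$ sends $[0,\umax)$ onto a subset of $(-\infty,L(0)]$. The gaps created at jumps are harmless, because there the tail probability is bounded by $1$ and those levels enter with a favorable sign, as explained below.

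First, I would write
\[
M \ge L(0) - (L(0)-M)_+ ,
\]
so that $\expect{M}\ge L(0)-\expect{(L(0)-M)_+}$. I would then use the layer-cake identity
\[
\expect{(L(0)-M)_+}=\int_{-\infty}^{L(0)}\prob{M<y}\,dy .
\]
Strictly, the distinction between $<$ and $\le$ only matters on a null set of $y$, which is fine. If $M$ might fail to be integrable from below, the inequality still holds in the extended sense, since the right-hand side then equals $-\infty$ only when the integral diverges.

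Second, I would split the range of $y$ according to the image of each piece. On $(u_{i-1},u_i)$, $L$ is a strictly decreasing differentiable bijection onto the interval $(L(u_i^-),L(u_{i-1}^+))$. The change of variables $y=L(\alpha)$, with $dy=L'(\alpha)\,d\alpha$, together with the hypothesis $\prob{M<L(\alpha)}\le g(\alpha)$, gives
\[
\int_{L(u_i^-)}^{L(u_{i-1}^+)}\prob{M<y}\,dy \le \int_{u_{i-1}}^{u_i} g(\alpha)\bigl(-L'(\alpha)\bigr)\,d\alpha .
\]
Because $L(u_i^-)\le L(u_i^+)$, consecutive image intervals overlap or touch rather than leave gaps. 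Their union therefore covers $(\lim_{\alpha\to\umax^-}L(\alpha),\,L(0)]$, possibly with overlaps, which only helps an upper bound since the integrand is nonnegative.

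The main obstacle is the remaining region $y<\inf L=\lim_{\alpha\to\umax^-}L(\alpha)$. I would handle it with the last hypothesis. For any $y$ below all values of $L$ and any $\alpha<\umax$, monotonicity of the tail gives $\prob{M<y}\le\prob{M<L(\alpha)}\le g(\alpha)$. Letting $\alpha\to\umax^-$ shows $\prob{M<y}=0$, so the region contributes nothing. This argument covers both the case where $L$ tends to $-\infty$ and the case where it has a finite limit.

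Summing the per-piece bounds and substituting into the first step yields the claimed inequality. Finally, I would check that the improper integrals near the endpoints $u_i$ are justified by monotone convergence, which holds because the integrands are nonnegative.
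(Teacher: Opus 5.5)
Your argument is the paper's proof, step for step. Both use the bound $\expect{M}\ge L(0)-\int_{-\infty}^{L(0)}\prob{M<y}\,dy$, then the change of variables $y=L(\alpha)$ on each piece. Both use $L(u_i^-)\le L(u_i^+)$ to make consecutive image intervals overlap, and both use $g\to 0$ to show $\prob{M<y}=0$ below $\lim_{\alpha\to\umax^-}L(\alpha)$.

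One caveat applies to your argument and the paper's equally. The union of the image intervals only reaches up to $L(0^+)$, not $L(0)$. So the covering step tacitly needs $L(0)\le L(0^+)$, for example right-continuity of $L$ at $0$. This is not among the listed hypotheses, and without it the statement is false. Take $k=1$, $\umax=1$, $L(0)=1$, $L(\alpha)=-\alpha$ on $(0,1)$, $g(\alpha)=1-\alpha$ and $M\equiv 0$. All hypotheses hold, yet the right-hand side equals $1/2>0=\expect{M}$. The paper's proof makes the same assumption when it replaces $\int_{-\infty}^{L(0)}$ by $\int_{\mathcal{I}}$. In the application $L$ is continuous at $0$, so nothing downstream is affected.

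Two smaller slips are harmless. First, $\inf L$ need not equal $\lim_{\alpha\to\umax^-}L(\alpha)$, since an earlier piece may dip lower. Your vanishing argument only needs $y$ below that limit with $\alpha$ taken in the last piece, so it still works. Second, your opening claim that $L$ maps into $(-\infty,L(0)]$ is false, because upward jumps can exceed $L(0)$. This only enlarges the upper bound, as you yourself note later.
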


\begin{proof}
Define the intervals $\cI_i = \Union_{q \searrow 0} (L(u_i-q),L(u_{i-1}+q))$ and $\cI = \Union_i \cI_i$.
For any $M$ we have
\begin{align*}
\expect{M}
~\geq~ L(0)- \int_{-\infty}^{L(0)} \prob{ M < y } \,dy 
~=~ L(0)- \int_{\cI} \prob{ M < y } \,dy,
\end{align*}
since $\lim_{\alpha \to \umax^-} \prob{M < L(\alpha)} \leq \lim_{\alpha \to \umax^-} g(\umax) = 0$.
Then, using $L(u_i^-) \leq L(u_i^+)$, we get
\[
\expect{M}
~\geq~ L(0)- \sum_{i=1}^k \int_{\cI_i} \prob{ M < y } \,dy.
\]
By a change of variables, we obtain
\[
\expect{M}
~\geq~ L(0)-
\sum_{i=1}^k
\int_{u_{i-1}}^{u_i} \prob{ M < L(\alpha) } \big(-L'(\alpha)\big) \,d\alpha.
\]
Since $-L'(\alpha) \geq 0$, we may replace $\prob{ M < L(\alpha) }$ with the upper bound $g(\alpha)$, which is the claimed inequality.
\end{proof}

Let \(L\) and \(g\) be as defined in \eqref{eq:defL} and \eqref{eq:defg}, respectively.
Since \(\Phi^{-1}(1/2)=0\), it follows from \eqref{eq:defL} that
\[
L((1/2 - (\epsilon+\delta))^-) \leq L((1/2 - (\epsilon+\delta))^+).
\]
Applying Lemma~\ref{lem:lower_bound_of_normal_quantile}, we obtain \[
\Phi^{-1}(\epsilon + \delta) \leq -\sqrt{\frac{\log\left(\frac{1}{\epsilon + \delta}\right)}{2}}.
\]
Since \(\delta\ge 0\), we have \(\epsilon+\delta \le \epsilon+2\delta\), and hence
\[
-\sqrt{\log\left(\frac{1}{\epsilon + \delta}\right)} \leq - \sqrt{\log\left(\frac{1}{\epsilon+2\delta}\right)},
\]
Then it follows again from \eqref{eq:defL} that
\[
L((1 - 2(\epsilon + \delta))^-) \leq L((1 - 2(\epsilon + \delta))^+).
\]

Additionally, by \eqref{eq:defL}, for all $\alpha\in(0,1-\epsilon)$ except at the breakpoints
$\alpha = 1/2-(\epsilon+\delta)$ and $\alpha = 1-2(\epsilon+\delta)$, where $\Phi$ and $\phi$ denote the cumulative distribution function and probability density function of a standard normal random variable, respectively, we have 
\begin{equation}
    -L'(\alpha) = \begin{cases}
        2Q \cdot \frac{1}{\phi(\Phi^{-1}(1-\epsilon-\delta-\alpha))}~&\text{if $\alpha \in (0, 1/2 - (\epsilon+\delta))$},\\
        4Q \cdot \frac{1}{\phi(\Phi^{-1}(1-\epsilon-\delta-\alpha))}~&\text{if $\alpha \in (1/2 - (\epsilon+\delta), 1 - 2(\epsilon + \delta))$},\\
        \sqrt{2}Q\frac{1}{\sqrt{\log\frac{1}{1-\epsilon-\alpha}}(1-\epsilon-\alpha)}~&\text{if $\alpha \in (1-2(\epsilon+\delta), 1-\epsilon)$}.
        \end{cases}
\end{equation}
In particular, \(L'(\alpha)<0\) for all \(\alpha\in(0,1-\epsilon)\) where the derivative exists.
Moreover, by the definition of function $g$ as stated in \eqref{eq:defg}, we have \[
\lim_{\alpha \to (1-\epsilon)^-}g(\alpha) = 0.
\]

Therefore, \(L\) and \(g\) defined in \eqref{eq:defL} and \eqref{eq:defg} satisfy all the conditions in Lemma~\ref{lem:partialintegral}.
Applying Lemma~\ref{lem:partialintegral} with \(M := X_{(N\epsilon)}\) yields 
\begin{equation}
\label{eq:lbEX}
\begin{aligned}
\E\!\left[X_{(N\epsilon)}\right]
\ge\;& L(0)
 - \int_{0}^{\frac{1}{2} - (\epsilon+\delta)} g(\alpha)\bigl(-L'(\alpha)\bigr)\, d\alpha \\
& - \int_{\frac{1}{2} - (\epsilon+\delta)}^{1-2(\epsilon + \delta)} g(\alpha)\bigl(-L'(\alpha)\bigr)\, d\alpha \\
& - \int_{1-2(\epsilon + \delta)}^{1-\epsilon} g(\alpha)\bigl(-L'(\alpha)\bigr)\, d\alpha .
\end{aligned}
\end{equation}

\subsection{Bound on $L(0)$}

By \eqref{eq:defL}, we have $L(0) = 2Q \Phi^{-1}(1-\epsilon-\delta)$.
Since $\delta + \epsilon \leq \epsilon_0$, we get 
\begin{align}\nonumber
L(0)
~=~ 2Q\cdot \Phi^{-1}(1-\epsilon-\delta)
\tag{by Lemma~\ref{lem:gaussiantail}}
&~\geq~ 2Q \cdot \Big( \sqrt{2\ln(\smallfrac{1}{\epsilon+\delta})} - \sqrt{2} \Big)
\\
&~\geq~ 2Q \cdot \Big( \sqrt{2\ln(\smallfrac{1}{\epsilon})- 2 \ln 2} - \sqrt{2} \Big)
\tag{since $\delta \leq \epsilon$} \\
&~\geq~ 2Q \cdot \Big( \sqrt{2\ln(\smallfrac{1}{\epsilon})} - 2 \Big),
\label{eq:L0LB}
\end{align}
since $\ln(1/\epsilon) > \ln(1/\epsilon_0) = 23$.

\subsection{Bound on the integrals}

We now rescale the integrals in \eqref{eq:lbEX} appropriately, then separately prove that
\begin{align*}
\frac{1}{2Q}\int_{0}^{\frac{1}{2} - (\epsilon+\delta)} g(\alpha)(-L'(\alpha)) d\alpha &~\leq~ 1 \\
\frac{1}{4Q}\int_{\frac{1}{2} - (\epsilon+\delta)}^{1-2(\epsilon + \delta)} g(\alpha)(-L'(\alpha)) d\alpha &~\leq~ 1 \\
\frac{1}{\sqrt{2}Q}\int_{1-2(\epsilon + \delta)}^{1-\epsilon} g(\alpha)(-L'(\alpha)) d\alpha \Big\} &~\leq~ 1.
\end{align*}

Recall that $\Phi$ and $\phi$ denote the cumulative distribution function and probability density function of a standard normal random variable, respectively.
We will require the inequality
\begin{align}
    \label{eq:boundonphiPhi}
    &\frac{1}{\phi(\Phi^{-1}(x))} \leq \frac{1}{c_1 x \sqrt{\log (1/x)}}
    \qquad\forall x\in(0,\tfrac12],
    \\
\text{where}\quad\nonumber
&c_1:=\sqrt{\tfrac12}.
\end{align}
This follows from Lemma~\ref{lem:Mills}.

\paragraph{First integral.}
It suffices to upper bound
\[
\int_{0}^{\frac{1}{2} - (\epsilon+\delta)}
\frac{g(\alpha)}{\phi(\Phi^{-1}(1-(\epsilon+\delta)-\alpha))}\, d\alpha,
\qquad
g(\alpha)=\exp(-2N\alpha^{2}).
\]

Observe that for $\alpha\in\bigl[0,\tfrac12-(\epsilon+\delta)\bigr)$ we have
$(\epsilon+\delta)+\alpha \le \tfrac12$.

Let $\Phi$ and $\phi$ denote the standard normal distribution function and density, respectively.
Using the identity $\Phi^{-1}(1-u)=-\Phi^{-1}(u)$ for $u\in(0,1)$ and the evenness of $\phi$, we obtain
\begin{equation}
\label{eq:changetherange}
    \phi(\Phi^{-1}(1-(\epsilon+\delta)-\alpha)) = \phi(\Phi^{-1}((\epsilon+\delta) + \alpha)).
\end{equation}

Combining \eqref{eq:changetherange}, \eqref{eq:boundonphiPhi}, and the definition $g(\alpha)=e^{-2N\alpha^2}$, we obtain
\begin{align*}
\int_{0}^{\frac12-(\epsilon+\delta)}
\frac{g(\alpha)}{\phi(\Phi^{-1}(1-(\epsilon+\delta)-\alpha))}\,d\alpha
&=
\int_{0}^{\frac12-(\epsilon+\delta)}
\frac{g(\alpha)}{\phi(\Phi^{-1}((\epsilon+\delta)+\alpha))}\,d\alpha \\
&\le
\int_{0}^{\frac12-(\epsilon+\delta)}
\frac{\exp(-2N\alpha^2)}{c_1\,((\epsilon+\delta)+\alpha)\sqrt{\log\!\frac{1}{(\epsilon+\delta)+\alpha}}}\,d\alpha .
\end{align*}
Since $(\epsilon+\delta)+\alpha\ge \epsilon+\delta$ and $(\epsilon+\delta)+\alpha\le \frac12$ on the integration range, we have
$\log\!\bigl(1/((\epsilon+\delta)+\alpha)\bigr)\ge \log 2$, hence
\begin{align*}
\int_{0}^{\frac12-(\epsilon+\delta)}
\frac{\exp(-2N\alpha^2)}{c_1\,((\epsilon+\delta)+\alpha)\sqrt{\log\!\frac{1}{(\epsilon+\delta)+\alpha}}}\,d\alpha
&\le
\frac{1}{c_1(\epsilon+\delta)\sqrt{\log 2}}
\int_{0}^{\frac12-(\epsilon+\delta)} \exp(-2N\alpha^2)\,d\alpha \\
&=
\frac{1}{c_1(\epsilon+\delta)\sqrt{\log 2}}
\cdot
\frac{\sqrt{\pi}}{2\sqrt{2N}}
\,
\erf\!\left(\sqrt{2N}\Bigl(\tfrac12-(\epsilon+\delta)\Bigr)\right) \\
&\le
\frac{1}{c_1(\epsilon+\delta)}\cdot \frac{1}{\sqrt{N}},
\end{align*}
where we used $\erf(x)\le 1$.
Consequently, it suffices to assume $N \ge (c_1 (\epsilon+\delta))^{-2}$
to ensure that the above bound is at most $1$.

\paragraph{Second integral.}
It suffices to upper bound
\[
\int_{\frac12-(\epsilon+\delta)}^{1-2(\epsilon+\delta)}
\frac{g(\alpha)}{\phi(\Phi^{-1}(1-(\epsilon+\delta)-\alpha))}\,d\alpha,
\qquad
g(\alpha)=\exp(-2N\alpha^{2}).
\]

Observe that for $\alpha \in \bigl[\tfrac12-(\epsilon+\delta),\, 1-2(\epsilon+\delta)\bigr]$ we have
$ 1-(\epsilon+\delta)-\alpha \leq \frac{1}{2}$.

Combining \eqref{eq:boundonphiPhi} with $g(\alpha)=\exp(-2N\alpha^2)$ yields
\[
\int_{\frac12-(\epsilon+\delta)}^{1-2(\epsilon+\delta)}
\frac{g(\alpha)}{\phi(\Phi^{-1}(1-(\epsilon+\delta)-\alpha))}d\alpha
\le\int_{\frac12-(\epsilon+\delta)}^{1-2(\epsilon+\delta)}
\frac{\exp(-2N\alpha^2)}{c_1\,(1-(\epsilon+\delta)-\alpha)\sqrt{\log\frac{1}{1-(\epsilon+\delta)-\alpha)}}} d\alpha.
\]
With the change of variables $x = 1-(\epsilon+\delta)-\alpha$ (so $dx=-d\alpha$), this becomes
\[
\int_{\frac12-(\epsilon+\delta)}^{1-2(\epsilon+\delta)}\frac{\exp(-2N\alpha^2)}{c_1\,(1-(\epsilon+\delta)-\alpha)\sqrt{\log\frac{1}{1-(\epsilon+\delta)-\alpha)}}}d\alpha=\int_{\epsilon+\delta}^{1/2}
\frac{\exp\!\bigl(-2N(1-(\epsilon+\delta)-x)^2\bigr)}{c_1\,x\sqrt{\log(1/x)}}\,dx.
\]
Since $\epsilon+\delta\le \tfrac14$, so that the function
$f(x):=1\big/\!\bigl(x\sqrt{\log(1/x)}\bigr)$ is decreasing on $x \in [(\epsilon+\delta), 1/2]$. Hence
\begin{align*}
\int_{\epsilon+\delta}^{1/2}
\frac{\exp\!\bigl(-2N(1-(\epsilon+\delta)-x)^2\bigr)}{c_1\,x\sqrt{\log(1/x)}}\,dx
&\le
\frac{1}{c_1(\epsilon+\delta)\sqrt{\log\!\bigl(1/(\epsilon+\delta)\bigr)}}
\int_{\epsilon+\delta}^{1/2} \exp\!\bigl(-2N(1-(\epsilon+\delta)-x)^2\bigr)\,dx\\
&= \frac{1}{c_1 (\epsilon+\delta) \sqrt{\log \frac{1}{(\epsilon+\delta)}}} \frac{\sqrt{\pi}}{2\sqrt{2N}}\\
    &\quad\cdot\left[-\erf\left[(2(\epsilon+\delta) - 1)\sqrt{2N}\right] + \erf\left(\frac{(2(\epsilon+\delta)-1)\sqrt{2N}}{2}\right)\right]\\
&\leq \frac{\sqrt{\pi}}{c_1 (\epsilon+\delta) \sqrt{2\log \frac{1}{(\epsilon+\delta)}}} \frac{1}{\sqrt{N}} \\
    &\leq \frac{\sqrt{\pi}}{c_1 (\epsilon+\delta) } \frac{1}{\sqrt{N}},
\end{align*}
where we used $|\erf(t)|\le 1$ and $\epsilon + \delta \leq 1/4$.

Consequently, it suffices to assume
$N \geq (c_1(\epsilon+\delta))^{-2}/\pi$
to make the above bound at most $1$.

\paragraph{Third integral.}
It suffices to upper bound
\[
\int_{1-2(\epsilon+\delta)}^{1-\epsilon} \frac{g(\alpha)}{\sqrt{\log\frac{1}{1-\epsilon-\alpha}}(1-\epsilon-\alpha)} d\alpha,
\qquad
g(\alpha)=(2(1-\epsilon-\alpha))^{N/8}.
\]

Recall that $\epsilon+2\delta\le \tfrac12$. Set $\gamma := 1-\epsilon-\alpha$. Then $d\gamma=-d\alpha$, and the limits
$\alpha\in[1-2(\epsilon+\delta),\,1-\epsilon]$ correspond to $\gamma\in[0,\,\epsilon+2\delta]$. Hence
\begin{align*}
\int_{1-2(\epsilon+\delta)}^{1-\epsilon}
\frac{(2(1-\epsilon-\alpha))^{N/8}}{(1-\epsilon-\alpha)\sqrt{\log\!\bigl(1/(1-\epsilon-\alpha)\bigr)}}\,d\alpha
&=
\int_{0}^{\epsilon+2\delta}
\frac{(2\gamma)^{N/8}}{\gamma\sqrt{\log(1/\gamma)}}\,d\gamma.
\end{align*}
Since $\gamma\le \epsilon+2\delta\le \tfrac12$ on the integration range, we have
$\log(1/\gamma)\ge \log 2$, and therefore
\begin{align*}
\int_{0}^{\epsilon+2\delta}
\frac{(2\gamma)^{N/8}}{\gamma\sqrt{\log(1/\gamma)}}\,d\gamma
&\le
\frac{2^{N/8}}{\sqrt{\log 2}}
\int_{0}^{\epsilon+2\delta} \gamma^{N/8-1}\,d\gamma \\
&=
\frac{2^{N/8}}{\sqrt{\log 2}}\cdot \frac{8}{N}\,(\epsilon+2\delta)^{N/8}
=
\frac{8}{N\sqrt{\log 2}}\,[2(\epsilon+2\delta)]^{N/8}.
\end{align*}
Using $2(\epsilon+2\delta)\le 1$ and by selecting $N \geq 8$, we obtain the simpler bound
\[
\int_{1-2(\epsilon+\delta)}^{1-\epsilon}
\frac{(2(1-\epsilon-\alpha))^{N/8}}{(1-\epsilon-\alpha)\sqrt{\log\!\bigl(1/(1-\epsilon-\alpha)\bigr)}}\,d\alpha
\le
\frac{8}{N\sqrt{\log 2}}
\le
\frac{16}{N}.
\]
Consequently, it suffices to take $N\ge 16$ to make this bound at most $1$.

\subsection{Combination}

Let $N(\epsilon)\in\mathbb{N}$ satisfy
\begin{equation}
\label{eq:chooseNeps}
    N(\epsilon) \geq \max\{(c_1 (\epsilon+\delta))^{-2}, (c_1(\epsilon+\delta))^{-2}/\pi, 16\}.
\end{equation}
Then with $T=T(\epsilon)$ as in \eqref{eq:chooseTeps} and $N=N(\epsilon)$ as in \eqref{eq:chooseNeps}, the integrals in \eqref{eq:lbEX} contribute
\begin{align*}
&-\int_{0}^{\frac{1}{2} - (\epsilon+\delta)} g(\alpha)(-L'(\alpha)) d\alpha
-\int_{\frac{1}{2} - (\epsilon+\delta)}^{1-2(\epsilon + \delta)} g(\alpha)(-L'(\alpha)) d\alpha -
\int_{1-2(\epsilon + \delta)}^{1-\epsilon} g(\alpha)(-L'(\alpha)) d\alpha \\
~\geq~& - 2Q - 4Q - \sqrt{2}Q.
\end{align*}
Combining this with our lower bound on $L(0)$ from \eqref{eq:L0LB}, we get
\begin{equation}
\label{eq:EXlastbound}
\E\!\left[X_{(N\epsilon)}\right]
 ~\geq~ 2Q \Big( \sqrt{2\ln(1/\epsilon) } - 2 - 1 - 2 - \frac{1}{\sqrt{2}} \Big).
\end{equation}
This completes the proof of Theorem~\ref{thm:lower-bound}.

\subsection{Bounds on Gaussians}

The following fact appears in the literature.
See, e.g., \cite{Boucheron}, in the proof of Proposition 4.1.

\begin{lemma}
\label{lem:Mills}
Let $\Phi$ and $\phi$ denote the standard normal distribution function and density.
Then, for all $x\in(0,1/2]$,
\[
x\sqrt{(1/2)\log(1/x)}
\;\le\;
\phi\left(\Phi^{-1}(x)\right).
\]
\end{lemma}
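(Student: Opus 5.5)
We reparametrize by the normal quantile. Fix $x\in(0,1/2]$ and set $z:=-\Phi^{-1}(x)\ge 0$, so that $x=\Phi(-z)=1-\Phi(z)$ and $\phi(\Phi^{-1}(x))=\phi(z)$. Introduce the Mills ratio $R(z):=\Phi(-z)/\phi(z)$. Squaring the claim and dividing by $\phi(z)^2$, it is equivalent to
\[
\tfrac12\,R(z)^2\,\log\frac{1}{\Phi(-z)}\;\le\;1\qquad\text{for all } z\ge 0 .
\]
Both factors can be controlled by the classical two-sided Mills-ratio bounds (Birnbaum and Sampford), which I will quote:
\[
\frac{2}{z+\sqrt{z^2+4}}\;\le\;R(z)\;\le\;\frac{2}{z+\sqrt{z^2+8/\pi}},\qquad z\ge 0 .
\]

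The upper bound controls $R(z)^2$ directly. The lower bound controls the logarithm:
\[
\log\frac{1}{\Phi(-z)}=\frac{z^2}{2}+\log\sqrt{2\pi}-\log R(z)\;\le\;\frac{z^2}{2}+\log\sqrt{2\pi}+\log\frac{z+\sqrt{z^2+4}}{2}.
\]
Substituting both estimates reduces the claim to the explicit one-variable inequality
\[
F(z):=\frac{2}{\bigl(z+\sqrt{z^2+8/\pi}\bigr)^2}\Bigl(\frac{z^2}{2}+\log\sqrt{2\pi}+\log\frac{z+\sqrt{z^2+4}}{2}\Bigr)\;\le\;1 .
\]
I will prove this on two ranges of $z$.

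\textbf{Large $z$.} Since $(z+\sqrt{z^2+8/\pi})^2\ge 4z^2+32/\pi$ and $\log\frac{z+\sqrt{z^2+4}}{2}\le \log(z+1)\le z$, we get
\[
F(z)\le \frac{z^2/2+0.92+z}{2z^2+16/\pi}.
\]
This is at most $1$ whenever $\tfrac32 z^2 - z + (16/\pi-0.92)\ge 0$. That quadratic has negative discriminant, so in fact the bound holds for every $z\ge 0$. If this crude estimate turned out to be off somewhere, the fallback would be to verify $F\le 1$ by checking it at $z=0$, where $F(0)=\tfrac{\pi}{4}\cdot 0.92\approx 0.72$, together with a monotonicity or derivative bound on a compact interval such as $[0,3]$, and to use the asymptotics $F(z)\to 1/4$ beyond that.

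\textbf{Small $z$.} This is the sanity check near $x=1/2$: there $R(0)=\sqrt{\pi/2}$ and $\tfrac12\cdot\tfrac{\pi}{2}\cdot\log 2\approx 0.54<1$, consistent with the above.

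The main obstacle is purely bookkeeping: making sure the elementary upper bound on $\log\frac{1}{\Phi(-z)}$ combined with the Sampford-type upper bound on $R$ does not lose too much at intermediate $z$ (around $z\in[1,2]$), where a cruder choice such as $R\le\min(1/z,\sqrt{\pi/2})$ fails. Using the sharp two-sided Mills bounds above is what I expect to close this gap.
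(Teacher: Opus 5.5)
Your overall strategy works, but the ``large $z$'' step contains a reversed inequality. You wrote $(z+\sqrt{z^2+8/\pi})^2\ge 4z^2+32/\pi$, which is false: at $z=0$ the left side is $8/\pi$. More generally $z+\sqrt{z^2+c}\le 2\sqrt{z^2+c}$, so $4z^2+4c$ is an upper bound, not a lower bound. As a result your displayed estimate $F(z)\le\frac{z^2/2+0.92+z}{2z^2+16/\pi}$ is wrong. At $z=0$ it gives about $0.18$, which contradicts your own computation $F(0)\approx 0.72$.

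The correct lower bound, using only $\sqrt{z^2+8/\pi}\ge z$, is $(z+\sqrt{z^2+8/\pi})^2\ge 4z^2+8/\pi$. This gives $F(z)\le\frac{z^2/2+0.92+z}{2z^2+4/\pi}$, which is at most $1$ if and only if $\tfrac32 z^2-z+(4/\pi-0.92)\ge 0$. The discriminant $1-6(4/\pi-0.92)\approx-1.12$ is still negative, so the bound holds for every $z\ge 0$ once this constant is fixed. Your fallback paragraph and the ``small $z$'' check are then unnecessary. The rest of the chain is correct: the reduction to $\tfrac12 R(z)^2\log\frac{1}{\Phi(-z)}\le 1$, the identity for $\log\frac{1}{\Phi(-z)}$, and the elementary estimates $\log\sqrt{2\pi}\le 0.92$ and $\log\frac{z+\sqrt{z^2+4}}{2}\le\log(z+1)\le z$.

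On attribution: both Mills-ratio bounds you use are true for $z\ge 0$, but the $8/\pi$ upper bound is not Sampford's; his is $4/(3z+\sqrt{z^2+8})$. Both of your bounds are the two halves of Abramowitz--Stegun 7.1.13 after substituting $x=z/\sqrt2$, and you should cite them that way.

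Compared with the paper: the paper gives no proof of this lemma and simply defers to Boucheron (proof of Proposition 4.1). Your argument makes the lemma self-contained modulo that classical erfc inequality. It also shows there is plenty of slack: the quantity $\tfrac12R^2\log\frac{1}{\Phi(-z)}$ is about $0.54$ at $z=0$ and tends to $1/4$ as $z\to\infty$. The cost is that you rely on the sharp $8/\pi$ upper bound, which is itself a nontrivial result. As your own side remark notes, cruder Mills bounds such as $R\le\min(1/z,\sqrt{\pi/2})$ do not close the argument at intermediate $z$.
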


We will prove the following bound on the inverse of the Gaussian cumulative distribution function. The multiplicative constant $\sqrt{2}$ here is tight.

\begin{lemma}
\label{lem:gaussiantail}
\label{lem:lower_bound_of_normal_quantile}
Let $\Phi$ be the cumulative distribution function of the standard Gaussian distribution.
For $\zeta \in (0, \epsilon_0]$, where $\epsilon_0=e^{-23}$, we have
\begin{alignat}{2}\label{eq:gaussiantail1}
\Phi^{-1}(1-\zeta)
 &~\geq~
\sqrt{2\ln(1/\zeta)} - \sqrt{2}
 &&\qquad\text{\rm(additive error)}\\
 \label{eq:gaussiantail2}
 &~\geq~ \sqrt{\ln(1/\zeta)}
 &&\qquad\text{\rm(multiplicative error)}.
\end{alignat}
\end{lemma}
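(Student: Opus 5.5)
The plan is to reduce the quantile bound to a lower bound on the Gaussian upper tail, and then use a standard Mills-ratio estimate. Write $L := \ln(1/\zeta) \geq 23$ and $z := \sqrt{2L} - \sqrt{2}$. Since $\Phi^{-1}$ is increasing, \eqref{eq:gaussiantail1} is equivalent to $\Phi(z) \leq 1-\zeta$, that is, to
\[
1-\Phi(z) \;\geq\; \zeta = e^{-L}.
\]
Note that $z \geq \sqrt{46}-\sqrt{2} > 1$ in this range.

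For the tail, I will invoke the classical lower Mills bound $1-\Phi(z) \geq \frac{z}{z^2+1}\,\phi(z)$, valid for $z>0$. For $z \geq 1$ this gives $1-\Phi(z) \geq \phi(z)/(2z)$. Expanding the exponent,
\[
z^2/2 = (\sqrt{L}-1)^2 = L - 2\sqrt{L} + 1,
\]
so that
\[
1-\Phi(z) \;\geq\; \frac{e^{-L}\, e^{2\sqrt{L}-1}}{2\sqrt{2\pi}\, z}.
\]
Using $z \leq \sqrt{2L}$, it then suffices to show
\[
e^{2\sqrt{L}-1} \;\geq\; 2\sqrt{2\pi}\,\sqrt{2L} = 4\sqrt{\pi}\,\sqrt{L}.
\]
Set $s = \sqrt{L}$. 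The target becomes $e^{2s-1} \geq 4\sqrt{\pi}\, s$ for $s \geq \sqrt{23} \approx 4.8$. At $s=\sqrt{23}$ the left side is about $e^{8.6} \approx 5000$, while the right side is about $34$. Moreover, the logarithmic derivative of the left side is $2$, which dominates that of the right side, namely $1/s$. So the inequality persists for all larger $s$, and this proves \eqref{eq:gaussiantail1}.

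For \eqref{eq:gaussiantail2}, I will check that $\sqrt{2L} - \sqrt{2} \geq \sqrt{L}$. This is equivalent to $(\sqrt{2}-1)\sqrt{L} \geq \sqrt{2}$, i.e.\ $\sqrt{L} \geq \sqrt{2}/(\sqrt{2}-1) = 2+\sqrt{2} \approx 3.41$, i.e.\ $L \geq 11.7$. This is implied by $L \geq 23$.

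The only step that needs care is choosing a form of the Mills inequality that is tight enough; the crude bound $\phi(z)/(2z)$ loses only polynomial factors. The numerical verification at $L=23$ is routine, with ample slack.
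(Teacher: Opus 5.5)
Your proof is correct, but it takes a more direct route than the paper for the additive bound; your treatment of the multiplicative bound coincides with the paper's.

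\textbf{What the paper does instead.} The paper never checks the tail inequality at the target point $\sqrt{2\ln(1/\zeta)}-\sqrt2$. It introduces the auxiliary point $h(\zeta)=\sqrt{2\ln(1/\zeta)-2\ln\ln(1/\zeta)}$ and proceeds in three steps:
\begin{enumerate}
\item It shows $h(\zeta)\ge\sqrt{2\ln(1/\zeta)}-\sqrt2$, via $\ln z\le\sqrt z$.
\item It shows $h(\zeta)\ge 2$, so that Durrett's bound $1-\Phi(x)\ge(1-x^{-2})x^{-1}\phi(x)$ can be used with $1-x^{-2}\ge 3/4$.
\item It proves $1-\Phi(h(\zeta))\ge\zeta$. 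This reduces to $h(\zeta)\le\kappa\ln(1/\zeta)$ with $\kappa=3/(4\sqrt{2\pi})$, which requires $\ln(1/\zeta)\ge 2/\kappa^2\approx 22.3$.
\end{enumerate}
The requirement in the last step is exactly where $\epsilon_0=e^{-23}$ comes from.

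\textbf{What your route buys.} You use the exact identity $z^2/2=L-2\sqrt L+1$. The surplus factor $e^{2\sqrt L-1}$ then swamps the polynomial loss from the Gordon-type bound $\frac{z}{z^2+1}\phi(z)\ge\phi(z)/(2z)$ in a single monotone numerical check. This is shorter and needs no auxiliary function. It also works for much smaller $L$: your first inequality already holds for $L\ge 4$, and the second needs only $L\ge 6+4\sqrt2$. So the threshold $e^{-23}$ is an artifact of the paper's route, not of the statement.

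\textbf{What the paper's detour buys.} It yields the stronger intermediate bound $\Phi^{-1}(1-\zeta)\ge\sqrt{2\ln(1/\zeta)-2\ln\ln(1/\zeta)}$, which captures the $\ln\ln$ correction to the Gaussian quantile. However, the lemma as stated, and its later use in the lower-bound proof, only need the additive and multiplicative bounds that you establish.
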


To prove this, we will use the function 
\[ h(\zeta) = \sqrt{2 \ln(1/\zeta) - 2 \ln \ln(1/\zeta)}.
\]

\begin{proposition}
\label{prop:hz_additive}
$h(\zeta) \geq \sqrt{2\ln(1/\zeta)} - \sqrt{2}$.
\end{proposition}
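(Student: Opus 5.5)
We need to prove $h(\zeta)\ge \sqrt{2\ln(1/\zeta)}-\sqrt2$, where $h(\zeta)=\sqrt{2L-2\ln L}$ and $L:=\ln(1/\zeta)$. Since $\zeta\le\epsilon_0=e^{-23}$, we have $L\ge 23$. The whole argument is therefore a one-variable inequality in $L$: we want $\sqrt{2L-2\ln L}\ge\sqrt{2L}-\sqrt2$. Both sides are nonnegative for $L\ge 23$, because $L>\ln L$ and $\sqrt{2L}>\sqrt2$. So squaring is legitimate and the claim is equivalent to
\[
2L-2\ln L \;\ge\; 2L - 2\sqrt{2}\sqrt{2L} + 2 \;=\; 2L - 4\sqrt{L} + 2 .
\]

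Cancelling $2L$ and dividing by $2$, this reduces to
\[
\ln L + 1 \;\le\; 2\sqrt{L}\qquad\text{for } L\ge 23 .
\]

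I will prove this with the elementary bound $\ln y\le y-1$, applied at $y=\sqrt L$. That gives $\ln L = 2\ln\sqrt L \le 2\sqrt L - 2$, hence $\ln L + 1\le 2\sqrt L -1 < 2\sqrt L$. This holds for all $L\ge 1$, so in particular for $L\ge 23$.

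Alternatively, one can set $F(L)=2\sqrt L-\ln L-1$ and note that $F'(L)=1/\sqrt L-1/L\ge 0$ for $L\ge1$ and $F(1)=1>0$. Either route suffices.

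The only point needing care is the legitimacy of squaring. Specifically, $h$ must be real, which requires $2L-2\ln L\ge 0$, and the right-hand side must be nonnegative, which requires $L\ge1$. Both follow from $L\ge 23$. The restriction $\zeta\le e^{-23}$ is far stronger than needed here; it is presumably used later when showing $\Phi^{-1}(1-\zeta)\ge h(\zeta)$.
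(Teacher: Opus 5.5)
Your proof is correct, and it takes a slightly different route from the paper's.

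The paper does not square. It writes the gap using the conjugate identity $\sqrt a-\sqrt{a-b}=b/(\sqrt a+\sqrt{a-b})$ with $a=2L$, $b=2\ln L$, drops $\sqrt{a-b}$ from the denominator to get $\sqrt{2L}-h(\zeta)\le \sqrt2\,\ln L/\sqrt L$, and then uses $\ln L\le\sqrt L$.

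Your squaring step is an exact equivalence once $L\ge 1$. It therefore reduces the claim to the weakest sufficient elementary inequality, $\ln L+1\le 2\sqrt L$, which you dispatch cleanly with $\ln y\le y-1$ at $y=\sqrt L$.

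Dropping $\sqrt{a-b}$ from the denominator costs the paper up to a factor of $2$, so it needs the slightly stronger fact $\ln L\le\sqrt L$. In exchange, it gets more quantitative information: the gap $\sqrt{2\ln(1/\zeta)}-h(\zeta)$ is $O(\ln L/\sqrt L)$ and tends to $0$, so the additive $\sqrt2$ is generous for small $\zeta$.

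Both arguments need only $L\ge 1$. This matches your remark that the $e^{-23}$ restriction is used elsewhere; it is needed in Proposition~\ref{prop:mainzeta}, where $\ln(1/\zeta)\ge 23>2/\kappa^2$ is invoked.
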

\begin{proof}
We apply the identity
$ \sqrt{a}-\sqrt{a-b} = b/\big(\sqrt{a}+\sqrt{a-b})$,
which is valid for all $0 \leq b < a$.
This yields
\[
\sqrt{2 \ln(1/\zeta)} - h(\zeta) 
~\leq~ \frac{2 \ln \ln(1/\zeta)}{\sqrt{2 \ln(1/\zeta)}}
~=~ \sqrt{2} \frac{\ln(z)}{\sqrt{z}},
\]
where $z = \ln(1/\zeta)$.
Since $\sqrt{z} \geq \ln(z)$ for all $z \geq 0$, this completes the proof.
\end{proof}

\begin{proposition}
\label{prop:simplehbound}
For $\zeta \leq 1/400$, we have $h(z) \geq 2$.
\end{proposition}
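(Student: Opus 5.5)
The claim is that $h(\zeta)=\sqrt{2\ln(1/\zeta)-2\ln\ln(1/\zeta)}\ge 2$ whenever $\zeta\le 1/400$. (The statement writes $h(z)$; I read this as $h(\zeta)$.) I will reduce it to a one-variable inequality in $z=\ln(1/\zeta)$. Since $\zeta\le 1/400$, we have $z\ge \ln 400$. Squaring, the claim is equivalent to
\[
2z-2\ln z\ \ge\ 4,\qquad\text{i.e.}\qquad g(z):=z-\ln z-2\ \ge\ 0 .
\]
Squaring is legitimate: the right-hand side $2$ is positive, and it will turn out that the radicand is nonnegative on the range considered, so $h$ is well defined there.

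The plan has two steps. First, monotonicity: $g'(z)=1-1/z>0$ for $z>1$, so $g$ is increasing on $[\ln 400,\infty)$. It therefore suffices to check $g(\ln 400)\ge 0$. Second, the endpoint check: $\ln 400=2\ln 20\approx 5.99$, which lies in $(5.9,6)$. Hence
\[
g(\ln 400)\ \ge\ 5.9-\ln 6-2\ \approx\ 5.9-1.79-2\ >\ 0 .
\]
For a fully rigorous version I would use the elementary bounds $\ln 20>2.99$ (from $e^3<20.1$) and $\ln 6<1.8$ (from $e^{1.8}>6$).

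Together these give $2z-2\ln z\ge 4$ for all $z\ge\ln 400$. This also shows that $h(\zeta)$ is well defined (its radicand is at least $4$) and that $h(\zeta)\ge 2$.

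There is no real obstacle. The only step needing care is certifying the numerical constants $\ln 400$ and $\ln\ln 400$ with explicit rational bounds rather than decimal approximations. Having verified monotonicity first means a single endpoint evaluation is enough.
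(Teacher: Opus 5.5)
Your proof is correct, but it takes a different route from the paper. The paper never analyzes $h$ directly. It invokes Proposition~\ref{prop:hz_additive}, namely $h(\zeta)\ge\sqrt{2\ln(1/\zeta)}-\sqrt{2}$, which it proved via $\sqrt a-\sqrt{a-b}=b/(\sqrt a+\sqrt{a-b})$ and $\ln z\le\sqrt z$. It then only checks the numerical fact $400\ge\exp\bigl((2+\sqrt2)^2/2\bigr)=e^{3+2\sqrt2}\approx 339$, which gives $\sqrt{2\ln(1/\zeta)}-\sqrt2\ge 2$.

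You instead square the inequality and study $g(z)=z-\ln z-2$ directly, using monotonicity for $z>1$ and one endpoint evaluation.

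The paper's version is shorter on the page, since the additive bound is needed anyway for Lemma~\ref{lem:gaussiantail}. It is lossy, though: it effectively requires $\ln(1/\zeta)\ge 3+2\sqrt2\approx 5.83$, so the threshold $1/400$ is nearly tight for that argument.

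Your argument is self-contained and loses nothing, and it shows there is ample slack. In fact $z-\ln z\ge 2$ already holds for $z\gtrsim 3.15$, i.e.\ for $\zeta\lesssim 0.043$. Your numerical certifications ($\ln 20>2.99$, $\ln 6<1.8$, $\ln 400<6$) are sound.

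Because $g$ is increasing on $(1,\infty)$, you could simplify further. Evaluate at $z=4$, where $g(4)=2-\ln 4>0$, and use $\ln 400\ge 4$ (since $e^4<55$). This avoids any upper bound on $\ln 400$.
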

\begin{proof}
We have
\[ 
\frac{1}{\zeta} \geq 400 \geq \exp\big((2+\sqrt{2})^2/2\big).
\]
This yields
$2\ln(1/\zeta) \geq (2+\sqrt{2})^2$.
Take the square root and subtract $\sqrt{2}$ to obtain
\[
\sqrt{2 \ln(1/\zeta)} - \sqrt{2} \geq 2.
\]
Now the result follows from Proposition~\ref{prop:hz_additive}.
\end{proof}

Define $\kappa = \frac{3}{4\sqrt{2 \pi}}$.

\begin{proposition}
\label{prop:mainzeta}
For $\zeta \leq \epsilon_0 = e^{-23}$, we have
\[
\frac{h(\zeta)^2}{2} + \ln h(\zeta) 
~\leq~ \ln(1/\zeta) + \ln \kappa. 
\]
\end{proposition}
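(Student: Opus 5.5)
Set $z := \ln(1/\zeta) \ge 23$ and rewrite everything in terms of $z$. By definition, $h(\zeta)^2 = 2z - 2\ln z$, so
\[
\frac{h(\zeta)^2}{2} = z - \ln z .
\]
The claimed inequality is therefore equivalent to
\[
z - \ln z + \ln h(\zeta) \le z + \ln\kappa,
\qquad\text{that is,}\qquad
\ln h(\zeta) \le \ln z + \ln \kappa .
\]
Exponentiating, it suffices to show
\[
h(\zeta) = \sqrt{2z - 2\ln z} \;\le\; \kappa z = \frac{3z}{4\sqrt{2\pi}} .
\]

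Since $2z - 2\ln z \le 2z$, it is enough to prove $\sqrt{2z} \le \kappa z$. This is equivalent to
\[
z \;\ge\; \frac{2}{\kappa^2} = \frac{2\cdot 16\cdot 2\pi}{9} = \frac{64\pi}{9} \approx 22.34 .
\]
This holds because $z \ge 23$, i.e.\ $\zeta \le e^{-23} = \epsilon_0$. That is exactly where the constant $\epsilon_0$ comes from.

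The only points needing care are domain issues, namely that $h(\zeta)$ is well defined and positive. For $z \ge 23$ we have $z > \ln z$, so $h(\zeta)$ is real and positive and $\ln h(\zeta)$ makes sense. I would also cross-check against Proposition~\ref{prop:simplehbound}, which gives $h \ge 2$ for $\zeta \le 1/400$ and so shows the logarithm is of a quantity bounded away from zero.

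There is no real obstacle beyond this arithmetic. The main step is recognizing that the $-\ln z$ term in $h^2/2$ cancels against $\ln z$. What remains is the numerical check $64\pi/9 < 23$, which I would verify using $\pi < 3.15$: this gives $64 \cdot 3.15 / 9 = 22.4 < 23$.
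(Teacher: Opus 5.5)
Your proof is correct and follows the paper's argument essentially step for step. You cancel the $\ln\ln(1/\zeta)$ term to reduce the claim to $h(\zeta)\le\kappa\ln(1/\zeta)$, drop the $-2\ln z$ inside the square root, and check that $2/\kappa^2 = 64\pi/9 < 23$. Your explicit numerical check and the remark on the domain of $h$ are small additions the paper leaves implicit.
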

\begin{proof}
Due to the definition of $h(\zeta)$, 
it suffices to prove that
\[
- \ln\ln(1/\zeta) + \ln h(\zeta) ~\leq~ \ln \kappa.
\]
After rearranging, this is equivalent to
\[
h(\zeta) ~\leq~ \kappa \ln(1/\zeta).
\]
For this it suffices to prove that
\[
\sqrt{2 \ln(1/\zeta)} ~\leq~ \kappa \ln(1/\zeta).
\]
After rearranging, this is equivalent to
\[
\frac{2}{\kappa^2}
~\leq~ \ln(1/\zeta),
\]
which holds since $2/\kappa^2 < 23$
whereas $\ln(1/\zeta) \geq 23$.
\end{proof}

\begin{proof}(of Lemma~\ref{lem:gaussiantail})
To prove \eqref{eq:gaussiantail1}, it suffices to prove that 
$1-\Phi(h(\zeta)) \geq \zeta$.
For this we will use the bound 
\[
1-\Phi(x) ~\geq~ 
\frac{1}{\sqrt{2 \pi}} (1-x^{-2}) x^{-1}  \exp(-x^2/2) \qquad\forall x>0.
\]
See \cite[Theorem 1.2.6]{durrett2019probability}.
For $x \geq 2$ we have $1-x^{-2} \geq 3/4$, so and thus
\[
1-\Phi(x) ~\geq~ 
\underbrace{\frac{1}{\sqrt{2 \pi}} \frac{3}{4}}_{=\kappa} x^{-1}  \exp(-x^2/2)
\qquad\forall x \geq 2.
\]
Since $h(\zeta) \geq 2$ by Proposition~\ref{prop:simplehbound}, we may take $x=h(\zeta)$.
Thus, it suffices to prove that
\[
\kappa \cdot \big(h(\zeta)\big)^{-1} \cdot \exp(-h(\zeta)^2/2) ~\geq~ \zeta.
\]
Taking the log, this is equivalent to
\[
\ln \kappa - \ln\big(h(\zeta)\big) -\frac{h(\zeta)^2}{2} ~\geq~ \ln(\zeta)
\]
This holds due to Proposition~\ref{prop:mainzeta}.

To prove \eqref{eq:gaussiantail2}, we simply rearrange to obtain the equivalent statements
\begin{align*}
\big(\sqrt{2}-1\big) \sqrt{\ln(1/\zeta)} &~\geq~ \sqrt{2} \\
\ln(1/\zeta) &~\geq~ \Big(\frac{\sqrt{2}}{\sqrt{2}-1}\Big)^2
\end{align*}
The last inequality holds because $\Big(\frac{\sqrt{2}}{\sqrt{2}-1}\Big)^2 < 12$ whereas $\ln(1/\zeta) \geq \ln(1/\epsilon_0) = 23$.
\end{proof}

\input{discussion_MZ_lowerbound}

\end{document}

%% file: discussion_MZ_lowerbound.tex
\section{Further discussion on the \cite{marinov2021pareto}}\label{sec:further_discussion_MZ}
In this appendix, we discuss the work of \cite{marinov2021pareto} and how it does not contradict with our results. This section is unimportant for general readers, but could be helpful for readers who are interested in how the construction of \cite{marinov2021pareto} works and why it may indicate that the $\log N$ factor in $t_0$ from our Theorem~\ref{thm:main} might be necessary.

Section~\ref{sec:contextual_bandits_wrapper} and \ref{sec:self_variance_case} explain how the model-selection problem in contextual bandits can be reduced to full-information learning from experts, and how that implies a hardness result on the adaptivity to $\epsilon$ in quantile regret bounds.

Section~\ref{sec:different_var_def} inspects whether the construction can be generalized to cover variances and second moments defined by a different distribution $q \neq p$. The conclusion is negative in general, and we pinpoint the underlying reason --- the variance of the importance weighting estimator blows up when $q\neq p$.

Section~\ref{sec:logNplusV} shows that the construction of \cite{marinov2021pareto}  does not rule out regret bounds proportional to
$$\sqrt{(V_T +\log N) \log(1/\epsilon)},$$
i.e., the kind of bound we obtain by \textbf{NormalHedge.BH} --- even if $V_T$ is defined with $q = p$.  

\subsection{Model Selection in Contextual Bandits and the Wrapper Reduction} \label{sec:contextual_bandits_wrapper}

This section summarizes the construction and lower bound underlying
Theorem~4 of \cite{marinov2021pareto} as well as the wrapper argument
(Appendix~B.4) that converts an experts guarantee into a contextual
bandit guarantee. We emphasize the mechanism by which a contradiction
arises when variance is measured with respect to the learner's own
distribution.

\subsubsection{The model selection construction}

Theorem~4 concerns the difficulty of \emph{model selection} in contextual
bandits. The goal is to design a single proper algorithm that competes
simultaneously with two nested policy classes
\[
\Pi_1 \subset \Pi_2,
\]
where $\Pi_1$ is small and $\Pi_2$ is much larger.

The construction uses $K=3$ actions and defines
\[
\Pi_2 = \{\pi_0, \pi_1, \dots, \pi_k\},
\qquad
\Pi_1 = \{\pi_0\},
\]
for a large integer $k$. A family of stochastic contextual bandit
environments $\{E_{i^*}\}_{i^*=1}^k$ is constructed with the following
properties:
\begin{itemize}
	\item In environment $E_{i^*}$, policy $\pi_{i^*}$ is uniquely optimal,
	while all other policies incur a small additional loss $\Delta$.
	\item Distinguishing which environment $E_{i^*}$ is in force requires
	significant exploration.
	\item Excessive exploration causes regret relative to $\Pi_1$, while
	insufficient exploration prevents identifying the optimal policy
	in $\Pi_2$.
\end{itemize}

This creates an unavoidable tradeoff between regret with respect to
$\Pi_1$ and regret with respect to $\Pi_2$.

\subsubsection{Theorem 4 \cite{marinov2021pareto}: a model selection lower bound}

Theorem~4 formalizes this tradeoff.

\begin{theorem}[Model selection lower bound {\cite{marinov2021pareto}}]
	\label{thm:model-selection}
	There exist constants $c_2>0$ and policy classes
	$\Pi_1\subset\Pi_2$ such that the following holds.
	For any proper contextual bandit algorithm, if for any parameter $C$
	\[
	\Reg(T,\Pi_1) \le C\sqrt{T}
	\quad\text{for all environments},
	\]
	then there exists an environment $\E_{i^*}$ for which
	\[
	\Reg(T,\Pi_2)
	\;\ge\;
	\Omega\!\left( 
	  \min\left\{T, \frac{\log|\Pi_2|}{C} \sqrt{T}\right\}
	\right),
	\]
	provided the policy class size satisfies
	$c_2 C^2 \le \log|\Pi_2| \le T/2$.
\end{theorem}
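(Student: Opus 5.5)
The plan is a change-of-measure (Fano/Pinsker) argument over the family $\{E_{i^*}\}_{i^*=1}^k$ together with a null environment $E_0$. I would fix a gap parameter $\Delta\in(0,1/4]$ and build all environments so that three things hold. First, action $0$ (the one played by $\pi_0$) always has mean loss $1/2$ and its observation carries no information about $i^*$. Second, the two informative actions $1,2$ are used by the policies $\pi_i$ according to a uniformly random context, so that each $\pi_i$ plays a different informative action on each context. Third, under $E_0$ every informative play costs $\Delta$ more than action $0$, and under $E_{i^*}$ the Bernoulli losses of the informative actions are tilted by $\pm\Delta$ in the direction prescribed by $\pi_{i^*}$. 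With this choice $\pi_{i^*}$ beats every other policy, including $\pi_0$, by $\Omega(\Delta)$ per round. A single informative observation then has KL divergence $O(\Delta^2)$ between $E_0$ and $E_{i^*}$, and, because the contexts randomize which coordinate is revealed, this divergence is spread over the $k$ hypotheses.

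The first step is to turn the hypothesis $\Reg(T,\Pi_1)\le C\sqrt T$ into an exploration budget. Under $E_0$, each round in which the algorithm puts mass on the informative actions costs $\Delta$ relative to $\pi_0$. Properness means the algorithm plays a single policy from $\Pi_2$ in each round, so the expected number of informative rounds satisfies $\E_0[m]\le C\sqrt T/\Delta$. The second step bounds the total information: by the divergence decomposition,
\begin{equation*}
\sum_{i^*=1}^k \mathrm{KL}\big(\P_0\,\|\,\P_{i^*}\big)\;\lesssim\;\E_0[m]\,\Delta^2\;\le\; C\sqrt T\,\Delta .
\end{equation*}
The third step applies Fano's inequality, or Pinsker's inequality averaged over $i^*$. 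If $C\sqrt T\,\Delta\le c\log k$ for a small constant $c$, then for most $i^*$ the algorithm plays policies other than $\pi_{i^*}$ in a constant fraction of rounds under $E_{i^*}$. Each such round costs $\Omega(\Delta)$, so $\Reg(T,\Pi_2)=\Omega(\Delta T)$ in $E_{i^*}$.

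Finally I would choose $\Delta=\min\{1/4,\;c\log k/(C\sqrt T)\}$, which gives $\Reg(T,\Pi_2)\ge\Omega(\min\{T,(\log|\Pi_2|/C)\sqrt T\})$ with $|\Pi_2|=k+1$. The side conditions $c_2C^2\le\log|\Pi_2|\le T/2$ are exactly what make the argument consistent. The upper condition makes the exploration budget and the Fano regime nonvacuous. The lower condition guarantees that the regret against $\Pi_1$ incurred inside $E_{i^*}$ does not contradict the assumed bound, since $\pi_0$ must be suboptimal by $\Delta$ only over $O(T)$ rounds.

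I expect the main obstacle to be the information accounting in the second step. One must ensure that an informative round yields information about all $k$ hypotheses only in aggregate, namely $O(\Delta^2)$ in total rather than $O(\Delta^2)$ per hypothesis; otherwise Fano's inequality gives the weaker $\log k/\Delta^2$ threshold and the resulting rate is $T^{3/4}$ rather than $\sqrt T$. I would first try encoding $i^*$ as a context-dependent sign pattern with contexts uniform on $\{1,\dots,k\}\times\{1,2\}$, so that each informative observation bears on a single index.
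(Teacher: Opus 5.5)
Your architecture matches the construction the paper cites: a null $E_0$ in which $\pi_0$ is best, alternatives $E_{i^*}$, an exploration budget $\E_0[m]\lesssim C\sqrt T/\Delta$ from the $\Pi_1$ hypothesis, and $\Delta\asymp\log k/(C\sqrt T)$. However, the information accounting you single out as the crux is backwards.

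In the cited construction the loss of action $2$ is one minus the loss of action $1$. So \emph{every} informative round reveals the losses of all of $\pi_1,\dots,\pi_k$ and gives each hypothesis $\Theta(\Delta^2)$ of divergence. The resulting threshold $\log k/\Delta^2$, set against the budget $C\sqrt T/\Delta$, is exactly your Step-3 condition $C\sqrt T\,\Delta\lesssim\log k$. It yields $\frac{\log k}{C}\sqrt T$, not $T^{3/4}$. Fano compares $\log k$ with the \emph{average} of the $k$ divergences, so comparing the sum in your display with $\log k$ is inconsistent.

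Worse, the aggregate bound $\sum_{i^*}\mathrm{KL}(\P_0\|\P_{i^*})\lesssim\E_0[m]\Delta^2$ cannot coexist with an $\Omega(\Delta)$ advantage of $\pi_{i^*}$. Write $N_i$ for the number of plays of $\pi_i$. Pinsker averaged over $i^*$ would give $\frac1k\sum_{i^*}\E_{i^*}[N_{i^*}]\le \E_0[m]/k+T\sqrt{O(\E_0[m]\Delta^2/k)}$. Taking $\Delta\asymp k/(C\sqrt T)$ would then give $\Reg(T,\Pi_2)\gtrsim\min\{T,k\sqrt T/C\}$. That is false, e.g.\ for constant $C$, large $k$ and $T\ge k^2$: Exp4 with prior mass $1/2$ on $\pi_0$ and learning rate $\asymp C/\sqrt T$ has $\Reg(T,\Pi_1)\lesssim C\sqrt T$ and $\Reg(T,\Pi_2)\lesssim(\log k/C+C)\sqrt T$. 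Your encoding with contexts in $\{1,\dots,k\}\times\{1,2\}$ shows where it breaks. There $E_{i^*}$ differs from $E_0$ only on a $1/k$ fraction of contexts, so $\pi_{i^*}$'s advantage is $O(\Delta/k)$, not $\Omega(\Delta)$.

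The step that actually needs an idea is Step 3, which your sketch passes over. The $\Pi_1$ hypothesis controls $\E_0[m]$, i.e.\ $\mathrm{KL}(\P_0\|\P_{i^*})$. In that direction Pinsker needs $\E_0[m]\Delta^2\lesssim 1$ and loses the $\log k$, giving only $\sqrt T/C$. Fano instead needs $\mathrm{KL}(\P_{i^*}\|\P_0)=\Theta(\Delta^2)\,\E_{i^*}[m]$, and nothing bounds $\E_{i^*}[m]$, because in $E_{i^*}$ playing $\pi_{i^*}$ has negative regret against $\pi_0$.

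What is missing is a truncation at the budget:
\begin{enumerate}
\item Let $\tau$ be the round at which $m$ reaches $M\asymp C\sqrt T/\Delta$, so that $\P_0(\tau\le T)\le 1/8$ by Markov.
\item The event $\{\tau\le T\}$ depends only on the history stopped at $\tau$, whose divergence per hypothesis is $O(M\Delta^2)$.
\item A sparse-detection (second-moment) bound shows that the uniform mixture over $i^*$ of the stopped laws is close in total variation to the stopped $\P_0$ once $M\Delta^2\le c\log k$. The diagonal terms $e^{O(M\Delta^2)}$ carry weight $1/k$; this also requires a design in which different hypotheses' likelihood ratios are nearly uncorrelated under $\P_0$. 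This is where $\log k$ enters.
\item Hence some $i^*$ has $\P_{i^*}(\tau\le T)\le 1/4$, so $\E_{i^*}[N_{i^*}]\le M+T/4$.
\end{enumerate}
This gives $\Reg(T,\Pi_2)=\Omega(\Delta T)$ provided $M\le T/4$. Since $M\asymp C^2T/\log k$, guaranteeing $M\le T/4$ is exactly the role of the condition $c_2C^2\le\log|\Pi_2|$, not the one you describe.
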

 
The theorem shows that no single proper algorithm can achieve
near-optimal regret guarantees for both $\Pi_1$ and $\Pi_2$ when
$|\Pi_2|$ is sufficiently large.

This is due to the following construction:  $\pi_0$ only choose action 3 with a constant loss of $1/2 - \Delta/4$.   In environment $\cE_0$, $\pi_i$  for all $i=1,...,k$ gives $1/2$ expected loss, with Bernoulli samples when choosing Action $1$ and the loss of Action $2$ is 1- the loss of Action 1.
In environment $\cE_i$ for each $i=1,...,k$,  $\pi_i$ is slightly better in the sense that it has an expected loss of $\frac{1}{2}(1-\Delta)$.  They show that the environment $\cE_{1:k}$ and $\cE_0$ are indistinguishable unless we call policy $\pi_{1:k}$  $O(\log k / \Delta^2)$ times.  However, if we call the ``reveal''  policy too many times, then it misses the best arm in $\cE_0$ and incur regret proportional to $\Delta$.

\subsubsection{The wrapper reduction}

We describe the wrapper reduction of Appendix~B.4, which converts a
full-information experts algorithm into a proper contextual bandit
algorithm. A key technical step is the duplication of policies in order
to apply a quantile experts guarantee.

\paragraph{Experts representation and duplication.}
Each policy $\pi\in\Pi_2$ is treated as an expert. To compete with a
single policy $\pi_0\in\Pi_1$ using a quantile regret bound, the experts
instance is augmented by duplicating $\pi_0$ multiple times.

Concretely, let $\Pi_2=\{\pi_0,\pi_1,\dots,\pi_k\}$. We construct a
multiset of experts
\[
\widetilde{\Pi}_2
=
\{\underbrace{\pi_0,\dots,\pi_0}_{k\ \text{copies}},\pi_1,\dots,\pi_k\},
\]
where each copy of $\pi_0$ behaves identically and incurs the same loss
at every round. The experts algorithm is run on $\widetilde{\Pi}_2$.

This duplication ensures that $\pi_0$ constitutes a constant fraction of
the expert pool. In particular, $\pi_0$ belongs to the best half of
experts, so the quantile regret guarantee with parameter
$\varepsilon_1=1/2$ applies.

\paragraph{Experts distribution.}
At round $t$, the experts algorithm outputs a distribution $p_t$ over
$\widetilde{\Pi}_2$. All copies of $\pi_0$ are interchangeable; we abuse
notation and write $p_t(\pi_0)$ for their total mass.

\paragraph{Action selection.}
Given context $x_t$, define the action distribution
\[
q_t(a\mid x_t)
=
\frac{\gamma}{3}
+
(1-\gamma)\sum_{\pi\in\widetilde{\Pi}_2:\,\pi(x_t)=a} p_t(\pi),
\qquad a\in\{1,2,3\}\] 
where $\gamma\in(0,1)$ is the exploration parameter. The wrapper samples
an action $A_t\sim q_t(\cdot\mid x_t)$ and observes the bandit loss
$\ell_{t,A_t}$.

\paragraph{Importance-weighted loss estimator.}
For each expert $\pi\in\widetilde{\Pi}_2$, define the IPS estimator
\[
\widehat{\ell}_t(\pi)
=
\frac{\mathbf{1}\{\pi(x_t)=A_t\}}{q_t(A_t\mid x_t)}\,\ell_{t,A_t}.
\]
This estimator is unbiased for the true loss $\ell_{t,\pi(x_t)}$.

To ensure bounded losses, define the scaled loss
\[
\tilde\ell_t(\pi)
=
\frac{\gamma}{3}\,\widehat{\ell}_t(\pi)\in[0,1],
\]
and feed the full vector $\tilde\ell_t$ to the experts algorithm.

\paragraph{Regret decomposition.}
Let $\varepsilon_1=1/2$ and $\varepsilon_2=1/(2|\Pi_2|)$. The wrapper
satisfies the following regret bounds:
\[
\Reg(T,\Pi_m)
\;\le\;
\gamma T
\;+\;
\frac{3(1-\gamma)}{\gamma}\,
\Reg^{\mathrm{exp}}_{\varepsilon_m}(T;\tilde\ell),
\qquad m\in\{1,2\}.
\]
where the $\gamma T$ is due to the probability spent in exploration and the second term uses the regret of the full-information online learner (dividing that $\gamma/3$ normalization factor).

For $\Pi_1$, the duplication of $\pi_0$ ensures that $\pi_0$ lies in the
top $\varepsilon_1$-fraction of experts, so the quantile regret bound
implies regret with respect to $\pi_0$. For $\Pi_2$, the duplication only doubles the number of policies so
needed and $\varepsilon_2 = 1/2k$ corresponds to competing with the single best
policy in $\Pi_2$.

The two (contextual bandits) regret upper bounds above allow us to inspect different kinds of regret bounds for full-information online learner and see if it results in a \emph{contradiction} with Theorem~\ref{thm:model-selection}.  This is how the Theorem~6 of \cite{marinov2021pareto} works.

\subsection{How the contradiction arises when $r_t = p_t$}\label{sec:self_variance_case}

We briefly recall the key argument from Appendix~B.4 of \cite{marinov2021pareto} (the proof of Theorem~6 of \cite{marinov2021pareto} )
showing that a second-order experts bound
\emph{with variance measured under the learner’s own distribution}
leads to a contradiction with Theorem~\ref{thm:model-selection}.

\subsubsection{The experts assumption used in the paper}

The paper assumes the existence of a full-information experts algorithm
satisfying the following second-order quantile bound.

\begin{assumption}[Second-order experts bound with self-variance]
	\label{ass:self-var}
	There exists a constant $G>0$ such that for every loss sequence
	$\ell_{t,i}\in[0,1]$ and every $\varepsilon\in(0,1)$,
	\[
	\Reg^{\mathrm{exp}}_\varepsilon(T)
	\;\le\;
	G\sqrt{\Bigl(\sum_{t=1}^T \Var_{i\sim p_t}(\ell_{t,i})\Bigr)\ln(1/\varepsilon)},
	\]
	where $p_t$ is the distribution played by the algorithm at round~$t$.
\end{assumption}

The crucial feature is that the variance is taken with respect to
the learner’s own randomization.

\subsubsection{Key variance identity when $r_t=p_t$}

The critical technical step is that, when the variance is taken under
the learner’s own distribution $p_t$, the IPS estimator satisfies
\[
\mathbb{E}\!\left[
\Var_{\pi\sim p_t}(\tilde\ell_{t,\pi})
\right]
\;\le\;
\frac{\gamma^2}{3}.
\]

As a result,
\[
\mathbb{E}\!\left[
\sum_{t=1}^T \Var_{\pi\sim p_t}(\tilde\ell_{t,\pi})
\right]
\;\le\;
\frac{\gamma^2 T}{3}.
\tag{SV}
\]

This \emph{quadratic} dependence on $\gamma$ is the decisive difference from the arbitrary-$r_t$ case that we will talk about in Section~\ref{sec:different_var_def}.

\subsubsection{Consequences for the wrapper bounds}

Applying Assumption~\ref{ass:self-var} and~(SV):

\paragraph{Regret to $\Pi_1$.}
\[
\Reg(T,\Pi_1)
\;\le\;
\gamma T + O\!\left(\frac{1}{\gamma}\sqrt{\gamma^2 T}\right)
=
O(\gamma T + \sqrt{T}).
\]

\paragraph{Regret to $\Pi_2$.}
Let $k=|\Pi_2|-1$. Then
\[
\Reg(T,\Pi_2)
\;\le\;
\gamma T
\;+\;
O\!\left(\frac{1}{\gamma}\sqrt{\gamma^2 T \ln k}\right)
=
O(\gamma T + \sqrt{T\ln k}).
\]

Crucially, the $\sqrt{\ln k}$ term is \emph{not multiplied} by any power
of $1/\gamma$.

\subsubsection{Choosing $\gamma$ and deriving the contradiction}

Set $\gamma = T^{-1/3}$. Then:
\[
\Reg(T,\Pi_1) = O(T^{2/3}), 
\qquad
\Reg(T,\Pi_2) = O(T^{2/3} + \sqrt{T\ln k}).
\]

Thus the Theorem~\ref{thm:model-selection} parameter satisfies $C \asymp T^{1/6}$.

Theorem~\ref{thm:model-selection}  then implies the existence of an environment such that
\[
\Reg(T,\Pi_2)
\;\ge\;
\Omega\!\left(  \frac{\ln k}{C}\sqrt{T} \wedge T
\right)
=
\Omega\!\left( \ln k \cdot T^{1/3} \right).
\]

Choosing $\ln k \asymp T^{1/2}$ yields
\[
\Omega(T^{5/6})
\;\le\;
\Reg(T,\Pi_2)
\;\le\;
O(T^{3/4}),
\]
which is a contradiction for large $T$.

In fact the claim of \cite{marinov2021pareto} is even stronger in the sense that the second-order quantile regret bound with self-variance not possible for any variance $O(T^{\alpha})$ with any $\alpha<1$. This can be obtained by choosing  $\gamma = T^{-1/2  + \alpha/2} $ for any $0<\alpha <1$ then the $\sum_t \Var_{p_t} = O(\gamma^2 T) = O(T^{\alpha})$, which gives 
\[
\Reg(T,\Pi_1) = O(T^{1/2 + \alpha/2}), 
\qquad
\Reg(T,\Pi_2) = O(T^{1/2 + \alpha/2} + \sqrt{T\ln k}).
\]
Then Theorem~\ref{thm:model-selection}'s parameter is $C\asymp T^{\alpha/2}$ and that
\[
\Reg(T,\Pi_2)
\;\ge\;
\Omega\!\left(\frac{\ln k}{C} \sqrt{T} \wedge T
\right)
=
\Omega\!\left( \ln k \cdot T^{1/2 - \alpha/2} \right).
\] 

Choosing $\ln k  = \Theta(T^{\alpha+\delta})$ for a small $\delta >1$  implies that 
$\Reg(T,\Pi_2) = \Omega(T^{(1+\alpha +2\delta)/2}$  and  $\Reg(T,\Pi_2) = O(T^{(1+\alpha+\delta)/2})$ at the same time, thus giving rise to a contradiction.

\subsection{The argument may fail when the variance is w.r.t. a different distribution} \label{sec:different_var_def}

We consider the following strengthened experts hypothesis.

\begin{assumption}[Second-order experts bound with general variance]
	\label{ass:experts}
	There exists a constant $G>0$ such that for every full-information experts game
	with losses $\ell_{t,i}\in[0,1]$ the algorithm can output both a sequence of decisions $(p_t)_{t=1}^T$ a sequence of distributions $(r_t)_{t=1}^T$
	over experts, such that for every $\varepsilon\in(0,1)$, the algorithm satisfies
	\[
	\Reg^{\mathrm{exp}}_\varepsilon(T)
	\;\le\;
	G\sqrt{\Bigl(\sum_{t=1}^T \Var_{i\sim r_t}(\ell_{t,i})\Bigr)\ln(1/\varepsilon)}.
	\]
\end{assumption}
This is a generalization of the hypothesis in \eqref{ass:self-var} that requires $r_t = p_t$.

We ask whether Assumption~\ref{ass:experts} contradicts the contextual bandit
lower bound of Theorem~\ref{thm:model-selection} via the proper wrapper
reduction of Appendix~B.4.

\subsubsection{Step 1: The wrapper and the variance bound}

We apply the same wrapper that runs the experts algorithm on scaled IPS losses.  

A key difference from the original proof is that the experts hypothesis controls
variance with respect to \emph{arbitrary} distributions $r_t$, not the learner’s
own distribution. In this case the best bound one can show is
\[
\mathbb{E}\!\left[\sum_{t=1}^T
\Var_{\pi\sim r_t}(\tilde\ell_{t,\pi})\right]
\;\le\;
\frac{\gamma T}{3}.
\tag{V}
\]
Only a single factor of $\gamma$ is recovered.  The bound cannot be improved beyond a constant factor because if we take $r_t$ to be such that we get a uniform distribution of the three actions, which gives a variance of $\gamma^2$ every time the action with non-trivial reward is chosen.

\subsubsection{Step 2: Upper bounds from the experts hypothesis and the resulting lower bound}

Using the standard wrapper decomposition, one obtains the following bounds.

\paragraph{Regret to $\Pi_1$.}
Let $\varepsilon=1/2$. Assumption~\ref{ass:experts} and~(V) imply
\[
\mathbb{E}[\Reg(T,\Pi_1)]
\;\le\;
\gamma T + A\sqrt{\frac{T}{\gamma}},
\qquad
A = 3G\sqrt{\frac{\ln 2}{3}}.
\]
Equivalently,
\begin{equation}
\Reg(T,\Pi_1) \;\le\; C(\gamma)\sqrt{T},
\qquad
C(\gamma) := \gamma\sqrt{T} + \frac{A}{\sqrt{\gamma}}.
\label{eq:U1}
\end{equation}

\paragraph{Regret to $\Pi_2$.}
Let $k=|\Pi_2|-1$ and $\varepsilon=1/(2k)$. Then
\begin{equation}
\mathbb{E}[\Reg(T,\Pi_2)]
\;\le\;
\gamma T + B\sqrt{\frac{T}{\gamma}}\sqrt{\ln(2k)},
\qquad
B=\sqrt{3}\,G.
\label{eq:U2}
\end{equation}

From Theorem~\ref{thm:model-selection} (Theorem 4 in \cite{marinov2021pareto})  we have that 
\[
\Delta=\min\!\left\{\frac{c_1\ln k}{C(\gamma)\sqrt{T}},\frac14\right\},
\qquad
c_1=\frac1{160},
\]
there exists an environment such that
\begin{equation}
\Reg(T,\Pi_2)
\;\ge\;
\frac{\Delta T}{32}
=
\min\!\left\{
\frac{c_1\ln k}{32C(\gamma)}\sqrt{T},\;
\frac{T}{128}
\right\}.
\label{eq:LB}
\end{equation}

The proof further assumes the regime
\[
c_2 C(\gamma)^2 \;\le\; \ln k \;\le\; \frac{T}{2}.
\]

\subsubsection{Step 4: Where a contradiction would need to occur}

To obtain a contradiction with Theorem~\ref{thm:model-selection}, we would need
to choose $\gamma\in(0,1)$ and $k$ such that the lower bound \eqref{eq:LB} exceeds
the upper bound \eqref{eq:U2}.  Throughout this section we write $L := \ln k$ and
note that $\ln(2k)\asymp L$ up to additive constants, which we ignore for
readability.
 
We consider separately the two regimes $\gamma<T^{-1/3}$ and $\gamma>T^{-1/3}$.
The intuition is that $T^{-1/3}$ is precisely the scale at which the two terms
$\gamma\sqrt{T}$ and $A/\sqrt{\gamma}$ in $C(\gamma)$ balance.

 When $\gamma<T^{-1/3}$, $\sqrt{T/\gamma} > \gamma T$,  
 $$A/\sqrt{\gamma}\leq C(\gamma) \leq 2 A/\sqrt{\gamma}$$
 and the upper and lower bounds of $\Reg(T,\Pi_2) $ gives
 $$
L\sqrt{T \gamma} \lesssim \Reg(T,\Pi_2) \lesssim  \sqrt{TL/\gamma}.
 $$
 For this to be a contradiction, we need $L \gg  1/\gamma^2  > T^{2/3}$, but this makes the lower bound $\gg T$, thus invalid.
 
 When $\gamma>T^{-1/3}$,  $\sqrt{T/\gamma} < \gamma T$, 
 $$
  \gamma \sqrt{T} \leq C(\gamma)  \leq 2\gamma \sqrt{T}
 $$
 and the upper and lower bounds of $\Reg(T,\Pi_2) $ gives
$$
\frac{L }{\gamma } \lesssim \Reg(T,\Pi_2) \lesssim \gamma T + \sqrt{\frac{T}{\gamma}}\sqrt{L},
$$
For this to be a contradiction,   $\frac{L }{\gamma } \gg \gamma T$ and $\frac{L }{\gamma } \gg \sqrt{\frac{T}{\gamma}}\sqrt{L}$,  both these two requires $L \gg \max\{\gamma^2 T,\gamma T\} = \gamma T$.  But this would  require the lower bound $L/\gamma \gg T$, thus invalid. 

To say it differently, there is no choices of $\gamma$ and $L$ combination that would lead to a contradiction.

\subsection{What if the regret bound is $\sqrt{(\log N + V_T)\log(1/\varepsilon)}$?}
\label{sec:logNplusV}

In this section we check whether the Marinov--Zimmert wrapper argument can still yield a contradiction
if the full-information experts guarantee has the form
\[
\Reg^{\mathrm{exp}}_\varepsilon(T)
\;\le\;
G\sqrt{\bigl(\log N + V_T\bigr)\log(1/\varepsilon)},
\qquad
V_T := \sum_{t=1}^T \Var_{i\sim p_t}(\ell_{t,i}),
\]
where $N$ is the number of experts and the variance is the \emph{self-variance} under $p_t$.

Throughout, we take $N=2k$ in the wrapper instance and write
\[
L := \log(2k)\asymp \log k.
\]

\subsubsection{Step 1: Wrapper upper bounds under the new hypothesis}

We apply the same wrapper reduction (Appendix~B.4 of \cite{marinov2021pareto}) that runs the experts
algorithm on the scaled IPS losses $\tilde\ell_{t,\pi}\in[0,1]$. In the self-variance case one has
\[
\mathbb{E}\!\left[\sum_{t=1}^T \Var_{\pi\sim p_t}\bigl(\tilde\ell_{t,\pi}\bigr)\right]
\;\lesssim\;
\gamma^2 T,
\]
(up to absolute constants).

Plugging into the wrapper decomposition yields, for $\Pi_1$ (quantile $\varepsilon=1/2$ so $\log(1/\varepsilon)=\log 2 = O(1)$),
\begin{equation}
	\label{eq:U1-logNplusV}
	\mathbb{E}\Reg(T,\Pi_1)
	\;\lesssim\;
	\gamma T \;+\; \frac{1}{\gamma}\sqrt{L+\gamma^2T},
\end{equation}
and for $\Pi_2$ (quantile $\varepsilon=1/(2k)$ so $\log(1/\varepsilon)=\log(2k)=L$),
\begin{equation}
	\label{eq:U2-logNplusV}
	\mathbb{E}\Reg(T,\Pi_2)
	\;\lesssim\;
	\gamma T \;+\; \frac{1}{\gamma}\sqrt{(L+\gamma^2T)\,L}.
\end{equation}
The only change relative to the earlier sections is that the experts term now depends on $L+\gamma^2T$ rather than only $\gamma^2T$.

\subsubsection{Step 2: Theorem~4 lower bound in terms of $C=\E\Reg(T,\Pi_1)/\sqrt{T}$}

Define the usual parameter
\[
C := \frac{\mathbb{E}\Reg(T,\Pi_1)}{\sqrt{T}}.
\]
From \eqref{eq:U1-logNplusV} we have
\begin{equation}
	\label{eq:C-logNplusV}
	C
	\;\lesssim\;
	\gamma\sqrt{T}
	\;+\;
	\frac{1}{\gamma\sqrt{T}}\sqrt{L+\gamma^2T}.
\end{equation}
In particular, since the second term is nonnegative, we always have the simple lower bound
\begin{equation}
	\label{eq:C-lb-gamma}
	C \;\ge\; \gamma\sqrt{T}.
\end{equation}

Now recall Theorem~4 of \cite{marinov2021pareto} (our Theorem~\ref{thm:model-selection}): in the regime
$c_2 C^2 \le L \le T/2$, there exists an environment $E_{i^*}$ such that
\begin{equation}
	\label{eq:LB-logNplusV}
	\Reg_{E_{i^*}}(T,\Pi_2)
	\;\gtrsim\;
	\min\left\{
	\frac{L}{C}\sqrt{T},\ T
	\right\}.
\end{equation}

Using \eqref{eq:C-lb-gamma} gives the crude but very useful upper bound on the lower bound:
\begin{equation}
	\label{eq:LBprime-logNplusV}
	\min\left\{
	\frac{L}{C}\sqrt{T},\ T
	\right\}
	\;\le\;
	\min\left\{
	\frac{L}{\gamma},\ T
	\right\}.
\end{equation}

\subsubsection{Step 3: Why no contradiction can occur}

We compare \eqref{eq:LBprime-logNplusV} with the upper bound \eqref{eq:U2-logNplusV}.
First, note that \eqref{eq:U2-logNplusV} always implies
\begin{equation}
	\label{eq:U2-lb-Lovergamma}
	\mathbb{E}\Reg(T,\Pi_2)
	\;\ge\;
	\frac{1}{\gamma}\sqrt{(L+\gamma^2T)\,L}
	\;\ge\;
	\frac{1}{\gamma}\sqrt{L\cdot L}
	\;=\;
	\frac{L}{\gamma}.
\end{equation}
Therefore the $\frac{L}{\gamma}$ branch of \eqref{eq:LBprime-logNplusV} can never exceed the upper bound.

It remains to check the $T$ branch. For \eqref{eq:LB-logNplusV} to yield the $T$ term, it is necessary that
$\frac{L}{C}\sqrt{T}\gtrsim T$, i.e.\ $L\gtrsim C\sqrt{T}$. Using \eqref{eq:C-lb-gamma} this implies
\[
L \;\gtrsim\; \gamma T
\qquad\Longrightarrow\qquad
\gamma \;\lesssim\; \frac{L}{T}.
\]
But then $\frac{L}{\gamma}\gtrsim T$, and hence \eqref{eq:U2-lb-Lovergamma} implies
$\mathbb{E}\Reg(T,\Pi_2)\gtrsim T$ as well. Thus even in the $T$ branch, the model-selection lower bound
cannot exceed the upper bound.

\paragraph{Conclusion.}
The wrapper argument of \citet{marinov2021pareto} does \emph{not} contradict an experts guarantee of the form
$\sqrt{(\log N + V_T)\log(1/\varepsilon)}$ in the self-variance setting: for every choice of $\gamma$ and $k$,
the resulting Theorem~4 lower bound is always bounded above (up to constants) by the wrapper upper bound
\eqref{eq:U2-logNplusV}. Equivalently, the model-selection construction becomes vacuous for ruling out this type
of regret guarantee.